\newcommand{\arxiv}[1]{\iftoggle{colt}{}{#1}}
\newcommand{\colt}[1]{\iftoggle{colt}{#1}{}}
\global\togglefalse{colt}
\titlespacing*{\section}{0pt}{0.3\baselineskip}{0.3\baselineskip}
\titlespacing*{\subsection}{0pt}{0.3\baselineskip}{0.3\baselineskip}
\titlespacing*{\subsubsection}{0pt}{0.3\baselineskip}{0.3\baselineskip}
\newcommand{\multiline}[1]{\parbox[t]{\dimexpr\linewidth-\algorithmicindent}{#1}}
\newcommand{\nc}{\newcommand}
\Crefname{assumption}{Assumption}{Assumptions}
   \Crefname{question}{Question}{Questions}
   \Crefname{claim}{Claim}{Claims}
   \Crefname{problem}{Problem}{Problems}
\Crefname{subsubsection}{Section}{Sections}
\nc{\sups}[1]{^{\scriptscriptstyle{#1}}}
\nc{\subs}[1]{_{\scriptscriptstyle{#1}}}
\newcommand{\wb}{\widebar}
\newcommand{\epbell}{\varepsilon_{\mathsf{bkup}}}
\newcommand{\epfinal}{\varepsilon_{\mathsf{final}}}
\newcommand{\epapx}{\varepsilon_{\mathsf{apx}}}
\newcommand{\sigtr}{\sigma_{\mathsf{tr}}}
\newcommand{\Ftp}{F^{\mathsf{tl}}}
\newcommand{\Ftpm}{\til{F}^{\mathsf{tl}}}
\newcommand{\Fnormal}{F^{\mathsf{normal}}}
\newcommand{\Pilin}{\Pi^{\mathsf{lin}}}
\newcommand{\Pilinp}[1][\sigma]{\Pi^{\mathsf{Plin}, {#1}}}
\newcommand{\Pilinpp}{\Pi^{\mathsf{Plin}}}
\newcommand{\Sm}[1][\sigma]{\mathsf{S}_{#1}}
\nc{\Critic}{\texttt{Critic}\xspace}
\nc{\PSDPUCB}{\texttt{PSDP-UCB}\xspace}
\nc{\LSVIUCB}{\texttt{LSVI-UCB}\xspace}
\nc{\Actor}{\texttt{Actor}\xspace}
\nc{\EstFeature}{\texttt{EstFeature}\xspace}
\nc{\ExpFTPL}{\texttt{ExpFTPL}\xspace}
\nc{\dist}{\mathrm{dist}}
\nc{\Bquad}{B^{\mathsf{quad}}}
\newtheorem*{rep@theorem}{\rep@title}
\newcommand{\newreptheorem}[2]{%
\newenvironment{rep#1}[1]{%
 \def\rep@title{#2 \ref{##1}}%
 \begin{rep@theorem}}%
 {\end{rep@theorem}}}
\newcommand\xlabel[2][]{\phantomsection\def\@currentlabelname{#1}\label{#2}}
\theoremstyle{plain}
\newtheorem{theorem}{Theorem}
\newtheorem{lemma}[theorem]{Lemma}
\newtheorem{corollary}[theorem]{Corollary}
\newtheorem{proposition}[theorem]{Proposition}
\newtheorem{claim}[theorem]{Claim}
\newtheorem{assumption}[theorem]{Assumption}
\theoremstyle{definition}
\newtheorem{definition}{Definition}
\newtheorem{question}[definition]{Question}
\newtheorem{problem}[definition]{Problem}
\numberwithin{theorem}{section}
\numberwithin{definition}{section}
\nc{\DMO}{\DeclareMathOperator}
\DeclareMathOperator*{\argmin}{arg\,min} %
\DeclareMathOperator*{\argmax}{arg\,max}
\DMO{\prox}{prox}
\DMO{\UCB}{UCB}
\DMO{\LCB}{LCB}
\nc{\phidiff}{\phi\sups{\Delta}}
\nc{\pexp}{q_{\mathrm{exp}}}
\nc{\nn}{\nonumber}
\nc{\rk}{\mathrm{rk}}
\nc{\brk}[3]{{\rm br}_{#1}^{#2}({#3})}
\nc{\co}{{\rm co}}
\nc{\br}[2]{{\rm br}^{#1}({#2})}
\nc{\depth}[1]{{\rm d}({#1})}
\nc{\tA}{\textsc{A}}
\nc{\child}[2]{{\rm ch}_{#1}({#2})}
\nc{\parent}[1]{{\rm pa}({#1})}
\nc{\dg}{\dagger}
\nc{\bB}{\mathbf{B}}
\nc{\Span}{{\rm Span}}
\nc{\unif}{\mathsf{unif}}
\nc{\indsig}[2]{\mathcal{I}_{#1}({#2})}
\nc{\total}{{\rm fin}}
\nc{\early}{{\rm pre}}
\nc{\zsink}{z_{\rm sink}}
\nc{\lowv}{{\rm low}}
\nc{\ol}{\overline}
\nc{\ul}{\underline}
\nc{\madec}[3]{\texttt{ma-dec}_{#1}({#2}, {#3})}
\nc{\madeco}[1]{\texttt{ma-dec}_{#1}}
\nc{\madecd}[3]{\texttt{ma-dec}^{\texttt{d}}_{#1}({#2}, {#3})}
\nc{\SF}{\mathscr{F}}
\nc{\SH}{\mathscr{H}}
\nc{\SP}{\mathscr{P}}
\nc{\SPc}{\wb{\mathscr{P}}}
\nc{\SB}{\mathscr{B}}
\nc{\SC}{\mathscr{C}}
\nc{\BS}{\mathbb{S}}
\nc{\PiMarkov}{\Pi^{\rm markov}}
\nc{\trunc}[2]{\mathsf{trunc}_{#2}({#1})}
\nc{\sbl}{of strong Bellman type\xspace}
\nc{\inormal}[1][\Phi, u,v]{\til{N}_{{#1}}}
\nc{\gamvec}{\gamma}
\nc{\til}{\widetilde}
\nc{\td}{\tilde}
\nc{\wh}{\widehat}
\nc{\old}[1]{\ifnum\Comments=1 {\color{brown}  [OLD: #1]}\fi}
\nc{\noah}[1]{\ifnum\Comments=1 {\color{purple} [ng: #1]}\fi}
\nc{\dhruv}[1]{\ifnum\Comments=1 {\color{magenta} [dr: #1]}\fi}
\nc{\BP}{\mathbb{P}}
\nc{\BI}{\mathbb{I}}
\nc{\midpoint}[1][\Phi,\phi_1,\phi_2]{\mu^{\star}_{{#1}}}
\nc{\fools}[3]{\MF_{#3}({#1}, {#2})}
\nc{\fool}[2]{\MF({#1},{#2})}
\nc{\clip}[2]{{\rm clip}\left[ \left. {#1} \right| {#2} \right]}
\nc{\imax}{\omega}
\DMO{\conv}{conv}
\nc{\MH}{\mathcal{H}}
\nc{\MV}{\mathcal{V}}
\nc{\MC}{\mathcal{C}}
\nc{\MI}{\mathcal{I}}
\nc{\st}{\star}
\nc{\lng}{\langle}
\nc{\rng}{\rangle}
\DMO{\OOPT}{opt}
\nc{\dopt}[2]{\ell_{\OOPT}({#1},{#2})}
\nc{\grad}{\nabla}
\nc{\MG}{\mathcal{G}}
\nc{\MP}{\mathcal{P}}
\nc{\PP}{\mathbb{P}}
\nc{\TT}{\mathbb{T}}
\nc{\TTmax}{\TT_{\max}}
\DMO{\REG}{Reg}
\DMO{\WREG}{wReg}
\nc{\reg}[2]{{\Delta}_{{#1}}({#2})}
\nc{\wreg}[2]{{\Delta}^{\rm w}_{{#1}}({#2})}
\nc{\Reg}[2]{{\REG}_{{#1}}({#2})}
\nc{\wReg}[2]{{\WREG}_{{#1}}({#2})}
\DMO{\Ham}{Ham}
\DMO{\Gap}{Gap}
\DMO{\GD}{GD}
\DMO{\GDA}{GDA}
\DMO{\EG}{EG}
\nc{\TE}{\til{\E}}
\nc{\Var}{\mathbb{V}}
\DMO{\Cov}{Cov}
\DMO{\OGDA}{OGDA}
\DMO{\Unif}{Unif}
\DMO{\Tr}{Tr}
\nc{\Qu}{\ul{Q}}
\nc{\Qo}{\ol{Q}}
\nc{\Ro}{\ol{R}}
\nc{\Vu}{\ul{V}}
\nc{\Vo}{\ol{V}}
\nc{\RanQ}{\Delta Q}
\nc{\RanV}{\Delta V}
\nc{\clipQ}{\Delta \breve{Q}}
\nc{\frzQ}{\Delta \mathring{Q}}
\nc{\clipV}{\Delta \breve{V}}
\nc{\clipdelta}{\breve{\delta}}
\nc{\cliptheta}{\breve{\theta}}
\nc{\delmin}{\Delta_{{\rm min}}}
\nc{\delmins}[1]{\Delta_{{\rm min},{#1}}}
\nc{\gapfinal}[1]{\max \left\{ \frac{\frzQ_{{#1}}^{k^\st}(x,a)}{2H}, \frac{\delmin}{4H} \right\}}
\nc{\post}[2]{R({#1}; {#2})}
\nc{\posts}[3]{R_{#3}({#1}; {#2})}
\nc{\algnst}[1]{\begin{align*}#1\end{align*}}
\nc{\algn}[1]{\begin{align}#1\end{align}}
\nc{\matx}[1]{\left(\begin{matrix}#1\end{matrix}\right)}
\renewcommand{\^}[1]{^{(#1)}}
\nc{\nuu}{\nu}
\nc{\bel}[1]{\mathbf{b}({#1})}
\nc{\nbel}[1]{\bar{\mathbf{b}}({#1})}
\nc{\sbel}[2]{\mathbf{b}'_{#1}({#2})}
\nc{\nsbel}[2]{\bar{\mathbf{b}}'_{#1}({#2})}
\nc{\bv}{\mathbf{v}}
\nc{\bone}{\mathbf{1}}
\nc{\bX}{\mathbf{X}}
\nc{\bY}{\mathbf{Y}}
\nc{\bG}{\mathbf{G}}
\nc{\bz}{\mathbf{z}}
\nc{\bw}{\mathbf{w}}
\nc{\bA}{\mathbf{A}}
\nc{\bJ}{\mathbf{J}}
\nc{\bK}{\mathbf{K}}
\nc{\bb}{\mathbf{b}}
\nc{\ba}{\mathbf{a}}
\nc{\bc}{\mathbf{c}}
\nc{\bC}{\mathbf{C}}
\nc{\BR}{\mathbb R}
\nc{\BA}{\mathbb{A}}
\nc{\BC}{\mathbb C}
\nc{\bx}{\mathbf{x}}
\nc{\bS}{\mathbf{S}}
\nc{\bM}{\mathbf{M}}
\nc{\bR}{\mathbf{R}}
\nc{\bN}{\mathbf{N}}
\nc{\NN}{\mathbb{N}}
\nc{\by}{\mathbf{y}}
\nc{\sy}{y}
\nc{\sx}{x}
\nc{\MO}{\mathcal O}
\nc{\MU}{\mathcal{U}}
\nc{\ME}{\mathcal{E}}
\nc{\MN}{\mathcal{N}}
\nc{\MK}{\mathcal{K}}
\nc{\MM}{\mathcal{M}}
\nc{\MS}{\mathcal{S}}
\nc{\MT}{\mathcal{T}}
\nc{\BF}{\mathbb F}
\nc{\BQ}{\mathbb Q}
\nc{\MX}{\mathcal{X}}
\nc{\MA}{\mathcal{A}}
\nc{\MD}{\mathcal{D}}
\nc{\MB}{\mathcal{B}}
\nc{\MZ}{\mathcal{Z}}
\nc{\MJ}{\mathcal{J}}
\nc{\MW}{\mathcal{W}}
\nc{\MR}{\mathcal{R}}
\nc{\MY}{\mathcal{Y}}
\nc{\BZ}{\mathbb Z}
\nc{\BN}{\mathbb N}
\nc{\ep}{\epsilon}
\nc{\epbe}{\varepsilon_{\mathsf{BE}}}
\nc{\epout}{\varepsilon_{\mathsf{outlier}}}
\nc{\bellc}[1][h]{\MT_{#1}^\circ}
\nc{\vep}{\varepsilon}
\nc{\gapfn}[1]{\varepsilon_{#1}}
\nc{\ggapfn}[2]{\varphi_{#1}({#2})}
\nc{\epsahk}{\gapfn{0}}
\nc{\BH}{\mathbb H}
\nc{\BG}{\mathbb{G}}
\nc{\D}{\Delta}
\nc{\MF}{\mathcal{F}}
\nc{\One}[1]{\mathbbm{1}\{{#1}\}}
\nc{\bOne}{\mathbf{1}}
\nc{\Aopt}{\mathcal{A}^{\rm opt}}
\nc{\Amul}{\mathcal{A}^{\rm mul}}
\nc{\SQ}{\mathsf Q}
\nc{\DO}{\accentset{\circ}{\D}}
\nc{\mf}{\mathfrak}
\nc{\mfp}{\mathfrak{p}}
\nc{\mfq}{\mf{q}}
\nc{\mfx}{\mf{s}}
\nc{\Sp}{\mbox{Spec}}
\nc{\Spm}{\mbox{Specm}}
\nc{\hookuparrow}{\mathrel{\rotatebox[origin=c]{90}{$\hookrightarrow$}}}
\nc{\hookdownarrow}{\mathrel{\rotatebox[origin=c]{-90}{$\hookrightarrow$}}}
\nc{\hra}{\hookrightarrow}
\nc{\tra}{\twoheadrightarrow}
\nc{\sgn}{{\rm sgn}}
\nc{\aut}{{\rm Aut}}
\nc{\Hom}{{\rm Hom}}
\nc{\img}{{\rm Im}}
\DMO{\id}{Id}
\DMO{\supp}{supp}
\DMO{\KL}{KL}
\nc{\kld}[2]{D_{\mathsf{KL}}({#1}||{#2})}
\nc{\ren}[2]{D_2({#1}||{#2})}
\nc{\chisq}[2]{\chi^2({#1}||{#2})}
\nc{\tvd}[2]{D_{\mathsf{TV}}({#1}, {#2})}
\nc{\hell}[2]{D_{\mathsf{H}}^2({#1}, {#2})}
\nc{\dbi}[3][\pi]{D_{\mathsf{bi}}^{#1}({#2} \| {#3})}
\DMO{\BSS}{BSS}
\DMO{\BES}{BES}
\DMO{\BGS}{BGS}
\DMO{\poly}{poly}
\nc{\indep}{\perp}
\DMO{\sink}{sink}
\nc{\fp}[1]{\MP_1({#1})}
\nc{\BO}{\mathbb{O}}
\nc{\BT}{\mathbb{T}}
\nc{\RR}{\mathbb{R}}
\nc{\Gradient}{\nabla}
\DMO{\diag}{diag}
\nc{\norm}[1]{\left \lVert #1 \right \rVert}
\nc{\EE}{\mathbb{E}}
\nc{\MQ}{\mathcal{Q}}
\nc{\ML}{\mathcal{L}}
\nc{\cPhi}{\bar \Phi}
\DMO{\PR}{Pr}
\renewcommand{\Pr}{\PR}
\nc{\E}{\mathbb{E}}
\nc{\ra}{\rightarrow}
\renewcommand{\t}{\top}
\nc{\hc}{\{0,1\}^n}
\nc{\pmhc}[1]{\{-1,1\}^{#1}}
\nc{\Dbnd}{D}
\nc{\Bbnd}{B}
  \title[Efficient learning of Linear Bellman Complete MDPs]{Linear Bellman Completeness Suffices for Efficient Online Reinforcement Learning with  Few Actions}
  \date{June 17, 2024}}
\begin{document}
\maketitle

\begin{abstract}
One of the most natural approaches to reinforcement learning (RL) with function approximation is \emph{value iteration}, which inductively generates approximations to the optimal value function by solving a sequence of regression problems. To ensure the success of value iteration, it is typically assumed that \emph{Bellman completeness} holds, which ensures that these regression problems are well-specified. We study the problem of learning an optimal policy under Bellman completeness in the \emph{online} model of RL with linear function approximation.
In the linear setting, while statistically efficient algorithms are known under Bellman completeness (e.g., \cite{jiang2017contextual,zanette2020learning}), these algorithms all rely on the principle of \emph{global optimism} which requires solving a nonconvex optimization problem. In particular, it has remained open as to whether \emph{computationally efficient} algorithms exist. In this paper we give the first polynomial-time algorithm for RL under linear Bellman completeness when the number of actions is any constant. %
\end{abstract}

\colt{
  \begin{keywords}
    Reinforcement learning, Linear Bellman completeness, Optimism
  \end{keywords}
  }

\section{Introduction}
\emph{Reinforcement learning (RL)} describes the problem of solving {sequential} decision-making problems in a stochastically changing environment, and is typically studied using the formalism of \emph{Markov Decision Processes (MDPs)}. In an MDP, a learning agent must choose a sequence of \emph{actions} over some number of time steps, each of which affects the \emph{state} of the environment and potentially yields some \emph{reward} to the agent. The agent aims to find a \emph{policy}, or a mapping that describes which action to take at each state, that maximizes its expected total reward. In order for RL to yield effective learning strategies in its various application domains, including robotics \cite{gu2017deep}, economics \cite{zheng2022ai}, and healthcare \cite{yu2021reinforcement}, it is necessary %
to come up with efficient strategies for exploring complex state spaces, which may be infinite or exponentially large.

A general approach to RL, which dates back decades \cite{bradtke1996linear,melo2007convergence,sutton2018reinforcement} and yet still forms the basis for many current empirical approaches \cite{hasselt2016deep,schulman2017proximal}, involves the use of {value function approximation}. Recall that the optimal \emph{value function} maps a state-action pair to the agent's expected reward under the optimal policy starting from that state-action pair. %
Then this approach posits that the optimal {value function} belongs to some known function class $\MF$, such as a class of linear functions or neural networks. A key question is: \emph{under what assumptions on the class $\MF$ can we efficiently learn a near-optimal policy, i.e., one with near-maximal expected reward?}

\paragraph{Value iteration and Bellman completeness.} A popular and time-tested approach to finding a near-optimal policy with value function approximation is \emph{value iteration}. To explain this procedure, we consider the \emph{finite-horizon} setting, whereby interactions with the environment occur in \emph{episodes} lasting $H$ time steps. Letting $\MX$ denote the state space and $\MA$ denote the action space, value iteration computes mappings $\hat Q_h : \MX \times \MA \ra \BR$ in a backwards-inductive manner, i.e., for $h = H, H-1, \ldots, 1$. The values $\hat Q_h(x,a)$ should be interpreted as estimates of the optimal value\footnote{I.e., the value of the optimal policy.} at step $h$ given that the state-action pair taken at step $h$ is $(x,a)$. Since the environment's reward functions and transitions are unknown,  $\hat Q_h$ must be estimated empirically. 
Given a dataset $\MD$ consisting of tuples $(x_h, a_h, r_h, x_{h+1})$ of states, actions, and rewards drawn from the environment at step $h$, together with the subsequent state at step $h+1$, 
$\hat Q_h$ is typically chosen to be the function in $\MF$ which minimizes the following square-loss objective whose labels are defined in terms of $\hat Q_{h+1}$:
\begin{align}
\hat Q_h := \argmin_{Q_h \in \MF} \sum_{(x_h, a_h, r_h, x_{h+1}) \in \MD} \left( Q_h(x_h, a_h) - \left( r_h + \max_{a'} \hat Q_{h+1}(x_{h+1}, a') \right) \right)^2\label{eq:lsvi}.
\end{align}
The procedure defined by \cref{eq:lsvi} is often known as \emph{least-squares value iteration} (LSVI) \cite{bradtke1996linear,osband2016generalization}. 
A natural condition under which LSVI might yield good estimates of the optimal value function, and thereby a near-optimal policy, is that the least-squares problem \cref{eq:lsvi} be \emph{well-specified}. This requires that for any $\hat Q_{h+1} \in \MF$, there is some $Q_h' \in \MF$ so that, for all $x_h \in \MX, a_h \in \MA$,
\begin{align}
Q_h'(x_h, a_h) = \E\left[ r_h + \max_{a'} \hat Q_{h+1}(x_{h+1}, a') \ \mid \ (x_h, a_h) \right]\label{eq:bellman-completeness-intro}.
\end{align}
\cref{eq:bellman-completeness-intro} is known to be necessary for LSVI to succeed, in the sense that without it, the value functions computed by LSVI may be wildly divergent \cite{tsitsiklis1996feature}. Moreover, classical results \cite{munos2005error,munos2008finite} showed that if the dataset $\MD$ is sufficiently exploratory, then \cref{eq:bellman-completeness-intro} is sufficient for LSVI to succeed. The requirement that \cref{eq:bellman-completeness-intro} holds for any $\hat Q_{h+1} \in \MF$ is often known as \emph{Bellman completeness}; it is a property of the MDP and the value function class $\MF$. 

In this paper, our focus is on finding computationally efficient algorithms. Since regression problems such as \cref{eq:lsvi} can be computationally intractable for even relatively simple nonlinear classes $\MF$ such as shallow neural networks \cite{dey2020approximation,bakshi2019learning,goel2020tight}, we focus on the case where $\MF$ is simply the class of linear functions in $\phi_h(x,a)$, for some known \emph{feature mappings} $\phi_h: \MX \times \MA \ra \BR^d$. In this setting, the requirement of Bellman completeness in  \cref{eq:bellman-completeness-intro} is known as \emph{linear Bellman completeness} (see \cref{def:lbc} for a formal definition). 
In the remainder of this section we discuss our results on computationally efficient learning of MDPs satisfying linear Bellman completeness. %

\subsection{Exploration and linear Bellman completeness}
\label{sec:online}
The results of  \cite{munos2005error,munos2008finite} referenced above showing sufficiency of Bellman completeness %
do not address the following central problem in RL: \emph{how can we find a dataset $\MD$ which is sufficiently exploratory to compute a near-optimal policy}? More precisely, we would like that for tuples $(x_h, a_h, r_h, x_{h+1}) \in \MD$, the feature vectors $\phi_h(x_h, a_h)$ span a ``sufficient number'' of distinct directions in $\BR^d$. We adopt the standard online setting in RL, allowing the learning algorithm to form $\MD$ by repeatedly sampling trajectories from the MDP using adaptively chosen policies.\footnote{See \cref{sec:online-problem} for a formal definition.} We first note that this problem is known to be {statistically tractable} using a technique known as \emph{global optimism} (see \cref{sec:optimism-overview}). This technique has various instantiations as specific algorithms in the setting of linear Bellman completeness, including  \texttt{ELEANOR} \cite{zanette2020learning}, \texttt{GOLF} \cite{jin2021bellman}, \texttt{OLIVE} \cite{jiang2017contextual}, and \texttt{BilinUCB} \cite{du2021bilinear}.
While these algorithms require only polynomially many samples (i.e., rounds of interaction) to  output a near-optimal policy, all are computationally inefficient, even when specialized to the linear case, since 
they require solving  nonconvex optimization problems. Our goal is to find a computationally efficient algorithm which achieves the same guarantee: %
\begin{question}
  \label{ques:main-problem}
Is there an algorithm which learns an $\ep$-optimal policy in an unknown linear Bellman complete MDP using $\poly(H, d, |\MA|, \ep^{-1})$ samples and time?
\end{question}

A sizeable portion of the work on computationally efficient RL in the last several years has been focused on answering \cref{ques:main-problem} for settings which are strict special cases of linear Bellman completeness. The simplest such setting is the \emph{tabular setting}, which describes the case that $|\MX|, |\MA|$ are finite and the goal is to obtain sample and computational complexities scaling as $\poly(H, |\MX|, |\MA|)$. In this setting, there are several computationally efficient algorithms which can be viewed as variants of value iteration that are \emph{optimistic} in the sense that they add bonuses to the rewards to induce exploration: these include \texttt{UCBVI} \cite{azar2017minimax} and \texttt{Q-learning-UCB} \cite{jin2018q,zhang2020almost}, which are known to obtain near-optimal rates. Tabular MDPs are generalized by the \emph{linear MDP} setting, in which feature vectors $\phi_h(x,a) \in \BR^d$ are given, and the state-action transition probabilities are assumed to be linear in $\phi_h$. Here too there are computationally efficient algorithms, namely \texttt{LSVI-UCB} \cite{jin2020provably}, an optimistic version of LSVI, as well as more recent rate-optimal variants \cite{agarwal2022voql,he2023nearly}.

In the setting of linear Bellman completeness, which is a strict generalization of linear MDPs \cite[Proposition 3]{zanette2020learning}, the algorithm \texttt{FRANCIS} of \cite{zanette2020provably} is computationally efficient and learns a near-optimal policy in the special case that the MDP is \emph{reachable}, meaning that any direction in $\BR^d$ can be reached under some policy. The setting of linear Bellman completeness has also been studied in the special case of \emph{deterministic} dynamics: \cite{wen2016efficient} (Theorem 1) showed that there is a polynomial-time learning algorithm when the transitions and rewards are deterministic, and \cite{du2020agnostic} established a polynomial-time algorithm for the setting of deterministic transitions, stochastic rewards, and positive suboptimality gap. Finally, in recent and concurrent work, \cite{wu2024computationally} established a polynomial-time algorithm for the setting of deterministic transitions and stochastic rewards (without the assumption of positive suboptimality gap). The algorithm of \cite{wu2024computationally} has the additional advantage of working under weaker norm assumptions (namely, the second item of \cref{asm:boundedness} is not required). Finally, we mention that in the related \emph{offline} setting of reinforcement learning, the analogue of \cref{ques:main-problem} was recently resolved in concurrent work \cite{golowich2024role}. 

Despite the above line of work, \cref{ques:main-problem} in its full generality has remained open. Part of the reason for this is that the two principal techniques to perform computationally efficient exploration in RL both break down in the general setting of linear Bellman completeness:
\begin{enumerate}[wide,labelwidth=!,labelindent=0pt]
\item \label{it:local-optimism} The first technique is \emph{local optimism},\footnote{``Local'' is used to distinguish this technique from \emph{global optimism}, which, as discussed above, works in the setting of linear Bellman completeness but is computationally inefficient.} which adds exploration bonuses to the reward at each state which scale inversely with how often the state has been visited. It includes the \texttt{UCBVI} and \texttt{Q-learning-UCB} algorithms for the tabular setting, and the \texttt{LSVI-UCB} algorithm for the linear MDP setting, among others. Local optimism requires that the value function class be complete with respect to the exploration bonuses, which is satisfied for linear MDPs but which fails more generally (see \cref{sec:overview-online}).
\item \label{it:policy-cover} The second technique is to construct a \emph{policy cover}, which is a small set of policies which, roughly speaking, covers all states to the maximum extent possible. This technique includes the \texttt{FRANCIS} algorithm, as well as computationally efficient and oracle efficient algorithms for related tasks in RL such as representation learning \cite{du2019provably,misra2020kinematic,mhammedi2023representation,mhammedi2023efficient,golowich2023exploring} and learning in POMDPs \cite{golowich2022learning}. In order for this approach to work in the absence of reachability, it is necessary to analyze a \emph{truncated} version of the true MDP. But doing so seems impossible in our setting, since truncating the MDP breaks the property of linear Bellman completeness. 
\end{enumerate}
Thus, in addition to generalizing a long line of work on computationally efficient learning of MDPs, \cref{ques:main-problem} captures exactly the point where known exploration paradigms in RL break down.  

\subsubsection{Main result for the online setting}
Our main result is a positive answer to \cref{ques:main-problem} in the case that $|\MA|$ is any constant:
\colt{\vspace{-0.1cm}}
\begin{theorem}[Informal version of \cref{thm:policy-learning}]
  \label{thm:online-intro}
Suppose the ground-truth MDP satisfies linear Bellman completeness. Then for any $\ep > 0$, there is an algorithm (\PSDPUCB; \cref{alg:psdp-ucb}) which with high probability learns an $\ep$-optimal policy using $ (Hd|\MA|\ep^{-1})^{O(|\MA|)} $ samples and time. 
\end{theorem}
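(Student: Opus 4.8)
The plan is to learn a non-stationary policy $\pi=(\pi_1,\dots,\pi_H)$ by backward dynamic programming over the horizon, in the style of Policy Search by Dynamic Programming, with two modifications that together constitute \PSDPUCB: an exploration routine that, before the policy at layer $h$ is committed, gathers trajectories whose layer-$h$ feature vectors $\phi_h(x_h,a_h)$ span every direction of $\BR^d$ that is reachable under some policy; and an optimistic but linearity-preserving treatment of the directions that are not yet well explored. Concretely, for $h=H,H-1,\dots,1$ I would maintain an empirical second-moment matrix $\Lambda_h$ of the explored layer-$h$ features, split $\BR^d$ into its well-conditioned (``known'') eigenspace and the complementary (``unknown'') eigenspace, and define $\widehat Q_h(x,a)$ by (a) regressing a linear function against the targets $r_h+\widehat V_{h+1}(x_{h+1})$ on the known eigenspace --- this least-squares problem is well-specified precisely because linear Bellman completeness (\cref{def:lbc}) guarantees that the Bellman backup operator $\mathcal{T}_h$ sends linear $Q$-functions to linear $Q$-functions --- and (b) replacing the value along the unknown eigenspace by the largest value the boundedness assumption permits. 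Then $\pi_h(x)=\argmax_a\widehat Q_h(x,a)$ and $\widehat V_h(x)=\max_a\widehat Q_h(x,a)$, and one recurses.

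The exploration routine is the heart of the algorithm and is where the restriction to few actions is used. To make the known eigenspace at layer $h$ as large as possible, one repeatedly (a Frank--Wolfe-style / elliptical-potential loop, run $\poly(d)$ times) identifies a direction $v$ that is still under-explored and then solves a reward-free planning problem --- maximize $\E^{\pi}[\max_a\langle v,\phi_h(x_h,a)\rangle]$ over policies $\pi$ on layers $1,\dots,h-1$ --- whose solution is itself a greedy policy with respect to value functions computed by backward least-squares value iteration on the already-explored data; since $\max_a\langle v,\phi_h(\cdot,a)\rangle$ is exactly the value function of the linear $Q$-function $\langle v,\phi_h(\cdot,\cdot)\rangle$, a single Bellman backup of it stays linear by linear Bellman completeness, so this inner problem is itself tractable. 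The only genuinely nonlinear object in sight is the $|\MA|$-way $\argmax_a$ that defines every policy we are allowed to deploy, and the $(Hd|\MA|\ep^{-1})^{O(|\MA|)}$ cost arises from performing a case analysis over this $\argmax_a$ --- equivalently, over which action is dominant in each cell of the induced decomposition of the state space --- and from the fact that this case analysis must be nested to depth $O(|\MA|)$ over the course of the exploration recursion.

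The main obstacle, and the step I expect to require the most care, is to show that the optimistic treatment of under-explored directions in step (b) is approximately compatible with linear Bellman completeness, so that errors do not compound as value functions are propagated backward through all $H$ layers. Naively, clipping or truncating a linear value function destroys its linearity, and a Bellman backup of the clipped function need no longer lie near the linear class --- this is exactly the obstruction that defeats the policy-cover approach, since truncating the MDP breaks the assumption. The fix I would pursue is to arrange the clipping so that the clipped and unclipped value functions agree on the support of \emph{every} data distribution the algorithm actually encounters: the clipping alters the value only along directions that are provably unreachable, up to negligible probability mass, from any roll-in the algorithm can produce, so the regression against clipped targets is still effectively well-specified and the optimistic bias it introduces is controlled directly by the boundedness assumption rather than amplified by the dynamics. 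Making this precise requires tracking, layer by layer, the subspace of reachable feature directions and showing it behaves monotonically as exploration proceeds; this reachability bookkeeping is the technical crux. Granting it, the remainder is routine: a performance-difference/simulation argument converts the per-layer regression error and optimism bias into a bound on $V^\star-V^{\pi}$; self-normalized martingale concentration controls the regression error on the known eigenspace using $\poly(H,d,|\MA|,\ep^{-1})$ samples; and summing over layers yields an $\ep$-optimal policy within the claimed sample and time budget.
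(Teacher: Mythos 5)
Your high-level architecture (a backward, PSDP-style recursion; splitting feature space at each layer into a well-explored eigenspace and its complement; optimism on the complement) matches the paper's, but the step you yourself identify as the crux --- making the truncation compatible with linear Bellman completeness --- is resolved by an argument that does not go through, and it is not the paper's argument. You propose to clip $\hat V_{h+1}$ on the under-explored eigenspace and to justify the resulting regression by arguing that the clipped and unclipped functions agree on the support of every data distribution encountered, because the altered directions are ``provably unreachable up to negligible probability mass.'' This is the policy-cover/reachability route (as in \texttt{FRANCIS}), and it is precisely the approach the paper singles out as breaking down here: absent a reachability assumption there is no dichotomy between well-explored and unreachable directions. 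A direction can be reachable only with probability too small to explore within the sample budget, yet large enough that the value flowing through it is not negligible after being propagated through $H$ layers; handling such directions forces one to analyze a truncated MDP, and the truncated MDP is no longer linear Bellman complete, so the regressions against targets $r_h+\hat V_{h+1}(x_{h+1})$ (with $\hat V_{h+1}$ a clipped max of a linear function) are misspecified --- cf.\ \cref{prop:lsvi-counterexample}, which shows that the max of a clipped linear function need not be Bellman-linear. Your ``reachability bookkeeping'' is exactly the missing idea, and no mechanism for it is supplied.

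The paper's resolution is different in kind: it never clips the value function and never truncates the MDP. It adds an exploration bonus $F_h^{(t)}$ to the \emph{reward} and constructs that bonus to be exactly Bellman-linear (\cref{def:bl}) while still behaving like a truncated quadratic bonus $\max_{a,a'}\min\{\beta\|\Sigma'(\phi_h(x,a)-\phi_h(x,a'))\|_2,\ \|\Lambda'(\phi_h(x,a)-\phi_h(x,a'))\|_2\}$. The construction (\cref{eq:define-fht-bonus}) averages $\Ftp_h(x;u,v)=\max_a\lng u,\phi_h(x,a)\rng+\max_a\lng v,\phi_h(x,a)\rng-\max_a\lng u+v,\phi_h(x,a)\rng$ over Gaussian $u,v$ supported on the two eigenspaces; each term is a combination of max-of-linear functions, hence Bellman-linear, and \cref{lem:optimal-perimeter,cor:optimal-perimeter} show the average is still a large enough lower bound to yield optimism (\cref{lem:approx-optimism}). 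The $(\cdot)^{O(A)}$ dependence arises from the $\vep^{O(A)}$ probability that a random Gaussian direction lands within $\vep$ of the required direction inside the (at most $A$-dimensional) span of the features at a state --- not from a nested case analysis over the $\argmax$ defining the policy, as you suggest. Without a Bellman-linear truncated bonus, your scheme either reverts to untruncated bonuses, whose scale blows up as $d^{O(H)}$ through the backward recursion, or to clipping, which breaks well-specification; so the gap is essential rather than cosmetic.
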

\colt{\vspace{-0.1cm}}
We remark that the exponential dependence of the sample and computational complexities on $|\MA|$ is somewhat unusual in RL. We are not aware of any prior work on an RL problem for which $\poly(|\MA|)$ dependence is possible but for which a ``natural'' algorithm for even the $|\MA| = 2$ case does not extend to the general case. 
\cref{thm:online-intro} thus presents an intriguing challenge: to either improve the guarantee to fully answer \cref{ques:main-problem} in the affirmative, or to find a lower bound. %

\arxiv{
  \paragraph{Proof idea of \cref{thm:online-intro}.} The proof of \cref{thm:online-intro} proceeds via the local optimism approach discussed in \cref{it:local-optimism} above. Our main insight is that it is possible to design exploration bonuses for which the class of linear value functions \emph{is} complete with respect to them, i.e., whose Bellman-backup is linear (formally, we say that such exploration bonuses are \emph{Bellman-linear}; see \cref{def:bl}). Doing so requires extensive work to ``build up'' to a function which satisfies several properties required of the exploration bonuses; see \cref{sec:overview-online} for further details. %
We remark that our algorithm \PSDPUCB takes the structure of an optimistic variant of the classical \texttt{PSDP} (Policy Search by Dynamic Programming) algorithm \cite{bagnell2003policy}, which has not previously appeared in the literature. This choice of algorithm structure plays more nicely with the exploration bonuses we introduce.

We find the fact that local optimism provides a solution to \cref{ques:main-problem}, even in the case $|\MA| = O(1)$, to be somewhat surprising. On the one hand, it indicates that Bellman completeness is a stronger property than one might initially anticipate. On the other hand, since Bellman-completeness is satisfied in other settings for which the only known computationally efficient algorithms are policy-cover based (\cref{it:policy-cover} above), our results perhaps provide some hope that local optimism can be made to work in such settings as well. 
}
\paragraph{Organization of the paper.} 
We discuss preliminaries in \cref{sec:prelim}, and overview the proof of \cref{thm:online-intro} in \cref{sec:overview-online}, which is proved formally in \cref{sec:bellman-linear,sec:alg-description}.  %

\section{Preliminaries}
\label{sec:prelim}
A finite-horizon \emph{Markov Decision Process (MDP)} is given by a tuple $M = (H, \MX, \MA, (P_h\sups{M})_{h=1}^H, (r_h\sups{M})_{h=1}^H, d_1\sups{M})$, where $H \in \BN$, $\MX$ is a measure space denoting the {state set}, $\MA$ denotes the {action set}, $P_h\sups{M}(\cdot | x,a) \in \Delta(\MX)$ (for $h \in [H]$) denotes the probability transition kernels, $r_h\sups{M} : \MX \times \MA \ra [0,1]$ (for $h \in [H]$) denotes the reward functions, and $d_1\sups{M} \in \Delta(\MX)$ denotes the initial state distribution. When the MDP $M$ is clear from context, we will drop the superscript $M$ in these notations. We let $A := |\MA|$ denote the number of actions in the MDP. 

A \emph{policy} $\pi$ consists of a tuple $\pi = (\pi_1, \ldots, \pi_H)$, where each $\pi_h : \MX \ra \Delta(\MA)$ is a mapping from states to distributions over actions. Note that we allow policies to be nonstationary and randomized; $\Pi$ denotes the set of all such policies. A policy $\pi \in \Pi$ defines a distribution over \emph{trajectories} $(x_1, a_1, r_1, \ldots, x_H, a_H, r_H) \in (\MX \times \MA \times [0,1]^H$, as follows: first, $x_1 \sim d_1$, and then for each $h \in [H]$, we draw $a_h \sim \pi_h(x_h)$, observe $r_h(x_h, a_h)$, and transition to $x_{h+1} \sim P_h(\cdot | x_h, a_h)$. Let $\SH_H$ denote the set of trajectories. For a function $f : \SH_H \ra \BR^k$, we often write $\E\sups{M, \pi}[f(x_1, a_1, r_1, \ldots, x_H, a_H, r_H)]$ to denote the expectation of $f$ over trajectories drawn from $M$ under policy $\pi$. If $M$ is clear from context, we will simply write $\E^\pi[f(x_1, a_1, r_1, \ldots, x_H, a_H, r_H)]$. Given policies $\pi = (\pi_1, \ldots, \pi_H), \pi' = (\pi_1', \ldots, \pi_H') \in \Pi$ and a step $h \in [H]$, we let $\pi \circ_h \pi'\in \Pi$ denote the policy $(\pi_1, \ldots, \pi_{h-1}, \pi_h', \ldots, \pi_H')$, i.e., which acts according to $\pi$ during the first $h-1$ steps and thereafter acts according to $\pi'$. 

For a function $f : \MX \times \MA \ra \BR$, a (randomized) policy $\pi$ and $h \in [H]$, we write $f(x, \pi_h(x)) := \E_{a \sim \pi_h(x)}[f(x,a)]$. The state-action value function (or \emph{$Q$-function}) and state-value function (or \emph{$V$-function}) of a policy $\pi \in \Pi$ are then defined as follows: for $h \in [H], x \in \MX, a \in \MA$, 
\begin{align}
Q_h^\pi(x,a) := r_h(x,a) + \E^\pi \left[ \sum_{g=h+1}^H r_g(x_g, a_g) \ \mid \ (x_h, a_h) = (x,a) \right], \qquad V_h^\pi(x) := Q_h^\pi(x, \pi_h(x)) \nonumber.
\end{align}
The \emph{optimal policy} is defined as $\pi^\st := \argmax_{\pi \in \Pi} \E[V_1^\pi(x_1)]$ (where expectation is over $x_1 \sim d_1$). It is known that there is always a deterministic optimal policy $\pi^\st$ (i.e., so that $\pi_h^\st(x)$ is a singleton for all $x,h$). We often abbreviate $Q_h^\st(x,a) := Q_h^{\pi^\st}(x,a)$ and $V_h^\st(x) := V_h^{\pi^\st}(x)$. 

\subsection{The online learning problem}
\label{sec:online-problem}
 We assume that the transitions, rewards, and initial state distribution of the ground-truth MDP $M$ are \emph{unknown}  to the algorithm. 
 To learn information about the MDP, the algorithm interacts with $M$ via the \emph{ episodic online learning model}, as follows. The interaction proceeds over a series of $T$ episodes.  %
 In each step $h \in [H]$ of each episode, the algorithm observes the current state $x_h$, specifies an action $a_h \in \MA$ to take, and observes a reward of $r_h = r_h(x_h, a_h)$. Then the environment transitions to a new state $x_{h+1} \sim P_h(\cdot | x_h, a_h)$. We assume that the algorithm can query $\phi_h(x,a) \in \BR^d$ for any $x,a,h$. We remark that our algorithm only needs to query $\phi_h(x,a)$ for states $x$ which are visited at some point in some episode. 

The goal is as follows: for $\ep, \delta \in (0,1)$, to give an algorithm which interacts with the environment for $T = T(\ep, \delta)$ episodes in the manner described above, and then to output a policy $\hat \pi$ so that, with probability $1-\delta$, $\E[V_1^\st(x_1) - V_1^{\hat \pi}(x_1)] \leq \ep$. Moreover, we wish the algorithm to be computationally efficient (both in terms of the computation required when interacting with the environment over the course of  $T$ episodes and when outputting $\hat \pi$ at  termination).

\subsection{Function approximation}
In order for the above learning problem to be tractable, it is necessary to make some assumptions on the ground-truth MDP $M$ being learned. While it is well-known that boundedness of $|\MX|, |\MA|$ implies  efficient learning algorithms whose computational and statistical costs scale with $\poly(|\MX|, |\MA|)$ \cite{azar2017minimax,jin2018q}, realistic RL environments typically have enormous state spaces. Thus we aim for weaker function approximation assumptions: a common and longstanding such assumption \cite{bradtke1996linear,melo2007convergence,sutton2018reinforcement,yang2020reinforcement,jin2020provably} is \emph{linearity} of the value functions, with respect to some known features. In particular, given $d \in \BN$, we assume that functions $\phi_h : \MX \times \MA \ra \BR^d$ are given for all $h \in [H]$, mapping each state-action pair $(x,a)$ to a collection of $d$ \emph{features} which should be interpreted as capturing all relevant information about $(x,a)$.

The weakest assumption on value function linearity is simply that $Q_h^\st$ is linear, i.e., for some $w_h^\st \in \BR^d$, we have $Q_h^\st(x,a) = \lng w_h^\st, \phi_h(x,a) \rng$ for all $(x,a) \in \MX \times \MA$. Unfortunately, unless $\mathsf{NP} = \mathsf{RP}$, it is not possible to  computationally efficiently learn a near-optimal policy under this assumption \cite{kane2022computational,liu2023exponential}. Accordingly, we make a stronger assumption known as \emph{linear Bellman completeness}, which states, roughly speaking, that the Bellman backup of all linear functions is linear. To formally state this assumption (in \cref{def:lbc} below), we  need to introduce the following notation:  for $h \in [H]$, define \colt{$\MB_h := \{ \theta \in \BR^d \ : \ | \lng \phi_h(x,a), \theta \rng | \leq 1  \ \ \forall (x,a) \in \MX \times \MA \}$.}
\arxiv{\begin{align}
\MB_h := \{ \theta \in \BR^d \ : \ | \lng \phi_h(x,a), \theta \rng | \leq 1  \ \ \forall (x,a) \in \MX \times \MA \}\nonumber.
\end{align}}
In words, $\MB_h$ denotes the set of coefficient vectors inducing bounded (linear) functions on $\MX \times \MA$. 
\begin{definition}[Linear Bellman Completeness]
  \label{def:lbc}
 The MDP $M$ is defined to be \emph{linear Bellman complete} with respect to the feature mappings $(\phi_h)_{h \in [H]}$ if, for each $h \in [H]$, there is a mapping  $\MT_h : \MB_{h+1} \ra \MB_h$ so that, for all $\theta \in \MB_{h+1}$ and all $(x,a) \in \MX \times \MA$, \colt{$\lng \phi_h(x,a), \MT_h \theta \rng = \E_{x' \sim P_h(x,a)} \left[ \max_{a' \in \MA} \lng \phi_{h+1}(x', a'), \theta \rng \right]$.}
\arxiv{\begin{align}
\lng \phi_h(x,a), \MT_h \theta \rng = \E_{x' \sim P_h(x,a)} \left[ \max_{a' \in \MA} \lng \phi_{h+1}(x', a'), \theta \rng \right]\nonumber.
\end{align}}
Moreover, we require that for all $h \in [H]$, $(x,a) \in \MX \times \MA$, the reward at $(x,a)$ at step $h$ is given by: \colt{$r_h(x,a) := \lng \phi_h(x,a), \theta_h^{\mathsf r} \rng$}
\arxiv{\begin{align}
r_h(x,a) := \lng \phi_h(x,a), \theta_h^{\mathsf r} \rng\nonumber,
\end{align}}
for some vectors $\theta_h^{\mathsf r} \in \MB_h$. 
\end{definition}
It is immediate from \cref{def:lbc} that if $M$ is linear Bellman complete, then there are vectors $w_h^\st \in H \cdot \MB_h$ so that $Q_h^\st(x,a) = \lng w_h^\st, \phi_h(x,a) \rng$ for all $x,a$. 
In addition to linear Bellman completeness, we make the following standard boundedness assumptions on the coefficient and feature vectors:
\begin{assumption}[Boundedness]
  \label{asm:boundedness}
  We assume the following:
  \begin{enumerate}
    \item For all $h \in [H], x \in \MX, a \in \MA$, we have $\| \phi_h(x,a) \|_2 \leq 1$. 
    \item For some parameter $\Bbnd \in \BR_+$: for all $w_h \in \MB_h$, it holds that $\| w_h \|_2 \leq \Bbnd$.
    \item For all $h \in [H]$, $\| \theta_h^\mathsf{r} \|_2 \leq 1$ (and hence $\sup_{x,a,h} | r_h(x,a)| \leq 1$).  %
    \end{enumerate}
  \end{assumption}

\colt{  We present additional preliminaries in \cref{sec:additional-prelim}. Here we highlight the definition of a linear policy, which is defined as follows: for $w \in \BR^d$ and $h \in [H]$, the associated linear policy at step $h$ is $\pi_{h,w}(x) := \argmax_{a \in \MA} \lng w, \phi_h(x,a) \rng$ \emph{if the argmax is unique}. We discuss in \cref{sec:additional-prelim} how to deal with the (typically measure-zero) situation that the argmax is not unique. %
}

\arxiv{\colt{\section{Additional preliminaries}
  \label{sec:additional-prelim}}
\colt{In this section, we present several additional preliminaries that will be useful in our proofs.} 
\paragraph{Polytope of actions.} 
We assume that $\MA$ is finite and write $A := |\MA|$. Let $\SP^d$ denote the space of $d$-dimensional (bounded)  polyhedra. For $A \in \BN$, let $\SP_A^d \subset \SP^d$ denote the space of $d$-dimensional polyhedra $\Phi$ so that $\dim (\Span \Phi) \leq A$. For each $x \in \MX$, we define $\Phi_h(x) := \{ \phi_h(x,a) :\ a \in \MA\} \subset \BR^d$ and $\cPhi_h(x) := \mathrm{co} (\Phi_h(x)) \in \SP_A^d \subset \SP^d$. %

\paragraph{Gaussian smoothing.} For $\theta \in \BR^d$ and $\sigma > 0$, we write
\begin{align}
\MN_\sigma(\theta)  := \MN(0, \sigma^2 \cdot I_d)(\theta) = \frac{1}{(2\pi)^{d/2} \sigma^d} \cdot \exp \left( -\frac{1}{2\sigma^2} \| \theta \|_2^2 \right)\nonumber
\end{align}
to denote the probability density function of the standard normal distribution with covariance $\sigma^2 I_d$. 
Furthermore, for $f : \BR^d \ra \BR$, we write $\Sm f (\theta)$ to denote the convolution of $f$ with $\MN_\sigma$, namely
\begin{align}
\Sm f(\theta) := \int_{\BR^d} f(z) \MN_\sigma(\theta - z) dz = \int_{\BR^d} f(\theta-z) \MN_\sigma(z) dz = \E_{z \sim \MN(0, \sigma^2 \cdot I_d)}[f(\theta -z)]\nonumber.
\end{align}

\paragraph{Miscellaneous notation.} Given $d \in \BN$, we let $\BS^d$ denote the space of symmetric $d \times d$ matrices, $\BS_+^d \subset \BS^d$ denote the space of positive semidefinite (PSD) matrices, and $\BS_{++}^d \subset \BS_+^d$ denote the space of positive definite matrices. For a PSD matrix $T \in \BS_+^d$, we let $T^{1/2}$ denote the unique PSD matrix whose square is $T$. Given a subset $\MS \subset \BR^d$ and a matrix $T \in \BR^{d \times d}$, we let $T \cdot \MS := \{ Tv :\ v \in \MS \}$. For vectors $v,v' \in \BR^d$, we let $[v,v']$ denote the segment $[v,v'] := \{ \lambda v + (1-\lambda) v' :\ \lambda \in [0,1]\}$. Let $S^{d-1} = \{ u \in \BR^d:\ \| u \|_2 = 1\}$ denote the $d$-dimensional unit sphere. For a square matrix $T$, we let $\| T \|$ denote its spectral norm.

\subsection{Linear policies}
\label{sec:linpol}
Under the assumption of linear Bellman completeness, there is always an optimal policy with the additional structure of being a \emph{linear policy}. Roughly speaking, linear policies take an action defined by the argmax with respect to some fixed coefficient vector: in particular, %
given $w \in \BR^d$ and $h \in [H]$, the associated linear policy at step $h$ is $\pi_{h,w}(x) := \argmax_{a \in \MA} \lng w, \phi_h(x,a) \rng$ \emph{if the argmax is unique}. It requires some care to appropriately break ties for states $x$ at which the $\argmax$ is not unique.\footnote{In particular, an appropriate tie-breaking procedure is  necessary for \cref{lem:lin-lb} to hold.} To do so, given $w \in \BR^d, h \in [H], x \in \MX$, define $\MA_{h,w}(x) := \argmax_{a \in \MA} \lng w, \phi_h(x,a) \rng \subset \MA$, where $\argmax$ is interpreted as the set of all actions maximizing $\lng w, \phi_h(x,a) \rng$. Then we set, for each $a \in \MA_{h,w}(x)$, 
\begin{align}
  \MG_{h,w}(x,a) := \left\{ \theta \in \BR^d : \ \| \theta\|_2 = 1,\ \lng \theta,
 \phi_h(x,a) > \max_{a' \in \MA_{h,w}(x) \backslash \{ a \}} \lng \theta, \phi_h(x,a') \right\}\nonumber.
\end{align}
Let $\nu_d$ denote the spherical measure on $S^{d-1}$. 
It is straightforward to see that, for all $w$ and $x$, $S^{d-1} \backslash \bigcup_{a \in \MA_{h,w}(x)} \MG_{h,w}(x,a)$ has measure 0 with respect to $\nu_d$. We now define, $\pi_{h,w} : x \ra \Delta(\MA)$ to be the following randomized policy: for all $w \in \BR^d$, $h \in [H]$, $x \in \MX$, $a \in \MA$,  
\begin{align}
  \label{eq:linpol-definition}
  \pi_{h,w}(a|x) := \One{a \in \MA_{h,w}(x)} \cdot \nu_d(\MG_{h,w}(x,a)).
\end{align}

It is straightfroward to see that $\pi_{h,w}(\cdot | x) \in \Delta(\MA)$. We say that a policy $\pi$ is a \emph{linear policy} if it is of the form $\pi = (\pi_{1,w_1}, \ldots, \pi_{H, w_H})$, for $w_1, \ldots, w_H \in \BR^d$. We let $\Pilin_h := \{ \pi_{h,w} :\ w \in \BR^d\}$ denote the space of linear policies at step $h$, and $\Pilin := \{ (\pi_1, \ldots, \pi_H) :\ \pi_h \in \Pilin_h \}$ denote the space of linear policies.

}

\section{Technical overview}
\label{sec:overview-online}
In this section, we overview the proof of \cref{thm:online-intro} (stated formally in \cref{thm:policy-learning}), which shows how, in the online setting (\cref{sec:online-problem}), \cref{alg:psdp-ucb} can efficiently learn a near-optimal policy for an unknown MDP which is linear Bellman complete. To simplify our notation, we assume in this section that the parameter $\Bbnd$ in \cref{asm:boundedness} is bounded by $\Bbnd \leq O(1)$. The bulk of the challenge is to perform the task of \emph{exploration}: how can we interact with the environment so as to reach  state-action pairs $(x,a)$ for which $\phi_h(x,a)$ points in a new direction?

\subsection{Prior work: exploration via optimism} \label{sec:optimism-overview}
A popular approach to exploration in RL involves the use of \emph{optimism}, which describes, loosely speaking, the technique of perturbing the algorithm's estimates of the MDP's value function, typically to \emph{increase} the estimated values, so as to induce the algorithm to visit new directions in feature space. Two distinct flavors of optimism have emerged in the literature: the first, \emph{global optimism},\footnote{See, e.g., \cite{zanette2020learning}, which used global optimism to  computationally  inefficiently learn a near-optimal policy under linear Bellman completeness; many other papers, including \cite{jin2021bellman,jiang2017contextual,du2021bilinear}, use global optimism in settings with more general nonlinear function approximation.} constructs a confidence set consisting of all possible vectors $w = (w_1, \ldots, w_H)$ which could be consistent with the optimal value function given the data observed so far. The global optimism procedure then chooses some $\bar w$ in this confidence set which maximizes $\E[\max_a \lng \bar w_1, \phi_1(x_1, a) \rng]$. It then executes the linear policy defined by $\bar w$, namely the policy $(\pi_{1, \bar w_1}, \ldots, \pi_{H, \bar w_H})$, uses the resulting data to update the confidence sets, and repeats. Unfortunately, the optimization problem of finding a maximizing $\bar w$ is nonconvex, and seems unlikely to have an efficient algorithm.

The second type of optimism-based exploration technique in RL is a more \emph{local} approach: at each episode $t$, for some function $B_h\^t : \MX \times \MA \ra \BR_{\geq 0}$, an \emph{exploration bonus} of $B_h\^t(x,a)$ is added to the reward $r_h(x,a)$ for the pair $(x,a)$ at step $h$. Then, the algorithm uses the data gathered prior to episode $t$ to estimate a policy $\pi\^t$ which, roughly speaking, maximizes the expected sum of rewards $r_h(x_h,a_h)$ and bonuses $B_h\^t(x_h,a_h)$ over a trajectory. This technique has proved successful for computationally efficient learning of tabular MDPs (i.e., the setting where $|\MX|, |\MA|$ are finite) \cite{azar2017minimax,jin2018q}, as well as \emph{linear MDPs} \cite{jin2020provably}, which constitute a  generalization of tabular MDPs and a strict subclass of linear Bellman complete MDPs.

Unfortunately, the local optimism approach fails in the more general setting of linear Bellman completeness. At a high level, this failure of local optimism results from the fact that the exploration bonuses $B_h\^t(x,a)$ may not be linear. To illustrate, we consider the \texttt{LSVI-UCB} approach of \cite{jin2020provably}. This approach computes optimistic $Q$-function estimates, $\hat Q_h\^t$, in a backwards-inductive manner. In particular, it defines:
\begin{align}
  \label{eq:lsvi-q}
  \hat Q_h\^t(x,a) := \min\{ \lng \hat w_h\^t, \phi_h(x,a) \rng + B_h\^t(x,a), H \}, %
\end{align}
where $\hat w_h\^t$ is the solution to a least-squares objective function whose labels are given by $\max_{a \in \MA} \hat Q_{h+1}\^t(x^i, a)$, for various states $x^i$.\footnote{The choice of the scalar $H$ in the minimum in \cref{eq:lsvi-q} results from the fact that the $Q$-function of any policy is bounded above in absolute value by $H$, as the reward at each step has absolute value at most 1.} %
The bonus $B_h\^t(x,a)$ is defined as follows:   let $\Sigma_h\^t$ be the covariance matrix of features at step $h$ observed prior to episode $t$. Then $B_h\^t$ is given by a scaling of the \emph{quadratic bonus}, i.e., for some scalar $\beta_h$, $B_h\^t(x,a) := \beta_h \cdot\Bquad_{h}(x,a; (\Sigma_h\^t)^{-1})$, where the quadratic bonus, $\Bquad_{h}(x,a; \Sigma)$, is defined for a general PSD matrix $\Sigma$, by
\begin{align}
  \label{eq:quad-bonus}
  \Bquad_{h}(x,a; \Sigma) := (\phi_h(x,a)^\t \cdot \Sigma \cdot \phi_h(x,a))^{1/2}.
\end{align}
The intuition behind the quadratic bonus is as follows: $\Bquad_{h}(x,a; (\Sigma_h\^t)^{-1})$ will be particularly large if $\phi_h(x,a)$ points in the direction of eigenvectors of $\Sigma_h\^t$ with small eigenvalues, i.e., directions which have not been explored in prior episodes. Thus such \emph{unexplored} states receive larger bonuses, and should be explored more during later episodes. More formally, $\Bquad_h(\cdot)$ takes the same form as the standard error bound from least-squares regression (see \cref{lem:phi-what-wt-diff}), meaning that by adding it to the reward function, one can ``cancel out'' regression errors and thus show that $\hat Q_h\^t$ in \cref{eq:lsvi-q} is optimistic (see \cite{jin2020provably,agarwal2022voql,zhang2022efficient,he2023nearly}). 

To ensure that $\hat w_h\^t$, used in the definition of $\hat Q_h\^t$, is a good estimator of the future rewards and bonuses, we certainly need the regression problem to be \emph{well-specified}, i.e., the expectation of the regression labels is a linear function in the features $\phi_h(x,a)$. %
In particular, this approach crucially relies on the fact that, in a linear MDP, %
$\E_{x' \sim P_h(x,a)} [\max_{a'} \hat Q_{h+1}\^t(x', a')]$, is a linear function of $\phi_h(x,a)$. In fact, an even stronger statement holds for linear MDPs: for \emph{any} function $F : \MX \ra \BR$, its \emph{Bellman backup}, namely $\E_{x' \sim P_h(x,a)}[F(x')]$, is linear in $\phi_h(x,a)$. Unfortunately, this fact fails to hold under the weaker assumption of linear Bellman completeness. We provide an example in \cref{prop:lsvi-counterexample} for which $\E_{x' \sim P_h(x,a)}[\max_{a'} \hat Q_{h+1}\^t(x',a')]$ is \emph{not linear}, for some $\hat Q_{h+1}\^t$ as in \cref{eq:lsvi-q}.

Our main innovation is to show that if the bonus $B_h\^t(x,a)$ is carefully defined to be somewhat different from a quadratic bonus, then a variant of \cref{eq:lsvi-q} \emph{does} have a linear Bellman backup. In the remainder of this section, we discuss in detail how to execute this strategy: in \cref{sec:psdp-ucb-overview}, we first discuss the overall structure of our algorithm, \texttt{PSDP-UCB}, which is a variant of the \texttt{LSVI-UCB} algorithm discussed above but which lends itself to a simpler analysis for the setting of linear Bellman completeness. Then, in \cref{sec:bellman-linearity-overview,sec:gena-overview}, we discuss how to construct exploration bonuses for use in \texttt{PSDP-UCB} which do have a linear Bellman backup.  

\subsection{Overview of \texttt{PSDP-UCB}}
\label{sec:psdp-ucb-overview}
\begin{algorithm}[ht]
  \caption{\texttt{PSDP-UCB}$(T, H, \lambda, \beta, \lambda_1)$: \texttt{PSDP} with upper confidence bounds}
	\label{alg:psdp-ucb}
	\begin{algorithmic}[1]\onehalfspacing
      \Require{Number of episodes $T$, horizon $H$, non-negative parameters $\lambda, \beta, \lambda_1$.}
\For{Round $t = 1,\ldots, T$}
  \For{Step $h = H, \ldots 1$}
    \State For $i \in [n]$, draw \arxiv{samples} $\{ (x_k\^{t,i,h}, a_k\^{t,i,h}, r_k\^{t,i,h})\}_{k=1}^H$ \arxiv{i.i.d.~}from $\unif(\{\hat \pi\^s \circ_h \tilde \pi\^s \circ_{h+1} \hat \pi\^t\}_{s=1}^{t-1}\})$.\label{line:collect-tih-samples}
    \State Set $\Sigma_h\^{t} \gets \lambda I +\sum_{i=1}^{n} \phi_h(x_h\^{t,i,h}, a_h\^{t,i,h}) \phi_h(x_h\^{t,i,h},a_h\^{t,i,h})^\t$.  \label{line:define-sigmaht}    
  \State \label{line:rhat-rewards} For each $g > h$ and $i \in [n]$, define \Comment{\emph{$F_g\^t$ defined in \cref{eq:define-fht-bonus}.}}   %
  \begin{align}
    \hat r_g\^{t,i,h} := r_g\^{t,i,h} + F_g\^t(x_g\^{t,i,h})\label{eq:rhat-rewards}. %
  \end{align}
    \State Set $\hat w_h\^t \gets (\Sigma_h\^{t})^{-1} \cdot \sum_{i=1}^{n} \phi_h(x_h\^{t,i,h}, a_h\^{t,i,h}) \cdot \left(r_h\^{t,i,h} + \sum_{g=h+1}^H \hat r_g\^{t,i,h}\right)$.\label{line:define-w-hat}
    \State Define $\hat \pi_h\^t$ by $\hat \pi_h\^t(x) := \argmax_{a \in \MA} \lng \phi_h(x,a), \hat  w_h\^t \rng$. \emph{(If $t=1$, let $\hat \pi\^t$ be arbitrary.)}\label{line:define-pi-hat}
    \State  Define $(\Sigma', \Lambda') := \trunc{\frac{\beta}{\lambda_1} \cdot (\Sigma_h\^t)^{-1/2}}{\sigtr}$ (per \cref{def:mat-truncation}). \label{line:define-truncation}
    \State  \multiline{Define $\tilde \pi_h\^t : \MX \ra \Delta(\MA)$ by, for all $(x,a) \in \MX \times \MA$, \colt{ $\tilde \pi_h\^t(x)(a) := \PR_{w \sim \MN(0, \Sigma')}\left( a = \argmax_{a' \in \MA} \lng w, \phi_h(x, a') \right) $}. 
    \arxiv{
      \begin{align}
\tilde \pi_h\^t(x)(a) := \PR_{w \sim \MN(0, \Sigma')}\left( a = \argmax_{a' \in \MA} \lng w, \phi_h(x, a') \right)  \nonumber.
      \end{align}
      }}\label{line:define-pi-tilde}

    \EndFor
\EndFor

    \end{algorithmic}
  \end{algorithm}

Our algorithm, \PSDPUCB, is presented in \cref{alg:psdp-ucb}. For some $T \in \BN$, the algorithm proceeds over $T$ \emph{rounds}. In each round $t \in [T]$, \PSDPUCB constructs a policy $\hat \pi\^t = (\hat \pi_1\^t, \ldots, \hat \pi_H\^t)$, which may be interpreted as an estimate of the optimal policy. We will show that, for $T$ sufficiently large, with high probability there is some $t\in[T]$ so that $\E[V_1^\st(x_1) - V_1^{\hat \pi\^t}(x_1)]$ is small.

In each round $t \in [T]$, the algorithm iterates through steps $h = H, H-1, \ldots, 1$: for each such $h$, it will define the mapping $\hat \pi_h\^t : \MX \ra \Delta(\MA)$. Thus, at step $h$, $\hat \pi_{h+1}\^t, \ldots, \hat \pi_H\^t$ have already been defined. At each step $h$, in  \cref{line:collect-tih-samples}, the algorithm samples some number $n$ of trajectories from the MDP according to a uniformly random policy from (roughly) the set $\{ \hat \pi\^s \circ_{h+1} \hat \pi\^t \}_{s \in [t-1]}$.\footnote{\label{fn:pitilde} Technically, this statement is slightly inaccurate: \cref{alg:psdp-ucb} actually samples a uniform policy from the set $\{ \hat \pi\^s \circ_h \tilde \pi_h\^s \circ_{h+1} \hat \pi\^t\}_{s \in [t-1]}$, where $\tilde \pi_h\^s$ is defined for each $s$ to be a policy which performs a sort of ``uniform'' exploration at each step $h$ (\cref{line:define-pi-tilde}). This choice is made to simplify the analysis and is immaterial for our present discussion.} Next, in \cref{line:define-sigmaht}, the algorithm sets $\Sigma_h\^t$ to be the empirical covariance matrix of features at step $h$ under the $n$ trajectories just sampled.

\PSDPUCB then modifies the rewards in each of these $n$ trajectories by adding to the true rewards an exploration bonus, denoted by $F_g\^t(\cdot)$ in \cref{line:rhat-rewards}. We call these modified rewards, denoted by $\hat r_h\^{t,i,h}$ in \cref{alg:psdp-ucb}, \emph{optimistic rewards}. %
The remainder of this section will be focused on defining and explaining the intuition behind the bonus function $F_g\^t$ (the formal definition is given in \cref{eq:define-fht-bonus}). Then, in \cref{line:define-w-hat}, \PSDPUCB sets $\hat w_h\^t$ to be the solution to a least-squares regression problem where the covariance matrix is $\Sigma_h\^t$ and the labels are the cumulative optimistic rewards from step $h$ to $H$. The algorithm then repeats this procedure for step $h-1$, and so on.

We have omitted a description of \cref{line:define-truncation,line:define-pi-tilde}, in which a policy at step $h$, $\tilde \pi_h\^t: \MX \ra \Delta(\MA)$, is defined. This policy plays a relatively minor role (see \cref{fn:pitilde}) and may be ignored for the purpose of the present discussion.

\paragraph{Overview of the analysis of \PSDPUCB.} To analyze \PSDPUCB, we define
\begin{align}
  Q_h\^t(x,a) :=&  r_h(x,a) +  \E^{\hat \pi\^t}\left[ \sum_{g=h+1}^H r_g(x_g, a_g) +  F_g\^t(x_g)\ \mid \ (x_h,a_h) = (x,a) \right]\label{eq:overview-qht-definition},
\end{align}
and $V_h\^t(x) := Q_h\^t(x, \hat \pi_h\^t(x))$. $Q_h\^t$ represents the $Q$-value function corresponding to the policy $\hat \pi\^t$ and the optimistic rewards discussed above. Our main technical lemma is the following statement showing that adding the exploration bonuses $F_h\^t(\cdot)$ to the rewards ensures that $Q_h\^t$ is an upper bound on $Q_h^\st$:
\begin{lemma}[Informal version of \cref{lem:approx-optimism}]
  \label{lem:approx-optimism-informal}
With high probability over the execution of \PSDPUCB (\cref{alg:psdp-ucb}), for all $x,a,h,t$, we have $Q_h\^t(x,a) \geq Q_h^\st(x,a) - o(1)$ and $V_h\^t(x) + F_h\^t(x) \geq V_h^\st(x) - o(1)$. 
\end{lemma}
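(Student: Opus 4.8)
The plan is to establish \cref{lem:approx-optimism-informal} by backwards induction on $h$, simultaneously proving the two claims $Q_h\^t(x,a) \geq Q_h^\st(x,a) - o(1)$ and $V_h\^t(x) + F_h\^t(x) \geq V_h^\st(x) - o(1)$ for all $x,a,t$. The base case $h = H+1$ is trivial (all quantities vanish, or $F_{H+1}\^t \equiv 0$). For the inductive step, the key identity is that by definition \cref{eq:overview-qht-definition}, $Q_h\^t(x,a) = r_h(x,a) + \E_{x' \sim P_h(x,a)}[V_{h+1}\^t(x') + F_{h+1}\^t(x')]$, so the inductive hypothesis on the second claim at step $h+1$ immediately gives $Q_h\^t(x,a) \geq r_h(x,a) + \E_{x'\sim P_h(x,a)}[V_{h+1}^\st(x')] - o(1) = Q_h^\st(x,a) - o(1)$. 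That much is essentially bookkeeping. The real content is passing from the first claim at step $h$ to the second claim at step $h$, i.e.\ showing $V_h\^t(x) + F_h\^t(x) = Q_h\^t(x, \hat\pi_h\^t(x)) + F_h\^t(x) \geq V_h^\st(x) - o(1)$.

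To do this I would compare $Q_h\^t(x, \hat\pi_h\^t(x))$, where $\hat\pi_h\^t$ is the greedy policy with respect to $\lng \hat w_h\^t, \phi_h(\cdot,\cdot)\rng$, against $V_h^\st(x) = \max_a Q_h^\st(x,a)$. Writing $a^\st = \pi_h^\st(x)$, greediness gives $\lng \hat w_h\^t, \phi_h(x,\hat\pi_h\^t(x))\rng \geq \lng \hat w_h\^t, \phi_h(x,a^\st)\rng$. The strategy is then a two-sided argument: (i) lower-bound $Q_h\^t(x,\hat\pi_h\^t(x))$ by $\lng \hat w_h\^t, \phi_h(x,\hat\pi_h\^t(x))\rng$ minus a bonus/regression-error term (using that $\hat w_h\^t$ is the least-squares solution with the optimistic rewards as labels, so $\lng \hat w_h\^t, \phi_h\rng$ approximates $Q_h\^t$ up to the standard least-squares error bound, here controlled via \cref{lem:phi-what-wt-diff} by something like $\Bquad_h(x,a; (\Sigma_h\^t)^{-1})$); (ii) use greediness to swap to $\phi_h(x,a^\st)$; (iii) lower-bound $\lng \hat w_h\^t, \phi_h(x,a^\st)\rng$ by $Q_h\^t(x,a^\st)$ minus another regression-error term, and then apply the already-established first claim at step $h$ to get $Q_h\^t(x,a^\st) \geq Q_h^\st(x,a^\st) - o(1) = V_h^\st(x) - o(1)$. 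The regression-error terms picked up in (i) and (iii) are exactly what the bonus $F_h\^t(x)$ must dominate — and this is the whole reason for the careful, non-quadratic design of $F_h\^t$ in \cref{eq:define-fht-bonus}: unlike the naive quadratic bonus $\Bquad_h$, it must both control these errors \emph{and} have a linear Bellman backup so that the downstream regressions at steps $h-1, h-2, \ldots$ remain well-specified (this is where linear Bellman completeness, \cref{def:lbc}, and the Bellman-linearity property are invoked to carry the induction forward cleanly).

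The main obstacle I anticipate is precisely the tension in the previous paragraph: I need the bonus $F_h\^t$ to be large enough pointwise to absorb the least-squares errors $\Bquad_h(x, \cdot; (\Sigma_h\^t)^{-1})$ at the relevant state-action pairs (in particular at $(x, a^\st)$, where $a^\st$ need not be the action the algorithm's sampling policy tends to visit, so the error there could be large if $\phi_h(x,a^\st)$ points in an unexplored direction), while \emph{also} maintaining that $\E_{x'\sim P_h(x,a)}[V_{h+1}\^t(x') + F_{h+1}\^t(x')]$ — equivalently the regression labels at the previous step — remain linear in $\phi_h(x,a)$. A pure quadratic bonus gives the first property but fails the second (this is the counterexample \cref{prop:lsvi-counterexample}); the Gaussian-smoothed / truncated construction underlying \cref{eq:define-fht-bonus} is engineered to thread this needle, and the delicate part is tracking how the ``$o(1)$'' slack accumulates across the $H$ steps and across the $A = |\MA|$-dependent blowup from having to handle all actions (not just the one the algorithm picks). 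A secondary technical point is the high-probability qualifier: the claimed bound holds only on the good event that all $n$ least-squares estimates concentrate, so I would first fix a failure event of probability $\le \delta$ (union-bounded over $t \in [T]$, $h \in [H]$, and a suitable cover of the feature space) via a self-normalized / matrix-concentration argument, and carry out the deterministic induction above conditioned on its complement.
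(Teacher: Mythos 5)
The overall skeleton of your proposal---backwards induction on $h$, deriving the $Q$-inequality at step $h$ from the $V$-inequality at step $h+1$, and then the $V$-inequality at step $h$ from the $Q$-inequality at step $h$ via greediness of $\hat\pi_h\^t$ and a least-squares error bound---matches the paper's proof of \cref{lem:approx-optimism}, and your handling of the base case and the $Q$-step is exactly right. But the central step as you have written it would fail. Your steps (i) and (iii) require the bonus to dominate the regression errors at the individual feature vectors $\phi_h(x,\hat\pi_h\^t(x))$ and $\phi_h(x,\pi_h^\st(x))$, i.e.\ $F_h\^t(x)\gtrsim \beta\,\bigl(\phi_h(x,a)^\t(\Sigma_h\^t)^{-1}\phi_h(x,a)\bigr)^{1/2}$ for those actions. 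The actual bonus cannot and does not satisfy this: by design (\cref{lem:bonus-bound}) $|F_h\^t(x)|\le \beta/(2C_{\ref{lem:phi-what-wt-diff}}HB\sqrt{d\iota})$ uniformly, whereas $\beta\Bquad_h(x,a^\st;(\Sigma_h\^t)^{-1})$ can be of order $\beta$ when $\phi_h(x,a^\st)$ points in an unexplored direction. This truncation is forced: if $F_h\^t$ did dominate the full regression error pointwise, the regression labels at step $h-1$ would have magnitude of order $\beta$, which forces the scaling at step $h-1$ to grow by a $\poly(d)$ factor over that at step $h$ and yields a $d^{O(H)}$ blowup (this is precisely \cref{it:bquad-min} in \cref{sec:bellman-linearity-overview}). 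The paper further notes that no Bellman-linear bonus can simultaneously satisfy the pointwise lower bound of \cref{prob:bonfun} and the required upper bound, which is why it retreats to the weaker, midpoint-relaxed lower bound of \cref{prob:bonfun-b}.

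The missing idea is how optimism is recovered using only that weaker lower bound. The paper's proof splits into two cases according to whether the feature spread at $x$ lies mostly in the well-explored directions $\Lambda'$ or not. In Case 1, optimism follows from \cref{lem:opposite-skew} together with the $\Fnormal$-type term of the bonus alone, with no appeal to the regression error at $a^\st$. In Case 2, instead of invoking \cref{lem:phi-what-wt-diff} at $\phi_h(x,a^\st)$ and $\phi_h(x,\hat\pi_h\^t(x))$ separately, the value difference is decomposed through the midpoint $\mu^\st = \midpoint[\cPhi_h(x),\phi_h(x,\hat\pi_h\^t(x)),\phi_h(x,\pi_h^\st(x))]$: the regression error bound is applied only to the difference $\phi_h(x,\hat\pi_h\^t(x))-\mu^\st$; the $Q$-optimism already established at step $h$ is applied to $\mu^\st$ itself (using linearity of $Q_h\^t-Q_h^\st$ over the convex hull $\cPhi_h(x)$); and the leftover term $\lng w_h^\st,\mu^\st-\phi_h(x,\pi_h^\st(x))\rng$ is handled by Cauchy--Schwarz. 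After passing through the truncation lemma (\cref{lem:bound-truncation-error}), the resulting error is exactly the quantity that \cref{cor:optimal-perimeter} shows is dominated by the truncated linear bonus. Without this restructuring, your inductive step does not close.
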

In light of \cref{lem:approx-optimism-informal}, we can bound the average suboptimality of the policies $\hat \pi\^t$ as follows:
\begin{align}
  \frac 1T \sum_{t=1}^T \left( \E[V_1^\st(x_1) - V_1^{\hat \pi\^t}(x_1)] \right) \leq & \frac 1T \sum_{t=1}^T \left( \E[V_1\^t(x_1) + F_1\^t(x_1) - V_1^{\hat \pi\^t}(x_1)] \right) - o(1) \nonumber\\
  =& \frac 1T \sum_{t=1}^T  \E^{\hat \pi\^t} \left[\sum_{h=1}^H F_h\^t(x_h) \right] - o(1) \label{eq:sum-th-fht},
\end{align}
where the inequality uses \cref{lem:approx-optimism-informal}, and the equality uses the definition of $V_h\^t$. To upper bound \cref{eq:sum-th-fht}, we need to show, roughly speaking, the following: for states $x$ which are likely to be visited by many of the policies $\hat\pi\^t$, the sum of exploration bonuses at $x$ is sublinear in $T$, i.e., $\frac 1T \sum_{t=1}^T F_h\^t(x) = o(T)$. We show a version of this statement ``in feature space'': in particular, we first upper bound $|F_h\^t(x)|$, for all states $x$, as follows.
\begin{lemma}[Informal version of \cref{lem:sigmap-bound,lem:quadratic-sim,lem:orig-truncated}]
  \label{lem:fht-sigma}
  For all $x,h,t$ in the execution of \PSDPUCB, it holds that \colt{$|F_h\^t(x)| \leq \poly(d,H)^{A}\cdot \max_{a \in \MA} \left( \phi_h(x,a)^\t \cdot (\Sigma_h\^t)^{-1} \cdot \phi_h(x,a) \right)^{1/2}$.}
 \arxiv{ \begin{align}
|F_h\^t(x)| \leq \poly(d,H)^{A}\cdot \max_{a \in \MA} \left( \phi_h(x,a)^\t \cdot (\Sigma_h\^t)^{-1} \cdot \phi_h(x,a) \right)^{1/2}\nonumber.
  \end{align}}
\end{lemma}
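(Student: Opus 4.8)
The plan is to unpack the definition of the bonus function $F_h\^t$ from \cref{eq:define-fht-bonus} (which in this excerpt is referenced but not yet shown) and bound it term by term, ultimately reducing everything to the quadratic bonus $\Bquad_h(x,a;(\Sigma_h\^t)^{-1})$ of \cref{eq:quad-bonus}. Since $F_h\^t$ is built up through a sequence of operations — truncation (\cref{line:define-truncation} via $\trunc{\cdot}{\sigtr}$), Gaussian smoothing at scale $\sigma$, a $\max$ over the $A$-element action polytope $\Phi_h(x)$, and projection onto $\Span \cPhi_h(x)$ — each of these operations will contribute at most a $\poly(d,H)$ factor or an additive error controlled by $\sigtr,\sigma,\lambda_1$, and the dependence on $A$ enters because the underlying object lives on (or is integrated against) the polytope $\cPhi_h(x) \in \SP_A^d$ whose combinatorial complexity (number of faces/vertices) is exponential in $A$. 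The strategy is: (i) first establish that the ``core'' quadratic-type quantity $\frac{\beta}{\lambda_1}\Bquad_h(x,a;(\Sigma_h\^t)^{-1})$ controls $F_h\^t$ before truncation, using the fact that $F_h\^t$ is defined as a maximum over $a$ of a Gaussian-smoothed function whose unsmoothed version is itself Lipschitz in the relevant norm; (ii) show the matrix truncation $\trunc{\cdot}{\sigtr}$ only perturbs things by a $\poly$ factor (this is the content of \cref{lem:orig-truncated}); (iii) show the Gaussian-smoothing step changes the value by at most $\poly(d)\cdot\sigma$ (this is \cref{lem:quadratic-sim}); and (iv) pass from the intermediate $\Sigma'$-based bound back to a $(\Sigma_h\^t)^{-1}$-based bound (\cref{lem:sigmap-bound}).

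Concretely, I would first recall that by construction $F_h\^t(x)$ equals (up to the smoothing and truncation bookkeeping) something like $\max_{a\in\MA}$ of a quantity comparable to $\lng \text{(a vector of norm} \lesssim \beta/\lambda_1 \cdot \|(\Sigma_h\^t)^{-1/2}\|), \phi_h(x,a)\rng$ restricted to $\Span\cPhi_h(x)$; then I would use Cauchy--Schwarz in the $(\Sigma_h\^t)$-geometry to write $|\lng v, \phi_h(x,a)\rng| \leq \|v\|_{\Sigma_h\^t} \cdot \|\phi_h(x,a)\|_{(\Sigma_h\^t)^{-1}} = \|v\|_{\Sigma_h\^t}\cdot \Bquad_h(x,a;(\Sigma_h\^t)^{-1})$, and bound $\|v\|_{\Sigma_h\^t}$ using $\|\Sigma_h\^t\|\leq \lambda + n \leq \poly(d,H)$ (from \cref{asm:boundedness}, since each $\|\phi_h\|_2\leq 1$) together with $\|v\|_2 \lesssim (\beta/\lambda_1)\lambda^{-1/2}$. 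The remaining $\poly(d,H)^A$ slack is then spent on: the projection onto the at-most-$A$-dimensional span (which can blow up a smoothed function's value by a factor depending on the conditioning of the polytope, hence exponential in $A$), the error terms from truncating at threshold $\sigtr$ and smoothing at scale $\sigma$ (which I would absorb by choosing $\sigtr,\sigma$ polynomially small, as is presumably done in the formal parameter settings), and constants from the normal density normalization $\MN_\sigma$.

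The main obstacle I anticipate is controlling the interaction between the Gaussian smoothing and the $\max$ over the action polytope — i.e., bounding how much the operation ``smooth, then take $\max_a$ over $\Phi_h(x)$, then restrict to $\Span\cPhi_h(x)$'' can amplify the size of the function relative to the clean quadratic bonus. Smoothing a $\max$ of finitely many linear functions produces a convex function whose gradient (hence local Lipschitz constant in the relevant directions) can be as large as the smoothing scale allows, and the restriction to the $A$-dimensional subspace $\Span\cPhi_h(x)$ is where an honest exponential-in-$A$ factor seems unavoidable, because the relevant volume/conditioning quantities of a polytope with $A$ vertices in general position scale that way. Handling this carefully — presumably by a covering or volumetric argument over $S^{d-1}\cap \Span\cPhi_h(x)$ and tracking how $\nu_d$-type spherical measures of the regions $\MG_{h,w}(x,a)$ enter — is where the real work lies; the rest is Cauchy--Schwarz and routine norm bookkeeping using \cref{asm:boundedness}.
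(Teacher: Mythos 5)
Your plan does not engage with what $F_h\^t$ actually is, and as a result both the key step and the source of the $\poly(\cdot)^A$ factor are misidentified. The bonus \cref{eq:define-fht-bonus} is not a truncated/smoothed max of a linear function; it is $\lambda_1(C/\epapx)^{2A}\,\E_{u'\sim\MN(0,\Sigma')}\E_{v'\sim\MN(0,\Lambda')}[\Ftp_h(x;\beta u',v')]$ plus a term proportional to $\E_{w\sim\MN(0,\Sigma')}[\max_a\lng w,\phi_h(x,a)\rng]$, where $\Ftp(\Phi;u,v)=\max_\phi\lng u,\phi\rng+\max_\phi\lng v,\phi\rng-\max_\phi\lng u+v,\phi\rng$. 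The whole upper bound hinges on the structural fact (\cref{lem:tp-upper-bound}) that $0\le\Ftp(\Phi;u,v)\le 2\min\{\max_{\phi,\phi'}\lng u,\phi-\phi'\rng,\max_{\phi,\phi'}\lng v,\phi-\phi'\rng\}$ --- in particular one may keep only the $u$-direction (the $\Sigma'$ part) and discard $v$ entirely --- followed by \cref{lem:quadratic-sim}, which converts the resulting $\Sigma'$-width into $\sqrt{d}\max_a(\phi^\t\Sigma'\phi)^{1/2}$, and \cref{lem:orig-truncated} ($\Sigma'\preceq\sigma^{-1}\Gamma$) to return to $(\Sigma_h\^t)^{-1}$. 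This argument is entirely free of $A$: the $(\cdot)^{2A}$ in the final bound is just the explicit coefficient baked into the definition of $F_h\^t$ (placed there so that the matching \emph{lower} bound, \cref{cor:optimal-perimeter}, holds --- that is where the covering argument over $S^{d-1}\cap\MS_{\Phi,\Sigma}$ and the $\vep^{-O(A)}$ loss you describe actually live). Your claim that the max over the $A$-vertex polytope and the projection onto $\Span\cPhi_h(x)$ force an exponential-in-$A$ blowup in the \emph{upper} bound is not how the proof goes, and \cref{lem:quadratic-sim} is not a ``smoothing changes the value by $\poly(d)\sigma$'' statement; it is a two-sided comparison between the Gaussian mean width of $\Phi$ and quadratic forms.

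There is also a quantitative failure in the one concrete step you do propose. Bounding $|\lng v,\phi\rng|\le\|v\|_{\Sigma_h\^t}\|\phi\|_{(\Sigma_h\^t)^{-1}}$ and then $\|v\|_{\Sigma_h\^t}\le\|\Sigma_h\^t\|^{1/2}\|v\|_2\le\sqrt{\lambda+n}\,\|v\|_2$ injects a factor $\sqrt{n}=\sqrt{3T}$ into the prefactor. Since $T$ scales like $(\cdots/\epfinal)^{\Theta(A)}$, this is not $\poly(d,H)^A$, and more importantly it would void the downstream use of the lemma: the elliptic-potential argument in \cref{thm:policy-learning} needs the prefactor multiplying $\sqrt{Td\log T}$ to be essentially independent of $T$, which is exactly what the route through \cref{lem:quadratic-sim,lem:orig-truncated} (where $\Sigma'$ is a projection, so $\|\Sigma'\|\le1$, and $\Sigma'\preceq\sigtr^{-1}(\beta/\lambda_1)(\Sigma_h\^t)^{-1/2}$) achieves. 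So the proposal, as written, would not yield a usable bound even after filling in the definition of $F_h\^t$.
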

\cref{lem:fht-sigma} shows that for states $x$ for which all actions $a \in \MA$ point in directions of $\Sigma_h\^t$ corresponding to eigenvectors with large eigenvalues (i.e., ``well-explored'' directions), $|F_h\^t(x)|$ must be small. Using \cref{lem:fht-sigma}, the quantity in \cref{eq:sum-th-fht} may be bounded by $o(1)$ using a variant of the elliptic potential lemma (\cref{lem:epl-gen}). The elliptic potential lemma formalizes the notion that there are finitely many ``different directions'' in $\BR^d$, so if $T$ is sufficiently large, all directions $\BR^d$ frequently explored by policies $\hat \pi\^t$ must correspond to large-eigenvalue directions for $\Sigma_h\^t$.

Thus, it remains to discuss the proofs of  \cref{lem:approx-optimism-informal,lem:fht-sigma}. Notice that the statements of these two lemmas are in tension: \cref{lem:approx-optimism-informal} requires the bonuses $F_h\^t$ be sufficiently large so that $Q_h\^t$ is an upper bound on $Q_h^\st$, while \cref{lem:fht-sigma} requires that $F_h\^t$ be sufficiently small, as controlled by $\Sigma_h\^t$. As discussed in the following subsections, by carefully defining $F_h\^t$, we can ensure that both of these criteria are met.  

\subsection{Implementing local optimism via Bellman-linearity}
\label{sec:bellman-linearity-overview}
Essentially any approach to proving \cref{lem:approx-optimism-informal} requires that $Q_h\^t(x,a)$ be a linear function of $\phi_h(x,a)$.\footnote{In particular, linearity of $Q_h\^t$ establishes that the regression problem solved in \cref{line:define-w-hat} of \cref{alg:psdp-ucb} is well-specified, which allows us to show upper bounds on the prediction error $\left|\lng \hat w_h\^t, \phi_h(x,a) \rng- Q_h\^t(x,a)\right|$ and is in turn used to establish optimism.} %
In turn, as discussed in \cref{sec:optimism-overview}, this requires that the bonuses $F_{h+1}\^t(\cdot)$ have a Bellman backup which is linear in $\phi_{h}(x,a)$. 
It will be useful to formalize the following notion of \emph{Bellman-linearity}, which describes those functions whose Bellman backup is linear in the features: %
 \begin{definition}
  \label{def:bl}
Fix $h \in [H]$ with $h > 1$ and $k \in \BN$. We say that a function $G : \MX \ra \BR^k$ is \emph{Bellman-linear at step $h$} if there exists $W_G \in \BR^{k \times d}$ so that, for all $(x,a) \in \MX \times \MA$, it holds that $W_G \cdot\phi_{h-1}(x,a) = \E_{x' \sim P_{h-1}(x,a)}[G(x')]$. In particular, in the case that $k=1$, $G : \MX \ra \BR$ is Bellman-linear at step $h$ if there is $w_G \in \BR^d$ so that $\lng w_G, \phi_{h-1}(x,a) \rng = \E_{x' \sim P_{h-1}(x,a)}[G(x')]$. 
\end{definition}
If the value of $h$ is clear, we will simply say the function $G$ is Bellman-linear. The canonical example of a Bellman-linear function is the mapping $x \mapsto \max_a \lng w, \phi_h(x,a) \rng$, for any $w \in \BR^d$. Given that such functions are Bellman-linear, we can construct additional Bellman-linear functions by averaging, taking limits (\cref{lem:bl-limit}), and differentiating (\cref{lem:diff-bl}). The following lemma shows that Bellman-linearity of the bonuses $F_h\^t$ is indeed a sufficient condition for linearity of $Q_h\^t$:
\begin{lemma}[Variant of \cref{lem:wht-exist-bound}]
  \label{lem:wht-exist-bound-overview}
Suppose that, in \PSDPUCB, for each $h \in [H], t \in [T]$, $F_h\^t : \MX \ra \BR$ is Bellman-linear at step $h$. Then for all $h,t$, there is some $w_h\^t \in \BR^d$ so that $Q_h\^t(x,a) = \lng \phi_h(x,a), w_h\^t\rng$ for all $(x,a) \in \MX \times \MA$. %
\end{lemma}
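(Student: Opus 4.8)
The plan is to prove the lemma by \textbf{backward induction on $h$}, peeling off one step of the dynamics at a time and using Bellman-linearity of the bonuses together with the structural lemmas about Bellman-linear functions.

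\emph{Base case ($h = H$).} In \cref{eq:overview-qht-definition} the sum $\sum_{g=H+1}^H$ is empty, so $Q_H\^t(x,a) = r_H(x,a) = \lng \phi_H(x,a), \theta_H^{\mathsf r}\rng$ by \cref{def:lbc}, and we take $w_H\^t := \theta_H^{\mathsf r}$.

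\emph{Inductive step.} Suppose $Q_{h+1}\^t(x,a) = \lng \phi_{h+1}(x,a), w_{h+1}\^t \rng$ for all $(x,a)$. Conditioning on the next state and invoking the Markov property in \cref{eq:overview-qht-definition},
\[ Q_h\^t(x,a) = r_h(x,a) + \E_{x' \sim P_h(x,a)}\big[ F_{h+1}\^t(x') + V_{h+1}\^t(x') \big], \]
where $V_{h+1}\^t(x') = Q_{h+1}\^t(x', \hat\pi_{h+1}\^t(x')) = \E_{a' \sim \hat\pi_{h+1}\^t(x')}\lng \phi_{h+1}(x', a'), w_{h+1}\^t \rng$ by the inductive hypothesis. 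Since $r_h(x,a) = \lng \phi_h(x,a), \theta_h^{\mathsf r}\rng$, it suffices to produce a single $u \in \BR^d$ with $\E_{x'\sim P_h(x,a)}[F_{h+1}\^t(x') + V_{h+1}\^t(x')] = \lng \phi_h(x,a), u\rng$ for all $(x,a)$; then $w_h\^t := \theta_h^{\mathsf r} + u$ works. I claim the map $x' \mapsto F_{h+1}\^t(x') + V_{h+1}\^t(x')$ is Bellman-linear at step $h+1$ in the sense of \cref{def:bl}, which immediately supplies such a $u$. The bonus $F_{h+1}\^t$ is Bellman-linear by the hypothesis of the lemma, and sums of Bellman-linear functions are Bellman-linear (add the witnessing matrices of \cref{def:bl}), so it only remains to show $V_{h+1}\^t$ is Bellman-linear at step $h+1$. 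Because $\hat\pi_{h+1}\^t$ is the linear policy $\pi_{h+1, \hat w_{h+1}\^t}$, the function $V_{h+1}\^t$ has the form $x' \mapsto \lng \phi_{h+1}(x', \pi_{h+1,v}(x')), w\rng$ with $v = \hat w_{h+1}\^t$ and $w = w_{h+1}\^t$. Each function $x' \mapsto \max_{a'\in\MA}\lng v + \lambda w, \phi_{h+1}(x',a')\rng$, $\lambda \in \BR$, is Bellman-linear (the canonical example, after rescaling $v+\lambda w$ into $\MB_{h+1}$ and using linearity of $\MT_h$), and $x' \mapsto \lng \phi_{h+1}(x', \pi_{h+1,v}(x')), w\rng$ is precisely the $\lambda$-derivative of this family at $\lambda = 0$ — with the measure-theoretic tie-breaking of \cref{sec:linpol} exactly engineered so that this derivative exists and has the stated value even at states where the $v$-argmax is not unique. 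Hence $V_{h+1}\^t$ is Bellman-linear by \cref{lem:diff-bl}, completing the induction.

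\emph{Main obstacle.} The only non-routine point is the claim that $x' \mapsto \lng \phi_{h+1}(x', \pi_{h+1,v}(x')), w\rng$ is Bellman-linear: a naive envelope-theorem argument fails at states where the argmax w.r.t.\ $v$ is not unique, since such states form a null set for the spherical tie-breaking measure but may carry positive $P_h(x,a)$-mass, and there the smoothed maximum is not differentiable in $\lambda$. This is exactly the difficulty that the tie-breaking construction of \cref{sec:linpol} and the differentiation lemma \cref{lem:diff-bl} are built to handle, so once those are available the present lemma follows quickly. (If the quantitative norm bounds on $w_h\^t$ of the full \cref{lem:wht-exist-bound} are also wanted, they propagate through the same induction using \cref{asm:boundedness} together with the boundedness of the bonuses from \cref{lem:fht-sigma}.)
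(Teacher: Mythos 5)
Your induction scheme and the decomposition of $Q_h\^t$ into the reward, the bonus backup, and the backup of $V_{h+1}\^t$ match the paper's proof of \cref{lem:wht-exist-bound} exactly, and you correctly isolate the crux: Bellman-linearity of $x' \mapsto \lng \phi_{h+1}(x', \pi_{h+1,v}(x')), w\rng$, which is the paper's \cref{lem:lin-lb}. The gap is in how you propose to establish that crux. You differentiate the one-parameter family $\lambda \mapsto \max_{a'} \lng v + \lambda w, \phi_{h+1}(x',a')\rng$ at $\lambda = 0$ and assert that the tie-breaking of \cref{sec:linpol} makes this derivative exist with the stated value at tie states. That is false: at a state $x'$ where $\MA_{h+1,v}(x')$ has more than one element, this map is a maximum of affine functions of $\lambda$ with a genuine kink at $0$ — its right derivative is $\max_{a \in \MA_{h+1,v}(x')} \lng w, \phi_{h+1}(x',a)\rng$ and its left derivative is the corresponding minimum — whereas $\lng \phi_{h+1}(x',\pi_{h+1,v}(x')), w\rng$ is the $\nu_d(\MG_{h+1,v}(x',\cdot))$-weighted average of the tied values, which generically equals neither one-sided derivative. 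Consequently the hypothesis of \cref{lem:diff-bl} (continuous differentiability in the parameter \emph{for all} $x$) fails whenever any reachable state has a tie, and such states can carry positive $P_h(x,a)$-mass, so you cannot discard them. No choice of tie-breaking convention can repair a one-dimensional directional-derivative argument here.

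The missing ingredient is the paper's Gaussian smoothing in $\theta \in \BR^d$: one first convolves $V(x',\theta) = \max_{a'}\lng \phi_{h+1}(x',a'),\theta\rng$ with $\MN(0,\sigma^2 I_d)$, which is Bellman-linear for each fixed $\theta$ (by Fubini and linear Bellman completeness) and is smooth in $\theta$ for \emph{every} $x'$, including tie states; only then does \cref{lem:diff-bl} apply, yielding Bellman-linearity of $x' \mapsto \E_{z\sim\MN(0,\sigma^2 I_d)}[\phi_{h+1}(x',\pi_{h+1,\theta-z}(x'))] = \phi_{h+1}(x', \pi_{h+1,\theta,\sigma}(x'))$. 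The spherical tie-breaking in \cref{eq:linpol-definition} then enters not to make anything differentiable but to guarantee that $\lim_{\sigma\to 0^+}\phi_{h+1}(x',\pi_{h+1,w,\sigma}(x')) = \phi_{h+1}(x',\pi_{h+1,w}(x'))$ pointwise (\cref{eq:feature-limit}), after which \cref{lem:bl-limit} transfers Bellman-linearity to the limit. Without the smoothing-then-limit step your argument does not go through; with it, the rest of your proof is correct and coincides with the paper's.
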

The proof of \cref{lem:wht-exist-bound-overview} requires some work to show that the expectation of the bonuses $F_g\^t$ for $g > h+1$ is linear at step $h$: this fact crucially relies on the fact that the policies $\hat \pi\^t$ are linear policies (see \cref{lem:lin-lb}). 

\paragraph{A Bellman-linear approximation of $\Bquad_h$.}%
In light of our discussion pertaining to the quadratic bonus $\Bquad_h$ around \cref{eq:quad-bonus} above, %
a natural strategy is to show that $\max_a \Bquad_h(\cdot, a; \Sigma)$ is Bellman-linear at step $h$ for any $\Sigma$. 
Though this is not the case (\cref{prop:quadratic-ctex}), \cref{lem:quadratic-sim-informal} below suggests a workaround: the quadratic bonus $\Bquad_{h}(x,a; \Sigma)$ can be \emph{approximated} by a Bellman-linear function. %
\begin{lemma}[Simplified version of \cref{lem:quadratic-sim}]
  \label{lem:quadratic-sim-informal}
  Fix $h \in [H]$ and a PSD matrix $\Sigma$, and define
  \begin{align}
    \label{eq:define-fnormal}
    \Fnormal_{h}(x; \Sigma) := \E_{w \sim \MN(0, \Sigma)}\left[ \max_{a \in \MA} \lng w, \phi_h(x,a) \rng \right].
  \end{align}
  Then $\Fnormal_{h}(\cdot; \Sigma)$ is Bellman-linear at step $h$, and for any $x \in \MX$,
 {\small \begin{align}
\frac{1}{\sqrt{2\pi}} \max_{a,a' \in \MA} \left( (\phi_h(x,a) - \phi_h(x,a'))^\t  \Sigma  (\phi_h(x,a) - \phi_h(x,a')) \right)^{1/2} \leq \Fnormal_{h}(x; \Sigma) \leq \sqrt{d}  \max_{a \in \MA} \left( \phi_h(x,a)^\t  \Sigma  \phi_h(x,a) \right)^{1/2}\nonumber.
  \end{align}}
\end{lemma}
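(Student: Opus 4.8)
The plan is to establish two separate facts: (i) Bellman-linearity of $\Fnormal_h(\cdot;\Sigma)$, and (ii) the two-sided sandwich bound. For (i), observe that $\Fnormal_h(x;\Sigma) = \E_{w\sim\MN(0,\Sigma)}[\max_a \lng w,\phi_h(x,a)\rng]$ is, by definition, an average (with respect to a fixed distribution over $w$, independent of $x$) of the functions $x\mapsto \max_a \lng w,\phi_h(x,a)\rng$. Each such function is the canonical Bellman-linear function mentioned right after \cref{def:bl}: indeed, writing $w = H\cdot(w/H)$, if $\|w\|_2$ is within the scale where $w/H \in \MB_{h}$... more carefully, linear Bellman completeness (\cref{def:lbc}) gives that for $\theta\in\MB_{h}$, the map $x\mapsto\max_a\lng\phi_h(x,a),\theta\rng$ has Bellman backup $\lng\phi_{h-1}(x,a),\MT_{h-1}\theta\rng$; by homogeneity this extends to all $w\in\BR^d$ after rescaling, so each $x\mapsto\max_a\lng w,\phi_h(x,a)\rng$ is Bellman-linear with coefficient vector linear in $w$. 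Then I would invoke the stated closure of Bellman-linearity under averaging (and, if the Gaussian expectation requires a limiting argument for integrability, \cref{lem:bl-limit}) to conclude $\Fnormal_h(\cdot;\Sigma)$ is Bellman-linear; the coefficient vector is $\E_{w\sim\MN(0,\Sigma)}[W(w)]$ where $W(w)$ is the (linear-in-$w$) coefficient of the $w$-th term. One should check integrability: $|\max_a\lng w,\phi_h(x,a)\rng|\le\|w\|_2$ since $\|\phi_h(x,a)\|_2\le 1$ by \cref{asm:boundedness}, and $\E\|w\|_2<\infty$ under a Gaussian, so everything is finite.

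For the upper bound in (ii): for each $w$, $\max_a\lng w,\phi_h(x,a)\rng \le \max_a |\lng w,\phi_h(x,a)\rng| = \max_a |\lng \Sigma^{1/2}u, \phi_h(x,a)\rng|$ writing $w=\Sigma^{1/2}u$ with $u\sim\MN(0,I_d)$. By Cauchy–Schwarz, $|\lng\Sigma^{1/2}u,\phi_h(x,a)\rng| = |\lng u, \Sigma^{1/2}\phi_h(x,a)\rng| \le \|u\|_2\cdot(\phi_h(x,a)^\t\Sigma\,\phi_h(x,a))^{1/2}$. Taking expectations and using $\E_{u\sim\MN(0,I_d)}\|u\|_2 \le \sqrt{\E\|u\|_2^2} = \sqrt{d}$ gives $\Fnormal_h(x;\Sigma) \le \sqrt{d}\,\max_a(\phi_h(x,a)^\t\Sigma\,\phi_h(x,a))^{1/2}$, which is exactly the claimed right-hand inequality (the max over $a$ of the per-$a$ bound is at most the max, so this is fine).

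For the lower bound in (ii): fix the maximizing pair $a,a'$. Since $\max_a\lng w,\phi_h(x,a)\rng \ge \max\{\lng w,\phi_h(x,a)\rng, \lng w,\phi_h(x,a')\rng\} \ge \tfrac12(\lng w,\phi_h(x,a)\rng + \lng w,\phi_h(x,a')\rng) + \tfrac12|\lng w,\phi_h(x,a)-\phi_h(x,a')\rng|$, and the expectation of the first (linear) term is $0$ since $w$ is mean zero, we get $\Fnormal_h(x;\Sigma) \ge \tfrac12\,\E_w|\lng w,\, \phi_h(x,a)-\phi_h(x,a')\rng|$. Now $\lng w, v\rng$ for a fixed vector $v := \phi_h(x,a)-\phi_h(x,a')$ is a centered Gaussian with variance $v^\t\Sigma v$, so $\E_w|\lng w,v\rng| = \sqrt{2/\pi}\cdot(v^\t\Sigma v)^{1/2}$. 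Hence $\Fnormal_h(x;\Sigma)\ge \tfrac12\sqrt{2/\pi}\,(v^\t\Sigma v)^{1/2} = \tfrac{1}{\sqrt{2\pi}}\,(v^\t\Sigma v)^{1/2}$, matching the left-hand side. Since this holds for the maximizing pair, it holds with the max over all pairs $a,a'$, giving exactly the stated bound.

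The routine calculations (Gaussian first-absolute-moment $\sqrt{2/\pi}$, the bound $\E\|u\|_2\le\sqrt d$, Cauchy–Schwarz) are all standard. I expect the main subtlety — and the only place requiring genuine care — to be part (i): making precise that the Gaussian \emph{integral} of Bellman-linear functions is again Bellman-linear, i.e., that one may exchange $\E_{x'\sim P_{h-1}(x,a)}$ with $\E_{w\sim\MN(0,\Sigma)}$. This is a Fubini/dominated-convergence argument using the integrability bound $|\max_a\lng w,\phi_h(x,a)\rng|\le\|w\|_2$ together with $\E_w\|w\|_2<\infty$; depending on how \cref{lem:bl-limit} and the "averaging" closure property are stated in the full paper, this may be immediate or may need a short truncation argument (average over $w$ restricted to $\|w\|_2\le R$, then let $R\to\infty$, invoking \cref{lem:bl-limit} for the limit).
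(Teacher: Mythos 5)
Your proof is correct and follows essentially the same route as the paper's proof of \cref{lem:quadratic-sim}: the lower bound via $\max\{x,y\}=\tfrac{x+y}{2}+\tfrac{|x-y|}{2}$ applied to the maximizing pair together with the Gaussian first absolute moment, the upper bound via Cauchy--Schwarz after the substitution $w=\Sigma^{1/2}u$, and Bellman-linearity as an average of the canonical Bellman-linear functions (the paper likewise treats the Fubini-type interchange as routine, handling the analogous interchange carefully only in \cref{lem:lin-lb}). The only cosmetic difference is that the paper applies Cauchy--Schwarz at the level of expectations, yielding the slightly sharper bound $\sqrt{d}\,\E_w[\phi_w^\t\Sigma\phi_w]^{1/2}$ of the formal \cref{lem:quadratic-sim}, whereas your pointwise application directly gives the $\max_a$ form stated in the informal lemma.
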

The proof of \cref{lem:quadratic-sim-informal} is straightforward: that $x \mapsto \Fnormal_h(x; \Sigma)$ is Bellman-linear at step $h$ follows from the fact that $\Fnormal_h(x; \Sigma)$ is defined as an average of Bellman-linear functions. The upper and lower bounds on $\Fnormal_h(x; \Sigma)$ follow from a direct computation.

\cref{lem:quadratic-sim-informal} suggests the following approach:
for some scalar $\beta_h$, suppose we define
\begin{align}
  \label{eq:fht-normal}
  F_h\^t(x) := \beta_h \cdot \Fnormal_h(x; (\Sigma_h\^t)^{-1}).
\end{align}
Since $\Fnormal_h(\cdot ;\Sigma)$ is Bellman-linear and approximates the quadratic bonus (\cref{lem:quadratic-sim-informal}), we might hope that it acts similarly to the quadratic bonus in \LSVIUCB to induce an optimistic property of the value functions $Q_h\^t(x,a)$ in \PSDPUCB, thereby allowing us to establish \cref{lem:approx-optimism-informal}. 
Unfortunately, this argument encounters a few snags:
\begin{enumerate}[wide,labelwidth=!,labelindent=0pt]
\item\label{it:bquad-diff} The lower bound on $\Fnormal_{h}(x;\Sigma)$ in \cref{lem:quadratic-sim-informal} is not exactly a constant times  $\Bquad_{h}(x; \Sigma)$, but instead involves a \emph{difference} between feature vectors at the state $x$. Thus, strictly speaking, $\Fnormal_{h}(x; \Sigma)$ does not give a multiplicative approximation of the quadratic bonus $\Bquad_{h}(x;\Sigma)$.
\item\label{it:bquad-mult} Even if $\Fnormal_{h}(x; \Sigma)$ did give a multiplicative approximation of $\Bquad_{h}(x; \Sigma)$, we would need to ensure that this multiplicative approximation is good enough for all arguments to go through. %
\item\label{it:bquad-min} %
Ignoring the previous two items,  we still suffer from the following exponential blowup. Roughly speaking, the proof of \cref{lem:approx-optimism-informal} proceeds by showing that the loss in value from any sub-optimal action taken by $\hat \pi\^t$ at step $h$ is canceled out by the bonus $F_h\^t(x_h)$. To show this, we need to show that the exploration bonus $\beta_h \cdot \Fnormal_h(x_h; (\Sigma_h\^t)^{-1})$ is at least as large as the regression error in our estimate of $\hat w_h\^t$ from \cref{line:define-w-hat}. %
  But the labels in this linear regression involve the sum of bonuses $F_g\^t(\cdot)$ for $h+1 \leq g \leq H$, and thus scale with $\beta_{h+1} + \cdots + \beta_H$. Working through standard high-confidence bounds for linear regression (see \cref{lem:phi-what-wt-diff} for details) establishes that we would need $\beta_h \geq \poly(d) \cdot (\beta_{h+1} + \cdots + \beta_H)$. 
  This recursion leads to $\beta_1 \geq d^{O(H)}$, which would require exponentially many samples to cancel out exponentially large error terms.
\end{enumerate}
\cref{it:bquad-diff,it:bquad-mult} above turn out not to be major issues\colt{, as discussed in the proof of \cref{lem:approx-optimism}}. \arxiv{In the case of \cref{it:bquad-diff}, a lower bound on $F_h(x; \Sigma)$ growing only with the difference between feature vectors still induces a sufficient exploration property on the induced value functions: see the proof of  \cref{lem:approx-optimism}.  \cref{it:bquad-mult} may be handled by increasing the scaling factors $\beta_h$ by a factor proportional to the gap between the upper and lower bounds of \cref{lem:quadratic-sim-informal}.} %

The main technical challenge is dealing with \cref{it:bquad-min} above, namely the exponential blowup of $\beta_h$.   Typically, the solution to the above dilemma is to \emph{truncate} the bonus, i.e., define $F_h\^t(x) := \min \{ \beta \cdot \Fnormal_h(x; (\Sigma_h\^t)^{-1}), \bar F_h(x) \}$, for some $\bar F_h(x) \ll \beta$ which is a uniform upper bound on $V_h^\st(x)$. (Note that, when using such an approach, the parameter $\beta$ does not need any dependence on $h$.) Indeed, this truncation is used in \LSVIUCB, where the $\hat Q_h\^t$ function in \cref{eq:lsvi-q} is truncated by the constant $\bar F_h(x) \equiv H$. Unfortunately, in the setting of linear Bellman completeness, truncation, in general, breaks Bellman-linearity of $F_h\^t(x)$. 
 In the following section, we discuss how to implement this truncation in a way that preserves Bellman-linearity. %

\subsection{Constructing truncated bonuses}
\label{sec:gena-overview}

\paragraph{Approximating $(\Sigma_h\^t)^{-1}$.} For each $h \in [H]$ and $t \in [T]$, the inverse covariance matrix $(\Sigma_h\^t)^{-1}$ captures which directions of feature space are explored by policies $\hat \pi\^s \circ_h \tilde \pi\^s$, for $s \leq t-1$. %
We split $(\Sigma_h\^t)^{-1}$ into matrices representing ``well-explored directions'' and ``unexplored directions'', as follows: for an appropriate threshold value $\sigma$, we let $\Sigma'$ denote the projection matrix onto the subspace spanned by eigenvectors of $(\Sigma_h\^t)^{-1/2}$ with eigenvalues greater than $\sigma$, and $\Lambda'$ denote the projection matrix onto the subspace spanned by eigenvectors of $(\Sigma_h\^t)^{-1/2}$ with eigenvalues less than $\sigma$. Formally, as defined in \cref{def:mat-truncation}, we write $(\Sigma', \Lambda') = \trunc{(\Sigma_h\^t)^{-1/2}}{\sigma}$. We denote subspaces onto which $\Sigma', \Lambda'$ project by $\MS_{\Sigma'}, \MS_{\Lambda'}$, respectively. Note that $\Sigma' \Lambda' = \Lambda' \Sigma' = 0$ and $\Sigma' + \Lambda' = I_d$; we call the pair $(\Sigma', \Lambda')$ an \emph{orthogonal pair} (\cref{def:op}). 

If $\sigma$ is chosen sufficiently small to be less than our desired error threshold, then the estimations $\hat w_h\^t$ produced in \cref{line:define-w-hat} of \cref{alg:psdp-ucb} are sufficiently accurate in directions spanned by $\MS_{\Lambda'}$. Thus, we only have to add an exploration bonus at a state $x$ which scales in proportion to the width of the features $\phi_h(x,a),\ a \in \MA$ in directions spanned by the orthogonal complement of $\MS_{\Lambda'}$, namely $\MS_{\Sigma'}$. This leads us to aim to construct an exploration bonus which scales roughly as $x \mapsto \min \{ \beta \cdot \Fnormal_h(x; \Sigma'), \bar F_h(x)\}$, for some parameter $\beta > 1$ and an appropriate choice of $\bar F_h$ which should (a) be uniformly upper bounded by a parameter which does not scale with $\beta$, and (b) be an upper bound on $V_h^\st$. It  turns out that condition (b) can be relaxed to require only that $\bar F_h(x)$ be an upper bound on
$
  \max_{a \in \MA} Q_h^\st(x,a) - \min_{a \in \MA} Q_h^\st(x,a) = \max_{a,a' \in \MA} \lng w_h^\st, \phi_h(x,a) - \phi_h(x,a') \rng.
  $
  Thus, to summarize, we aim to find a bonus function  which satisfies the following properties:
  \begin{problem}[Bonus function]
    \label{prob:bonfun}
Fix $h, t$ (which determine $(\Sigma', \Lambda') = \trunc{(\Sigma_h\^t)^{-1/2}}{\sigma}$ as above), and some $\beta > 1$.    Can we find a Bellman-linear function $F_h\^t : \MX \to \BR_{\geq 0}$ which satisfies:
    \begin{enumerate}[wide,labelwidth=!,labelindent=0pt]
    \item \label{it:bonfun-1} For all $x\in \MX$, $F_h\^t(x)$ is lower bounded as
      \begin{align}
F_h\^t(x) \geq & BH \cdot \max_{a,a' \in \MA} \min \left\{ \beta \cdot  \| \Sigma' \cdot (\phi_h(x,a) - \phi_h(x,a')) \|_2,  \|\Lambda' \cdot ( \phi_h(x,a) - \phi_h(x,a')) \|_2 \right\}\label{eq:ftp-lb-intro}.
      \end{align}
    \item \label{it:bonfun-2} For all $x \in \MX$, $F_h\^t(x)$ is upper bounded as \colt{$F_h\^t(x) \leq  \poly(d,H,B) \cdot \max_{a \in \MA} \min \left\{ \beta \| \Sigma' \cdot \phi_h(x,a) \|_2,  \| \phi_h(x,a) \|_2 \right\}$.}
     \arxiv{ \begin{align}
F_h\^t(x) \leq & \poly(d,H,B) \cdot \max_{a \in \MA} \min \left\{ \beta \| \Sigma' \cdot \phi_h(x,a) \|_2,  \| \phi_h(x,a) \|_2 \right\}\nonumber.
      \end{align}}
    \end{enumerate}
  \end{problem}

\paragraph{The case $d=A = 2$.} To proceed, we consider the special case in which $d=2$. Let us suppose that $\MS_{\Sigma'}, \MS_{\Lambda'}$ are each one dimensional, and are spanned by unit vectors $u, v \in \BR^2$, respectively, which are orthogonal. Let us consider the function
\begin{align}
\Ftp_h(x; u,v) :=  \max_{a \in \MA} \lng u, \phi_h(x,a) \rng + \max_{a \in \MA} \lng v, \phi_h(x,a) \rng - \max_{a \in \MA} \lng u+v, \phi_h(x,a)\rng\nonumber,
\end{align}
for some parameter $\beta > 1$. 
If we suppose that there are only 2 actions (i.e., $|\MA| = 2$), then $x \mapsto \sum_{\ep_u, \ep_v \in \{ \pm 1 \}} \Ftp_h(x; \beta \ep_u \cdot u,\ep_v \cdot v)$ (scaled by a $\poly(B, H)$ factor) satisfies the conditions of \cref{prob:bonfun}. To see this, let us write $\MA = \{0,1\}$, and set $\phi_h^\Delta(x) := \phi_h(x,1) - \phi_h(x,0)$. %
Fix $x \in \MX$, and choose $\ep_u, \ep_v \in \{ \pm 1 \}$ so that $\lng \ep_u \cdot u, \phi_h^\Delta(x) \rng $ and $\lng \ep_v \cdot v, \phi_h^\Delta(x) \rng $ have opposite signs.  Then
\begin{align}
  \Ftp_h(x;\beta \ep_u u,\ep_v v) = &  | \lng \beta \ep_u u, \phi_h^\Delta(x) \rng | + | \lng \ep_v v, \phi_h^\Delta(x) \rng | - | \lng \beta \ep_u u+\ep_v v, \phi_h^\Delta(x) \rng |\nonumber\\
  \geq&  \min \{ | \lng \beta u, \phi_h^\Delta(x) \rng |, | \lng v, \phi_h^\Delta(x) \rng | \},\label{eq:ftp-lb-intro}
\end{align}
where the inequality uses the property that for real numbers $a,b$ with opposite signs, we have $|a| + |b| - |a+b| \geq \min \{ |a|, |b| \}$. 
By  combining \cref{eq:ftp-lb-intro} with non-negativity of $\Ftp_h(\cdot)$, we obtain the first property of \cref{prob:bonfun}. The second property may be verified in a similar manner (see \cref{lem:tp-upper-bound}).

\arxiv{\colt{Recall that in \cref{sec:gena-overview} we gave an overview of the construction of truncated confidence bonuses (namely, those satisfying the conditions of \cref{prob:bonfun}) in the special case that $d = A = 2$. In this section, we extend this argument to the case of general values of $d,A$. }

\paragraph{The case $d=2$ and general $A$.} Let us try to extend the above argument \colt{(i.e., in \cref{sec:gena-overview})} for the case $d=2$ and $A=2$ to the case of general $A$ (still keeping $d=2$ fixed). Unfortunately, the function $x \mapsto \sum_{\ep_u, \ep_v \in \{ \pm 1 \}} \Ftp_h(x; \beta \ep_u \cdot u,\ep_v \cdot v)$ we used above no longer satisfies the first condition of \cref{prob:bonfun}. To see this, suppose that $\MA = \{0,1,2,3\}$ and the feature vectors at some state $x$ form a rectangle, i.e., $\phi_h(x,0) = (0,0), \phi_h(x,1) = (u,0), \phi_h(x,2) = (u+v,u+v), \phi_h(x,3) = (0,v)$. It is straightforward to see that $\Ftp_h(x; u,v) = 0$, whereas $\max_{a,a' \in \MA} \min \{ |\lng \beta u,  \phi_h(x,a) - \phi_h(x,a') \rng|, | \lng v, \phi_h(x,a) - \phi_h(x,a') \rng |\} \geq 1$ for any $\beta \geq 1$ (e.g., take $a = 0, a' = 2$). Thus, \cref{eq:ftp-lb-intro} cannot hold, even if we only wish for it to hold up to a $\poly(H, B, d)$ factor.

The above issue runs fairly deep: it can be shown that, under some mild conditions on $\Ftp_h$ which ensure that it be  Bellman-linear,\footnote{Namely, that $\Ftp_h(x; u,v) = \int_{w \in \BR^d} f(w) \cdot \max_{a \in \MA} \lng \phi_h(x,a), w \rng d\mu(w)$, for some Borel measure $\mu$ and function $f : \BR^d \to \BR$.} there is \emph{no function} $F_h\^t$ satisfying both conditions of \cref{prob:bonfun}. To rectify this issue, we will relax the first condition of \cref{prob:bonfun}, as follows:
\begin{problem}
  \label{prob:bonfun-b}
  In the setting of \cref{prob:bonfun}, can we find a Bellman-linear function $F_h\^t : \MX \to \BR_{\geq 0}$ which satisfies the constraints of \cref{prob:bonfun} where \cref{it:bonfun-1} is replaced by the following constraint:
  \begin{enumerate}
  \item\label{it:bonfun-1b} For all $x \in \MX$, $F_h\^t(x)$ is lower bounded as
    \begin{align}
F_h\^t(x) \geq & BH \cdot \max_{a,a' \in \MA} \min_{\psi \in \bar \Phi(x) } \left\{ \| \beta \Sigma' \cdot (\phi_h(x,a) - \psi) \|_2 + \| \Lambda' \cdot (\psi - \phi_h(x,a') \|_2 \right\}\label{eq:ftp-lb-weak},
    \end{align}
    where $\bar \Phi_h(x) := \co \{ \phi_h(x,a):\ a \in \MA \}$ denotes the convex hull of the feature vectors $\phi_h(x,a),\ a \in \MA$. 
  \end{enumerate}
\end{problem}

Note that \cref{eq:ftp-lb-weak} is weaker than \cref{eq:ftp-lb-intro}, as can be verified by considering the choices $\psi = \phi_h(x,a)$ and $\psi = \phi_h(x,a')$ for given $a,a'$ in the outer maximum. As we discuss in \cref{sec:optimism-proof}, this weaker lower bound on $F_h\^t(x)$ is still sufficient to ensure optimism. Moreover, for the example discussed above in which the feature vectors at a state $x$ form a rectangle, the right-hand side of \cref{eq:ftp-lb-weak} is $0$: if $a,a'$ in the outer maximum are chosen to represent opposing corners of the rectangle, then taking $\psi$ to be one of the other two corners yields $| \lng \beta u, \phi_h(x,a) - \psi \rng | = | \lng v, \psi - \phi_h(x,a') \rng | = 0$. For this example, we therefore have that $x \mapsto \sum_{\ep_u, \ep_v \in \{ \pm 1 \}} \Ftp_h(x; \beta \ep_u \cdot u,\ep_v \cdot v)$ satisfies the properties of \cref{prob:bonfun-b}.

In fact, this holds more generally in the $d=2$ case, as can be seen by the following argument: fix any $a,a' \in \MA$, and choose
\begin{align}
\psi^\st := \argmin_{\psi \in \bar \Phi_h(x)}\left\{ | \lng \beta u, \phi_h(x,a) - \psi \rng | + | \lng v, \psi - \phi_h(x,a') \rng | \right\}\label{eq:xistar-opt}.
\end{align}
Moreover choose $\ep_u, \ep_v \in \{ \pm 1 \}$ so that $\lng \phi_h(x,a) - \psi^\st, \ep_u u \rng \geq 0$ and $\lng \phi_h(x,a') - \psi^\st, \ep_v v \rng \geq 0$. Note that \cref{eq:xistar-opt} is a convex optimization problem; its KKT conditions tell us that $\lng \psi^\st - \psi, \beta \ep_u u + \ep_v v \rng \geq 0$ for all $\psi \in \bar \Phi_h(x)$. Then, recalling that $a,a' \in \MA$ were fixed,  we have
\begin{align}
  \Ftp_h(x; \beta \ep_u u, \ep_v v) \geq & \lng \beta \ep_u u, \phi_h(x,a) \rng + \lng \ep_v v, \phi_h(x,a') \rng - \max_{\bar a \in \MA} \lng \beta \ep_u u+\ep_v v, \phi_h(x, \bar a) \rng \nonumber\\
  \geq & \lng \beta \ep_u u, \phi_h(x,a) \rng + \lng \ep_v v, \phi_h(x,a') \rng - \lng \beta \ep_u u + \ep_v v, \psi^\st \rng\nonumber\\
  =& \lng \beta \ep_u u, \phi_h(x,a) - \psi^\st \rng + \lng \ep_v v, \phi_h(x,a') - \psi^\st \rng\nonumber\\
  =& | \lng \beta u, \phi_h(x,a) - \psi^\st \rng | + | \lng v, \phi_h(x,a') - \psi^\st \rng |\label{eq:verify-bonfun-intro},
\end{align}
where the second inequality uses the KKT conditions of \cref{eq:xistar-opt} and the final equality uses the choices of $\ep_u, \ep_v$. \cref{eq:verify-bonfun-intro}, combined with non-negativity of $\Ftp_h(\cdot)$, verifies the first property of \cref{prob:bonfun-b}, as desired; the second property (namely, \cref{it:bonfun-2} of \cref{prob:bonfun}) can be verified by a direct computation (\cref{lem:tp-upper-bound}).

\paragraph{The case of general $d$ and general $A$.} The argument presented above for $d=2$ generalizes to the case of arbitrary $d \in \BN$. In particular, we will aim to construct a bonus of the form $G_h\^t(x) :=   O(BH) \cdot \E_{u \sim \MD_{\Sigma'}, v \sim \MD_{\Lambda'}} [\Ftp_h(x; u,v)]$, where $\MD_{\Sigma'}, \MD_{\Lambda'}$ are distributions supported on $\MS_{\Sigma'}, \MS_{\Lambda'}$, respectively. 
A key consideration is: \emph{how do we choose the distributions $\MD_{\Sigma'}, \MD_{\Lambda'}$?} To answer this question, it is instructive to consider how the above argument verifying \cref{it:bonfun-1b} of \cref{prob:bonfun-b} generalizes to higher dimensions. In particular, given arbitrary $a,a' \in \MA$, in lieu of \cref{eq:xistar-opt}, we define
\begin{align}
\psi^\st := \argmin_{\psi \in \cPhi_h(x)} \left\{ \| \beta \Sigma' \cdot (\phi_h(x,a) - \psi) \|_2 + \| \Lambda' \cdot (\psi - \phi_h(x,a')) \|_2 \right\}\label{eq:xistar-opt-2}.
\end{align}
Using the KKT conditions of \cref{eq:xistar-opt-2}, it can be verified that, for $u = \frac{\Sigma' \cdot (\phi_h(x,a) - \psi^\st)}{\| \Sigma' \cdot (\phi_h(x,a) - \psi^\st)\|_2}$ and $v = \frac{\Lambda' \cdot (\phi_h(x,a') - \psi^\st)}{\| \Lambda' \cdot (\phi_h(x,a') - \psi^\st) \|_2}$, we have that
\begin{align}
  \label{eq:ftp-gend-lb}
  \Ftp_h(x; \beta u, v) \geq \| \beta \Sigma' \cdot (\phi_h(x,a) - \psi^\st) \|_2 + \| \Lambda' \cdot (\phi_h(x,a') - \psi^\st ) \|_2.
\end{align}
Unfortunately these choices of $u,v$ depend on the state $x$, and $\MD_{\Sigma'}, \MD_{\Lambda'}$ must be independent of $x$ in order to ensure Bellman-linearity of the bonus function.

Fortunately, as shown in \cref{lem:optimal-perimeter}, under some additional technical conditions, \cref{eq:ftp-gend-lb} holds up to an additive term of $O(\vep)$ when the left-hand side is replaced by $\Ftp_h(x; \beta u', v')$, for any vectors $u', v'$ which are $\vep$-close in $\ell_2$ norm to $u, v$, respectively. If we take $\MD_{\Sigma'}, \MD_{\Lambda'}$ to be the uniform distributions on the unit sphere in the subspaces $\MS_{\Sigma'}, \MS_{\Lambda'}$, respectively, then $u' \sim \MD_{\Sigma'}, v' \sim \MD_{\Lambda'}$ satisfy $\| u' - u \|_2 \leq \vep, \| v'-v\|_2 \leq \ep$ with probability $\vep^{O(d)}$. With these choices of $\MD_{\Sigma'}, \MD_{\Lambda'}$, the bonus $G_h\^t$ satisfies the constraint \cref{eq:ftp-lb-weak} up to a factor of $\vep^{-O(d)}$. In fact, this can easily be improved to a factor of $\vep^{-O(A)}$ by using the fact that closeness of $u,u'$ and of $v,v'$ only needs to hold in the subspace spanned by $\phi_h(x,a)$ (for $a \in \MA$), which has at most $A$ dimensions -- see \cref{cor:optimal-perimeter}. 

It turns out to simplify the analysis to take $\MD_{\Sigma'}, \MD_{\Lambda'}$ to be the distributions $\MN(0, \Sigma'), \MN(0, \Lambda')$, respectively, which permit essentially the same analysis as described above. %
Moreover, to satisfy certain technical conditions in \cref{lem:optimal-perimeter}, we add a term to the bonus function which is proportional to $\Fnormal_h(x; \Sigma') = \E_{w \sim \MN(0, \Sigma')} [ \max_{a \in \MA} \lng w, \phi_h(x,a) \rng ]$. To sum up, we define the bonus function as
\begin{align}
F_h\^t(x) := & \beta_1 \cdot \E_{u' \sim \MN(0, \Sigma')} \E_{v' \sim \MN(0, \Lambda')} \left[ \Ftp_h(x; \beta u', v') \right] + \beta_2 \cdot \E_{w \sim \MN(0, \Sigma')} \left[ \max_{a \in \MA} \lng w, \phi_h(x,a) \rng \right]\label{eq:fht-final-intro},
\end{align}
where $\beta_1, \beta_2, \beta$ are carefully chosen parameters bounded above by $(BHD/\ep)^{O(A)}$, where $\ep$ is the desired error level of the algorithm's output policy. The precise values of these parameters appear in the formal definition of $F_h\^t$ in \cref{eq:define-fht-bonus}. We remark that, crucially, $\beta_2 \ll \beta_1 \cdot \beta$ -- the entire point of the truncation procedure we have just described is to ensure that the size of $F_h\^t(x)$ (which may be as large as $O(\beta_2)$) does \emph{not} scale with $\beta$.

In \cref{sec:tp-upper-bound,sec:tp-lower-bound}, we formally verify, per the arguments outlined above, that the bonus $F_h\^t$ in \cref{eq:fht-final-intro} satisfies the properties laid out in \cref{prob:bonfun-b}. Then, in \cref{sec:bonus-boundedness,sec:linreg,sec:optimism-proof}, we show how this implies the statements of \cref{lem:approx-optimism-informal,lem:fht-sigma}, which suffices to prove our main theorem (\cref{thm:online-intro}). 

}
\colt{
\paragraph{The case of  general $d$ and $A$.} A direct extension of the above argument for $d = A = 2$ runs into some snags even in the case that $d = 2$ and $A=4$. We discuss these issues and additional techniques we introduce to overcome them in \cref{sec:additional-to}. 
}

\colt{
  \section*{Acknowledgements}
  We thank Dylan Foster and Sham Kakade for bringing this problem to our attention and for helpful discussions. 
NG is supported by a Fannie \& John Hertz Foundation Fellowship and an NSF Graduate Fellowship. AM is supported in part by a Microsoft Trustworthy AI Grant, an ONR grant and a David and Lucile Packard Fellowship.
}

\colt{\bibliography{lbc}}

\colt{\appendix}
\colt{\newpage}
\colt{}

\colt{
  \section{Additional technical overview}
  \label{sec:additional-to}
  
  }

\section{Bellman linear functions}
\label{sec:bellman-linear}
Recall the definition of Bellman linearity in \cref{def:bl}. In this section, we discuss several properties of Bellman linear functions. 
 \begin{lemma}[Limiting preserves Bellman-linearity]
   \label{lem:bl-limit}
Fix $h \in [H]$. Suppose that for each $i \in \BN$, we are given a function $G_i : \MX \ra \BR$ which is Bellman-linear at step $h$ and so that $\sup_{x \in \MX, i \in \BN}| G_i (x)| \leq C$, for some $C > 0$. Furthermore suppose that, for some $G : \MX \ra \BR$, $\lim_{i \ra \infty} G_i(x) = G(x)$ for all $x \in \MX$. Then $G$ is Bellman-linear at step $h$.
\end{lemma}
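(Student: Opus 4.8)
The plan is to push the Bellman-linearity identity through the limit using dominated convergence, and then to observe that, although the coefficient vectors $w_{G_i}$ need not converge in $\BR^d$, everything can be reduced to a finite basis of feature directions, on which the relevant quantities do converge.

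First, since $G$ is a pointwise limit of the (measurable, $C$-bounded) functions $G_i$, it is itself measurable and satisfies $\sup_x |G(x)| \le C$, so $\E_{x' \sim P_{h-1}(x,a)}[G(x')]$ is well-defined for every $(x,a)$. As $|G_i| \le C$ uniformly and $P_{h-1}(x,a)$ is a probability measure, the dominated convergence theorem gives, for every fixed $(x,a) \in \MX \times \MA$,
\[
\lim_{i \to \infty} \E_{x' \sim P_{h-1}(x,a)}[G_i(x')] = \E_{x' \sim P_{h-1}(x,a)}[G(x')] =: f(x,a).
\]
By Bellman-linearity of each $G_i$ (\cref{def:bl}) there is $w_{G_i} \in \BR^d$ with $\lng w_{G_i}, \phi_{h-1}(x,a) \rng = \E_{x' \sim P_{h-1}(x,a)}[G_i(x')]$, so the functions $(x,a) \mapsto \lng w_{G_i}, \phi_{h-1}(x,a) \rng$ converge pointwise to $f$.

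Next, let $V := \Span\{ \phi_{h-1}(x,a) :\ (x,a) \in \MX \times \MA \} \subseteq \BR^d$, and fix pairs $(x_1,a_1), \ldots, (x_m,a_m)$ with $m = \dim V \le d$ whose feature vectors $\psi_j := \phi_{h-1}(x_j, a_j)$ form a basis of $V$. For an arbitrary $(x,a)$, write $\phi_{h-1}(x,a) = \sum_{j=1}^m c_j(x,a)\, \psi_j$, where the coefficients $c_j(x,a)$ depend only on $(x,a)$, not on $i$. Linearity of $\lng w_{G_i}, \cdot \rng$ gives $\lng w_{G_i}, \phi_{h-1}(x,a) \rng = \sum_{j=1}^m c_j(x,a) \lng w_{G_i}, \psi_j \rng$; letting $i \to \infty$ and applying the pointwise convergence above at $(x,a)$ and at each $(x_j, a_j)$ yields
\[
f(x,a) = \sum_{j=1}^m c_j(x,a)\, f(x_j, a_j).
\]

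Finally, the map $V \to \BR^m$, $w \mapsto (\lng w, \psi_j \rng)_{j=1}^m$, is injective (if $w \in V$ is orthogonal to every $\psi_j$, then $\lng w, w \rng = 0$) hence bijective, so there is $w_G \in V$ with $\lng w_G, \psi_j \rng = f(x_j, a_j)$ for all $j \in [m]$. Then for every $(x,a)$,
\[
\lng w_G, \phi_{h-1}(x,a) \rng = \sum_{j=1}^m c_j(x,a) \lng w_G, \psi_j \rng = \sum_{j=1}^m c_j(x,a)\, f(x_j, a_j) = f(x,a) = \E_{x' \sim P_{h-1}(x,a)}[G(x')],
\]
which is precisely the statement that $G$ is Bellman-linear at step $h$. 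The only mildly delicate point — the main obstacle — is that the witnesses $w_{G_i}$ themselves need not converge in $\BR^d$ (their components orthogonal to $V$ are entirely unconstrained), so one cannot simply take a limit of the $w_{G_i}$; collapsing the problem onto the finite basis $\{\psi_j\}$, whose backup values $f(x_j, a_j)$ do converge, avoids this issue. (Everything above extends verbatim to the $k$-dimensional case by applying the argument coordinatewise.)
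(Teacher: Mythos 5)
Your proof is correct, but it takes a genuinely different route from the paper's. The paper first observes that any witness $w_i$ for $G_i$ satisfies $|\lng \phi_{h-1}(x,a), w_i\rng| \leq C$ for all $(x,a)$, hence $w_i \in C\cdot \MB_{h-1}$ and so $\|w_i\|_2 \leq CB$ by \cref{asm:boundedness}; it then extracts a convergent subsequence $w_{i_j} \to w^\st$ by compactness and applies dominated convergence to identify $\lng \phi_{h-1}(x,a), w^\st\rng$ with $\E_{x'\sim P_{h-1}(x,a)}[G(x')]$. You instead apply dominated convergence first to get pointwise convergence of the backup values, then reduce everything to a finite basis of feature directions and solve for $w_G$ directly on that basis. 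Your remark that the witnesses need not converge is accurate for an arbitrary choice of witnesses, and is precisely the issue the paper sidesteps via \cref{asm:boundedness} (note that the boundedness of $\MB_{h-1}$ stipulated there implicitly forces the features to span $\BR^d$, so under the paper's standing assumptions the witnesses really are confined to a compact set). What your argument buys is independence from that assumption: it works even when the features do not span $\BR^d$ and $\MB_{h-1}$ is unbounded, at the cost of a slightly longer linear-algebraic reduction. Both proofs are sound.
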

\begin{proof}
  For each $i \in \BN$, we know that there is some $w_i \in \BR^d$ so that, for all $(x,a) \in \MX \times \MA$, $\lng \phi_{h-1}(x,a), w_i \rng = \E_{x' \sim P_{h-1}(x,a)}[G_i(x')]$. For each $i,a,x$, we know that $|\E_{x' \sim P_{h-1}(x,a)}[G_i(x')]| \leq C$ by uniform boundedness of $G_i$, so $|\lng \phi_{h-1}(x,a), w_i \rng |\leq C$ for all  $i,a,x$. By definition of $\MB_{h-1}$, it follows that $w_i \in C \cdot \MB_{h-1}$, which gives $\| w_i \|_2 \leq CB$ by \cref{asm:boundedness}. %
  Hence we may find a subsequence $(w_{i_j})_{j \in \BN}$ so that $\lim_{j \ra \infty} w_{i_j} = w^\st$, for some $w^\st \in C \cdot \MB_{h-1}$. Thus, for any $(x,a) \in \MX \times \MA$, we have
  \begin{align}
    & \lng \phi_{h-1}(x,a), w^\st \rng = \lim_{j \ra \infty} \lng \phi_{h-1}(x, a), w_{i_j} \rng = \lim_{j \ra \infty} \E_{x' \sim P_{h-1}(x,a)}[G_{i_j}(x')]\nonumber\\
    =&  \E_{x' \sim P_{h-1}(x,a)} \left[ \lim_{j \ra \infty} G_{i_j}(x') \right] = \E_{x' \sim P_{h-1}(x,a)}[G(x')]\nonumber,
  \end{align}
  where the second-to-last equality uses the dominated convergence theorem and uniform boundedness of the functions $G_i$. 
\end{proof}

\begin{lemma}[Differentiation preserves Bellman-linearity]
  \label{lem:diff-bl}
  Fix any $h, k \in \BN$ and consider an open set $U \in \BR^k$ as well as a function $F : U \times \MX \ra \BR$ so that $F(u,x)$ is continuously differentiable in $u$ for all $x \in \MX$, and that $\max_{x \in \MX, u \in U} \max\{ F(u,x) , \|\grad_u F(u,x)\|_2\} \leq C$. Suppose that, for all $u \in U$, the function $F(u, \cdot)$ is Bellman-linear at step $h$. Then the function $\grad_u F(u, \cdot)$ is Bellman-linear at step $h$, for all $u \in U$.  %
\end{lemma}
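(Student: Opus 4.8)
The plan is to show that the coefficient vector associated to $\grad_u F(u,\cdot)$ is obtained by differentiating, in $u$, the coefficient vector associated to $F(u,\cdot)$, and that this differentiation commutes with the Bellman backup because of uniform bounds on $F$ and its gradient. Concretely: for each $u \in U$, Bellman-linearity of $F(u,\cdot)$ at step $h$ gives a vector $w(u) \in \BR^d$ with $\lng \phi_{h-1}(x,a), w(u) \rng = \E_{x' \sim P_{h-1}(x,a)}[F(u,x')]$ for all $(x,a) \in \MX \times \MA$. The uniform bound $F(u,x) \leq C$ (together with the two-sided nature of $\MB_{h-1}$, i.e. the definition via $|\lng \phi,\theta\rng| \leq 1$) forces $w(u) \in C \cdot \MB_{h-1}$, hence $\|w(u)\|_2 \leq CB$ by \cref{asm:boundedness}.

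\textbf{Step 1: Identify the candidate coefficient vector.} Fix $u_0 \in U$ and a coordinate direction $e_j \in \BR^k$. For small $t$, consider the difference quotient
\begin{align}
  \frac{w(u_0 + t e_j) - w(u_0)}{t} \in \BR^d \nonumber.
\end{align}
I would like to show this converges as $t \to 0$ to some vector $w_j^\star(u_0)$, and that $(w_1^\star(u_0), \ldots, w_k^\star(u_0))$ assembles into the matrix $W \in \BR^{k \times d}$ witnessing Bellman-linearity of $\grad_u F(u_0, \cdot)$. For any fixed $(x,a)$, pairing the difference quotient with $\phi_{h-1}(x,a)$ gives
\begin{align}
  \lng \phi_{h-1}(x,a), \tfrac{w(u_0 + te_j) - w(u_0)}{t} \rng = \E_{x' \sim P_{h-1}(x,a)}\left[ \tfrac{F(u_0 + te_j, x') - F(u_0, x')}{t}\right] \nonumber,
\end{align}
and the right-hand side converges to $\E_{x' \sim P_{h-1}(x,a)}[\partial_{u_j} F(u_0, x')]$ by the dominated convergence theorem, using $\|\grad_u F\|_2 \leq C$ (so the integrand difference quotients are bounded by $C$ via the mean value theorem applied to $t \mapsto F(u_0 + te_j, x')$) and continuity of $u \mapsto F(u,x')$.

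\textbf{Step 2: Upgrade pointwise convergence of the pairings to convergence of the vectors.} This is the step I expect to require the most care, though it is not deep. Each difference quotient vector lies in $\frac{2CB}{|t|} \cdot$ (a bounded set), which alone is not enough. The right move is: the difference quotient vectors $\frac{w(u_0+te_j) - w(u_0)}{t}$ all lie in $C' \cdot \MB_{h-1}$ for a uniform constant $C'$ (since $w(u)$ is $C$-Lipschitz-like — more precisely, one shows $\|w(u) - w(u')\|$ is controlled; but even more simply, each coordinate of the pairing $\lng \phi_{h-1}(x,a), \cdot\rng$ of the difference quotient is bounded by $C$ in absolute value by the mean value theorem bound above, so the difference quotient vector lies in $C \cdot \MB_{h-1} \subseteq$ the ball of radius $CB$). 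Hence by Bolzano–Weierstrass every sequence $t_m \to 0$ has a subsequence along which $\frac{w(u_0 + t_m e_j) - w(u_0)}{t_m}$ converges to some limit; by Step 1 the pairing of any such limit with every $\phi_{h-1}(x,a)$ equals $\E_{x' \sim P_{h-1}(x,a)}[\partial_{u_j} F(u_0,x')]$. If $\Span\{\phi_{h-1}(x,a) : (x,a)\}$ is all of $\BR^d$ this pins down a unique limit; in general one restricts attention to that span (the coefficient vectors are only determined up to its orthogonal complement anyway, and we may take $w(u)$ to lie in the span throughout). Either way, all subsequential limits agree, so the full limit $w_j^\star(u_0) := \lim_{t\to 0}\frac{w(u_0+te_j)-w(u_0)}{t}$ exists.

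\textbf{Step 3: Conclude.} By construction, $\lng \phi_{h-1}(x,a), w_j^\star(u_0)\rng = \E_{x'\sim P_{h-1}(x,a)}[\partial_{u_j}F(u_0,x')]$ for every $(x,a)$ and every $j \in [k]$. Stacking the rows $w_j^\star(u_0)^\t$ into a matrix $W_{u_0} \in \BR^{k \times d}$ yields $W_{u_0} \cdot \phi_{h-1}(x,a) = \E_{x'\sim P_{h-1}(x,a)}[\grad_u F(u_0,x')]$, which is exactly the assertion that $\grad_u F(u_0,\cdot)$ is Bellman-linear at step $h$. Since $u_0 \in U$ was arbitrary, this holds for all $u \in U$. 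The main obstacle, as noted, is purely the interchange of the $u$-derivative with the expectation $\E_{x'\sim P_{h-1}(x,a)}[\cdot]$ and the passage from convergence of scalar pairings to convergence of the $\BR^d$-valued coefficient vectors; both are handled by the uniform boundedness hypotheses on $F$ and $\grad_u F$ together with dominated convergence, exactly in the spirit of the proof of \cref{lem:bl-limit}.
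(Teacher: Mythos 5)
Your proposal is correct. The analytic heart of the argument---interchanging $\grad_u$ with $\E_{x'\sim P_{h-1}(x,a)}[\cdot]$ via the mean value theorem and dominated convergence, justified by the uniform bound on $\|\grad_u F\|_2$---is exactly the paper's Step 1 as well. Where you diverge is in how the witness vector for Bellman-linearity of $\grad_u F(u,\cdot)$ is produced. The paper never takes limits of the coefficient vectors $w_u$: having established that each scalar map $u \mapsto \lng \phi_{h-1}(x,a), w_u\rng$ is differentiable with gradient $\E_{x'}[\grad_u F(u,x')]$, it picks a basis $\{\phi_{h-1}(x_i,a_i)\}_{i=1}^\ell$ of $\Span\{\phi_{h-1}(x,a)\}$, defines the matrix $W_u$ on that basis, and extends by linearity, using that $\phi \mapsto \grad_u\lng\phi, w_u\rng$ respects linear combinations. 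You instead obtain the witness as a (subsequential) limit of the difference quotients $\frac{w(u_0+te_j)-w(u_0)}{t}$, using the $C\cdot\MB_{h-1}$ membership and \cref{asm:boundedness} to get compactness---essentially recycling the mechanism of \cref{lem:bl-limit}. Both are valid; your route additionally shows (up to the orthogonal complement of the feature span) that $u \mapsto w(u)$ is differentiable, which is more than the lemma asks for, while the paper's basis-extension step is shorter and avoids the uniqueness-of-limits discussion entirely. Note also that for your argument the full limit in Step 2 is not actually needed: any single subsequential limit already has the required pairings and serves as the $j$-th row of $W_{u_0}$.
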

\begin{proof}
By Bellman-linearity, we know that for all $u \in U$, there is some $w_u \in \BR^d$ so that, for all $(x,a) \in \MX \times \MA$, it holds that
   \begin{align}
 \lng \phi_h(x,a), w_u \rng = \E_{x' \sim P_{h-1}(x,a)} \left[ F(u, x') \right]\nonumber.
   \end{align}
  
   By the dominated convergence theorem (which may be applied as a consequence of uniform boundedness of $\| \grad_u F(u,x) \|_2$ over $u \in U$), it holds that $u \mapsto \lng \phi_h(x,a), w_u \rng$ is continuously differentiable for all $x,a$, and moreover,
   \begin{align}
\grad_u \lng \phi_h(x,a), w_u \rng = \E_{x' \sim P_{h-1}(x,a)}[\grad_u F(u, x')]\label{eq:gradF-diff}.
   \end{align}

   Choose $(x_1, a_1), \ldots, (x_\ell, a_\ell) \in \MX\times \MA$ so that $\{ \phi_h(x_i, a_i) \}_{i=1}^\ell$ is linearly independent and spans $\{ \phi_h(x,a) \}_{(x,a) \in \MX \times \MA}$; then $\ell \leq d$. Let $W_u \in \BR^{d \times d}$ be any matrix satisfying
   \begin{align}
W_u \cdot \phi_h(x_i, a_i) = \grad_u \lng \phi_h(x_i, a_i), w_u \rng \quad \forall i \in [\ell]\nonumber,
   \end{align}
   which is possible by linear independence of $\phi_h(x_i, a_i),\ i \in [\ell]$. Now consider any $(x,a) \in \MX \times \MA$; we can find $\alpha_1, \ldots, \alpha_\ell \in \BR$ so that $\phi_h(x,a) = \sum_{i=1}^\ell \alpha_i \cdot \phi_h(x_i, a_i)$. Then
   \begin{align}
W_u \cdot \phi_h(x,a) = \sum_{i=1}^\ell \alpha_i \cdot \grad_u \lng \phi_h(x_i, a_i), w_u \rng = \grad_u \sum_{i=1}^\ell \alpha_i \cdot \lng \phi_h(x_i, a_i), w_u \rng = \grad_u \lng \phi_h(x,a), w_u \rng\nonumber,
   \end{align}
   which completes the proof by \cref{eq:gradF-diff}.
 \end{proof}

\cref{lem:lin-lb} below states that the feature vectors induced by any linear policy are Bellman-linear. It is a special case of Corollary 3.2 of \cite{golowich2024role} (by taking the inherent Bellman error to be 0). 
\begin{lemma}
  \label{lem:lin-lb}
  Suppose that $M$ is linear Bellman complete. %
  Then for each $h \in [H]$ and each $w \in \BR^d$, the mapping $x' \mapsto \phi_{h+1}(x', \pi_{h+1,w}(x'))$ is Bellman-linear at step $h+1$, i.e.,  there is a linear map $L_h(w) : \BR^d \ra \BR^d$ so that, for all $(x,a) \in \MX \times \MA$,
  \begin{align}
L_h(w)^\t \cdot \phi_h(x,a) = \E_{x' \sim P_h(x,a)}[\phi_{h+1}(x', \pi_{h+1,w}(x'))]\nonumber.
  \end{align}
\end{lemma}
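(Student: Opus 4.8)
The plan is to prove this by backward induction on the coordinates of the vector-valued function $x' \mapsto \phi_{h+1}(x', \pi_{h+1,w}(x'))$, reducing each coordinate to a statement about scalar Bellman-linear functions built out of the canonical example $x' \mapsto \max_{a'} \lng \theta, \phi_{h+1}(x',a') \rng$. The core observation is that the linear policy $\pi_{h+1,w}$ at state $x'$ (as defined in \cref{eq:linpol-definition}) selects, in the perturbed/tie-broken sense, the action maximizing $\lng w, \phi_{h+1}(x',a') \rng$, and one can recover the feature vector $\phi_{h+1}(x', \pi_{h+1,w}(x'))$ at that action by differentiating the scalar quantity $\theta \mapsto \E_{a' \sim \pi_{h+1,w}(x')} \max\{\lng \theta, \phi_{h+1}(x',a')\rng : \ldots\}$ — more precisely, by a gradient in $\theta$ of a Gaussian-smoothed version of $\theta \mapsto \max_{a'} \lng \theta + (\text{large})w, \phi_{h+1}(x',a') \rng$, evaluated appropriately. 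The point is that for generic $\theta$ near the origin (and $w$ fixed), the argmax of $\lng \theta + s w, \phi_{h+1}(x',a')\rng$ for large $s$ is exactly $\pi_{h+1,w}(x')$, so the gradient in $\theta$ of the smoothed max is a weighted average of feature vectors concentrated on the policy's action, matching the tie-breaking rule in \cref{eq:linpol-definition}.

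Concretely, the steps I would carry out are: (1) Define, for a smoothing parameter $\sigma > 0$ and a large scalar $s$, the scalar function $G_{\theta,s,\sigma}(x') := \E_{z \sim \MN(0,\sigma^2 I_d)} \max_{a' \in \MA} \lng \theta + s w + z, \phi_{h+1}(x',a') \rng$. By the canonical example, $x' \mapsto \max_{a'}\lng \theta', \phi_{h+1}(x',a')\rng$ is Bellman-linear at step $h+1$ for every fixed $\theta'$; averaging over the Gaussian (\cref{lem:bl-limit} handles the limit of Riemann sums, or one argues directly that an average of Bellman-linear functions with uniformly bounded coefficient vectors is Bellman-linear) shows $G_{\theta,s,\sigma}$ is Bellman-linear at step $h+1$. (2) Apply \cref{lem:diff-bl} to conclude that $\grad_\theta G_{\theta,s,\sigma}(x')$ — a $\BR^d$-valued function of $x'$ — is Bellman-linear at step $h+1$, with the required boundedness coming from \cref{asm:boundedness} (feature norms at most $1$). (3) Compute this gradient explicitly: $\grad_\theta G_{\theta,s,\sigma}(x') = \E_{z}\left[ \phi_{h+1}(x', a^\st(x', \theta + sw + z)) \right]$ where $a^\st(x',\cdot)$ picks the argmax action (a.s. unique under the Gaussian noise), so it is a convex combination of feature vectors $\phi_{h+1}(x',a')$. (4) Take the limit $s \to \infty$ followed by $\sigma \to 0$ (or a joint limit along an appropriate schedule), and verify that $\grad_\theta G_{0,s,\sigma}(x') \to \phi_{h+1}(x', \pi_{h+1,w}(x'))$ pointwise in $x'$; here the tie-breaking measure $\nu_d(\MG_{h+1,w}(x',a'))$ in \cref{eq:linpol-definition} appears precisely as the limiting probability that the Gaussian-perturbed maximizer equals $a'$, using spherical symmetry of $\MN(0,\sigma^2 I_d)$ and the scaling $\theta + sw + z = s(w + \theta/s + z/s)$. (5) Conclude via \cref{lem:bl-limit} (applied coordinatewise, with uniform bound $\leq 1$) that the pointwise limit $x' \mapsto \phi_{h+1}(x', \pi_{h+1,w}(x'))$ is Bellman-linear at step $h+1$, which is exactly the claim with $L_h(w)$ the associated coefficient matrix.

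The main obstacle I anticipate is step (4): carefully justifying the interchange of the limits $s \to \infty$, $\sigma \to 0$ with the gradient and the conditional expectation $\E_{x' \sim P_h(x,a)}[\cdot]$, and in particular verifying that the limiting distribution over actions matches the exact tie-breaking rule $\pi_{h+1,w}(a'|x') = \One{a' \in \MA_{h+1,w}(x')} \cdot \nu_d(\MG_{h+1,w}(x',a'))$ from \cref{eq:linpol-definition} rather than some other tie-breaking. This requires a somewhat delicate argument about the geometry of the argmax region: conditioned on $w + \theta/s + z/s$ lying in the cone of directions favoring the face spanned by $\{\phi_{h+1}(x',a') : a' \in \MA_{h+1,w}(x')\}$ (which happens with probability $\to 1$ as $s \to \infty$), the further refinement among tied actions is governed by the direction of the component of $z$ orthogonal to $w$, which is uniform on a sphere, matching $\nu_d$. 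One must also ensure the convergence is dominated (uniform boundedness by feature norms $\le 1$ suffices) so that \cref{lem:bl-limit} applies. An alternative, possibly cleaner route that sidesteps some of this — and the one the authors may take given the citation to \cite{golowich2024role} — is to directly invoke the cited Corollary~3.2 with inherent Bellman error $0$; in a self-contained write-up, though, the differentiation argument above is the natural path, and its delicate point is exactly the tie-breaking identification.
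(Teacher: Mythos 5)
Your proposal is correct and follows essentially the same route as the paper: Gaussian-smooth the canonical Bellman-linear function $\theta\mapsto\max_{a'}\lng\theta,\phi_{h+1}(x',a')\rng$, differentiate in $\theta$ via \cref{lem:diff-bl} to recover the perturbed-policy feature vector $\phi_{h+1}(x',\pi_{h+1,w,\sigma}(x'))$, and pass to the limit via \cref{lem:bl-limit}, with the tie-breaking identification handled exactly as you anticipate (the paper isolates it as \cref{lem:linpol-limit}). The only cosmetic difference is your extra scale parameter $s$: since $\pi_{h+1,sw,\sigma}=\pi_{h+1,w,\sigma/s}$, the double limit $s\to\infty$, $\sigma\to 0$ collapses to the paper's single limit $\sigma\to 0^+$.
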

\begin{proof}
  Fix $h \in [H]$.  Define $V(x', \theta) := \max_{a' \in \MA} \lng \phi_{h+1}(x', a'), \theta \rng$ and $Q(x,a,\theta) = \lng \phi_h(x,a), \MT_h \theta\rng$, so that $Q(x,a,\theta) = \lng \phi_h(x,a), \MT_h\theta \rng = \E_{x' \sim P_h(x,a)}[V(x',\theta)]$. By Fubini's theorem, for any $\sigma > 0$, we have
  \begin{align}
\Sm Q(x,a,\theta) = \E_{z \sim \MN(0, \sigma^2 I_d)} [Q(x,a,\theta - z)] = \E_{x' \sim P_h(x,a)} \E_{z \sim \MN(0, \sigma^2 I_d)}[V(x', \theta - x)] = \E_{x' \sim P_h(x,a)} [\Sm V(x', \theta)]\nonumber.
  \end{align}
  Linear Bellman completeness of $M$ gives that for each $\theta \in \BR^d$, the function $x' \mapsto V(x', \theta)$, and thus $x' \mapsto \Sm V(x', \theta)$ is Bellman-linear at step $h+1$. The function $\Sm V(x', \theta)$ is infinitely differentiable as a function of $\theta$, for all $x'$; moreover, since
  \begin{align}
    \grad \Sm V(x', \theta) = \E_{z \sim \MN(0, \sigma^2 I_d)}[\grad V(x', \theta - z)] = \E_{z \sim \MN(0, \sigma^2 I_d)} [\phi_{h+1}(x', \pi_{h+1,\theta - z}(x'))]\label{eq:grad-smooth-compute},
  \end{align}
  we have that $\max\{ \Sm V(x', \theta), \| \grad \Sm V(x', \theta) \|_2 \}$ is bounded uniformly over $x'$ in the neighborhood of any $\theta$. We remark that the first equality in \cref{eq:grad-smooth-compute} uses the dominated convergence theorem, and the second equality uses that the $V(x', \theta-z)$ is differentiable almost surely over $z \sim \MN(0, \sigma^2 I_d)$. Thus, by \cref{lem:diff-bl}, the function $x' \mapsto \grad \Sm V(x', \theta)$ is Bellman-linear at step $h$, for all $\theta \in \BR^d$. 
  But for $x' \in \MX$ and $w \in \BR^d$, 
  \begin{align}
    \grad \Sm V(x', w) = \phi_{h+1}(x', \pi_{h+1,w,\sigma}(x'))\nonumber,
  \end{align}
  where the equality above uses \cref{eq:grad-smooth-compute} together with \cref{def:plinear}. Next,  \cref{eq:feature-limit} ensures that for each $x' \in \MX$, $\lim_{\sigma \ra 0^+} \phi_{h+1}(x', \pi_{h+1,w,\sigma}(x') = \phi_{h+1}(x', \pi_{h+1,w}(x'))$. Then \cref{lem:bl-limit} (applied to each coordinate of the function $x' \mapsto \phi_{h+1}(x', \pi_{h+1,w,\sigma}(x')$, for $\sigma > 0$) yields that $x' \mapsto \phi_{h+1}(x', \pi_{h+1,w}(x'))$ is Bellman-linear at step $h+1$, for any $w \in \BR^d$.
\end{proof}
Given any $\pi_{h+1} \in \Pilin_{h+1}$ (so that $\pi_{h+1} = \pi_{h+1,v}$ for some $v \in \BR^d$) and $w \in \BR^d$, we define $\MT_h^\pi w := L_h(v) \cdot w$. Then by \cref{lem:lin-lb}, for all $(x,a) \in \MX \times \MA$,
\begin{align}
\lng \phi_h(x,a), \MT_h^\pi w \rng = \E_{x' \sim P_h(x,a)}[ \lng \phi_{h+1}(x', \pi_{h+1}(x')), w \rng ]\label{eq:bellman-linpol}.
\end{align}
Note that $\MT_h^\pi$ depends only on $\pi_{h+1}$; thus with slight abuse of notation, we will sometimes write $\MT_h^{\pi_{h+1}} w$ in place of $\MT_h^\pi w$. We also remark that by linearity of expectation, for any $\pi \in \Pilinpp$ and $w \in \BR^d$, there is a vector $\MT_h^\pi w \in \BR^d$ so that \cref{eq:bellman-linpol} holds for the perturbed linear policy $\pi$. 

As an immediate consequence of \cref{lem:lin-lb}, we can show that the $Q$-function for a linear policy is linear.
\begin{corollary}
  \label{cor:qlin}
Suppose that $M$ is linear Bellman complete, and that $\pi \in \Pilinpp$. Then for each $h \in [H]$, there is a vector $w_h^\pi \in H \cdot \MB_h \subset \BR^d$ so that for all $(x,a) \in \MX \times \MA$, $Q_h^\pi(x,a) = \lng w_h^\pi, \phi_h(x,a) \rng$. Moreover, $\| w_h^\pi \|_2 \leq HB$. 
\end{corollary}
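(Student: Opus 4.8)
The plan is a backward induction on $h$ from $H$ down to $1$, threading linearity of the $Q$-function through the Bellman recursion and invoking \cref{lem:lin-lb} — in the form of \cref{eq:bellman-linpol}, which is valid for perturbed linear policies $\pi \in \Pilinpp$ — to handle the one-step transition. For the base case $h = H$, the Bellman equation reads $Q_H^\pi(x,a) = r_H(x,a) = \lng \phi_H(x,a), \theta_H^{\mathsf r} \rng$, so $w_H^\pi := \theta_H^{\mathsf r}$ works (and $\theta_H^{\mathsf r} \in \MB_H \subset H \cdot \MB_H$).

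For the inductive step, assume $Q_{h+1}^\pi(\cdot,\cdot) = \lng w_{h+1}^\pi, \phi_{h+1}(\cdot,\cdot) \rng$. Since $V_{h+1}^\pi(x') = Q_{h+1}^\pi(x', \pi_{h+1}(x'))$ and the convention $f(x',\pi_{h+1}(x')) = \E_{a \sim \pi_{h+1}(x')}[f(x',a)]$ lets us pull the expectation over the action inside the inner product, we get $V_{h+1}^\pi(x') = \lng w_{h+1}^\pi, \phi_{h+1}(x',\pi_{h+1}(x')) \rng$. Substituting into $Q_h^\pi(x,a) = r_h(x,a) + \E_{x' \sim P_h(x,a)}[V_{h+1}^\pi(x')]$ and applying \cref{eq:bellman-linpol} with $w = w_{h+1}^\pi$ to rewrite $\E_{x' \sim P_h(x,a)}[\lng w_{h+1}^\pi, \phi_{h+1}(x',\pi_{h+1}(x')) \rng]$ as $\lng \phi_h(x,a), \MT_h^\pi w_{h+1}^\pi \rng$ yields $Q_h^\pi(x,a) = \lng \phi_h(x,a), \theta_h^{\mathsf r} + \MT_h^\pi w_{h+1}^\pi \rng$ for all $(x,a) \in \MX \times \MA$; so we take $w_h^\pi := \theta_h^{\mathsf r} + \MT_h^\pi w_{h+1}^\pi$.

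The norm bound then comes for free from the identity just established: since every reward lies in $[0,1]$, we have $0 \le Q_h^\pi(x,a) \le H - h + 1 \le H$, hence $|\lng \phi_h(x,a), w_h^\pi \rng| \le H$ for every $(x,a)$, i.e.\ $w_h^\pi / H \in \MB_h$ by the definition of $\MB_h$; thus $w_h^\pi \in H \cdot \MB_h$, and the second item of \cref{asm:boundedness} gives $\| w_h^\pi \|_2 \le HB$. There is no real obstacle here: the only point requiring care is that the one-step backup $x' \mapsto \phi_{h+1}(x',\pi_{h+1}(x'))$ is genuinely Bellman-linear for a (possibly randomized) perturbed linear policy $\pi_{h+1}$, which is exactly the content of \cref{lem:lin-lb} together with the remark extending it to $\Pilinpp$ by linearity of expectation; given that, the rest is bookkeeping with the Bellman recursion.
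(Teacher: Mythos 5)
Your proof is correct and follows essentially the same route as the paper's: backward induction through the Bellman recursion, using linearity of the rewards and the Bellman-linearity of $x' \mapsto \phi_{h+1}(x', \pi_{h+1}(x'))$ from \cref{lem:lin-lb} (extended to $\Pilinpp$ by averaging over the Gaussian perturbation), with the norm bound read off from $|Q_h^\pi| \leq H$ together with the definition of $\MB_h$ and \cref{asm:boundedness}.
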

\begin{proof}
  Since $Q_h^\pi(x,a) = r_h(x,a) + \E_{x' \sim P_h(x,a)} [Q_{h+1}^\pi(x, \pi_{h+1}(x))]$, by induction and the fact that $r_h(x,a)$ is linear, it suffices to show that for all $v \in \BR^d$, there is some $w \in \BR^d$ so that $\lng \phi_h(x,a), w \rng = \E_{x' \sim P_h(x,a)} [ \lng \phi_h(x, \pi_{h+1}(x)), v \rng]$. We may write $\pi_{h+1} = \pi_{h+1,y,\sigma}$ for some $y \in \BR^d$ and $\sigma \geq 0$; then by \cref{lem:lin-lb}, we may take $w = \E_{z \sim \MN(y, \sigma^2)}[L_h(z)] \cdot v$.

  To see the upper bound on $\| w_h^\pi \|_2$, note that, by definition of $Q_h^\pi$, we have $|\lng w_h^\pi, \phi_h(x,a) \rng |\leq  H$ for all $x,a,h$. Then it follows that $\| w_h^\pi \|_2 \leq HB$ by \cref{asm:boundedness}. 
\end{proof}

\subsection{Construction of Bellman-linear functions for bonuses}
A central component of the proof consists of the construction of functions, to be used as exploration bonuses, which (a) are Bellman-linear and (b) approximate a truncated version of arbitrary norms induced by PSD matrices. In this section, we establish Bellman linearity of certain functions which form the building blocks of these bonuses.

Given orthogonal vectors $u,v \in \BR^d$, we let $\Pi_{u,v} := \frac{uu^\t}{\| u \|_2} + \frac{vv^\t}{\| v \|_2}$. Note that if $u,v$ are unit vectors, then $\Pi_{u,v}$ is the  projection matrix onto $\Span\{u,v\}$.

\begin{definition}[Orthogonal pair]
  \label{def:op}
  We define a tuple of $d \times d$ PSD matrices $(\Sigma, \Lambda)$ to be an \emph{orthogonal pair} if the following equalities hold:
  \begin{align}
\Sigma^2 = \Sigma, \ \Lambda^2 = \Lambda, \qquad \Sigma \Lambda = \Lambda \Sigma = 0, \qquad \Sigma + \Lambda = I_d\nonumber.
  \end{align}
\end{definition}
For an orthogonal pair $(\Sigma, \Lambda)$, $\Sigma, \Lambda$ are projections onto subspaces of $\BR^d$, which we denote by $\MS_\Sigma := \{ \Sigma v :\ v \in \BR^d\}, \MS_\Lambda := \{ \Lambda v :\ v \in \BR^d\}$, respectively. Since $\Sigma \Lambda = \Lambda \Sigma = 0$, the subspaces $\MS_\Sigma, \MS_\Lambda$ are orthogonal. Since $\Sigma + \Lambda = I_d$, we have that $\dim(\MS_\Sigma) + \dim(\MS_\Lambda) = d$.

\begin{definition}[Truncated linear bonus]
  \label{def:trunc-perim}
  Consider orthogonal vectors $u,v \in \BR^d$. For $\Phi \in \SP^d$, we define
  \[
\Ftp(\Phi; u,v) = \max_{\phi \in \Phi} \lng u, \phi  \rng + \max_{\phi\in \Phi} \lng v, \phi \rng - \max_{\phi \in \Phi} \lng u+v, \phi  \rng.
    \]
  
  For $x \in \MX$ and $h \in [H]$, we then define $\Ftp_h(x; u,v) := \Ftp(\cPhi_h(x); u,v)$. 
\end{definition}

The following lemma is immediate from the definition of $\Ftp(\cdot)$.
\begin{lemma}[Bellman-linearity of truncated linear bonus]
Given any $h > 1$ and vectors $u,v \in \BR^d$, the mapping $x \mapsto \Ftp_h(x; u,v)$ is Bellman-linear at step $h$. 
\end{lemma}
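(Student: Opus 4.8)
The plan is to reduce $\Ftp_h(\cdot\,;u,v)$ to a finite linear combination of the canonical Bellman-linear functions $x \mapsto \max_{a\in\MA}\lng w,\phi_h(x,a)\rng$, and then to observe that Bellman-linearity is preserved under finite linear combinations. The first step is to note that a linear functional on a polytope attains its maximum at a vertex, so for every $w \in \BR^d$ and $x \in \MX$, since $\cPhi_h(x) = \co\{\phi_h(x,a):a\in\MA\}$ we have $\max_{\phi\in\cPhi_h(x)}\lng w,\phi\rng = \max_{a\in\MA}\lng w,\phi_h(x,a)\rng$. Substituting this into \cref{def:trunc-perim} gives
\[
\Ftp_h(x;u,v) = G_u(x) + G_v(x) - G_{u+v}(x), \qquad \text{where}\quad G_w(x) := \max_{a\in\MA}\lng w,\phi_h(x,a)\rng .
\]

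Next I would check that each $G_w$ is Bellman-linear at step $h$. If $w=0$ this is immediate with witness $0$. Otherwise, $\|w\|_2^{-1}w \in \MB_h$: indeed $|\lng \phi_h(x,a), \|w\|_2^{-1}w\rng| \le \|\phi_h(x,a)\|_2 \le 1$ by Cauchy--Schwarz and \cref{asm:boundedness}. Since $h>1$, we may apply \cref{def:lbc} at index $h-1$ with $\theta = \|w\|_2^{-1}w \in \MB_h$ to get $\E_{x'\sim P_{h-1}(x,a)}\big[\max_{a'\in\MA}\lng \phi_h(x',a'), \|w\|_2^{-1}w\rng\big] = \lng \phi_{h-1}(x,a), \MT_{h-1}(\|w\|_2^{-1}w)\rng$; scaling both sides by $\|w\|_2$ shows that $G_w$ is Bellman-linear at step $h$ with witness $\|w\|_2\,\MT_{h-1}(\|w\|_2^{-1}w)$.

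Finally, I would observe that if $G_1,G_2 : \MX \to \BR$ are Bellman-linear at step $h$ with witnesses $w_{G_1}, w_{G_2} \in \BR^d$, then by linearity of expectation $\alpha G_1 + \beta G_2$ is Bellman-linear at step $h$ with witness $\alpha w_{G_1} + \beta w_{G_2}$. Applying this to $G_u + G_v - G_{u+v}$ yields Bellman-linearity of $x\mapsto\Ftp_h(x;u,v)$, completing the proof. There is no real obstacle here; the only point worth a moment's care is the normalization step reducing an arbitrary $w$ to an element of $\MB_h$ so that \cref{def:lbc} applies, which is handled at once by the bound $\|\phi_h(x,a)\|_2 \le 1$.
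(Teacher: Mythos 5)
Your proof is correct and fills in exactly the argument the paper leaves implicit when it states the lemma is ``immediate from the definition'': decompose $\Ftp_h(\cdot;u,v)$ into the three canonical Bellman-linear maxima, justify Bellman-linearity of each via \cref{def:lbc} after rescaling into $\MB_h$, and close under linear combinations. No issues.
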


We next record the following lemmas for future use, which follow as an immediate consequence of the definition of $\Ftp$.
\begin{lemma}
  \label{lem:polygon-isometry}
 Let $\Phi \in \SP^d$. Suppose that $u,u', v,v' \in \BR^d$ satisfy $\lng u, \phi\rng = \lng u', \phi \rng$ and $\lng v, \phi \rng = \lng v', \phi \rng$ for all $\phi \in \Phi$. Then
  \begin{align}
    \Ftp(\Phi; u,v) = \Ftp(\Phi; u',v')\nonumber.
  \end{align}
\end{lemma}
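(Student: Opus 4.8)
The plan is to prove Lemma~\ref{lem:polygon-isometry} by a direct term-by-term comparison of the three suprema appearing in the definition of $\Ftp(\Phi;\cdot,\cdot)$ (\cref{def:trunc-perim}). Recall that
\[
\Ftp(\Phi; u,v) = \max_{\phi \in \Phi} \lng u, \phi \rng + \max_{\phi\in \Phi} \lng v, \phi \rng - \max_{\phi \in \Phi} \lng u+v, \phi \rng,
\]
so it suffices to show that each of the three maxima is unchanged when $(u,v)$ is replaced by $(u',v')$.

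First I would handle the first two terms: by hypothesis $\lng u,\phi\rng = \lng u',\phi\rng$ for every $\phi \in \Phi$, so the functions $\phi \mapsto \lng u,\phi\rng$ and $\phi \mapsto \lng u',\phi\rng$ agree pointwise on $\Phi$ and hence have the same supremum over $\Phi$; likewise $\max_{\phi\in\Phi}\lng v,\phi\rng = \max_{\phi\in\Phi}\lng v',\phi\rng$. Next, for the third term, I would observe that for every $\phi \in \Phi$ we have $\lng u+v,\phi\rng = \lng u,\phi\rng + \lng v,\phi\rng = \lng u',\phi\rng + \lng v',\phi\rng = \lng u'+v',\phi\rng$, so again the two linear functionals agree on $\Phi$ and $\max_{\phi\in\Phi}\lng u+v,\phi\rng = \max_{\phi\in\Phi}\lng u'+v',\phi\rng$. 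Substituting these three equalities into the displayed definition yields $\Ftp(\Phi;u,v) = \Ftp(\Phi;u',v')$.

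There is essentially no obstacle here: the result is immediate from the fact that $\Ftp(\Phi;u,v)$ depends on $u$, $v$, and $u+v$ only through the values of the corresponding linear functionals restricted to $\Phi$. (The only mild point worth noting is that $\Phi \in \SP^d$ is a bounded polyhedron, so the maxima are attained and finite, but this plays no real role — the argument works verbatim for the suprema.) The lemma will be used later precisely in situations where $u,v$ are replaced by nearby vectors with the same inner products against all $\phi_h(x,a)$, $a \in \MA$ (i.e.\ vectors differing by something orthogonal to $\Span\Phi_h(x)$), which is the setting for the dimension reduction from $d$ to $A$ alluded to around \cref{cor:optimal-perimeter}.
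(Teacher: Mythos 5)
Your proposal is correct and matches the paper's proof, which likewise observes that $\max_{\phi\in\Phi}\lng w,\phi\rng = \max_{\phi\in\Phi}\lng w',\phi\rng$ for each pair $(w,w') \in \{(u,u'),(v,v'),(u+v,u'+v')\}$ and concludes by substitution into the definition of $\Ftp$.
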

\begin{proof}
We have that $\max_{\phi\in \Phi} \lng w, \phi  \rng = \max_{\phi  \in \Phi} \lng w', \phi  \rng$ for each pair $(w,w') \in \{(u,u'), (v,v'), (u+v,u'+v')\}$. 
\end{proof}

\begin{lemma}
  \label{lem:alpha-lb}
  Let $\Phi \in \SP^d$. Suppose that $u,v \in \BR^d$ are given and that $\alpha_u, \alpha_v \geq 0$ are real numbers. Then
  \begin{align}
\Ftp(\Phi; \alpha_u \cdot u, \alpha_v \cdot v ) \geq \min\{ \alpha_u, \alpha_v \} \cdot \Ftp(\Phi; u,v)\nonumber.
  \end{align}
\end{lemma}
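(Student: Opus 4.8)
The plan is to recognize that $\Ftp(\Phi;\cdot,\cdot)$ is assembled entirely from the \emph{support function} of $\Phi$, namely $h_\Phi(w) := \max_{\phi \in \Phi} \lng w, \phi \rng$, via $\Ftp(\Phi; u,v) = h_\Phi(u) + h_\Phi(v) - h_\Phi(u+v)$. The only two properties of $h_\Phi$ I need, both immediate from the definition of a maximum over a fixed set, are positive homogeneity ($h_\Phi(\lambda w) = \lambda\, h_\Phi(w)$ for $\lambda \geq 0$) and subadditivity ($h_\Phi(w_1 + w_2) \leq h_\Phi(w_1) + h_\Phi(w_2)$). The argument uses nothing about orthogonality of $u$ and $v$.

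First I would dispose of the degenerate case $\min\{\alpha_u,\alpha_v\} = 0$: say $\alpha_u = 0$, so that (using that $\Phi$ is nonempty, hence $h_\Phi(0) = 0$) the left-hand side equals $h_\Phi(0) + h_\Phi(\alpha_v v) - h_\Phi(\alpha_v v) = 0$, while the right-hand side is $0$, and the inequality holds with equality. So assume $\alpha_u, \alpha_v > 0$, and, since $\Ftp(\Phi; u, v) = \Ftp(\Phi; v, u)$, assume without loss of generality that $\alpha_u \leq \alpha_v$, i.e.\ $\min\{\alpha_u,\alpha_v\} = \alpha_u$.

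The key step is the decomposition $\alpha_u u + \alpha_v v = \alpha_u(u+v) + (\alpha_v - \alpha_u) v$, in which the scalar $\alpha_v - \alpha_u$ is nonnegative. Subadditivity followed by positive homogeneity of $h_\Phi$ gives $h_\Phi(\alpha_u u + \alpha_v v) \leq \alpha_u\, h_\Phi(u+v) + (\alpha_v - \alpha_u)\, h_\Phi(v)$. Plugging this bound, together with $h_\Phi(\alpha_u u) = \alpha_u h_\Phi(u)$ and $h_\Phi(\alpha_v v) = \alpha_v h_\Phi(v)$, into the definition of $\Ftp(\Phi;\alpha_u u, \alpha_v v)$, the two copies of $(\alpha_v - \alpha_u) h_\Phi(v)$ cancel, and what remains is $\alpha_u\big(h_\Phi(u) + h_\Phi(v) - h_\Phi(u+v)\big) = \min\{\alpha_u,\alpha_v\}\cdot \Ftp(\Phi; u,v)$, as desired.

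There is no genuine obstacle here; the only points requiring care are handling the $\min = 0$ case separately (since positive homogeneity conveys no information when a coefficient is $0$), and performing the symmetry reduction $\alpha_u \leq \alpha_v$ before the decomposition, so that the leftover coefficient $\alpha_v - \alpha_u$ is nonnegative and positive homogeneity of the support function actually applies to it.
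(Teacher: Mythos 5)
Your proof is correct and takes essentially the same route as the paper: both hinge on the decomposition $\alpha_u u + \alpha_v v = \alpha_u(u+v) + (\alpha_v-\alpha_u)v$ together with subadditivity and positive homogeneity of the support function $\max_{\phi\in\Phi}\lng \cdot,\phi\rng$ (the paper just first rescales so that $\min\{\alpha_u,\alpha_v\}=1$). Your separate treatment of the degenerate case $\min\{\alpha_u,\alpha_v\}=0$ is a minor point of extra care that the paper's rescaling step glosses over.
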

\begin{proof}
  Since $\Ftp(\Phi; \alpha u, \alpha v) = \alpha \cdot \Ftp(\Phi; u,v)$ for any $\alpha \geq 0$, it suffices to consider the case that $\min\{ \alpha_u, \alpha_v \} = 1$. Without loss of generality let us suppose that $\alpha_u = 1 \leq \alpha_v$. Then
  \begin{align}
    \Ftp(\Phi; u, \alpha_v \cdot v) - \Ftp(\Phi; u,v) =& (\alpha_v - 1) \cdot \max_{\phi \in \Phi} \lng v, \phi \rng - \max_{\phi \in \Phi} \lng u + \alpha_v \cdot  v, \phi \rng + \max_{\phi \in \Phi} \lng u + v, \phi \rng
    \geq 0\nonumber,
  \end{align}
  where the final inequality follows since $u + \alpha_v \cdot v = (u+v) + (\alpha_v - 1) \cdot v$. 
\end{proof}

\subsection{An upper bound on the truncated linear bonus}
\label{sec:tp-upper-bound}

\cref{lem:tp-upper-bound} below shows that the truncated linear bonus $\Ftp(\Phi; u,v)$ can be bounded above by the length of the projection of $\Phi$ onto each of $u$ and $v$. 
\begin{lemma}
  \label{lem:tp-upper-bound}
  Suppose $u,v \in \BR^d$ and $\Phi \in \SP^d$. Then
  \begin{align}
0 \leq \Ftp(\Phi; u,v) \leq 2 \cdot \min \left\{ \max_{\phi, \phi' \in \Phi} \lng u, \phi - \phi' \rng, \max_{\phi, \phi' \in \Phi} \lng v, \phi - \phi' \rng \right\} \nonumber.
  \end{align}
\end{lemma}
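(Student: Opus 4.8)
The plan is to prove both inequalities directly from the definition, working with the support function $h_\Phi(w) := \max_{\phi \in \Phi} \lng w, \phi \rng$ (all maxima below are attained since $\Phi \in \SP^d$ is a bounded polyhedron, hence compact), so that $\Ftp(\Phi; u,v) = h_\Phi(u) + h_\Phi(v) - h_\Phi(u+v)$. For the lower bound $\Ftp(\Phi; u,v) \geq 0$, I would invoke subadditivity of the support function: if $\phi$ attains $h_\Phi(u+v)$, then $\lng u+v,\phi\rng = \lng u,\phi\rng + \lng v,\phi\rng \le h_\Phi(u) + h_\Phi(v)$, so $h_\Phi(u+v) \le h_\Phi(u) + h_\Phi(v)$, which rearranges to $\Ftp(\Phi;u,v)\ge 0$.

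For the upper bound, I would first note that $\Ftp(\Phi; u,v)$ is symmetric in $u$ and $v$, so it suffices to prove the single bound $\Ftp(\Phi; u,v) \leq \max_{\phi,\phi' \in \Phi}\lng u, \phi - \phi'\rng$ and then take the minimum over the two symmetric forms (the factor $2$ in the statement is slack, retained for consistency with later invocations). To prove this: pick $\phi_v \in \Phi$ attaining $h_\Phi(v) = \lng v, \phi_v\rng$. Then
\[
h_\Phi(u+v) \;\geq\; \lng u+v, \phi_v\rng \;=\; \lng u, \phi_v\rng + h_\Phi(v) \;\geq\; \min_{\phi \in \Phi}\lng u, \phi\rng + h_\Phi(v).
\]
Substituting this into $\Ftp(\Phi;u,v) = h_\Phi(u) + h_\Phi(v) - h_\Phi(u+v)$ and cancelling $h_\Phi(v)$ gives $\Ftp(\Phi;u,v) \leq h_\Phi(u) - \min_{\phi\in\Phi}\lng u,\phi\rng = \max_{\phi,\phi'\in\Phi}\lng u, \phi-\phi'\rng$. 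Swapping $u$ and $v$ yields the analogous inequality with $v$; taking the minimum (and trivially bounding it by twice the minimum) finishes the argument.

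I do not expect a genuine obstacle here: the estimate is a one-line consequence of subadditivity of the support function. The only point deserving a word of care is that the extrema defining $h_\Phi$ and $\min_{\phi\in\Phi}\lng u,\phi\rng$ are attained, which holds by compactness of $\Phi$. (I would also remark in passing that this argument in fact delivers the sharper bound $\Ftp(\Phi;u,v) \le \min\{\max_{\phi,\phi'}\lng u,\phi-\phi'\rng,\ \max_{\phi,\phi'}\lng v,\phi-\phi'\rng\}$ without the factor $2$, but the stated form is all that is needed downstream.)
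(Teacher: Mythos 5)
Your proof is correct, and the upper-bound argument takes a slightly different (and sharper) route than the paper's. The paper symmetrizes: it bounds $\Ftp(\Phi;u,v)$ by $\Ftp(\Phi;u,v)+\Ftp(\Phi;-u,-v)$ using nonnegativity, rewrites that sum in terms of the ``width'' quantities $\max_{\phi,\phi'}\lng w,\phi-\phi'\rng$, and then applies a triangle-inequality step to the symmetrized expression; this is where the factor $2$ enters. You instead lower-bound $\max_{\phi}\lng u+v,\phi\rng$ directly by evaluating at the maximizer $\phi_v$ of $\max_\phi\lng v,\phi\rng$, which cancels the $v$-term and yields $\Ftp(\Phi;u,v)\le \max_{\phi,\phi'}\lng u,\phi-\phi'\rng$ outright, hence the minimum of the two widths without the factor $2$. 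Your bound implies the stated one since each width is nonnegative (take $\phi=\phi'$), and your observation that the constant $2$ is slack is accurate. The lower bound $\Ftp\ge 0$ is proved identically in both. No gaps.
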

\begin{proof}
  For any fixed $\phi$, we have
  \begin{align}
\lng u+v, \phi \rng \leq \max_{\phi' \in \Phi} \lng u, \phi' \rng + \max_{ \phi' \in \Phi} \lng v,  \phi' \rng\nonumber,
  \end{align}
  which implies $\Ftp(\Phi; u,v) \geq 0$.

  To upper bound $\Ftp(\Phi; u,v)$, we first note that
  \begin{align}
    \Ftp(\Phi; u,v) \leq & \Ftp(\Phi; u,v) + \Ftp(\Phi; -u,-v)\nonumber\\
    =& \max_{\phi, \phi' \in \Phi} \lng u, \phi - \phi' \rng + \max_{\phi, \phi' \in \Phi} \lng v, \phi - \phi' \rng - \max_{\phi, \phi' \in \Phi} \lng u+v, \phi - \phi' \rng \nonumber.
  \end{align}
We will now upper bound $\Ftp(\Phi; u,v) + \Ftp(\Phi; -u,-v)$ as follows. By symmetry, it is without loss of generality to assume that $\max_{\phi, \phi' \in \Phi} \lng u, \phi - \phi' \rng \leq \max_{\phi, \phi' \in \Phi} \lng v, \phi - \phi' \rng$. Now we have
  \begin{align}
    \max_{\phi, \phi' \in \Phi} \lng u+v, \phi - \phi' \rng \geq & \max_{\phi, \phi' \in \Phi} \lng v, \phi - \phi' \rng - \max_{\phi, \phi'\in \Phi} \lng -u, \phi - \phi' \rng \nonumber\\
    =& \max_{\phi, \phi' \in \Phi} \lng v, \phi - \phi' \rng - \max_{\phi, \phi'\in\Phi} \lng u, \phi - \phi' \rng \nonumber.
  \end{align}
  Rearranging, we see that $\Ftp(\Phi; u,v) + \Ftp(\Phi; -u,-v) \leq 2 \max_{\phi, \phi'\in \Phi} \lng u, \phi - \phi' \rng$, as desired. 
\end{proof}

\subsection{A lower bound on the modified truncated linear bonus}
\label{sec:tp-lower-bound}
Given $\Phi \in \SP^d$ and a matrix $\Gamma \in \BR^{d \times d}$, let $\MS_\Phi \subset \BR^d$ denote the subspace $\MS_\Phi := \Span \{ \phi : \phi \in \Phi \}$, and $\MS_{\Phi, \Gamma} \subset \BR^d$ denote the subspace $\MS_{\Phi, \Gamma} := \Span\{\Gamma \cdot \phi : \ \phi \in \Phi\} = \Gamma \cdot \MS_\Phi$. Let $\Pi_{\Phi, \Gamma} \in \BR^{d \times d}$ denote the matrix which projects onto $\MS_{\Phi, \Gamma}$. Recall that for an orthogonal pair $(\Sigma, \Lambda)$ (\cref{def:op}), we have $\MS_\Sigma = \Sigma \cdot \BR^d, \MS_\Lambda = \Lambda \cdot \BR^d$. Then for any $\Phi \in \SP^d$, $\MS_{\Phi, \Sigma} \subset \MS_\Sigma$ and $\MS_{\Phi, \Lambda} \subset \MS_\Lambda$. 

Moreover, given  $\Phi \in \SP^d$ and $\phi_1, \phi_2 \in \Phi$, we define
  \begin{align}
\midpoint[\Phi, \phi_1, \phi_2] := \argmin_{\xi \in \Phi} \left\{ \| \beta \Sigma \cdot (\phi_1 - \xi) \|_2+ \| \Lambda \cdot (\xi - \phi_2) \|_2 \right\}\label{eq:define-midpoint}.
  \end{align}
  The point $\midpoint[\Phi, \phi_1, \phi_2]$ can be thought of as a sort of ``midpoint'' between $\phi_1$ and $\phi_2$ in $\Phi$, where the distance between $\phi_1$ and $\midpoint$ is measured with respect to $\beta \Sigma$ and the distance between $\midpoint$ and $\phi_2$ is measured with respect to $\Lambda$. As it turns out, this particular notion of midpoint will be useful in proving that our exploration bonuses induce optimistic value functions.

  Finally, we introduce some notation regarding the normal cone of a convex body. For a convex subset $\MC \subset \BR^d$ and $z \in \MC$, the normal cone of $\MC$ at $z$, denoted $N_\MC(z)$ is defined as $N_\MC(z) := \{ y \in \BR^d :\ \lng y, z-z' \rng \geq 0 \ \forall z' \in \MC \}$. %
\begin{lemma}
  \label{lem:optimal-perimeter}
Consider an orthogonal pair $(\Sigma, \Lambda)$ and $\beta \geq 1, \zeta > 0$. 
Suppose that $\Phi \in \SP^d$ satisfies
  \begin{align}
\max \left\{  \max_{\phi_1, \phi_2 \in \Phi} \| \beta \Sigma \cdot (\phi_1 - \phi_2) \|_2 , \  \max_{\phi_1, \phi_2 \in \Phi} \| \Lambda \cdot (\phi_1 - \phi_2) \|_2\right\} \leq \zeta. \label{eq:skew-bound}
  \end{align}
  Fix any $\phi_1, \phi_2 \in \Phi$. Then there are unit vectors $u \in \MS_{\Phi, \Sigma}$ and $v \in \MS_{\Phi, \Lambda}$ (depending on $\Phi, \phi_1, \phi_2$) so that for all unit vectors $u' \in \MS_\Sigma, v ' \in \MS_\Lambda$ satisfying %
  \begin{align}
    \| u-u' \|_2 \leq \vep, \qquad \| v-v' \|_2 \leq \vep, 
    \qquad \lng u', u \rng \geq \eta, \qquad \lng v', v \rng \geq \eta\label{eq:uuprime-vvprime-beta},
  \end{align}
  it holds that 
  \begin{align}
\Ftp(\Phi; \beta u', v')    & \geq   \eta \|\beta \Sigma \cdot (\phi_1 - \midpoint) \|_2+\eta  \| \Lambda \cdot (\midpoint - \phi_2) \|_2 - 2\vep\zeta\nonumber.
  \end{align}
\end{lemma}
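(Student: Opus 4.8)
The plan is to establish a quantitative, perturbation‑robust version of the two–dimensional computation sketched around \cref{eq:define-midpoint} in the technical overview, and the engine is first–order optimality of the ``midpoint'' $\midpoint$. Write $g(\xi) := \|\beta\Sigma\cdot(\phi_1-\xi)\|_2 + \|\Lambda\cdot(\xi-\phi_2)\|_2$, so that (recalling \cref{eq:define-midpoint}) $\midpoint=\midpoint[\Phi,\phi_1,\phi_2]$ is a minimizer of the convex function $g$ over the compact convex body $\Phi$. Assume first the generic case $\Sigma(\phi_1-\midpoint)\neq 0$ and $\Lambda(\midpoint-\phi_2)\neq 0$, and define the unit vectors
\[
u := \frac{\Sigma\cdot(\phi_1-\midpoint)}{\|\Sigma\cdot(\phi_1-\midpoint)\|_2}\in\MS_{\Phi,\Sigma},\qquad v := \frac{\Lambda\cdot(\phi_2-\midpoint)}{\|\Lambda\cdot(\phi_2-\midpoint)\|_2}\in\MS_{\Phi,\Lambda},
\]
which lie in the stated subspaces because $\phi_1,\phi_2,\midpoint\in\Phi$. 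Since $\Sigma,\Lambda$ are symmetric idempotents, a direct computation gives $\grad g(\midpoint) = -\beta u - v$, so the first–order optimality condition for $\min_{\xi\in\Phi}g(\xi)$ reads $\lng\beta u+v,\midpoint-\xi\rng\geq 0$ for all $\xi\in\Phi$; equivalently $\beta u+v\in N_\Phi(\midpoint)$, i.e. $\midpoint\in\argmax_{\phi\in\Phi}\lng\beta u+v,\phi\rng$.

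Now fix any admissible $u',v'$ as in \cref{eq:uuprime-vvprime-beta}. In the definition of $\Ftp(\Phi;\beta u',v')$, bound the first two maxima below by plugging in $\phi=\phi_1$ and $\phi=\phi_2$, so it remains to upper bound $\max_{\phi\in\Phi}\lng\beta u'+v',\phi\rng$. Writing $\beta u'+v' = (\beta u+v)+\beta(u'-u)+(v'-v)$, using subadditivity of the max, and using the optimality property $\max_\phi\lng\beta u+v,\phi\rng=\lng\beta u+v,\midpoint\rng$, one obtains
\[
\max_{\phi\in\Phi}\lng\beta u'+v',\phi\rng \;\leq\; \lng\beta u'+v',\midpoint\rng + \max_{\phi\in\Phi}\lng\beta(u'-u)+(v'-v),\,\phi-\midpoint\rng .
\]
The skew bound \cref{eq:skew-bound} enters in the last term: since $u'-u\in\MS_\Sigma$, $v'-v\in\MS_\Lambda$, and $\phi,\midpoint\in\Phi$, Cauchy–Schwarz gives $\lng\beta(u'-u),\phi-\midpoint\rng=\lng\beta(u'-u),\Sigma(\phi-\midpoint)\rng\leq\|u'-u\|_2\cdot\|\beta\Sigma(\phi-\midpoint)\|_2\leq\vep\zeta$, and likewise $\lng v'-v,\phi-\midpoint\rng\leq\vep\zeta$, for a total of $2\vep\zeta$. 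Hence $\Ftp(\Phi;\beta u',v')\geq\lng\beta u',\phi_1-\midpoint\rng+\lng v',\phi_2-\midpoint\rng-2\vep\zeta$. Finally, using $u'\in\MS_\Sigma$ and the defining identity for $u$, $\lng\beta u',\phi_1-\midpoint\rng=\lng\beta u',\Sigma(\phi_1-\midpoint)\rng=\beta\|\Sigma(\phi_1-\midpoint)\|_2\lng u',u\rng\geq\eta\|\beta\Sigma(\phi_1-\midpoint)\|_2$, and symmetrically $\lng v',\phi_2-\midpoint\rng\geq\eta\|\Lambda(\midpoint-\phi_2)\|_2$; substituting gives exactly the claimed inequality.

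It remains to handle the boundary cases where one of $\Sigma(\phi_1-\midpoint)$, $\Lambda(\midpoint-\phi_2)$ vanishes. If both vanish, the right–hand side of the conclusion is $-2\vep\zeta\leq 0\leq\Ftp(\Phi;\beta u',v')$ by the nonnegativity half of \cref{lem:tp-upper-bound}, so any choice of unit vectors in the (nontrivial) subspaces works. The case of exactly one vanishing projection is the delicate point, and I expect it — rather than the main chain of inequalities — to be the main obstacle: there $g$ is nonsmooth at $\midpoint$, so one must replace the gradient identity by a subdifferential computation, choosing $u$ (resp. $v$) from the subgradient of the vanishing term selected by the KKT condition, and then argue the resulting error is still absorbed. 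This can be done either by perturbing $\phi_1$ (resp. $\phi_2$) slightly into $\Phi$ and passing to the limit, or by tracking the subgradient norm explicitly and observing that the skew bound forces $\zeta$ to dominate the $\Sigma$‑ or $\Lambda$‑diameter of $\Phi$ that is ``activated'' in this case. Apart from this boundary bookkeeping, the entire substance of the lemma is the generic case above: first–order optimality of $\midpoint$ pins $\midpoint$ as the maximizer of $\lng\beta u+v,\cdot\rng$, and the skew bound converts the $\vep$‑perturbation of $(u,v)$ into the $2\vep\zeta$ slack.
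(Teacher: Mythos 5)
Your argument is correct and is essentially the paper's own proof: you choose $u,v$ as the normalized $\Sigma$- and $\Lambda$-projections of $\phi_1-\midpoint$ and $\phi_2-\midpoint$, invoke the first-order optimality (KKT) condition $\beta u + v \in N_\Phi(\midpoint)$, and convert the $\vep$-perturbation of $(u,v)$ into the $2\vep\zeta$ slack via Cauchy--Schwarz and \cref{eq:skew-bound}, exactly as in Steps 1--3 of the paper (your direct bound on $\max_{\phi\in\Phi}\lng\beta u'+v',\phi\rng$ is just a rephrasing of the paper's argument through $\xi'$). The degenerate case you flag (one of $\Sigma(\phi_1-\midpoint)$, $\Lambda(\midpoint-\phi_2)$ vanishing) is likewise left implicit in the paper's proof, which only disposes of the fully degenerate subspace cases in \cref{cor:optimal-perimeter}, so your caution there is warranted but does not constitute a divergence from the paper's route.
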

\begin{proof}
  Fix $\Phi \in \SP^d$ satisfying \cref{eq:skew-bound} and $\phi_1, \phi_2 \in \Phi$.

  \paragraph{Step 1: Choosing $u,v$.} The KKT conditions for optimality of $\midpoint$ (as defined in \cref{eq:define-midpoint}) give that, for all $z \in \bar \Phi$,
  \begin{align}
 \frac{\beta^2\Sigma \cdot (\phi_1 - \midpoint )}{\| \beta\Sigma \cdot (\phi_1 - \midpoint)\|_2} + \frac{\Lambda \cdot (\phi_2 - \midpoint)}{\| \Lambda \cdot (\phi_2 - \midpoint)\|_2} \in N_{\Phi}(\midpoint)\label{eq:wstar-kkt},
  \end{align}
  where $N_\Phi(\xi)$ denotes the normal cone of $\Phi$ at $\xi$.%
  We now set
  \begin{align}
u :=  \frac{\Sigma \cdot (\phi_1 - \midpoint)}{\| \Sigma \cdot ( \phi_1 - \midpoint)\|_2}, \qquad v := \frac{\Lambda \cdot (\phi_2 - \midpoint)}{\| \Lambda \cdot (\phi_2 - \midpoint)\|_2}\nonumber. 
  \end{align}
  Then \cref{eq:wstar-kkt} states that for all $z \in \Phi$, $\lng \beta u+v,  \midpoint - z \rng \geq 0$. Moreover, since $\midpoint \in \Phi$, we certainly have that $u \in \MS_{\Phi, \Sigma} \subset \MS_\Sigma,\ v \in \MS_{\Phi, \Lambda}\subset \MS_\Lambda$. 

\paragraph{Step 2: Relating $u,v$ to $u',v'$ satisfying \cref{eq:uuprime-vvprime-beta}.}  Now consider any unit vectors $u' \in \MS_\Sigma, v' \in \MS_\Lambda$ satisfying \cref{eq:uuprime-vvprime-beta}. Choose %
  \begin{align}
\xi' := \argmax_{\xi \in \Phi} \lng \beta u' + v',\xi \rng\label{eq:whprime-min}.
  \end{align}
  Next, we compute 
  \begin{align}
    \lng \beta u' + v', \xi' - \midpoint \rng =& \lng (\beta u' + v') - (\beta u + v), \xi' - \midpoint \rng + \lng \beta u + v, \xi' - \midpoint\rng\nonumber\\
    \leq & \beta | \lng u-u', \xi' - \midpoint \rng | + | \lng v-v', \xi' - \midpoint \rng | \nonumber\\
    = & \beta | \lng u-u', \Sigma \cdot (\xi' - \midpoint) \rng | + | \lng v-v', \Lambda \cdot (\xi' - \midpoint) \rng | \nonumber\\    
    = & \beta | \lng \Pi_{\Phi, \Sigma} \cdot (u-u'), \Sigma \cdot (\xi' - \midpoint) \rng | + | \lng \Pi_{\Phi, \Lambda} \cdot (v-v'), \Lambda \cdot (\xi' - \midpoint) \rng | \nonumber\\      \leq &  \beta \| \Pi_{\Phi, \Sigma} \cdot (u-u') \|_2 \cdot \| \Sigma \cdot (\xi' - \midpoint) \|_2 + \| \Pi_{\Phi, \Lambda} \cdot ( v-v') \|_2\cdot  \| \Lambda \cdot (\xi' - \midpoint) \|_2\nonumber\\
    \leq & 2\vep \zeta\label{eq:wprime-wstar-relate},
  \end{align}
  where the first inequality uses the triangle inequality and the fact that $\lng \beta u + v, \xi' -\midpoint \rng \leq 0$ since $\xi' \in \Phi$; the second equality uses the fact that $u-u' \in \MS_\Sigma$ and $v-v' \in \MS_\Lambda$; the third equality uses the fact that $\Sigma \cdot (\xi' - \midpoint) \in \MS_{\Phi, \Sigma}$ and $\Lambda \cdot (\xi' - \midpoint) \in \MS_{\Phi, \Lambda}$; the second inequality uses Cauchy-Schwarz; %
  and the third inequality uses \cref{eq:uuprime-vvprime-beta} and \cref{eq:skew-bound}. 
  
  We  then have
  \begin{align}
      & \lng  \phi_1-\xi',\beta  u' \rng + \lng  \phi_2-\xi', v' \rng \nonumber\\
    = &  \lng \Sigma \cdot (\phi_1-\xi'), \beta u'+v' \rng + \lng \Lambda \cdot ( \phi_2-\xi'), v'+\beta u' \rng \nonumber\\
    = &  \lng \Sigma \cdot \phi_1 + \Lambda \cdot \phi_2 - \xi', \beta u' + v' \rng \nonumber\\
    \geq & \lng \Sigma \cdot \phi_1 + \Lambda \cdot \phi_2 - \midpoint, \beta u' + v' \rng - 2 \vep \zeta\nonumber\\
    = & \lng \Sigma \cdot (\phi_1 - \midpoint), \beta u' \rng + \lng \Lambda \cdot (\phi_2 - \midpoint), v' \rng -2\vep\zeta\nonumber\\
    \geq  & \eta \beta \|  \Sigma \cdot (\phi_1 - \midpoint) \|_2 + \eta \| \Lambda \cdot (\phi_2 - \midpoint) \|_2-2\vep\zeta \label{eq:uprime-vprime-uep},
  \end{align}
  where the first equality uses that $\Sigma u' = u', \Sigma v' = v'$, and  $\Sigma v' = \Lambda u' = 0$ since $(\Sigma, \Lambda)$ is an orthogonal pair, the second equality uses that $\Sigma + \Lambda = I_d$, the first  inequality uses \cref{eq:wprime-wstar-relate}, %
  the third equality again uses $\Sigma + \Lambda = I_d$ and the fact that $\Sigma v' = \Lambda u' = 0$, and the second inequality uses \cref{eq:uuprime-vvprime-beta} and the definition of $u,v$.

  \paragraph{Step 3: wrapping up.} Finally, we may write
  \begin{align}
    \Ftp(\Phi; \beta u', v') =& \max_{\phi \in \Phi} \lng \phi, \beta u' \rng + \max_{\phi \in \Phi} \lng \phi, v' \rng - \max_{\phi \in \Phi} \lng \phi, \beta u' + v' \rng\nonumber\\
    = & \max_{\phi \in \Phi} \lng \phi, \beta u' \rng + \max_{\phi \in \Phi} \lng \phi, v' \rng - \lng \xi, \beta u' + v' \rng\nonumber\\
    \geq & \lng \phi_1, \beta u' \rng + \lng \phi_2, v' \rng - \lng \xi, \beta u' + v' \rng\nonumber\\
    \geq & \eta \beta \| \Sigma \cdot (\phi_1 - \midpoint)\|_2 + \eta \| \Lambda \cdot (\phi_2 - \midpoint) \|_2 - 2\vep\zeta\nonumber,
  \end{align}
  where the second equality uses the definition of $\xi$ in \cref{eq:whprime-min}, and the second inequality uses \cref{eq:uprime-vprime-uep}.
\end{proof}

By averaging over $u',v'$ drawn randomly from appropriate subspaces, we have the following consequence of \cref{lem:optimal-perimeter}.
\begin{lemma}
  \label{cor:optimal-perimeter}
  There is a constant $C_{\ref{cor:optimal-perimeter}}$ so that the following holds. Fix $A \in \BN$, an orthogonal pair $(\Sigma, \Lambda)$, $\beta \geq 1$, $\zeta > 0$, $\Phi \in \SP_A^d$, $\phi_1, \phi_2 \in \Phi$, and suppose that $\Phi$ satisfies \cref{eq:skew-bound}. 
  Then for any $\vep > 0$, 
  \begin{align}
     & \left( \frac{C_{\ref{cor:optimal-perimeter}}}{\vep} \right)^{2A} \cdot \E_{u' \sim \MN(0, \Sigma)} \E_{v' \sim \MN(0, \Lambda)} \left[ \Ftp(\Phi; \beta u', v') \right] \nonumber\\
    \geq &   \| \beta \Sigma \cdot (\phi_1 - \midpoint) \|_2 +  \| \Lambda \cdot (\midpoint - \phi_2) \|_2 - 4\vep \zeta\nonumber.
  \end{align}
\end{lemma}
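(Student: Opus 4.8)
The plan is to lower-bound the expectation by restricting to a ``good'' event on which $\Ftp(\Phi;\beta u',v')$ is close to the bound in \cref{lem:optimal-perimeter}, and crucially to reduce the effective dimension of the two Gaussians from $d$ down to $A$ before estimating the probability of that event. I would first dispose of trivial cases: by \cref{eq:skew-bound} and $\midpoint\in\Phi$ we have $\|\beta\Sigma(\phi_1-\midpoint)\|_2\le\zeta$ and $\|\Lambda(\midpoint-\phi_2)\|_2\le\zeta$, so if $\vep\ge 1/2$, or more generally if $\kappa:=\|\beta\Sigma(\phi_1-\midpoint)\|_2+\|\Lambda(\midpoint-\phi_2)\|_2-4\vep\zeta\le 0$, the claimed inequality holds because $\Ftp\ge 0$ (\cref{lem:tp-upper-bound}); plugging $\xi=\phi_1$ and $\xi=\phi_2$ into \cref{eq:define-midpoint} shows this already covers the degenerate cases in which $\Sigma\cdot\MS_\Phi$ or $\Lambda\cdot\MS_\Phi$ is $\{0\}$. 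Henceforth assume $0<\vep<1/2$, $\kappa>0$, and $\dim\MS_{\Phi,\Sigma},\dim\MS_{\Phi,\Lambda}\ge 1$.

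The key step is the dimension reduction. For any $u'\in\MS_\Sigma$ and $\phi\in\Phi$ one has $\lng u',\phi\rng=\lng u',\Sigma\phi\rng=\lng\Pi_{\Phi,\Sigma}u',\phi\rng$ (the first equality since $u'\perp\MS_\Lambda$, the second since $\Sigma\phi\in\MS_{\Phi,\Sigma}$ and $u'-\Pi_{\Phi,\Sigma}u'\perp\MS_{\Phi,\Sigma}$), and likewise $\lng v',\phi\rng=\lng\Pi_{\Phi,\Lambda}v',\phi\rng$ for $v'\in\MS_\Lambda$; hence each of the three $\max_{\phi\in\Phi}$ terms defining $\Ftp(\Phi;\beta u',v')$ is unchanged upon replacing $(u',v')$ by $(\Pi_{\Phi,\Sigma}u',\Pi_{\Phi,\Lambda}v')$. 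Pushing forward the Gaussians, and using $\Pi_{\Phi,\Sigma}\Sigma\Pi_{\Phi,\Sigma}=\Pi_{\Phi,\Sigma}$, this gives
\[
\E_{u'\sim\MN(0,\Sigma)}\E_{v'\sim\MN(0,\Lambda)}\bigl[\Ftp(\Phi;\beta u',v')\bigr]=\E_{p\sim\MN(0,\Pi_{\Phi,\Sigma})}\E_{q\sim\MN(0,\Pi_{\Phi,\Lambda})}\bigl[\Ftp(\Phi;\beta p,q)\bigr],
\]
where $p,q$ are now independent standard Gaussians on the subspaces $\MS_{\Phi,\Sigma},\MS_{\Phi,\Lambda}$, each of dimension at most $A$ since $\Phi\in\SP_A^d$.

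Next I apply \cref{lem:optimal-perimeter} with $\eta:=1-\vep$ to obtain unit vectors $u\in\MS_{\Phi,\Sigma}$, $v\in\MS_{\Phi,\Lambda}$, and define $E$ to be the event that $\|p\|_2\ge 1$, $\|q\|_2\ge 1$, $\|p/\|p\|_2-u\|_2\le\vep$, and $\|q/\|q\|_2-v\|_2\le\vep$. On $E$, writing $\hat p:=p/\|p\|_2\in\MS_\Sigma$ and $\hat q:=q/\|q\|_2\in\MS_\Lambda$, the closeness yields $\lng\hat p,u\rng=1-\tfrac12\|\hat p-u\|_2^2\ge 1-\vep=\eta$ and similarly $\lng\hat q,v\rng\ge\eta$, so \cref{lem:optimal-perimeter} gives $\Ftp(\Phi;\beta\hat p,\hat q)\ge(1-\vep)\bigl(\|\beta\Sigma(\phi_1-\midpoint)\|_2+\|\Lambda(\midpoint-\phi_2)\|_2\bigr)-2\vep\zeta\ge\kappa$, using the two bounds by $\zeta$. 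Since $\|p\|_2,\|q\|_2\ge1$, \cref{lem:alpha-lb} together with $\Ftp\ge0$ gives $\Ftp(\Phi;\beta p,q)=\Ftp(\Phi;\|p\|_2\cdot\beta\hat p,\ \|q\|_2\cdot\hat q)\ge\min\{\|p\|_2,\|q\|_2\}\cdot\Ftp(\Phi;\beta\hat p,\hat q)\ge\Ftp(\Phi;\beta\hat p,\hat q)\ge\kappa$ on $E$. Because $p\perp q$ and, for a Gaussian on a subspace, the norm and direction are independent, a spherical-cap estimate in dimension $\le A$ gives $\Pr[E]\ge(c\vep)^{2A}$ for an absolute constant $c$; then, using $\Ftp\ge0$,
\[
\E_{p,q}\bigl[\Ftp(\Phi;\beta p,q)\bigr]\ge\E_{p,q}\bigl[\Ftp(\Phi;\beta p,q)\One{E}\bigr]\ge\kappa\,\Pr[E]\ge\kappa\,(c\vep)^{2A},
\]
and combining with the identity above and rearranging with $C_{\ref{cor:optimal-perimeter}}:=1/c$ gives the claim.

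The only non-routine point is the dimension-reduction identity: without it, the spherical-cap probability would carry an exponent $\Theta(\dim\MS_\Sigma+\dim\MS_\Lambda)=\Theta(d)$ rather than $O(A)$, and the whole improvement of this corollary over \cref{lem:optimal-perimeter} hinges on the observation that $\Ftp(\Phi;\cdot,\cdot)$ sees its vector arguments only through their projections onto $\MS_\Phi$. Everything else — the trivial-case bookkeeping, stripping the Gaussian norms via \cref{lem:alpha-lb}, and the Gaussian small-ball bound — is standard.
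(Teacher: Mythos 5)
Your proposal is correct and follows essentially the same route as the paper's proof: the projection identity $\Ftp(\Phi;\beta u',v')=\Ftp(\Phi;\beta\Pi_{\Phi,\Sigma}u',\Pi_{\Phi,\Lambda}v')$ (the paper's \cref{lem:polygon-isometry}) to reduce to dimension $\le A$, a covering/spherical-cap bound on the probability of the good direction event, \cref{lem:alpha-lb} to strip the Gaussian norms, and \cref{lem:optimal-perimeter} on the good event. The only differences are cosmetic parameter choices ($\eta=1-\vep$ and $\|p\|_2\ge 1$ instead of the paper's $\eta=1/2$ and $\|p\|_2\ge 1/2$) and handling the degenerate subspace cases via the sign of the right-hand side rather than directly.
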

\begin{proof}
Note that if $\dim(\MS_{\Phi,\Sigma}) = 0$, then we could choose $\midpoint = \phi_2$ in \cref{eq:define-midpoint}, and thus the lemma statement becomes immediate. A symmetric argument applies to the case that $\dim(\MS_{\Phi,\Lambda}) = 0$. Thus we may assume for the remainder of the proof that $\dim(\MS_{\Phi, \Sigma}), \dim(\MS_{\Phi,\Lambda}) \geq 1$. 
  
  Note that the subspace $\MS_{\Phi, \Sigma} \subseteq \MS_\Sigma$ satisfies $\dim(\MS_{\Phi, \Sigma}) \leq A$, and similarly, $\dim(\MS_{\Phi, \Lambda}) \leq A$. Note that $\E_{u' \sim \MN(0, \Sigma)}[(\Pi_{\Phi, \Sigma} u') \cdot (\Pi_{\Phi, \Sigma} u')^\t ] = \Pi_{\Phi, \Sigma} \cdot \Sigma \cdot \Pi_{\Phi, \Sigma} = \Pi_{\Phi, \Sigma}$, meaning that $\Pi_{\Phi, \Sigma} u' \sim \MN(0, \Pi_{\Phi, \Sigma})$. Similarly, we have that $\Pi_{\Phi, \Lambda}v' \sim \MN(0, \Pi_{\Phi, \Lambda})$.  

  Therefore, we have that $\frac{\Pi_{\Phi, \Sigma} u'}{\| \Pi_{\Phi, \Sigma} u' \|_2} \sim \unif(S^{d-1} \cap \MS_{\Phi, \Sigma})$. Since $S^{d-1} \cap \MS_{\Phi, \Sigma}$ may be covered with $(3/\vep)^A$ balls of radius $\vep$, %
  it follows that, for any $u \in S^{d-1} \cap \MS_{\Phi, \Sigma}$, we have 
  \begin{align}
\Pr_{u' \sim \MN(0, \Sigma)} \left(\left\|\frac{\Pi_{\Phi, \Sigma} u'}{\| \Pi_{\Phi, \Sigma} u' \|_2 } - u\right\|_2 \leq \vep \right) \geq (\vep/3)^{A}\nonumber.
  \end{align}
  Moreover, we have that $\Pr_{u' \sim \MN(0, \Sigma)}( \| \Pi_{\Phi, \Sigma} u' \|_2 \geq 1/2) \geq 1/2$: this holds because $\|\Pi_{\Phi,\Sigma} u'\|_2^2$ is a chi-squared random variable with $\dim(\MS_{\Phi,\Sigma}) \geq 1$ degrees of freedom. Using the fact that $\| \Pi_{\Phi,\Sigma} u' \|_2$ and $\frac{\Pi_{\Phi, \Sigma} u'}{\| \Pi_{\Phi, \Sigma} u' \|_2}$ are independent random variables, it follows that
  \begin{align}
    \frac 12 \cdot (\vep/3)^{A} \leq & \Pr_{u' \sim \MN(0, \Sigma)} \left( \left\|\frac{\Pi_{\Phi, \Sigma} u'}{\| \Pi_{\Phi, \Sigma} u' \|_2 } - u\right\|_2 \leq \vep \mbox{ and } \| \Pi_{\Phi, \Sigma} u' \|_2 \geq 1/2 \right)\nonumber.
  \end{align}

  In a symmetric manner, we obtain that
  \begin{align}
\frac 12 \cdot (\vep/3)^{A} \leq &  \Pr_{v' \sim \MN(0, \Lambda)} \left( \left\|\frac{\Pi_{\Phi, \Lambda} v'}{\| \Pi_{\Phi, \Lambda} v' \|_2 } - v\right\|_2 \leq \vep \mbox{ and } \| \Pi_{\Phi, \Lambda} v' \|_2 \geq 1/2\right) \nonumber. %
  \end{align}

  Let us define the random variables $u'' := \frac{\Pi_{\Phi, \Sigma}u'}{\| \Pi_{\Phi, \Sigma} u' \|_2}$ and $v'' := \frac{\Pi_{\Phi, \Lambda}v'}{\| \Pi_{\Phi, \Lambda}v' \|_2}$. Let $u\in \MS_{\Phi, \Sigma}$ and $v \in \MS_{\Phi,\Lambda}$ be chosen according to the statement of \cref{lem:optimal-perimeter} given $\Phi, \phi_1, \phi_2$. Then with probability at least $(1/4) \cdot (\vep/3)^{2A}$ over the independent draws of $u'$ and $v'$ (which induce values of $u'', v''$), we have that:
  \begin{align}
\| u'' - u \|_2 \leq \vep, \quad \| v'' - v \|_2 \leq \vep, \quad \lng u'', u \rng \geq 1/2, \quad \lng v'', v \rng \geq 1/2, \quad \|  \Pi_{\Phi, \Sigma} u' \|_2 \geq 1/2, \quad \| \Pi_{\Phi, \Lambda} v' \|_2 \geq 1/2\label{eq:uprime-vprime-good},
  \end{align}
  where we have used that $\vep \leq 1$ and that for unit vectors $y,y'$, we have $\lng y,y' \rng = \frac{2 - \| y-y' \|_2^2}{2}$.

  For any $\phi \in \Phi \subset \MS_\Phi$, we have
  \begin{align}
\lng u', \phi \rng = \lng u', \Sigma \phi \rng = \lng \Pi_{\Phi, \Sigma} u', \Sigma \phi \rng = \lng \Pi_{\Phi, \Sigma} u', \phi \rng,\nonumber
  \end{align}
  where  the first equality uses that $u' \in \MS_\Sigma$, the second equality uses that $\Sigma \phi \in \MS_{\Phi, \Sigma}$, and the third equality uses that $\Pi_{\Phi, \Sigma}u' \in \MS_{\Phi, \Sigma} \subseteq \MS_\Sigma$. %
  In a similar manner, we have that, for all $\phi \in \Phi$, $\lng v', \phi \rng = \lng \Pi_{\Phi, \Lambda} v', \phi \rng$. It then follows from \cref{lem:polygon-isometry} that
  \begin{align}
    \Ftp(\Phi; \beta u', v') = \Ftp(\Phi; \beta \cdot \Pi_{\Phi, \Sigma} u', \Pi_{\Phi,\Lambda}v')
         \label{eq:uprime-piuprime}.
  \end{align}
Under the event that $\| \Pi_{\Phi, \Sigma} u' \|_2 \geq 1/2$ and $\| \Pi_{\Phi, \Lambda} v' \|_2 \ge 1/2$, we have by \cref{lem:alpha-lb} that %
\begin{align}
  \Ftp(\Phi; \beta \Pi_{\Phi, \Sigma} u', \Pi_{\Phi, \Lambda}v') \geq \frac 12 \cdot \Ftp(\Phi; \beta u'', v'') \label{eq:upp-up}. 
  \end{align}

  Combining \cref{eq:uprime-piuprime,eq:upp-up}, we see that
  \begin{align}
    & \E_{u' \sim \MN(0, \Sigma)} \E_{v' \sim \MN(0, \Lambda)} \left[ \Ftp(\Phi; \beta u', v')\right]\nonumber\\
    \geq & \frac 12 \E_{u' \sim \MN(0, \Sigma)} \E_{v' \sim \MN(0, \Lambda)} \left[ \Ftpm(\Phi; \beta u'', v'')\right]\nonumber\\
    \geq & \frac 14 \cdot \frac 14 \cdot \left( \frac{\vep}{3} \right)^{2A} \cdot \left( \| \beta \Sigma \cdot (\phi_1 - \midpoint) \|_2 + \| \Lambda \cdot (\midpoint - \phi_2) \|_2  - 4\vep\zeta\right)\nonumber,
  \end{align}
  where the second inequality uses the fact that \cref{eq:uprime-vprime-good} holds with probability at least $(1/4) \cdot (\vep/3)^{2A}$ together with \cref{lem:optimal-perimeter} with $\eta = 1/2$, $u'$ set to $u''$, and $v'$ set to $v''$. Rearranging and using that $A \geq 2$, we obtain that
  \begin{align}
    & \left( \frac{6}{\vep} \right)^{2A} \cdot  \E_{u' \sim \MN(0, \Sigma)} \E_{v' \sim \MN(0, \Lambda)} \left[ \Ftpm(\Phi; \beta u', v' )\right] \nonumber\\
    \geq & \| \beta \Sigma \cdot (\phi_1 - \midpoint) \|_2 + \| \Lambda \cdot (\midpoint - \phi_2) \|_2  - 4\vep\zeta\nonumber,
  \end{align}
  as desired. (In particular, we may take $C_{\ref{cor:optimal-perimeter}}  = 6$.)
  \end{proof}

\section{The algorithm}
\label{sec:alg-description}
\paragraph{The algorithm.} \cref{alg:psdp-ucb} presents our algorithm for learning linear Bellman complete MDPs. For some $T \in \BN$, the algorithm proceeds for $T$ \emph{rounds}. In each round $t \in [T]$, the algorithm computes two policies $\hat \pi_h\^t, \tilde \pi_h\^t : \MX \ra \MA$ at step $h$, in order of decreasing $h$. For each step $h$, trajectories are gathered from the uniform mixture over $\hat \pi\^s \circ_h \tilde \pi\^s \circ_{h+1} \hat \pi\^t$, for $s \in [t-1]$ (\cref{line:collect-tih-samples}). These trajectories are then used, together with a procedure to construct an optimism-based upper confidence bonus, to define a policy $\hat \pi_h\^t$ (\cref{line:define-w-hat,line:define-pi-hat}), which represents the algorithm's current optimistic estimate of the optimal policy. Finally, $\tilde \pi_h\^t$ is defined to be a sort of uniform policy (\cref{line:define-pi-tilde}), with respect to the current covariance matrix of features $\Sigma_h\^t$.

\paragraph{Parameter definitions.} 
Below, we specify the parameters $\lambda_1, \beta$ used in the definition of the bonuses $F_h\^t$ (in \cref{eq:define-fht-bonus} below), the parameters $\lambda, n, T$ used in \cref{alg:psdp-ucb}, as well as several other parameters used in the analysis of this section:
\begin{definition}[Parameter definitions]
  \label{def:params}
  Fix $\delta \in (0,1), \epfinal \in (0,1)$, as well as parameters $A, H, \Bbnd, d$ of the MDP. 
We define the following parameters, which are used in \cref{alg:psdp-ucb} as well in the remainder of this section:
  \begin{itemize}
  \item $\lambda = C_{\ref{lem:psd-concentration}} d \log(2THn/\delta)$, where $C_{\ref{lem:psd-concentration}}$ is a constant chosen sufficiently large in \cref{lem:psd-concentration}. 
  \item $T = d \cdot \left( \frac{C_{\ref{thm:policy-learning}} H^4 \Bbnd^3 d A^{1/2} \log^{1/2}(HA\Bbnd d/(\epfinal\delta))}{\epfinal} \right)^{6A+2}$, where $C_{\ref{thm:policy-learning}}$ is a constant chosen sufficiently large in the proof of \cref{thm:policy-learning}.
  \item $n=3T$.
\item $\iota = { \log \left( \frac{TH(\lambda d + n)}{\delta} \right)}$.    
\item $\lambda_1 = BH$.  %
\item $\epbell = \frac{\epfinal}{2H}$. 
\item $\sigtr = \frac{\epbell}{4\lambda_1}$. 
\item $\epapx = \frac{\epbell}{128\sqrt{2\pi}C_{\ref{lem:phi-what-wt-diff}}  \lambda_1^2 HB d\sqrt{\iota}}$, where $C_{\ref{lem:phi-what-wt-diff}}$ is chosen sufficiently large in the proof of  \cref{lem:phi-what-wt-diff} holds.
\item $\beta = 4 C_{\ref{lem:phi-what-wt-diff}} HB \sqrt{d\iota} \cdot {5 \lambda_1 \sqrt{d}}\cdot \left( \frac{C_{\ref{cor:optimal-perimeter}}}{\epapx} \right)^{2A}$.  
\item $\xi = \frac{\beta}{4 C_{\ref{lem:phi-what-wt-diff}} HB \sqrt{d\iota} \cdot 2\sqrt{2\pi} \lambda_1 \sqrt{d}}$. (Note that $\xi \geq 1$.) %
\end{itemize}
\end{definition}

  \paragraph{Definition of bonuses.} Next we define the exploration bonus function $F_g\^t : \MX \ra \BR$ used in the construction of the rewards $\hat r_h\^{t,i,h}$ in \cref{line:rhat-rewards}. The key component in $F_g\^t$ is a linear combination of truncated linear bonuses $\Ftp_h(x; u,v)$, over various values of $u,v$. The vectors $u,v$ are chosen to belong to the subspaces $\MS_\Sigma, \MS_\Lambda$, for matrices $\Sigma, \Lambda$ forming an orthogonal pair which are obtained from the covariance matrix $\Sigma_h\^t$ via a certain truncation procedure, defined below:
\begin{definition}
  \label{def:mat-truncation}
  Let $\Gamma \in \BR^{d \times d}$ be a PSD matrix, and suppose that $\Gamma$ may be diagonalized as $\Gamma = UDU^\t$, for a diagonal matrix $D$ and an orthogonal matrix $U$. Given $\sigma > 0$, we define the \emph{$\sigma$-truncated pair of $\Gamma$}, denoted $(\Sigma', \Lambda') := \trunc{\Gamma}{\sigma}$, as $\Sigma' = UD_+U^\t, \Lambda' = U D_-U^\t$, where $D_+, D_-$ are diagonal matrices whose entries are given by:
  \begin{align}
    (D_+)_{ii} = \begin{cases}
      1 &: D_{ii} \geq \sigma \\
      0 &: D_{ii} < \sigma
    \end{cases}, \quad
          (D_-)_{ii} = \begin{cases}
            0 &: D_{ii} \geq \sigma \\
            1 &: D_{ii} < \sigma
          \end{cases}\label{eq:d-plus-minus}.
  \end{align}
\end{definition}
In words, for $(\Sigma', \Lambda') = \trunc{\Gamma}{\sigma}$, $\Sigma'$ contains the large components of $\Gamma$ (as parametrized  by $\sigma$), and $\Lambda'$ contains the small components of $\Gamma$. Note that the definition of $\trunc{\Gamma}{\sigma}$ does not depend on the choice of diagonalization. Moreover, note that $(\Sigma', \Lambda') = \trunc{\Gamma}{\sigma}$ is an orthogonal pair.

The bonus function $F_h\^t$ used in \cref{alg:psdp-ucb} is defined as follows: for each $t \in [T]$, $h \in [H]$,%
\begin{align}
  F_h\^t(x) :=&  \lambda_1 \cdot \left( \frac{C_{\ref{cor:optimal-perimeter}}}{\epapx} \right)^{2A} \cdot   \E_{u' \sim \MN(0, \Sigma')} \E_{v' \sim \MN(0, \Lambda')} \left[ \Ftp_h(x; \beta u', v') \right]\nonumber\\
              & +  2\sqrt{2\pi} \lambda_1 \xi  \cdot \E_{w \sim \MN(0, \Sigma')} \left[ \max_{a \in \MA} \lng w, \phi_h(x, a) \rng \right] \label{eq:define-fht-bonus}\\
  \mbox{ for }  & \quad \quad (\Sigma', \Lambda') := \trunc{(\beta/\lambda_1) \cdot (\Sigma_h\^t)^{-1/2}}{\sigtr}.\nonumber
\end{align}

\paragraph{$Q$- and $V$-function definitions.}
For each $h \in [H], t \in [T]$, we make the following definitions:
\begin{align}
  Q_h\^t(x,a) :=&  r_h(x,a) +  \E^{\hat \pi\^t}\left[ \sum_{g=h+1}^H r_g(x_g, a_g) +  F_g\^t(x_g)\ \mid \ (x_h,a_h) = (x,a) \right]\label{eq:define-qht}\\
  V_h\^t(x) :=& Q_h\^t(x, \hat \pi_h\^t(x))\label{eq:define-vht}\\
  \hat Q_h\^t(x,a) :=& \lng \hat w_h\^t, \phi_h(x,a) \rng \nonumber\\
  \hat V_h\^t(x) :=& \hat Q_h\^t(x, \hat \pi_h\^t(x)) = \max_{a \in \MA} \hat Q_h\^t(x,a)\nonumber.
\end{align}
The functions $Q_h\^t(\cdot)$ and $V_h\^t(\cdot)$ represent the $Q$- and $V$-value functions for the policy $\hat \pi\^t$, with respect to the rewards $\hat r_g\^t$ defined in \cref{eq:rhat-rewards}. In \cref{lem:wht-exist-bound} below, we show that $Q_h\^t$ is a linear function, i.e., for some $w_h\^t \in \BR^d$, $Q_h\^t(x,a) = \lng \phi_h(x,a), w_h\^t \rng$ for all $(x,a) \in \MX \times \MA$.  The vector $\hat w_h\^t$ computed in \cref{alg:psdp-ucb} may be viewed as an empirical approximation to $w_h\^t$, so that $\hat Q_h\^t(\cdot)$ and $\hat V_h\^t(\cdot)$ represent empirical approximations to $Q_h\^t, V_h\^t$, respectively. 
\begin{lemma}
  \label{lem:wht-exist-bound}
For all $t \in [T], h \in [h]$, there is some $w_h\^t \in  \BR^d$ so that $Q_h\^t(x,a) = \lng \phi_h(x,a), w_h\^t \rng$ for all $(x,a) \in \MX \times \MA$.
\end{lemma}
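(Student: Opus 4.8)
The plan is to prove \cref{lem:wht-exist-bound} by backward induction on $h$, from $h=H$ down to $h=1$, exploiting the one-step Bellman form of $Q_h\^t$ together with three facts: the rewards are linear (\cref{def:lbc}); the bonus $F_{h+1}\^t$ is Bellman-linear at step $h+1$ (\cref{def:bl}); and each $\hat\pi\^t$ is a \emph{linear} policy, so the expected feature vector under $\hat\pi_{h+1}\^t$ has a linear Bellman backup (\cref{lem:lin-lb}). First I would rewrite \cref{eq:define-qht} in one-step form: conditioning on $x_{h+1}$ and recalling \cref{eq:define-vht},
\[
Q_h\^t(x,a) = r_h(x,a) + \E_{x'\sim P_h(x,a)}\left[F_{h+1}\^t(x') + V_{h+1}\^t(x')\right],
\]
with the conventions $F_{H+1}\^t\equiv 0$, $V_{H+1}\^t\equiv 0$ and $V_{h+1}\^t(x') = Q_{h+1}\^t(x',\hat\pi_{h+1}\^t(x'))$. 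The base case $h=H$ is immediate: $Q_H\^t(x,a) = r_H(x,a) = \lng\phi_H(x,a),\theta_H^{\mathsf r}\rng$ by \cref{def:lbc}, so take $w_H\^t := \theta_H^{\mathsf r}$.

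For the inductive step, assume $Q_{h+1}\^t(x,a) = \lng\phi_{h+1}(x,a),w_{h+1}\^t\rng$ for all $(x,a)\in\MX\times\MA$, and handle the three summands separately. The reward term is $\lng\phi_h(x,a),\theta_h^{\mathsf r}\rng$ by \cref{def:lbc}. For the bonus term I would show $F_{h+1}\^t$ is Bellman-linear at step $h+1$: by \cref{eq:define-fht-bonus} it is a non-negative linear combination of $\Fnormal_{h+1}(\cdot;\Sigma')$, which is Bellman-linear at step $h+1$ by \cref{lem:quadratic-sim-informal}, and of $x\mapsto\E_{u'\sim\MN(0,\Sigma')}\E_{v'\sim\MN(0,\Lambda')}[\Ftp_{h+1}(x;\beta u',v')]$. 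Each $\Ftp_{h+1}(\cdot;\beta u',v')$ is Bellman-linear at step $h+1$ (immediate from \cref{def:trunc-perim}) with some witness $w_{u',v'}\in\BR^d$; since $\|\phi_{h+1}(\cdot)\|_2\le 1$, \cref{lem:tp-upper-bound} gives $|\lng\phi_h(x,a),w_{u',v'}\rng| = |\E_{x'\sim P_h(x,a)}[\Ftp_{h+1}(x';\beta u',v')]|\le 4\beta\|u'\|_2$ for all $(x,a)$, hence $w_{u',v'}\in 4\beta\|u'\|_2\cdot\MB_h$ and $\|w_{u',v'}\|_2\le 4\beta B\|u'\|_2$ by \cref{asm:boundedness}. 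As $\E_{u'\sim\MN(0,\Sigma')}\|u'\|_2<\infty$, the vector $w_F := \E_{u'}\E_{v'}[w_{u',v'}]$ (plus the corresponding contribution of the $\Fnormal$ term and scalar prefactors) is well defined, and Fubini's theorem — licensed by the integrable dominating function $4\beta\|u'\|_2$, exactly as in the proof of \cref{lem:quadratic-sim-informal} — yields $\lng\phi_h(x,a),w_F\rng = \E_{x'\sim P_h(x,a)}[F_{h+1}\^t(x')]$. Finally, for the value term: by \cref{line:define-pi-hat} of \cref{alg:psdp-ucb}, $\hat\pi_{h+1}\^t$ is the linear policy $\pi_{h+1,\hat w_{h+1}\^t}\in\Pilin_{h+1}$, so \cref{lem:lin-lb} (in the form \cref{eq:bellman-linpol}) supplies $\MT_h^{\hat\pi_{h+1}\^t}w_{h+1}\^t\in\BR^d$ with
\[
\lng\phi_h(x,a),\MT_h^{\hat\pi_{h+1}\^t}w_{h+1}\^t\rng = \E_{x'\sim P_h(x,a)}\!\left[\lng\phi_{h+1}(x',\hat\pi_{h+1}\^t(x')),w_{h+1}\^t\rng\right] = \E_{x'\sim P_h(x,a)}[V_{h+1}\^t(x')],
\]
where the last equality uses the inductive hypothesis. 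Setting $w_h\^t := \theta_h^{\mathsf r} + w_F + \MT_h^{\hat\pi_{h+1}\^t}w_{h+1}\^t$ gives $Q_h\^t(x,a)=\lng\phi_h(x,a),w_h\^t\rng$ for all $(x,a)$, completing the induction.

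I expect the main obstacle to be the bonus term: one must verify that $F_{h+1}\^t$, although expressed in \cref{eq:define-fht-bonus} as an average over the \emph{unbounded}-support Gaussians $\MN(0,\Sigma'),\MN(0,\Lambda')$, is still Bellman-linear — this is precisely where the $x$-uniform bound from \cref{lem:tp-upper-bound} together with integrability of $\|u'\|_2$ (to justify Fubini and to guarantee $w_F$ is finite) enters. The reward and value terms are routine given \cref{lem:lin-lb} and \cref{lem:quadratic-sim-informal}; it is worth emphasizing that the value term is the only place where linearity of the policy $\hat\pi_{h+1}\^t$, rather than merely linearity of $Q_{h+1}\^t$, is essential.
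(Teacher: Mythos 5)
Your proof is correct and follows essentially the same route as the paper's: backward induction on $h$, splitting $Q_h\^t$ into the reward term (linear by \cref{def:lbc}), the bonus term (Bellman-linear because \cref{eq:define-fht-bonus} averages Bellman-linear functions), and the value term (handled via \cref{lem:lin-lb} using linearity of the policy $\hat\pi_{h+1}\^t$). The only difference is that you spell out the Fubini/integrability justification for averaging over the unbounded Gaussians, which the paper asserts implicitly via ``the sum of Bellman-linear functions is Bellman-linear''; this is a welcome bit of extra rigor but not a different argument.
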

\begin{proof}
  For all $x,a,h,t$, we have
  \begin{align}
Q_h\^t(x,a) = r_h(x,a) + \E_{x' \sim \BP_h(x,a)}\left[ F_{h+1}\^t( x')  + Q_{h+1}\^t(x', \hat \pi_{h+1}\^t(x'))) \right]\nonumber.
  \end{align}
  For fixed $t \in [T]$, we will use reverse induction on $h$ to establish the stated claim. The base case $h=H$ follows since $Q_H\^t(x,a) = r_H(x,a) = \lng w_H\^t, \phi_H(x,a) \rng$ for some $w_H \in \BR^d$, by assumption. %

  Now, assume for the inductive hypothesis that for all $(x,a) \in \MX \times \MA$, $Q_{h+1}\^t(x,a) = \lng w_{h+1}\^t, \phi_{h+1}(x,a) \rng$ for some $w_{h+1}\^t \in \BR^d$. Then \cref{lem:lin-lb} gives that there is some $\theta^1 \in \BR^d$ so that for all $(x,a) \in \MX \times \MA$, 
  \begin{align}
\lng \theta^1, \phi_h(x,a) \rng = \E_{x' \sim \BP_h(x,a)}\left[ \lng \phi_{h+1}(x', \hat \pi_{h+1}\^t(x')), w_{h+1}\^t \rng \right] = \E_{x' \sim \BP_h(x,a)} \left[ Q_{h+1}\^t(x', \hat \pi_{h+1}\^t(x'))\right]\nonumber.
  \end{align}
  Next, note that it is immediate from the definition \cref{eq:define-fht-bonus} that, with probability 1 over $u' \sim \MN(0, \Sigma')$ and $v' \sim \MN(0, \Lambda')$, the function $\Ftp_{h+1}(\cdot; u', v', \beta \Sigma', \Lambda)$ is Bellman-linear at step $h+1$. Note that the function $x \mapsto \max_{a \in \MA} \lng w, \phi_{h+1}(x,a) \rng$ is also Bellman-linear at step $h+1$, for all $w \in \BR^d$. Since the sum of Bellman-linear functions is Bellman-linear, the definition of $F_{h+1}\^t$ in \cref{eq:define-fht-bonus} gives that $F_{h+1}\^t$ is Bellman-linear at step $h+1$. Thus, there is some $\theta^2 \in \BR^2$ so that 
  \begin{align}
\lng \phi_h(x,a), \theta^2 \rng = \E_{x' \sim \BP_h(x,a)} \left[ F_{h+1}\^t( x') \right]\nonumber
  \end{align}
  for all $(x,a) \in \MX \times \MA$. 

  Finally, by assumption, there is some $\theta^3 \in \BR^d$ so that $r_h(x,a) = \lng \theta^3, \phi_h(x,a)\rng$ for all $(x,a) \in \MX \times \MA$. Hence, writing $w_h\^t = \theta^1 + \theta^2 + \theta^3$, we have that $Q_h\^t(x,a) = \lng w_h\^t, \phi_h(x,a)\rng$ for all $(x,a) \in \MX \times \MA$.
  \end{proof}

\subsection{Boundedness of the bonuses}
\label{sec:bonus-boundedness}
Next, we prove some lemmas which establish bounds on the magnitude of the bonus functions $F_h\^t(\cdot)$, and thereby on the magnitude of the optimistic $Q$-functions $Q_h\^t(\cdot)$. 
\begin{lemma}
  \label{lem:orig-truncated}
  For a PSD matrix $\Gamma$ and $\sigma > 0$, writing $(\Sigma', \Lambda') := \trunc{\Gamma}{\sigma}$, we have that $\Sigma' \preceq \frac{1}{\sigma} \cdot \Gamma$.
\end{lemma}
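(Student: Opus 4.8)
The plan is to reduce the PSD inequality to a coordinate-wise comparison of diagonal matrices via the simultaneous diagonalization that is built into \cref{def:mat-truncation}. Write $\Gamma = UDU^\t$ for an orthogonal $U$ and a diagonal $D$ (with entries $D_{ii} \geq 0$, since $\Gamma$ is PSD), and let $D_+$ be as in \cref{eq:d-plus-minus}, so that $\Sigma' = UD_+U^\t$. The claim $\Sigma' \preceq \frac{1}{\sigma}\Gamma$ is equivalent, after conjugating by the orthogonal matrix $U^\t$ on both sides, to the statement that the diagonal matrix $D_+ - \frac{1}{\sigma} D$ is negative semidefinite, i.e.\ that every diagonal entry $(D_+)_{ii} - \frac{1}{\sigma} D_{ii}$ is nonpositive.

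The remaining step is to check this entrywise, splitting into the two cases of \cref{eq:d-plus-minus}. If $D_{ii} \geq \sigma$, then $(D_+)_{ii} = 1$ while $\frac{1}{\sigma} D_{ii} \geq 1$, so the entry equals $1 - \frac{1}{\sigma}D_{ii} \leq 0$. If $D_{ii} < \sigma$, then $(D_+)_{ii} = 0$ while $\frac{1}{\sigma}D_{ii} \geq 0$ by PSD-ness of $\Gamma$, so the entry equals $-\frac{1}{\sigma}D_{ii} \leq 0$. Hence $D_+ - \frac{1}{\sigma}D \preceq 0$, and conjugating back gives $\Sigma' - \frac{1}{\sigma}\Gamma = U\big(D_+ - \frac{1}{\sigma}D\big)U^\t \preceq 0$, as desired. (One should note in passing, as \cref{def:mat-truncation} already remarks, that the construction is independent of the choice of diagonalization, so this argument is well-defined.)

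There is essentially no obstacle here: the only thing to be a little careful about is using PSD-ness of $\Gamma$ to ensure $D_{ii} \geq 0$ in the second case, which is what makes $-\frac{1}{\sigma}D_{ii} \leq 0$ rather than potentially positive. The whole argument is a two-line diagonal computation once the conjugation-by-$U$ reduction is made.
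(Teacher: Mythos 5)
Your proposal is correct and follows essentially the same route as the paper's proof: diagonalize $\Gamma = UDU^\t$, compare $(D_+)_{ii}$ with $\frac{1}{\sigma}D_{ii}$ entrywise in the two cases of \cref{eq:d-plus-minus}, and conjugate back by $U$. No gaps.
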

\begin{proof}
We write $\Gamma = UDU^\t$ and $\Sigma' = UD_+U^\t$, where $D_+$ is as defined in \cref{eq:d-plus-minus}. Certainly $0 = (D_+)_{ii} \leq D_{ii}$ if $D_{ii} < \sigma$; otherwise, we have $1 = (D_+)_{ii} \leq \frac{1}{\sigma} \cdot D_{ii}$, which implies that $D_+ \preceq \frac{D}{\sigma}$, as desired.
\end{proof}

\begin{lemma}
  \label{lem:quadratic-sim}
  Let $\Phi \in \SP^d$ be a given polyhedron. Consider any PSD matrix $\Sigma \in \BR^{d \times d}$. Then %
  \begin{align}
\frac{1}{\sqrt{2\pi}} \cdot \max_{\phi, \phi' \in \Phi} ((\phi-\phi')^\t \Sigma (\phi - \phi'))^{1/2} \leq \E_{w \sim \MN(0, \Sigma)} \left[ \max_{\phi \in \Phi} \lng w, \phi \rng \right] \leq \sqrt{d} \cdot \E_{w \sim \MN(0, \Sigma)}\left[ \phi_w^\t \Sigma \phi_w\right]^{1/2},\nonumber
  \end{align}
  where $\phi_w := \argmax_{\phi \in \Phi} \lng w,\phi \rng$. 
\end{lemma}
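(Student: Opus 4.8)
The plan is to prove the two inequalities in Lemma~\ref{lem:quadratic-sim} separately, both by reducing to one-dimensional Gaussian computations.

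\textbf{Lower bound.} For the lower bound, fix any $\phi, \phi' \in \Phi$. Since $\max_{\psi \in \Phi} \lng w, \psi \rng \geq \max\{\lng w, \phi\rng, \lng w, \phi'\rng\}$, and since $\max\{s,t\} = \frac{s+t}{2} + \frac{|s-t|}{2}$, we have
\begin{align}
\E_{w \sim \MN(0,\Sigma)}\left[ \max_{\psi \in \Phi} \lng w, \psi \rng \right] \geq \E_{w}\left[ \max\{\lng w, \phi\rng, \lng w, \phi'\rng\} \right] = \frac 12 \E_w\left[ |\lng w, \phi - \phi'\rng| \right],\nonumber
\end{align}
using that $\E_w[\lng w,\phi\rng] = \E_w[\lng w,\phi'\rng] = 0$. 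Now $\lng w, \phi-\phi'\rng$ is a mean-zero Gaussian with variance $(\phi-\phi')^\t \Sigma (\phi-\phi')$, so its absolute value has expectation $\sqrt{2/\pi}$ times the standard deviation, i.e. $\E_w[|\lng w,\phi-\phi'\rng|] = \sqrt{2/\pi} \cdot ((\phi-\phi')^\t \Sigma (\phi-\phi'))^{1/2}$. This gives $\E_w[\max_{\psi}\lng w,\psi\rng] \geq \frac{1}{\sqrt{2\pi}}((\phi-\phi')^\t \Sigma (\phi-\phi'))^{1/2}$; taking the maximum over $\phi,\phi' \in \Phi$ finishes the lower bound.

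\textbf{Upper bound.} For the upper bound, write $\phi_w := \argmax_{\phi \in \Phi} \lng w, \phi\rng$. Since $\Phi$ is a polyhedron, $\phi_w$ is a measurable function of $w$ (a vertex of $\Phi$ for a.e.\ $w$), and $\E_w[\max_\phi \lng w,\phi\rng] = \E_w[\lng w, \phi_w\rng]$. Decompose $w = \Sigma^{1/2} z$ with $z \sim \MN(0, I_d)$, so $\lng w, \phi_w\rng = \lng z, \Sigma^{1/2}\phi_w\rng \leq \|z\|_2 \cdot \|\Sigma^{1/2}\phi_w\|_2 = \|z\|_2 \cdot (\phi_w^\t \Sigma \phi_w)^{1/2}$ by Cauchy--Schwarz. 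Taking expectations and applying Cauchy--Schwarz again (in the form $\E[XY] \le (\E[X^2])^{1/2}(\E[Y^2])^{1/2}$ with $X = \|z\|_2$, $Y = (\phi_w^\t \Sigma \phi_w)^{1/2}$),
\begin{align}
\E_w\left[ \lng w, \phi_w\rng \right] \leq \left( \E_z \|z\|_2^2 \right)^{1/2} \cdot \left( \E_w\left[ \phi_w^\t \Sigma \phi_w \right] \right)^{1/2} = \sqrt{d} \cdot \left( \E_w\left[ \phi_w^\t \Sigma \phi_w \right] \right)^{1/2},\nonumber
\end{align}
since $\E_z\|z\|_2^2 = d$. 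This is exactly the claimed upper bound.

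\textbf{Main obstacle.} The calculations are entirely routine; the only mild subtlety is measurability of $w \mapsto \phi_w$ (to make $\E_w[\phi_w^\t \Sigma \phi_w]$ well-defined) and the fact that the argmax is a.s.\ unique, which holds because $\Phi$ is a bounded polyhedron and the set of $w$ for which two distinct vertices are simultaneously optimal is a finite union of hyperplanes, hence Lebesgue-null. I would state this briefly and move on, since it is the same tie-breaking issue handled elsewhere in the paper. Everything else is a one-line Gaussian moment computation.
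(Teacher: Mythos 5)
Your proof is correct and follows essentially the same route as the paper's: the lower bound via restricting to two points and computing $\E[\max\{s,t\}]=\tfrac12\E|s-t|$ for mean-zero Gaussians, and the upper bound via two applications of Cauchy--Schwarz with $\E\|z\|_2^2=d$. The only cosmetic difference is that the paper first changes variables to $\Psi=\Sigma^{1/2}\cdot\Phi$ and works with $\psi_v$, whereas you carry $\Sigma^{1/2}$ along explicitly; the computations are identical.
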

\begin{proof}
  Define $\Psi = \{ \Sigma^{1/2} \cdot \phi \ : \ \phi \in \Phi \}$, and, for $v \in \BR^d$, write $\psi_v := \argmax_{\psi \in \Psi} \lng v, \psi\rng$. Then
  \begin{align}
\Sigma^{1/2} \cdot \phi_{\Sigma^{1/2} v} = \Sigma^{1/2} \cdot \argmax_{\phi \in \Phi} \lng v , \Sigma^{1/2} \phi \rng = \argmax_{\psi \in \Psi} \lng v, \psi\rng = \psi_v\nonumber.
  \end{align}
  We then have the following equalities:
  \begin{align}
    \label{eq:phi-psi-diff}
    \max_{\phi, \phi' \in \Phi} ((\phi - \phi')^\t \Sigma (\phi - \phi'))^{1/2} =  & \max_{\psi, \psi' \in \Psi} \| \psi - \psi' \|_2\\
    \E_{w \sim \MN(0, \Sigma)} \left[ \phi_w^\t \Sigma \phi_w \right]^{1/2} =& \E_{v \sim \MN(0, I_d)}\left[ \phi_{\Sigma^{1/2} v}^\t \Sigma \phi_{\Sigma^{1/2} v} \right]^{1/2} = \E_{v \sim \MN(0, I_d)} \left[ \| \psi_v \|_2^2 \right]^{1/2}\label{eq:phi-psi-nodiff}\\
\E_{w \sim \MN(0, \Sigma)} \left[\max_{\phi \in \Phi} \lng w, \phi \rng \right] = &  \E_{v \sim \MN(0, I_d)} \left[\max_{\phi \in \Phi} \lng \Sigma^{1/2} v, \phi \rng \right] = \E_{v \sim \MN(0, I_d)} \left[ \max_{\psi \in \Psi} \lng v, \psi \rng \right]\label{eq:normal-phi-psi}.
  \end{align}
  It therefore suffices to show that
  \begin{align}
    \label{eq:psi-2way-inequality}
   \frac{1}{\sqrt{2\pi}} \cdot  \max_{\psi, \psi' \in \Psi} \| \psi - \psi' \|_2 \leq \E_{v \sim \MN(0, I_d)} \left[ \max_{\psi \in \Psi} \lng v, \psi \rng \right] \leq \sqrt{d} \cdot \E_{v \sim \MN(0, I_d)} [ \| \psi_v \|_2^2]^{1/2}.
  \end{align}
  To show the first inequality in \cref{eq:psi-2way-inequality}, pick $\psi_0, \psi_1 \in \Psi$ maximizing $\| \psi_0 - \psi_1 \|_2$. Then
  \begin{align}
    \E_{v \sim \MN(0, I_d)} \left[ \max_{\psi \in \Psi} \lng v, \psi \rng \right] \geq & \E_{v \sim \MN(0, I_d)} \left[ \max \{ \lng v, \psi_0 \rng, \lng v, \psi_1 \rng \} \right]\nonumber\\
    =  & \E_{v \sim \MN(0, I_d)} \left[ \lng v, (\psi_0 + \psi_1)/2 \rng + \max \{ \lng v, (\psi_0 - \psi_1)/2 \rng, \lng v, (\psi_1-\psi_0)/2 \rng \} \right]\nonumber\\
    = & \E_{v \sim \MN(0, I_d)} [ | \lng v, (\psi_0 - \psi_1)/2 \rng |]\nonumber\\
    = & \frac{\sqrt{2}}{2\sqrt \pi} \| \psi_0 - \psi_1 \|_2 \nonumber,
  \end{align}
  where the second equality uses that $\E_{v \sim \MN(0, I_d)}[\lng v, (\psi_0 + \psi_1)/2 \rng] = 0$, and the final inequality uses that $\lng v, (\psi_0 - \psi_1)/2 \rng \sim \MN(0, \| (\psi_0 - \psi_1)/2 \|_2^2)$.

  To show the second inequality in \cref{eq:psi-2way-inequality}, note that
  \begin{align}
    \E_{v \sim \MN(0, I_d)} \left[ \max_{\psi \in \Psi} \lng v, \psi \rng \right] = & \E_{v \sim \MN(0, I_d)} \left[ \lng v, \psi_v \rng \right]\nonumber\\
    \leq & {\E_{v \sim \MN(0, I_d)}[ \| v \|_2^2]}^{1/2} \cdot \E_{v \sim \MN(0, I_d)}[ \| \psi_v \|_2^2]^{1/2}\nonumber\\
    =& \sqrt{d} \cdot \E_{v \sim \MN(0, I_d)}[ \| \psi_v \|_2^2]^{1/2}\nonumber,
  \end{align}
  where the inequality uses Cauchy-Schwarz.
\end{proof}

\begin{lemma}
  \label{lem:bound-truncation-error}
  Suppose that $\Gamma \in \BR^d$ is a PSD matrix, and let $\sigma > 0$ be given. Let $(\Sigma', \Lambda') := \trunc{\Gamma}{\sigma}$. Then for all $h \in [H], x \in \MX$, $a,a' \in \MA$, and $v \in \cPhi_h(x)$, it holds that
  \begin{align}
    & \| \Gamma \cdot (\phi_h(x,a) - v) \|_2 + \| v - \phi_h(x,a') \|_2 \nonumber\\
    \leq  & \| \Gamma \| \cdot \| \Sigma' \cdot (\phi_h(x,a) - v) \|_2 + \| \Lambda' \cdot (v-\phi_h(x,a')) \|_2 +  \sqrt{2\pi} \cdot \E_{w \sim \MN(0, \Sigma')} \left[ \max_{\bar a \in \MA} \lng w, \phi_h(x,\bar a) \right]+ 2 \sigma\nonumber.
  \end{align}
\end{lemma}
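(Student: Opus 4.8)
Write $p := \phi_h(x,a)$ and $q := \phi_h(x,a')$, and recall that $(\Sigma',\Lambda') = \trunc{\Gamma}{\sigma}$ is an orthogonal pair (\cref{def:op}), so that $\Sigma' + \Lambda' = I_d$, the subspaces $\MS_{\Sigma'}, \MS_{\Lambda'}$ are orthogonal, and $\Gamma$ is simultaneously diagonalizable with $\Sigma', \Lambda'$. The plan is to split each of the vectors $p - v$ and $v - q$ along $\MS_{\Sigma'} \oplus \MS_{\Lambda'}$ and bound the resulting pieces separately; the only non-elementary input is \cref{lem:quadratic-sim}, which is already established.

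\emph{Bounding $\|\Gamma\cdot(p-v)\|_2$.} Since $\Gamma$ commutes with $\Sigma'$ and $\Lambda'$ and preserves the orthogonal subspaces $\MS_{\Sigma'},\MS_{\Lambda'}$, we have the orthogonal decomposition $\Gamma(p-v) = \Gamma\Sigma'(p-v) + \Gamma\Lambda'(p-v)$, hence $\|\Gamma(p-v)\|_2 \leq \|\Gamma\Sigma'(p-v)\|_2 + \|\Gamma\Lambda'(p-v)\|_2$. On $\MS_{\Sigma'}$ the operator norm of $\Gamma$ is at most $\|\Gamma\|$, giving $\|\Gamma\Sigma'(p-v)\|_2 \leq \|\Gamma\|\cdot\|\Sigma'(p-v)\|_2$; on $\MS_{\Lambda'}$ the eigenvalues of $\Gamma$ are (by the definition of $\trunc{\Gamma}{\sigma}$ in \cref{eq:d-plus-minus}) strictly below $\sigma$, so $\|\Gamma\Lambda'(p-v)\|_2 \leq \sigma\|\Lambda'(p-v)\|_2 \leq \sigma\|p-v\|_2 \leq 2\sigma$, where the last step uses $\|p\|_2 \leq 1$ (\cref{asm:boundedness}) and $\|v\|_2 \leq 1$ (as $v$ is a convex combination of feature vectors). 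Thus $\|\Gamma(p-v)\|_2 \leq \|\Gamma\|\cdot\|\Sigma'(p-v)\|_2 + 2\sigma$.

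\emph{Bounding $\|v - q\|_2$.} Orthogonality of the projections gives $\|v-q\|_2 \leq \|\Sigma'(v-q)\|_2 + \|\Lambda'(v-q)\|_2$; the $\Lambda'$ term appears directly on the right-hand side of the claim. For the $\Sigma'$ term, note $\Sigma'^{1/2} = \Sigma'$, so $\|\Sigma'(v-q)\|_2 = ((v-q)^\t\Sigma'(v-q))^{1/2}$, and since $v,q \in \cPhi_h(x)$ this is at most $\max_{\phi,\phi'\in\cPhi_h(x)}((\phi-\phi')^\t\Sigma'(\phi-\phi'))^{1/2}$. Applying \cref{lem:quadratic-sim} with $\Phi = \cPhi_h(x)$ and $\Sigma = \Sigma'$, and using that a linear functional on the polytope $\cPhi_h(x) = \co\{\phi_h(x,\bar a) : \bar a \in \MA\}$ is maximized at a vertex, we get $\|\Sigma'(v-q)\|_2 \leq \sqrt{2\pi}\cdot\E_{w\sim\MN(0,\Sigma')}[\max_{\bar a \in \MA}\lng w, \phi_h(x,\bar a)\rng]$. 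Adding the two bounds yields the stated inequality. I do not anticipate a genuine obstacle here; the only points needing care are verifying that $\Gamma$ contracts $\MS_{\Lambda'}$ by a factor $\sigma$ (immediate from the truncation definition) and that the decompositions are orthogonal (immediate from $(\Sigma',\Lambda')$ being an orthogonal pair), both of which are routine.
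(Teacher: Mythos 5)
Your proof is correct and follows essentially the same route as the paper's: split each term along the orthogonal pair $(\Sigma',\Lambda')$, bound the $\Lambda'$-component of $\Gamma(p-v)$ by $2\sigma$ using the eigenvalue threshold in the truncation, and control $\|\Sigma'(v-q)\|_2$ via \cref{lem:quadratic-sim}. The only cosmetic difference is that the paper packages the first step as the PSD inequality $\Gamma^2 \preceq \|\Gamma\|^2\Sigma' + \sigma^2 I_d$ rather than your explicit orthogonal decomposition, which is equivalent.
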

\begin{proof}
  Note that $\Gamma^2 \preceq \| \Gamma \|^2 \cdot \Sigma' + \sigma^2 \cdot I_d$, so that
  \begin{align}
    \label{eq:sigma-av}
\| \Gamma \cdot (\phi_h(x,a) - v) \|_2 \leq & \| \Gamma \| \cdot \| \Sigma' \cdot (\phi_h(x,a) - v) \|_2 + \sigma \cdot \| \phi_h(x,a) - v\|_2 \leq \| \Sigma' \cdot (\phi_h(x,a) - v) \|_2 + 2\sigma\,
  \end{align}
  where the final inequality uses that $\max_{\phi \in \bar \Phi_h(x)} \| \phi \|_2 \leq 1$ (\cref{asm:boundedness}). Next, the fact that $\Lambda' + \Sigma' = I_d$ gives that
  \begin{align}
    \| v - \phi_h(x,a') \|_2 \leq & \| \Lambda' \cdot (v-\phi_h(x,a')) \|_2 + \max_{a_1, a_2 \in \MA} \| \Sigma' \cdot (\phi_h(x,a_1) - \phi_h(x,a_2)) \|_2\nonumber\\
    \leq &  \| \Lambda' \cdot (v-\phi_h(x,a')) \|_2 + \sqrt{2\pi} \cdot \E_{w \sim \MN(0, \Sigma')} \left[ \max_{\bar a \in \MA} \lng w, \phi_h(x,\bar a) \right]   \label{eq:lambda-av},
  \end{align}
  where the second inequality uses \cref{lem:quadratic-sim}.  The conclusion of the lemma statement follows by adding \cref{eq:sigma-av,eq:lambda-av}. 
\end{proof}

\begin{lemma}[Bound on bonuses in terms of $\Sigma'$]
  \label{lem:sigmap-bound}
  There is a sufficiently large constant $C_{\ref{lem:sigmap-bound}}$ so that the following holds. 
  Fix $t \in [T], h \in [H]$, and let $(\Sigma', \Lambda') := \trunc{(\beta/\lambda_1) \cdot (\Sigma_h\^t)^{-1/2}}{\sigtr}$. Then for all $x \in \MX$, 
  \begin{align}
|F_h\^t(x)| \leq & \sqrt{d} \beta \lambda_1 \cdot \left( \frac{C_{\ref{lem:sigmap-bound}}}{\epapx} \right)^{2A} \cdot \E_{w \sim \MN(0, \Sigma')} \left[ \max_{a \in \MA} \lng w, \phi_h(x,a) \rng \right]\nonumber.
  \end{align}
\end{lemma}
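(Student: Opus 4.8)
The plan is to bound each of the two terms in the definition of $F_h\^t(x)$ in \cref{eq:define-fht-bonus} separately, reducing both to a constant multiple of $\E_{w \sim \MN(0,\Sigma')}[\max_{a \in \MA}\lng w, \phi_h(x,a)\rng]$, and then to absorb all the parameter-dependent prefactors into the claimed constant $C_{\ref{lem:sigmap-bound}}$ using the definitions in \cref{def:params}.

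First I would handle the leading term $\lambda_1 (C_{\ref{cor:optimal-perimeter}}/\epapx)^{2A}\cdot \E_{u' \sim \MN(0,\Sigma')}\E_{v' \sim \MN(0,\Lambda')}[\Ftp_h(x;\beta u', v')]$. Applying \cref{lem:tp-upper-bound} with $\Phi = \cPhi_h(x)$, $u = \beta u'$, $v = v'$ gives $0 \le \Ftp_h(x; \beta u', v') \le 2\max_{\phi,\phi' \in \cPhi_h(x)}\lng \beta u', \phi - \phi'\rng = 2\beta\bigl(\max_{a}\lng u', \phi_h(x,a)\rng + \max_{a'}\lng -u', \phi_h(x,a')\rng\bigr)$, where the last equality uses that a linear functional over a polytope is maximized at a vertex of $\cPhi_h(x)$, i.e.\ at some $\phi_h(x,a)$. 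Taking expectations over $u' \sim \MN(0,\Sigma')$ and using that $-u' \sim \MN(0,\Sigma')$ by symmetry of the centered Gaussian, the two terms become equal, yielding $\E_{u'}\E_{v'}[\Ftp_h(x;\beta u', v')] \le 4\beta\,\E_{w \sim \MN(0,\Sigma')}[\max_{a}\lng w, \phi_h(x,a)\rng]$. Hence the first term of $F_h\^t(x)$ is at most $4\lambda_1\beta(C_{\ref{cor:optimal-perimeter}}/\epapx)^{2A}\cdot\E_{w\sim\MN(0,\Sigma')}[\max_a\lng w,\phi_h(x,a)\rng]$; in particular it is non-negative, as is the second term (since $\E_w[\max_a\lng w, \phi_h(x,a)\rng] \ge \max_a\E_w[\lng w, \phi_h(x,a)\rng]=0$), so $|F_h\^t(x)| = F_h\^t(x)$.

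The second term is already of the form $2\sqrt{2\pi}\lambda_1\xi\cdot\E_{w\sim\MN(0,\Sigma')}[\max_a\lng w, \phi_h(x,a)\rng]$, so combining gives $|F_h\^t(x)| \le \bigl(4\lambda_1\beta(C_{\ref{cor:optimal-perimeter}}/\epapx)^{2A} + 2\sqrt{2\pi}\lambda_1\xi\bigr)\cdot\E_{w\sim\MN(0,\Sigma')}[\max_a\lng w, \phi_h(x,a)\rng]$. It then remains to verify $4\lambda_1\beta(C_{\ref{cor:optimal-perimeter}}/\epapx)^{2A} + 2\sqrt{2\pi}\lambda_1\xi \le \sqrt{d}\,\beta\lambda_1(C_{\ref{lem:sigmap-bound}}/\epapx)^{2A}$: plugging in $\xi = \beta/(4C_{\ref{lem:phi-what-wt-diff}}HB\sqrt{d\iota}\cdot 2\sqrt{2\pi}\lambda_1\sqrt{d})$ from \cref{def:params} gives $2\sqrt{2\pi}\lambda_1\xi \le \beta$ (using $C_{\ref{lem:phi-what-wt-diff}}, H, B, \iota, d \ge 1$; alternatively one may use the trivial bound $2\sqrt{2\pi}\lambda_1\xi \le \sqrt d\,\beta(C_{\ref{cor:optimal-perimeter}}/\epapx)^{2A}$ to avoid any assumption on $B$), so the prefactor is at most $5\lambda_1\beta(C_{\ref{cor:optimal-perimeter}}/\epapx)^{2A}$ (using $\lambda_1 \ge 1$ and $(C_{\ref{cor:optimal-perimeter}}/\epapx)^{2A} \ge 1$), and choosing $C_{\ref{lem:sigmap-bound}}$ so that $C_{\ref{lem:sigmap-bound}}^{2A} \ge 5\,C_{\ref{cor:optimal-perimeter}}^{2A}$ (e.g.\ $C_{\ref{lem:sigmap-bound}} = 2C_{\ref{cor:optimal-perimeter}}$, since $A \ge 2$) closes the argument, the factor $\sqrt d \ge 1$ only helping. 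I do not anticipate any genuine obstacle here: the only mildly delicate point is the symmetrization step in the first bound, and otherwise it is bookkeeping of the parameters in \cref{def:params} plus a direct application of \cref{lem:tp-upper-bound}.
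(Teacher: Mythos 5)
Your proof is correct, and it follows the same overall decomposition as the paper: bound the $\Ftp$-term via \cref{lem:tp-upper-bound}, note the second term of \cref{eq:define-fht-bonus} is already in the target form, and absorb the prefactors into $C_{\ref{lem:sigmap-bound}}$ using $\xi \leq \beta$ and $\lambda_1, (C_{\ref{cor:optimal-perimeter}}/\epapx)^{2A} \geq 1$. The one place you genuinely diverge is the intermediate step for the first term: the paper bounds $\Ftp_h(x;\beta u',v') \leq 2\|u'\|_2 \cdot \max_{a,a'}\|\beta\Sigma'(\phi_h(x,a)-\phi_h(x,a'))\|_2$ (Cauchy--Schwarz, using $\Sigma' u' = u'$), then invokes \cref{lem:quadratic-sim} to convert the max back into $\sqrt{2\pi}\,\E_{w\sim\MN(0,\Sigma')}[\max_a\lng w,\phi_h(x,a)\rng]$, and finally uses $\E\|u'\|_2 \leq \sqrt{d}$ -- which is exactly where the $\sqrt{d}$ in the lemma statement originates. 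You instead write $\max_{\phi,\phi'}\lng \beta u',\phi-\phi'\rng = \beta(\max_a\lng u',\phi_h(x,a)\rng + \max_{a'}\lng -u',\phi_h(x,a')\rng)$ and symmetrize the centered Gaussian, which bypasses \cref{lem:quadratic-sim} entirely and is in fact sharper by the factor $\sqrt{d}$ (so that factor in the statement becomes pure slack for you, as you note). Both routes rest on the same implicit parameter assumptions ($\lambda_1 = BH \geq 1$, $\epapx$ small) that the paper itself uses, so nothing is lost; your bookkeeping at the end, including the choice $C_{\ref{lem:sigmap-bound}} = 2C_{\ref{cor:optimal-perimeter}}$, checks out.
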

\begin{proof}
  Fix any $u' \in \MS_{\Sigma'}, v' \in \MS_{\Lambda'}$. Then 
  \begin{align}
    \Ftp_h(x; \beta u', v') \leq &  2 \| u' \|_2 \cdot \max_{a,a' \in \MA} \| \beta \Sigma' \cdot (\phi_h(x,a) - \phi_h(x,a')) \|_2\nonumber\\
    \leq & 2 \sqrt{2\pi}\beta \| u' \|_2 \cdot \E_{w \sim \MN(0, \Sigma')} \left[ \max_{a \in \MA} \lng w, \phi_h(x,a) \rng \right]\nonumber,
  \end{align}
  where the first  inequality uses \cref{lem:tp-upper-bound} with $u = \beta u'$, $v = v'$ as well as the fact that $\Sigma u' = u'$, 
  and the second inequality uses \cref{lem:quadratic-sim}. Using the definition of $F_h\^t(\cdot)$ in \cref{eq:define-fht-bonus}, it follows that, for a sufficiently large constant $C > 0$, 
  \begin{align}
    |F_h\^t(x)| \leq & {\lambda_1} \cdot \left(\frac{ C_{\ref{cor:optimal-perimeter}}}{\epapx} \right)^{2A} \cdot \E_{u' \sim \MN(0, \Sigma')} \left[ 4\sqrt{2\pi} \beta \| u' \|_2 \cdot \E_{w \sim \MN(0, \Sigma')}\left[ \max_{a \in \MA} \lng w, \phi_h(x,a) \rng \right] \right] \nonumber\\
                     &+  2\sqrt{2\pi} \lambda_1 \xi  \cdot \E_{w \sim \MN(0, \Sigma')} \left[ \max_{a \in \MA} \lng w, \phi_h(x, a) \rng \right]\nonumber\\
    \leq &  \left( {4\sqrt{2\pi} \sqrt{d} \beta \lambda_1} \cdot \left( \frac{C_{\ref{cor:optimal-perimeter}}}{\epapx} \right)^{2A} + 2\sqrt{2\pi} \lambda_1 \xi \right) \cdot \E_{w \sim \MN(0, \Sigma')} \left[ \max_{a \in \MA} \lng w, \phi_h(x,a) \rng \right]\nonumber\\
    \leq & \sqrt{d} \beta \lambda_1 \cdot \left( \frac{C}{\epapx} \right)^{2A} \cdot \E_{w \sim \MN(0, \Sigma')} \left[ \max_{a \in \MA} \lng w, \phi_h(x,a) \rng \right]\nonumber,
  \end{align}
  where the final inequality uses that $\xi \leq \beta$ (\cref{def:params}).
\end{proof}

\begin{lemma}[Absolute bound on bonuses]
  \label{lem:bonus-bound}
For all $h,t,x$, it holds that $|F_h\^t(x)| \leq \frac{\beta}{2C_{\ref{lem:phi-what-wt-diff}} HB  \sqrt{d\iota}}$, and that $\| w_h\^t \|_2 \leq \frac{\beta}{C_{\ref{lem:phi-what-wt-diff}}\sqrt{d\iota}}$.
\end{lemma}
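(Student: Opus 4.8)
The plan is to bound the two terms in the definition \cref{eq:define-fht-bonus} of $F_h\^t(x)$ separately, using the fact that the truncation construction prevents the first term from scaling with $\beta$, and then to read off the bound on $\|w_h\^t\|_2$ from the linear representation of $Q_h\^t$ given by \cref{lem:wht-exist-bound}.

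For the first term, the expectation of $\Ftp_h(x;\beta u', v')$, I would apply \cref{lem:tp-upper-bound} with arguments $u \mapsto \beta u'$ and $v \mapsto v'$, and crucially take the \emph{$v'$-side} of the minimum there: this gives $\Ftp_h(x;\beta u', v') \leq 2 \max_{\phi,\phi' \in \cPhi_h(x)} \langle v', \phi - \phi' \rangle \leq 4 \|v'\|_2$, where the last step uses that every $\phi \in \cPhi_h(x)$ has $\|\phi\|_2 \leq 1$ by \cref{asm:boundedness}. Since $\Lambda'$ is a projection matrix (\cref{def:mat-truncation}), $\E_{v' \sim \MN(0,\Lambda')}[\|v'\|_2] \leq \E_{v' \sim \MN(0,\Lambda')}[\|v'\|_2^2]^{1/2} = \Tr(\Lambda')^{1/2} \leq \sqrt{d}$, so the first term of $F_h\^t(x)$ is at most $4\sqrt{d}\,\lambda_1 (C_{\ref{cor:optimal-perimeter}}/\epapx)^{2A}$; comparing with the definition of $\beta$ in \cref{def:params}, this equals at most $5\sqrt{d}\,\lambda_1 (C_{\ref{cor:optimal-perimeter}}/\epapx)^{2A} = \beta/(4 C_{\ref{lem:phi-what-wt-diff}} HB \sqrt{d\iota})$.

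For the second term, I would apply the upper bound in \cref{lem:quadratic-sim} with $\Phi = \cPhi_h(x)$ and $\Sigma = \Sigma'$: since $\Sigma'$ is a projection (spectral norm $\leq 1$) and the maximizing vertex $\phi_w$ is some $\phi_h(x,a)$ with norm $\leq 1$, we get $\phi_w^\t \Sigma' \phi_w \leq 1$ and hence $\E_{w \sim \MN(0,\Sigma')}[\max_{a \in \MA} \langle w, \phi_h(x,a) \rangle] \leq \sqrt{d}$. Plugging in the definition of $\xi$ from \cref{def:params}, the second term is at most $2\sqrt{2\pi}\,\lambda_1 \xi \sqrt{d} = \beta/(4 C_{\ref{lem:phi-what-wt-diff}} HB \sqrt{d\iota})$. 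Adding the two bounds gives $|F_h\^t(x)| \leq \beta/(2 C_{\ref{lem:phi-what-wt-diff}} HB \sqrt{d\iota})$, the first claim. For the bound on $\|w_h\^t\|_2$: by \cref{lem:wht-exist-bound}, $Q_h\^t(x,a) = \langle \phi_h(x,a), w_h\^t \rangle$ for all $(x,a)$, and from \cref{eq:define-qht}, together with $|r_g| \leq 1$ and the bound on $|F_g\^t|$ just established, $|Q_h\^t(x,a)| \leq (H+1) + H \cdot \beta/(2 C_{\ref{lem:phi-what-wt-diff}} HB \sqrt{d\iota}) = (H+1) + \beta/(2 C_{\ref{lem:phi-what-wt-diff}} B \sqrt{d\iota})$. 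Thus $w_h\^t$ rescaled by this quantity lies in $\MB_h$, so \cref{asm:boundedness} yields $\|w_h\^t\|_2 \leq B(H+1) + \beta/(2 C_{\ref{lem:phi-what-wt-diff}} \sqrt{d\iota})$; since $\beta$ is much larger than $HB\sqrt{d\iota}$ (e.g., $\beta \geq 20 C_{\ref{lem:phi-what-wt-diff}} HB \sqrt{d\iota}$, using $\lambda_1 = BH$, $\sqrt{d} \geq 1$, and $(C_{\ref{cor:optimal-perimeter}}/\epapx)^{2A} \geq 1$), we get $B(H+1) \leq \beta/(2 C_{\ref{lem:phi-what-wt-diff}} \sqrt{d\iota})$ and hence $\|w_h\^t\|_2 \leq \beta/(C_{\ref{lem:phi-what-wt-diff}} \sqrt{d\iota})$.

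The only real subtlety is in bounding the first term: the naive application of \cref{lem:tp-upper-bound} that keeps the $\beta u'$-side of the minimum would produce a bound of order $\beta \|u'\|_2$, which scales with $\beta$ and is far too weak. Recognizing that one should instead retain the $v'$-side of the minimum --- which is precisely the reason $\Ftp_h$ is built with the truncated structure, so that its size is governed by $\Lambda'$ rather than by the $\beta$-scaled direction --- is the key step; after that the argument is just substitution of the parameter values from \cref{def:params}.
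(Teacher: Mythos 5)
Your proposal is correct and follows essentially the same route as the paper's proof: bound the $\Ftp_h$ term by keeping the $\Lambda'$-side of the minimum in \cref{lem:tp-upper-bound} (so the bound is $O(\|v'\|_2)$ rather than $O(\beta\|u'\|_2)$), bound the second term via \cref{lem:quadratic-sim}, and substitute the definitions of $\beta$ and $\xi$; the $\|w_h\^t\|_2$ bound via $\MB_h$ and \cref{asm:boundedness} is also identical. Your identification of the $v'$-side of the minimum as the crux matches exactly what the paper does.
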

\begin{proof}
Fix $t\in[T],h \in [H]$, and set $(\Sigma', \Lambda'):= \trunc{(\beta/\lambda_1) \cdot (\Sigma_h\^t)^{-1/2}}{\sigtr}$, as per \cref{eq:define-fht-bonus}.  For any fixed $u', v' \in \BR^d$ with $u' \in \MS_{\Sigma'}, v' \in \MS_{\Lambda'}$, we have %
  \begin{align}
    \Ftp_h(x; \beta u', v') \leq & %
     2 \| v' \|_2 \cdot \max_{\phi, \phi' \in  \cPhi_h(x)} \| \Lambda' \cdot (\phi - \phi') \|_2 
    \leq  4 \| v' \|_2\nonumber,
  \end{align}
  where the first inequality  uses \cref{lem:tp-upper-bound} and the fact that $\Lambda v' = v'$,  %
  and the second inequality uses that $\max_{h,x,a} \| \phi_h(x,a) \|_2 \leq 1$. Then, by the definition of $F_h\^t$ in \cref{eq:define-fht-bonus}, we have
  \begin{align}
    | F_h\^t(x) | \leq & {\lambda_1} \cdot \left( \frac{C_{\ref{cor:optimal-perimeter}}}{\epapx}\right)^{2A} \cdot \E_{v' \sim \MN(0, \Lambda)} [ 4 \| v'\|_2] + (2\sqrt{2\pi}\lambda_1\xi) \cdot \E_{w \sim \MN(0, \Sigma')} \left[ \max_{a \in \MA} \lng w, \phi_h(x,a) \rng \right]\nonumber\\
    \leq & {4 \lambda_1 \sqrt{d}} \cdot \left( \frac{C_{\ref{cor:optimal-perimeter}}}{\epapx} \right)^{2A} + 2\sqrt{2\pi} \lambda_1 \xi \cdot \sqrt{d} \leq \frac{\beta}{2C_{\ref{lem:phi-what-wt-diff}} HB \sqrt{d\iota}}\nonumber,
  \end{align}
  where the second inequality uses \cref{lem:quadratic-sim}, and the final inequality is derived as follows:
  \begin{align}
    {4 \lambda_1 \sqrt{d}} \cdot \left( \frac{C_{\ref{cor:optimal-perimeter}}}{\epapx} \right)^{2A} + (2\sqrt{2\pi} \lambda_1 \xi) \cdot \sqrt{d} \leq & {4 \lambda_1 \sqrt{d}} \cdot \left( \frac{C_{\ref{cor:optimal-perimeter}}}{\epapx} \right)^{2A} + \frac{\beta}{4 C_{\ref{lem:phi-what-wt-diff}} HB \sqrt{d\iota}}  \leq \frac{\beta}{2C_{\ref{lem:phi-what-wt-diff}} HB \sqrt{d\iota}}\nonumber,
  \end{align}
  where we have used the fact that $2\sqrt{2\pi} \lambda_1\sqrt{d} \cdot \xi \leq \frac{\beta}{4 C_{\ref{lem:phi-what-wt-diff}} HB \sqrt{d\iota}}$ and that $\frac{\beta}{4 C_{\ref{lem:phi-what-wt-diff}} HB \sqrt{d\iota}} \geq {4\lambda_1 \sqrt{d}} \cdot \left( \frac{C_{\ref{cor:optimal-perimeter}}}{\epapx} \right)^{2A}$ (by \cref{def:params}). 

  The definition of $w_h\^t$ (per \cref{eq:define-qht} and \cref{lem:wht-exist-bound}), combined with the fact that $|r_h(x,a)| \leq 1$ for all $h,x,a$ (\cref{asm:boundedness}), gives that $| \lng w_h\^t, \phi_h(x,a) \rng | \leq H + H \cdot \max_{h,x} \{ |F_h\^t(x)|\}$, from which it follows that $w_h\^t \in (H + H \cdot \max_{h,x} \{ |F_h\^t(x) | \}) \cdot \MB_h$. Since $\max_{w \in \MB_h} \| w \|_2 \leq B$ (again by \cref{asm:boundedness}), we have as a consequence that $\| w_h\^t \|_2 \leq \frac{\beta}{C_{\ref{lem:phi-what-wt-diff}}\sqrt{d\iota}}$. 
\end{proof}

\subsection{Linear regression lemma}
\label{sec:linreg}
We will need the following lemma which establishes a standard concentration statement regarding closeness of $\hat w_h\^t$ to $w_h\^t$.
\begin{lemma}
  \label{lem:phi-what-wt-diff}
There is a sufficiently large constant $C_{\ref{lem:phi-what-wt-diff}}$ (which is used in the definitions of $\epapx, \beta, \xi$ in \cref{def:params}) so that the following holds. There is an event $\ME$ which occurs with probability at least $1-\delta/2$ so that, for all $h \in [H], t \in [T], \phi \in \BR^d$, it holds that
  \begin{align}
| \lng \phi, \hat w_h\^t- w_h\^t \rng | \leq \beta \cdot \left(\phi^\t (\Sigma_h\^t)^{-1} \phi\right)^{1/2}\label{eq:qhat-qt-lemma}.
  \end{align}
\end{lemma}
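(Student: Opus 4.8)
The plan is to read off $\hat w_h\^t$ from \cref{line:define-w-hat} as the solution of a ridge-regularized least-squares regression whose target function is \emph{exactly} the linear function $\phi\mapsto\lng\phi,w_h\^t\rng$ supplied by \cref{lem:wht-exist-bound}, and then to combine a standard self-normalized tail bound for vector-valued martingales with the magnitude bounds of \cref{lem:bonus-bound}. Fix $t,h$, write $\phi_i:=\phi_h(x_h\^{t,i,h},a_h\^{t,i,h})$ and let $y_i:=r_h\^{t,i,h}+\sum_{g=h+1}^H\hat r_g\^{t,i,h}$ be the features and label of the $i$-th sample, so that $\hat w_h\^t=(\Sigma_h\^t)^{-1}\sum_i\phi_i y_i$. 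First I would establish that the regression is \emph{well-specified}: the $i$-th trajectory is drawn from a policy that follows $\hat\pi\^t$ at every step $g>h$, so conditionally on $(x_h\^{t,i,h},a_h\^{t,i,h})=(x,a)$ the label has conditional mean $r_h(x,a)+\E^{\hat\pi\^t}[\sum_{g>h}(r_g+F_g\^t)\mid(x_h,a_h)=(x,a)]=Q_h\^t(x,a)=\lng\phi_h(x,a),w_h\^t\rng$ by \cref{eq:define-qht} and \cref{lem:wht-exist-bound}. Hence $\xi_i:=y_i-\lng\phi_i,w_h\^t\rng$ is a bounded martingale difference sequence across the i.i.d.\ samples of \cref{line:collect-tih-samples}, and crucially $w_h\^t$, $\Sigma_h\^t$, and the sampling distribution at $(t,h)$ are all measurable with respect to the randomness generated before \cref{line:collect-tih-samples} executes, because the inner loop of \cref{alg:psdp-ucb} runs from $h=H$ down to $1$ and so $\hat\pi_{h+1}\^t,\dots,\hat\pi_H\^t$ and $F_{h+1}\^t,\dots,F_H\^t$ are already fixed.

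Next I would decompose the error. With $\Sigma:=\Sigma_h\^t=\lambda I_d+\sum_i\phi_i\phi_i^\t$ one gets $\hat w_h\^t-w_h\^t=\Sigma^{-1}\sum_i\phi_i\xi_i-\lambda\Sigma^{-1}w_h\^t$, so $\|\hat w_h\^t-w_h\^t\|_{\Sigma}\le\|\sum_i\phi_i\xi_i\|_{\Sigma^{-1}}+\sqrt{\lambda}\,\|w_h\^t\|_2$ using $\Sigma\succeq\lambda I_d$. The bias term I bound by $\|w_h\^t\|_2\le\beta/(C_{\ref{lem:phi-what-wt-diff}}\sqrt{d\iota})$ from \cref{lem:bonus-bound} together with $\lambda=C_{\ref{lem:psd-concentration}}d\log(2THn/\delta)=O(d\iota)$ (from \cref{def:params}), making it $O(\beta/C_{\ref{lem:phi-what-wt-diff}})$. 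For the martingale term I would first bound the noise: by \cref{lem:bonus-bound} $|F_g\^t|\le\beta/(2C_{\ref{lem:phi-what-wt-diff}}HB\sqrt{d\iota})$ and $\|w_h\^t\|_2\le\beta/(C_{\ref{lem:phi-what-wt-diff}}\sqrt{d\iota})$, so with $|r_g|\le1$, $\|\phi_i\|_2\le1$, $B\ge1$, and the fact that $\beta\ge20C_{\ref{lem:phi-what-wt-diff}}H\sqrt{d\iota}$ (immediate from \cref{def:params} since $B,\lambda_1\ge1$), we get $|\xi_i|\le R:=2\beta/(C_{\ref{lem:phi-what-wt-diff}}\sqrt{d\iota})$, hence $\xi_i$ is conditionally $R$-sub-Gaussian. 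The self-normalized inequality then gives, with probability at least $1-\delta/(2TH)$, $\|\sum_i\phi_i\xi_i\|_{\Sigma^{-1}}^2\le2R^2\log\!\big(\det(\Sigma)^{1/2}\det(\lambda I_d)^{-1/2}/(\delta/(2TH))\big)$, and since $\|\phi_i\|_2\le1$ gives $\Sigma\preceq(\lambda+n)I_d$ the log factor is $O(d\iota)$, so this term is $O(R\sqrt{d\iota})=O(\beta/C_{\ref{lem:phi-what-wt-diff}})$ as well. Summing, $\|\hat w_h\^t-w_h\^t\|_{\Sigma_h\^t}\le\beta$ once $C_{\ref{lem:phi-what-wt-diff}}$ is taken large enough relative to the absolute constants produced (including $C_{\ref{lem:psd-concentration}}$ through $\lambda$); a union bound over $h\in[H],t\in[T]$ defines the event $\ME$ of probability $\ge1-\delta/2$. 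Finally, $\|\hat w_h\^t-w_h\^t\|_{\Sigma_h\^t}\le\beta$ is, by Cauchy--Schwarz in the $\Sigma_h\^t$-inner product, precisely equivalent to \cref{eq:qhat-qt-lemma} for all $\phi\in\BR^d$.

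I expect no genuine mathematical obstacle here; the two points needing care are (i) the measurability bookkeeping described above — one must verify that $w_h\^t$ and the sampling policy at $(t,h)$ are determined before the fresh samples are drawn, which is exactly why the top-down ($h=H,\dots,1$) structure of \cref{alg:psdp-ucb} matters — and (ii) tracking constants so the final bound is \emph{exactly} $\beta$ rather than $\poly(d,H,B,\iota)\cdot\beta$; this is what pins down the particular dependence of $\beta$ (and of $\epapx$) on $C_{\ref{lem:phi-what-wt-diff}},\iota,d,H,B$ in \cref{def:params}, and forces $\beta$ to dominate both $H$ and $\sqrt{\lambda}\,\|w_h\^t\|_2$. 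Point (ii) is the only mildly delicate part of the argument.
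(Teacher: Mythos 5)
Your proposal is correct and follows essentially the same route as the paper's proof: both establish well-specification of the regression via the linearity of $Q_h\^t$ from \cref{lem:wht-exist-bound} (using that the trajectory from step $h$ onward is drawn from $\hat\pi\^t$), bound the labels and $\|w_h\^t\|_2$ by $Y \asymp \beta/(C_{\ref{lem:phi-what-wt-diff}}\sqrt{d\iota})$ via \cref{lem:bonus-bound}, apply the self-normalized concentration bound with a $\det(\Sigma_h\^t)$ estimate, handle the ridge bias term separately via $\sqrt{\lambda}\,\|w_h\^t\|_2$, and close with a union bound over $(h,t)$. The only cosmetic difference is that you phrase the conclusion as $\|\hat w_h\^t - w_h\^t\|_{\Sigma_h\^t}\le\beta$ followed by Cauchy--Schwarz, while the paper bounds $|\lng\phi,\hat w_h\^t - w_h\^t\rng|$ directly; these are equivalent.
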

\begin{proof}[Proof of \cref{lem:phi-what-wt-diff}]
Fix any $h \in [H], t \in [T]$.  Note that, by definition of $\Sigma_h\^t$ in \cref{line:define-sigmaht} of \cref{alg:psdp-ucb}, 
  \begin{align}
w_h\^t = (\Sigma_h\^t)^{-1} \Sigma_h\^t w_h\^t = (\Sigma_h\^t)^{-1} \cdot \left(\lambda w_h\^t +  \sum_{i=1}^n \phi_h(x_h\^{t,i,h}, a_h\^{t,i,h}) \cdot \lng \phi_h(x_h\^{t,i,h}, a_h\^{t,i,h}), w_h\^t \rng \right)\nonumber.
  \end{align}
  Also recall that the definition of $\hat w_h\^t$ in \cref{line:define-w-hat} gives
  \begin{align}
\hat w_h\^t =  (\Sigma_h\^{t})^{-1} \cdot  \sum_{i=1}^{n} \phi_h(x_h\^{t,i,h}, a_h\^{t,i,h}) \cdot \left(r_h\^{t,i,h} + \sum_{g=h+1}^H \hat r_g\^{t,i,h}\right)\nonumber.
  \end{align}
  Thus,
  \begin{align}
    w_h\^t - \hat w_h\^t =&  (\Sigma_h\^t)^{-1} \cdot \left( \lambda w_h\^t +  \sum_{i=1}^n \phi_h(x_h\^{t,i,h}, a_h\^{t,i,h})  \right.\nonumber\\
    & \quad \cdot \left( \lng \phi_h(x_h\^{t,i,h}, a_h\^{t,i,h}), w_h\^t \rng  \left. - r_h\^{t,i,h} - \sum_{g=h+1}^H \hat r_g\^{t,i,h} \right) \right)\label{eq:wt-hatwt-diff}.
  \end{align}
  For each $i \in [n]$, we have, by \cref{eq:define-qht}, that
  \begin{align}
\E^{\hat \pi\^t}\left[ r_h\^{t,i,h} + \sum_{g=h+1}^H \hat r_g\^{t,i,h} \ \mid \ (x_h, a_h) = (x_h\^{t,i,h}, a_h\^{t,i,h}) \right] = Q_h\^t(x_h\^{t,i,h}, a_h\^{t,i,h}) = \lng \phi_h(x_h\^{t,i,h}, a_h\^{t,i,h}), w_h\^t \rng\nonumber.
  \end{align}
  (In particular, note that by definition of the sampling procedure in \cref{line:collect-tih-samples}, the trajectory $(x_h, a_h,\ldots, x_H, a_H)$ is indeed drawn from the MDP with policy $\hat \pi\^t$.)

  Let us write $Y := \frac{\beta}{C_{\ref{lem:phi-what-wt-diff}}\sqrt{d\iota}}$. The definition of $\beta$ in \cref{def:params} together with \cref{lem:bonus-bound} ensures that $Y \geq 2H \cdot \max\left\{ 1, \max_{h,x} | F_h\^t(x)|\right\} \geq \left| r_h\^{t,i,h} + \sum_{g=h+1}^H \hat r_g\^{t,i,h} \right|$ with probability 1. We also have from \cref{lem:bonus-bound} that $Y \geq \| w_h\^t \|_2$ with probability 1.

  Thus, by \cref{lem:conc-sn} with $\Gamma_0 = \lambda I$, $\phi_i = \phi_h(x_h\^{t,i,h}, a_h\^{t,i,h})$ and $\vep_i = \lng \phi_h(x_h\^{t,i,h}, a_h\^{t,i,h}), w_h\^t \rng - r_h\^{t,i,h} - \sum_{g=h+1}^H \hat r_g\^{t,i,h}$ (so that we may take $\sigma = 2Y$), under some event $\ME\^{t,h}$ which holds with probability at least $1-\delta/(2TH)$, we have
  \begin{align}
    & \left\| \sum_{i=1}^n \phi_h(x_h\^{t,i,h}, a_h\^{t,i,h}) \cdot \left( \lng \phi_h(x_h\^{t,i,h}, a_h\^{t,i,h}), w_h\^t \rng  - r_h\^{t,i,h} - \sum_{g=h+1}^H \hat r_g\^{t,i,h}\right) \right\|_{(\Sigma_h\^t)^{-1}} \nonumber\\
    \leq & 8Y\cdot \sqrt{\log \left( \frac{TH \cdot \det(\Sigma_h\^t)^{1/2} \det(\lambda I )^{-1/2}}{\delta/2} \right)}\nonumber\\
    \leq & 8Y\sqrt{d}  \cdot \sqrt{ \log \left( TH \cdot \frac{\lambda d + n}{\lambda d \cdot \delta/2 } \right)}\label{eq:omega-bound},
  \end{align}
  where the final inequality follows since
  \begin{align}
\det(\Sigma_h\^t) \leq \left( \frac 1d \Tr \Sigma_h\^t \right)^{d} \leq \left( \frac 1d \cdot \left(\lambda d + \sum_{i=1}^n \| \phi_h(x_h\^{t,i,h}, a_h\^{t,i,h}) \|_2^2 \right) \right)^d\leq \left( \frac{\lambda d + n}{d} \right)^d\nonumber.
  \end{align}
  Let us define $\omega_h\^t := w_h\^t - \hat w_h\^t - (\Sigma_h\^t)^{-1} \cdot \lambda w_h\^t$, so that \cref{eq:wt-hatwt-diff} and \cref{eq:omega-bound} give that, under $\ME\^{t,h}$,
  \begin{align}
    \label{eq:true-omega-bound}
    \| (\Sigma_h\^t)^{1/2} \omega_h\^t \|_2 \leq 8Y \sqrt{d}  \cdot \sqrt{\log \left( TH \cdot  \frac{\lambda d + n}{\lambda d \delta/2}\right)}.
  \end{align}
  
  It follows that, under $\ME\^{t,h}$, for all $\phi \in \BR^d$, 
  \begin{align}
| \lng \hat w_h\^t - w_h\^t, \phi \rng |
  =&   | \lambda \lng (\Sigma_h\^t)^{-1} w_h\^t, \phi \rng| + | \lng \omega_h\^t, \phi \rng | \nonumber\\
    =& | \lambda \lng (\Sigma_h\^t)^{-1} w_h\^t, \phi \rng| + | \lng (\Sigma_h\^t)^{1/2} \cdot \omega_h\^t, (\Sigma_h\^t)^{-1/2} \cdot \phi \rng |\nonumber\\
    \leq &  \left(\sqrt{\lambda}\cdot  Y + 8Y \sqrt{d} \cdot \sqrt{\log \left(TH \cdot  \frac{\lambda d + n}{\lambda d \delta/2}\right)} \right)\cdot \sqrt{\phi^\t \cdot (\Sigma_h\^t)^{-1} \cdot \phi}\label{eq:qthat-qt-nearly},
  \end{align}
  where the final inequality uses \cref{eq:true-omega-bound}, Cauchy-Schwarz, and
  \begin{align}
    | \lambda \lng (\Sigma_h\^t)^{-1} w_h\^t, \phi \rng| =&  | \lng w_h\^t, \lambda (\Sigma_h\^t)^{-1} \phi \rng | \nonumber\\
    \leq & \| w_h\^t\|_2 \cdot \sqrt{ \phi^\t \cdot \lambda^2 (\Sigma_h\^t)^{-2} \cdot \phi }\nonumber\\
    \leq & \sqrt{\lambda}\cdot  Y \cdot  \sqrt{\phi^\t \cdot (\Sigma_h\^t)^{-1} \cdot \phi}\nonumber,
  \end{align}
  where the final inequality above uses the fact that $\lambda \cdot (\Sigma_h\^t)^{-1} \preceq I_d$. Finally, we note that \cref{eq:qthat-qt-nearly} gives that \cref{eq:qhat-qt-lemma} holds under $\ME\^{h,t}$ since we have %
  \begin{align}\beta = Y \cdot C_{\ref{lem:phi-what-wt-diff}} \sqrt{d\iota} \geq &  Y \cdot \left( \sqrt{C_{\ref{lem:psd-concentration}} d \log(2THn/\delta)} + 8 \sqrt{d \log \left( \frac{TH(\lambda d + n)}{\lambda d \delta/2} \right)} \right)\nonumber\\
  =&  Y \cdot \left( \sqrt{\lambda} + 8\sqrt{d} \sqrt{\log \left( \frac{TH(\lambda d + n)}{\lambda d \delta/2} \right)}\right),
\end{align}
  where the above follows by the choices of $\lambda = C_{\ref{lem:psd-concentration}} d \log(2THn/\delta)$, $\iota =\log \left( \frac{TH(\lambda d + n)}{ \delta/2}\right) \geq \log(2THn/\delta)$, and as long as  $C_{\ref{lem:phi-what-wt-diff}} $ is chosen sufficiently large. 
  The conclusion of the lemma follows by taking $\ME = \bigcap_{h \in [H], t \in [T]} \ME\^{h,t}$ and noting that by a union bound, $\ME$ occurs with probability at least $1-\delta/2$. 
\end{proof}

\subsection{Establishing optimism}
\label{sec:optimism-proof}
In this section, we establish that that $Q_h\^t(\cdot)$ is (approximately) an upper bound on $Q_h^\st(\cdot)$, in \cref{lem:approx-optimism} below.

First, we prove \cref{lem:opposite-skew} below, which shows that for states $x$ which satisfy a certain inequality (namely, \cref{eq:ratio-assumption}), $V_h^\pi(x)$ is approximately upper bounded by $V_h\^t(x)$ for any linear policy $\pi$. It is used to deal with one particular case in the proof of \cref{lem:approx-optimism}. Recall that for any $\pi \in \Pilin$, \cref{cor:qlin} guarantees the existence of a vector $w_h^\pi \in \BR^d$ so that for all $(x,a) \in \MX \times \MA$, $Q_h^\pi(x,a) = \lng w_h^\pi, \phi_h(x,a) \rng$. 
\begin{lemma}
  \label{lem:opposite-skew}
  Consider an orthogonal pair of PSD matrices $(\Sigma, \Lambda)$, and fix $x \in \MX$. For some $\xi \geq 1$, suppose that
  \begin{align}
\max_{a_1, a_2 \in \MA} \| \Lambda \cdot (\phi_h(x,a_1) - \phi_h(x,a_2)) \|_2 \leq \xi \cdot \max_{a_1, a_2 \in \MA} \| \Sigma \cdot (\phi_h(x,a_1) - \phi_h(x,a_2)) \|_2\label{eq:ratio-assumption}.
  \end{align}
  Then for any $\pi \in \Pilin$, 
  \begin{align}
V_h\^t(x) + 2\xi \| w_h^\pi \|_2 \cdot \max_{a_1, a_2 \in \MA} \| \Sigma \cdot (\phi_h(x, a_1) - \phi_h(x, a_2)) \|_2 \geq V_h^\pi(x) + \left( Q_h\^t(x, \hat \pi_h\^t(x)) - Q_h^\pi(x, \hat \pi_h\^t(x)) \right)\nonumber.
  \end{align}
\end{lemma}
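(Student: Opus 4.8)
The plan is to compare $V_h^\pi(x)$ and $V_h\^t(x)$ by using the linear representations $Q_h^\pi(x,a) = \lng w_h^\pi, \phi_h(x,a) \rng$ (from \cref{cor:qlin}) and $Q_h\^t(x,a) = \lng w_h\^t, \phi_h(x,a) \rng$ (from \cref{lem:wht-exist-bound}), together with the key observation that these two $Q$-functions share the same behavior in the subspace $\MS_\Lambda$ up to the regression accuracy that $\Lambda$ encodes, while $\hat\pi_h\^t$ is (by its definition in \cref{line:define-pi-hat}, or its relation to $\hat w_h\^t$) the greedy policy with respect to $\hat w_h\^t$. First I would rewrite the desired inequality by moving terms: it suffices to show
\[
V_h\^t(x) - Q_h\^t(x,\hat\pi_h\^t(x)) + Q_h^\pi(x,\hat\pi_h\^t(x)) - V_h^\pi(x) \geq -2\xi \|w_h^\pi\|_2 \cdot \max_{a_1,a_2}\| \Sigma\cdot(\phi_h(x,a_1) - \phi_h(x,a_2))\|_2.
\]
But $V_h\^t(x) = Q_h\^t(x,\hat\pi_h\^t(x))$ by definition \cref{eq:define-vht}, so the first two terms cancel, and the statement reduces to
\[
Q_h^\pi(x,\hat\pi_h\^t(x)) - V_h^\pi(x) = Q_h^\pi(x,\hat\pi_h\^t(x)) - \max_{a}Q_h^\pi(x,a) \geq -2\xi\|w_h^\pi\|_2 \cdot \max_{a_1,a_2}\|\Sigma\cdot(\phi_h(x,a_1)-\phi_h(x,a_2))\|_2.
\]
Wait — I should double-check whether the lemma as stated really collapses this way; if $V_h\^t(x)$ is $Q_h\^t(x,\hat\pi_h\^t(x))$ verbatim then indeed the content is purely about how suboptimal the action $\hat\pi_h\^t(x)$ can be for the policy $\pi$.

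The heart of the argument is then to bound $\max_a Q_h^\pi(x,a) - Q_h^\pi(x,\hat\pi_h\^t(x))$. Let $a^\st$ achieve $\max_a \lng w_h^\pi, \phi_h(x,a)\rng$ and let $\hat a = \hat\pi_h\^t(x)$, which maximizes $\lng \hat w_h\^t, \phi_h(x,a)\rng$. The difference is $\lng w_h^\pi, \phi_h(x,a^\st) - \phi_h(x,\hat a)\rng$. I would split the direction $\phi_h(x,a^\st) - \phi_h(x,\hat a)$ using the orthogonal pair: it equals $\Sigma\cdot(\phi_h(x,a^\st) - \phi_h(x,\hat a)) + \Lambda\cdot(\phi_h(x,a^\st) - \phi_h(x,\hat a))$. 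The $\Sigma$-component contributes at most $\|w_h^\pi\|_2 \cdot \max_{a_1,a_2}\|\Sigma\cdot(\phi_h(x,a_1)-\phi_h(x,a_2))\|_2$ by Cauchy--Schwarz. For the $\Lambda$-component, I would use that $\hat a$ is $\hat w_h\^t$-greedy and that $\hat w_h\^t$ is close to $w_h\^t$, hence... actually the cleaner route: bound the $\Lambda$-component of the feature difference directly by $\max_{a_1,a_2}\|\Lambda\cdot(\phi_h(x,a_1)-\phi_h(x,a_2))\|_2$, and then invoke hypothesis \cref{eq:ratio-assumption} to replace this by $\xi \cdot \max_{a_1,a_2}\|\Sigma\cdot(\phi_h(x,a_1)-\phi_h(x,a_2))\|_2$. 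Combining the two components with Cauchy--Schwarz and using $\xi \geq 1$ gives the bound $ (1+\xi)\|w_h^\pi\|_2 \cdot \max_{a_1,a_2}\|\Sigma\cdot(\phi_h(x,a_1)-\phi_h(x,a_2))\|_2 \leq 2\xi\|w_h^\pi\|_2 \cdot \max_{a_1,a_2}\|\Sigma\cdot(\phi_h(x,a_1)-\phi_h(x,a_2))\|_2$, which is exactly what is needed.

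Thus the proof is mostly a decomposition-plus-Cauchy--Schwarz computation, and the only subtlety — and the step I expect to require the most care — is whether we can bound the $\Lambda$-component of the feature gap by a diameter term $\max_{a_1,a_2}\|\Lambda\cdot(\phi_h(x,a_1)-\phi_h(x,a_2))\|_2$ without any appeal to $\hat w_h\^t$ at all; since $a^\st$ and $\hat a$ are both elements of $\MA$, this is immediate, and in fact the greedy structure of $\hat\pi_h\^t$ seems not to be needed here — the hypothesis \cref{eq:ratio-assumption} is doing all the work of controlling the "wrong-direction" error. I would also double-check the numerical constant: $(1+\xi) \le 2\xi$ holds precisely because $\xi \ge 1$, which is stated in the hypothesis, so the stated constant $2\xi$ is tight for this argument. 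If for some reason the greedy property of $\hat\pi_h\^t$ is genuinely required (e.g., if a tighter bound than the naive diameter bound is intended), I would instead write $\lng w_h^\pi, \phi_h(x,a^\st)-\phi_h(x,\hat a)\rng = \lng w_h^\pi - \hat w_h\^t, \cdots\rng + \lng \hat w_h\^t, \phi_h(x,a^\st) - \phi_h(x,\hat a)\rng$, note the last term is $\le 0$ by greediness of $\hat a$, and bound the first term via the regression guarantee \cref{lem:phi-what-wt-diff} together with the split into $\Sigma$- and $\Lambda$-parts; but the simpler diameter argument above should suffice given the hypotheses as stated.
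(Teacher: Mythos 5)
Your proof is correct and essentially identical to the paper's: both arguments cancel $V_h\^t(x) = Q_h\^t(x,\hat\pi_h\^t(x))$, reduce the claim to bounding $\lng w_h^\pi,\ \phi_h(x,\hat\pi_h\^t(x)) - \phi_h(x,\pi_h(x))\rng$, and then apply Cauchy--Schwarz with the decomposition $\|\phi-\phi'\|_2 \leq \|\Sigma(\phi-\phi')\|_2 + \|\Lambda(\phi-\phi')\|_2 \leq 2\xi \max_{a_1,a_2}\|\Sigma(\phi_h(x,a_1)-\phi_h(x,a_2))\|_2$ via \cref{eq:ratio-assumption} and $\xi\geq 1$; neither proof uses the greediness of $\hat\pi_h\^t$. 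The one small slip is writing $V_h^\pi(x)=\max_a Q_h^\pi(x,a)$ --- in fact $V_h^\pi(x)=Q_h^\pi(x,\pi_h(x))$, where a linear $\pi$ is greedy with respect to its defining vector rather than $w_h^\pi$ --- but since $\max_a Q_h^\pi(x,a)\geq V_h^\pi(x)$ your version is only stronger, so the argument stands.
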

\begin{proof}
  Note that, for any actions $a,a' \in \MA$, we have
  \begin{align}
    \| \phi_h(x,a) - \phi_h(x,a') \|_2 \leq &  \| \Lambda \cdot (\phi_h(x,a) - \phi_h(x,a')) \|_2 + \| \Sigma \cdot (\phi_h(x,a) - \phi_h(x,a')) \|_2\nonumber\\
    \leq &2\xi \cdot \max_{a,a' \in \MA} \| \Sigma \cdot (\phi_h(x,a) - \phi_h(x,a')) \|_2\label{eq:bound-1pxi},
  \end{align}
  where the second inequality uses \cref{eq:ratio-assumption} and the fact that $\xi \geq 1$. 
  
  We compute
  \begin{align}
    & V_h\^t(x) - V_h^\pi(x) \nonumber\\
    =& Q_h\^t(x, \hat \pi_h\^t(x)) - Q_h^\pi(x, \pi_h(x)) \nonumber\\
    =& \lng w_h^\pi, \phi_h(x, \hat \pi_h\^t(x)) - \phi_h(x, \pi_h(x))\rng + \left( Q_h\^t(x, \hat \pi_h\^t(x)) - Q_h^\pi(x, \hat \pi_h\^t(x)) \right)\nonumber\\
    \geq & - \| w_h^\pi \|_2 \cdot \| \phi_h(x, \hat \pi_h\^t(x)) - \phi_h(x, \pi_h(x)) \|_2 + \left( Q_h\^t(x, \hat \pi_h\^t(x)) - Q_h^\pi(x, \hat \pi_h\^t(x)) \right)\nonumber\\
    \geq & - 2\xi \| w_h^\pi\|_2 \cdot \max_{a,a' \in \MA} \| \Sigma \cdot (\phi_h(x,a) - \phi_h(x,a')) \|_2 + \left( Q_h\^t(x, \hat \pi_h\^t(x)) - Q_h^\pi(x, \hat \pi_h\^t(x)) \right)\nonumber,
  \end{align}
  where the final inequality uses \cref{eq:bound-1pxi}. 
\end{proof}

\cref{lem:approx-optimism} establishes that the functions $Q_h\^t$ are an approximate upper bound on $Q_h^\st$. %
\begin{lemma}[Optimism]
  \label{lem:approx-optimism}
  Under the event $\ME$ of \cref{lem:phi-what-wt-diff}, it holds that, for all $x,a,h,t$, %
  \begin{align}
    \epbell \cdot (H-h) + Q_h\^t(x,a) \geq & Q_h^\st(x,a) \label{eq:qh-nearly-optimistic}\\
    \epbell \cdot (H+1-h) + V_h\^t(x) + F_h\^t(x) \geq & V_h^\st(x) \label{eq:vh-nearly-optimistic}. 
  \end{align}
\end{lemma}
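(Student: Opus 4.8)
The plan is to prove \cref{eq:qh-nearly-optimistic} and \cref{eq:vh-nearly-optimistic} together by downward induction on $h$, working throughout under the event $\ME$ of \cref{lem:phi-what-wt-diff}; within each step we first establish \cref{eq:qh-nearly-optimistic} and then use it to establish \cref{eq:vh-nearly-optimistic}. For \cref{eq:qh-nearly-optimistic}, the base case $h=H$ holds with equality since $Q_H\^t(x,a) = r_H(x,a) = Q_H^\st(x,a)$, and for the inductive step one writes, using linear Bellman completeness (\cref{def:lbc}), $Q_h^\st(x,a) = r_h(x,a) + \E_{x'\sim P_h(x,a)}[V_{h+1}^\st(x')]$, and, using \cref{eq:define-qht}, $Q_h\^t(x,a) = r_h(x,a) + \E_{x'\sim P_h(x,a)}[F_{h+1}\^t(x') + V_{h+1}\^t(x')]$; subtracting and applying the inductive hypothesis \cref{eq:vh-nearly-optimistic} at step $h+1$ inside the expectation gives $Q_h\^t(x,a) - Q_h^\st(x,a) \geq -\epbell(H-h)$, which is \cref{eq:qh-nearly-optimistic}.

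For the inductive step of \cref{eq:vh-nearly-optimistic}, set $a^\st := \pi_h^\st(x)$ and $a' := \hat\pi_h\^t(x)$, so $V_h^\st(x) = Q_h^\st(x,a^\st)$ and $V_h\^t(x) = Q_h\^t(x,a')$. Since \cref{eq:qh-nearly-optimistic} at step $h$ gives $Q_h\^t(x,a^\st) \geq V_h^\st(x) - \epbell(H-h)$, it suffices to show
\[
F_h\^t(x) + \epbell \;\geq\; Q_h\^t(x,a^\st) - Q_h\^t(x,a') \;=\; \lng w_h\^t, \phi_h(x,a^\st) - \phi_h(x,a')\rng,
\]
where the equality is \cref{lem:wht-exist-bound}. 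I would split into two cases, governed by whether the spread of the features $\{\phi_h(x,a)\}_a$ in the ``unexplored'' directions (the range of $\Sigma'$) is small relative to the spread in the ``explored'' directions (the range of $\Lambda'$), i.e.\ whether the ratio condition \cref{eq:ratio-assumption} holds for the truncated pair $(\Sigma',\Lambda') := \trunc{(\beta/\lambda_1)(\Sigma_h\^t)^{-1/2}}{\sigtr}$ and the parameter $\xi$ of \cref{def:params}. If \cref{eq:ratio-assumption} holds, I would invoke \cref{lem:opposite-skew} with $\pi$ equal to a linear optimal policy $\pi^\st \in \Pilin$ (which exists under linear Bellman completeness, so $Q_h^{\pi^\st} = Q_h^\st$ and $\|w_h^{\pi^\st}\|_2 \leq HB = \lambda_1$ by \cref{cor:qlin}); since the second term of $F_h\^t$ in \cref{eq:define-fht-bonus} is, by \cref{lem:quadratic-sim}, at least $2\lambda_1\xi \cdot \max_{a_1,a_2}\|\Sigma'(\phi_h(x,a_1) - \phi_h(x,a_2))\|_2$, the slack term produced by \cref{lem:opposite-skew} is at most $F_h\^t(x)$, and combining with \cref{eq:qh-nearly-optimistic} at step $h$ applied to the action $a'$ closes this case.

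In the complementary case, where the $\Sigma'$-spread of the features is at most $O(1/\xi)$, I would use the regression guarantee. Using the greedy property $\lng\hat w_h\^t, \phi_h(x,a') - \phi_h(x,a^\st)\rng \geq 0$, bound $\lng w_h\^t, \phi_h(x,a^\st) - \phi_h(x,a')\rng \leq \lng w_h\^t - \hat w_h\^t, \phi_h(x,a^\st) - \phi_h(x,a')\rng$; split this through the midpoint $\psi^\st := \midpoint[\cPhi_h(x), \phi_h(x,a^\st), \phi_h(x,a')]$ of \cref{eq:define-midpoint}, and bound the two halves $\lng w_h\^t - \hat w_h\^t, \phi_h(x,a^\st) - \psi^\st\rng$ and $\lng w_h\^t - \hat w_h\^t, \psi^\st - \phi_h(x,a')\rng$ by \cref{lem:phi-what-wt-diff}, which turns each into $\lambda_1$ times a $\Gamma$-weighted length for $\Gamma := (\beta/\lambda_1)(\Sigma_h\^t)^{-1/2}$. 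Decomposing these lengths along $(\Sigma',\Lambda')$ — using that $\Gamma$ acts on the range of $\Lambda'$ with operator norm below $\sigtr$, so the $\Lambda'$-contribution is at most $O(\lambda_1\sigtr) \leq \epbell/2$ — and feeding the result into \cref{lem:bound-truncation-error} reduces the regression error to: a $\beta\Sigma'$-weighted distance from $\phi_h(x,a^\st)$ to $\psi^\st$, plus a $\Lambda'$-weighted distance from $\psi^\st$ to $\phi_h(x,a')$, plus a $\Fnormal_h(x;\Sigma')$-term, plus a small remainder. The first two quantities are exactly what the truncated-bonus term of $F_h\^t$ lower bounds via \cref{cor:optimal-perimeter} — whose hypothesis \cref{eq:skew-bound} is met with $\zeta = O(\beta/\xi)$ precisely because we are in this case — while the $\Fnormal$-term is absorbed by the second term of $F_h\^t$ and the remainder by $\epbell$.

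The main obstacle is this complementary case: one must choose the midpoint and the orthogonal split so that the bonus's lower bound — which pays for displacements in the $\Sigma'$-directions with the large weight $\beta$ and for displacements in the $\Lambda'$-directions cheaply — lines up with the regression error, which is large precisely in the poorly-explored $\Sigma'$-directions and negligible in the well-explored $\Lambda'$-directions. One must then verify that the accumulated error terms (the $O(\lambda_1\sigtr)$ from the $\Lambda'$-truncation, the $\lambda_1 \cdot O(\epapx\zeta)$ from \cref{cor:optimal-perimeter}, and the $\Fnormal$-slack) sum to at most $\epbell$ under the parameter choices of \cref{def:params}; this is a direct but lengthy unwinding of those definitions, also using the absolute bound of \cref{lem:bonus-bound} to keep all quantities finite.
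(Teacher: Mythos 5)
Your overall architecture matches the paper's: downward induction, the same telescoping argument for \cref{eq:qh-nearly-optimistic}, the same two-case split on \cref{eq:ratio-assumption} for \cref{eq:vh-nearly-optimistic}, and the same use of \cref{lem:opposite-skew} plus the $\Fnormal$-term of the bonus in the first case. The reduction to bounding $Q_h\^t(x,a^\st) - Q_h\^t(x,a')$ is also valid. However, your handling of the complementary case has a genuine gap.

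The problem is that you route \emph{both} legs of the split through the regression bound: you bound $\lng w_h\^t-\hat w_h\^t,\phi_h(x,a^\st)-\psi^\st\rng$ \emph{and} $\lng w_h\^t-\hat w_h\^t,\psi^\st-\phi_h(x,a')\rng$ by \cref{lem:phi-what-wt-diff}, so each leg picks up a factor $\beta\|(\Sigma_h\^t)^{-1/2}(\cdot)\|_2$. Decomposing the second leg along $(\Sigma',\Lambda')$ does kill its $\Lambda'$-component (operator norm of $\Gamma$ there is below $\sigtr$), but it leaves the term $\beta\,\|\Sigma'(\psi^\st-\phi_h(x,a'))\|_2$, which can be as large as $\beta\max_{a_1,a_2}\|\Sigma'(\phi_h(x,a_1)-\phi_h(x,a_2))\|_2 \approx \beta/\xi = \Theta(\lambda_1 HBd\sqrt{\iota})$ in Case 2 --- far larger than $\epbell$. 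The bonus cannot pay for it: with your orientation of the midpoint ($\phi_1=\phi_h(x,a^\st)$, $\phi_2=\phi_h(x,a')$), \cref{cor:optimal-perimeter} only covers the leg $\psi^\st\to\phi_h(x,a')$ with the \emph{cheap} $\Lambda'$-weight, and the $\Fnormal$-term has coefficient $O(\lambda_1\xi)$, which is smaller than $\beta$ by the factor $\Theta(HBd\sqrt\iota)$; this factor cannot be closed by retuning $\xi$, since \cref{lem:bonus-bound} forces the $\Fnormal$-coefficient to stay well below $\beta$. Relatedly, \cref{lem:bound-truncation-error} does not apply in the form you invoke it: its second summand is a \emph{plain} Euclidean norm $\|v-\phi_h(x,a')\|_2$, not a $\Gamma$-weighted one, so it cannot convert your second $\Gamma$-weighted leg into a $\Lambda'$-weighted distance.

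The paper's proof avoids this by using three \emph{different} weight vectors in the decomposition
\[
\lng w_h\^t - w_h^\st,\mu^\st\rng + \lng \hat w_h\^t - w_h\^t,\mu^\st-\phi_h(x,\hat\pi_h\^t(x))\rng + \lng w_h^\st,\mu^\st-\phi_h(x,\pi_h^\st(x))\rng,
\]
with the midpoint oriented as $\mu^\st=\midpoint[x,h,\hat\pi_h\^t(x),\pi_h^\st(x)]$: the already-proved $Q$-optimism handles the first term at the convex-hull point $\mu^\st$, the regression bound (and hence the expensive $\beta\Sigma'$-part of the bonus) is charged only to the leg ending at the \emph{algorithm's} action, and the leg ending at the \emph{optimal} action is bounded by $\|w_h^\st\|_2\leq\lambda_1$ times a plain Euclidean norm, which is exactly what \cref{lem:bound-truncation-error} and the cheap part of the bonus can absorb. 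You need this asymmetric assignment (and the corresponding orientation of the midpoint) for the case-2 argument to close.
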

\begin{proof}
Condition on the event $\ME$, and fix $t \in [T]$.  We prove the statement using reverse induction on $h$. The base case $h=H+1$ is immediate since $Q_{H+1}\^t(x,a) = Q_{H+1}^\st(x,a)=V_{H+1}\^t(x) = V_{H+1}^\st(x) = 0$ for all $x,a,t$.

  Now suppose that the statement of the lemma holds at step $h+1$. 
  For all $x,a$, we have
  \begin{align}
    Q_h\^t(x,a) =&  r_h(x,a) + \E_{x' \sim \BP_h(x,a)}\left[  F_{h+1}\^t(x')  + V_{h+1}\^t(x') \right]\label{eq:qht-vht-2}\\
    Q_h^\st(x,a) =&  r_h(x,a) + \E_{x' \sim \BP_h(x,a)} \left[ V_{h+1}^\st(x') \right]\label{eq:qhpi-vhpi-2},
  \end{align}
  where \cref{eq:qht-vht-2} uses the definition of $Q_h\^t, V_h\^t$ in \cref{eq:define-qht,eq:define-vht}. 
  The inductive hypothesis gives that
  \begin{align}
\epbell \cdot (H+1-h-1) + V_{h+1}\^t(x') + F_{h+1}\^t(x') \geq V_{h+1}^\st(x') \label{eq:ucb-ih-2}
  \end{align}
  for all $x' \in \MX$. Combining \cref{eq:qht-vht-2,eq:qhpi-vhpi-2}, we see that %
  \begin{align}
     Q_h\^t(x,a) - Q_h^\st(x,a) 
    =& \E_{x' \sim \BP_h(x,a)} \left[  F_{h+1}\^t(x')  + V_{h+1}\^t(x')- V_{h+1}^\st(x') \right] \geq -\epbell \cdot (H-h)\label{eq:qht-qhstar-ind-nearly},
  \end{align}
  where the inequality uses \cref{eq:ucb-ih-2}. This verifies \cref{eq:qh-nearly-optimistic} at step $h$. %

  To establish \cref{eq:vh-nearly-optimistic} at step $h$, set  $(\Sigma', \Lambda') := \trunc{(\beta/\lambda_1) \cdot (\Sigma_h\^t)^{-1/2}}{\sigtr}$ (per \cref{def:mat-truncation}), so that $(\Sigma', \Lambda')$ is an orthogonal pair (note that this is the same choice as is made in the definition of $F_h\^t(\cdot)$ in \cref{eq:define-fht-bonus}). 

Let $\xi$ be defined as per \cref{def:params}, so that $\xi \geq 1$.  Now consider any $x \in \MX$ and $\pi \in \Pi$. We consider two cases based on the value of $x$.

  \paragraph{Case 1.} In the first case, we assume that
  \begin{align}
\max_{a,a' \in \MA} \| \Lambda' \cdot  (\phi_h(x,a) - \phi_h(x,a')) \|_2 \leq \xi \cdot \max_{a,a' \in \MA} \| \Sigma' \cdot (\phi_h(x,a) - \phi_h(x,a')) \|_2 \label{eq:case1}.
  \end{align}
  Then \cref{lem:opposite-skew} with $(\Sigma, \Lambda) = (\Sigma', \Lambda')$ and $\pi = \pi^\st$ gives that
  \begin{align}
    V_h\^t(x) + 2\xi \| w_h^\st \|_2 \cdot \max_{a,a' \in \MA} \| \Sigma' \cdot (\phi_h(x,a) - \phi_h(x,a')) \|_2 \geq &  V_h^\st(x) + \left( Q_h\^t(x, \hat \pi_h\^t(x)) - Q_h^\st(x, \hat \pi_h\^t(x)) \right)\nonumber\\
    \geq & V_h^\st(x) - \epbell \cdot (H-h)\nonumber,
  \end{align}
  where the second inequality uses \cref{eq:qht-qhstar-ind-nearly}. We may compute
  \begin{align}
    2\xi \| w_h^\st \|_2 \cdot \max_{a,a' \in \MA} \| \Sigma' \cdot (\phi_h(x,a) - \phi_h(x,a')) \|_2 \leq & 2\xi \lambda_1 \cdot \sqrt{2\pi} \E_{w \sim \MN(0, \Sigma')} \left[ \max_{a \in \MA} \lng w, \phi_h(x,a) \rng \right] 
    \leq  F_h\^t(x)\nonumber,
  \end{align}
  where the first inequality uses \cref{lem:quadratic-sim,cor:qlin} as well as the fact that $(\Sigma')^2 = \Sigma'$, and the second inequality uses the definition of $F_h\^t(\cdot)$ in \cref{eq:define-fht-bonus}.
It then follows that \cref{eq:vh-nearly-optimistic} holds at step $h$.

  \paragraph{Case 2.} In the second case, we assume that \cref{eq:case1} does not hold, which implies that
  \begin{align}
    & \max \left\{ \xi \cdot \max_{a,a' \in \MA} \| \Sigma' \cdot (\phi_h(x,a) - \phi_h(x,a')) \|_2, \max_{a,a' \in \MA} \| \Lambda' \cdot (\phi_h(x,a) - \phi_h(x,a')) \|_2 \right\} \nonumber\\
    \leq &\max_{a,a' \in \MA} \| \Lambda' \cdot (\phi_h(x,a) - \phi_h(x,a')) \|_2 \leq  \max_{a,a' \in \MA} \| \phi_h(x,a) - \phi_h(x,a') \|_2 \leq 2.\label{eq:xisigma-small}
  \end{align}
  We will apply \cref{cor:optimal-perimeter}  with $(\Sigma, \Lambda) = (\Sigma', \Lambda')$, the value of $\beta$ set to the present value of $\beta$ (defined in \cref{def:params}), $\vep = \epapx$, and $\Phi = \cPhi_h(x)$. We need to check that \cref{eq:skew-bound} holds; indeed, we have
  \begin{align}
    & \max \left\{ \beta \cdot \max_{a,a' \in \MA} \| \Sigma' \cdot (\phi_h(x,a) - \phi_h(x,a')) \|_2, \max_{a,a' \in \MA} \| \Lambda' \cdot (\phi_h(x,a) - \phi_h(x,a')) \|_2 \right\}\nonumber\\
    \leq & 8\sqrt{2\pi} C_{\ref{lem:phi-what-wt-diff}} \lambda_1 HB d\sqrt{\iota} \cdot \max \left\{ \xi \cdot \max_{a,a' \in \MA} \| \Sigma' \cdot (\phi_h(x,a) - \phi_h(x,a')) \|_2, \max_{a,a' \in \MA} \| \Lambda' \cdot (\phi_h(x,a) - \phi_h(x,a')) \|_2 \right\}\nonumber\nonumber\\
    \leq & 16\sqrt{2\pi} C_{\ref{lem:phi-what-wt-diff}} \lambda_1 HB d\sqrt{\iota}\nonumber,
  \end{align}
  where the first inequality uses the definition of $\xi$ in \cref{def:params} and the second inequality uses \cref{eq:xisigma-small}. Thus we may apply \cref{cor:optimal-perimeter} with $\zeta = 16\sqrt{2\pi} C_{\ref{lem:phi-what-wt-diff}} \lambda_1 HB d\sqrt{\iota}$. For $a,a' \in \MA$, let us write $\midpoint[x,h,a,a']$ as shorthand for $\midpoint[\cPhi_h(x), \phi_h(x,a), \phi_h(x,a')]$, which was defined in \cref{eq:define-midpoint}.  Fix any $a,a' \in \MA$; then applying \cref{cor:optimal-perimeter} with $\phi = \phi_h(x,a), \phi' = \phi_h(x,a')$, we have %
  \begin{align}
    \frac{1}{\lambda_1} \cdot F_h\^t(x) \geq     &  \left( \frac{C_{\ref{cor:optimal-perimeter}}}{\epapx} \right)^{2A} \cdot   \E_{u' \sim \MN(0, \Sigma')} \E_{v' \sim \MN(0, \Lambda')} \left[ \Ftp_h(x; \beta u', v') \right] \nonumber\\
        & + \sqrt{2\pi} \cdot \E_{w \sim \MN(0, \Sigma')} \left[ \max_{\bar a \in \MA} \lng w, \phi_h(x, \bar a) \rng \right]\nonumber\\
    \geq &  \| \beta \Sigma' \cdot (\phi_h(x,a) - \midpoint[x,h,a,a'] ) \|_2 + \| \Lambda' \cdot (\midpoint[x,h,a,a'] - \phi_h(x,a')) \|_2 - 64\sqrt{2\pi}C_{\ref{lem:phi-what-wt-diff}}  \lambda_1 HB d\sqrt{\iota}\cdot \epapx \nonumber\\
        & + \sqrt{2\pi} \cdot \E_{w \sim \MN(0, \Sigma')} \left[ \max_{\bar a \in \MA} \lng w, \phi_h(x, \bar a) \rng \right]\nonumber\\
    \geq &  \| \beta \Sigma' \cdot (\phi_h(x,a) - \midpoint[x,h,a,a'] ) \|_2 + \| \Lambda' \cdot (\midpoint[x,h,a,a'] - \phi_h(x,a')) \|_2 - \epbell/(2\lambda_1) \nonumber\\
        & + \sqrt{2\pi} \cdot \E_{w \sim \MN(0, \Sigma')} \left[ \max_{\bar a \in \MA} \lng w, \phi_h(x, \bar a) \rng \right]\label{eq:fht-truncate-lb},
  \end{align}
  where the first inequality follows from the definition of $F_h\^t(x)$ in \cref{eq:define-fht-bonus} and the fact that $\xi \geq 1$, the second inequality uses \cref{cor:optimal-perimeter} with the above parameter settings, and the final inequality uses the definition of $\epapx$ in \cref{def:params}. 

  Write $\mu^\st := \midpoint[x, h,\hat \pi_h\^t(x), \pi_h^\st(x)]$. We now apply \cref{lem:phi-what-wt-diff} with $\phi = \phi_h(x, \hat \pi_h\^t(x)) - \mu^\st$. %
Then \cref{lem:phi-what-wt-diff} ensures that 
  \begin{align}
    \left| \lng w_h\^t - \hat w_h\^t, \phi_h(x, \hat \pi_h\^t(x)) - \mu^\st \rng \right| \leq & \beta \cdot \left\| (\Sigma_h\^t)^{-1/2} \cdot (\phi_h(x, \hat \pi_h\^t(x) - \mu^\st) \right\|_2 \label{eq:vhstar-perturb-error}.
  \end{align}
  Moreover, since $\mu^\st \in \bar \Phi_h(x)$, we have from the fact that \cref{eq:qh-nearly-optimistic} holds at step $h$ that
  \begin{align}
\lng w_h\^t - w_h^\st, \mu^\st \rng \geq & -\epbell \cdot (H-h)\label{eq:vhstar-ucb}.
  \end{align}
  Finally, we have
  \begin{align}
\lng w_h^\st, \mu^\st - \phi_h(x, \pi_h^\st(x)) \rng \geq & - \| w_h^\st \|_2 \cdot \| \mu^\st - \phi_h(x, \pi_h^\st(x)) \|_2 \label{eq:vhstar-whstar}.
  \end{align}
  We may now compute
  \begin{align}
     V_h\^t(x) - V_h^\st(x) =& Q_h\^t(x, \hat \pi_h\^t(x)) - Q_h^\st(x, \pi_h^\st(x)) \nonumber\\
    \geq & Q_h\^t(x, \hat \pi_h\^t(x)) - Q_h^\st(x, \pi_h^\st(x)) +\lng \hat w_h\^t, \mu^\st - \phi_h(x, \hat \pi_h\^t(x)) \rng  \nonumber\\
    = & \lng w_h\^t - w_h^\st, \mu^\st \rng + \lng \hat w_h\^t - w_h\^t, \mu^\st - \phi_h(x, \hat \pi_h\^t(x)) \rng + \lng w_h^\st, \mu^\st - \phi_h(x, \pi_h^\st(x)) \rng \nonumber\\
    \geq & -\epbell \cdot (H-h) - \beta \cdot  \left\| (\Sigma_h\^t)^{-1/2} \cdot (\phi_h(x, \hat \pi_h\^t(x) - \mu^\st) \right\|_2  - \| w_h^\st \|_2 \cdot \| \mu^\st - \phi_h(x, \pi_h^\st(x)) \|_2 \label{eq:vht-vhstar-compute},
  \end{align}
  where the first inequality uses that $\hat \pi_h\^t(x) = \argmax_{a' \in \MA} \hat Q_h\^t(x, a')$, and the second inequality uses \cref{eq:vhstar-perturb-error,eq:vhstar-ucb,eq:vhstar-whstar}. Moreover, %
  \begin{align}
    & \beta \cdot  \left\| (\Sigma_h\^t)^{-1/2} \cdot (\phi_h(x, \hat \pi_h\^t(x) - \mu^\st) \right\|_2  + \| w_h^\st \|_2 \cdot \| \mu^\st - \phi_h(x, \pi_h^\st(x)) \|_2\nonumber\\
    \leq & \beta \cdot  \left\| (\Sigma_h\^t)^{-1/2} \cdot (\phi_h(x, \hat \pi_h\^t(x) - \mu^\st) \right\|_2  + \lambda_1 \cdot \| \mu^\st - \phi_h(x, \pi_h^\st(x)) \|_2\nonumber\\
    \leq & \lambda_1 \cdot \left(\| (\beta/\lambda_1)\cdot  \Sigma' \cdot (\phi_h(x, \hat \pi_h\^t(x)) - \mu^\st) \|_2 +   \| \Lambda' \cdot (\mu^\st - \phi_h(x, \pi_h^\st(x))) \|_2\right.\nonumber\\
    &\left.+ \sqrt{2\pi} \cdot \E_{w \sim \MN(0, \Sigma')} \left[ \max_{a \in \MA} \lng w, \phi_h(x,a) \rng \right] + 2 \sigtr \right)\label{eq:apply-truncation},
  \end{align}
  where the first inequality uses that $\| w_h^\st \|_2 \leq \lambda_1$ (\cref{cor:qlin}) and the second inequality uses \cref{lem:bound-truncation-error} with $\Gamma = (\beta/\lambda) \cdot  (\Sigma_h\^t)^{-1/2}$ and $\sigma = \sigtr$, as well as the fact that $\| (\Sigma_h\^t)^{-1/2} \| \leq 1$ (which follows by the choice of $\lambda = 1$ in \cref{def:params}), so that $\| \Gamma \| \leq \beta/\lambda_1$. 
  
  Combining \cref{eq:vht-vhstar-compute,eq:apply-truncation,eq:fht-truncate-lb} and the fact that $\lambda_1 \geq 1$, it follows that
  \begin{align}
    & V_h\^t(x) - V_h^\st(x) \nonumber\\
    \geq & -\epbell \cdot (H-h) -  \| \beta \Sigma' \cdot (\phi_h(x, \hat \pi_h\^t(x)) - \mu^\st) \|_2 - \lambda_1 \cdot \| \Lambda' \cdot (\mu^\st - \phi_h(x, \pi_h^\st(x))) \|_2   \nonumber\\
    & - \sqrt{2\pi} \lambda_1 \cdot \E_{w \sim \MN(0, \Sigma')} \left[ \max_{a \in \MA} \lng w, \phi_h(x,a) \rng \right]- 2\lambda_1\sigtr \nonumber\\
    \geq & -\epbell \cdot (H-h)  - F_h\^t(x) - \epbell/2 - 2\lambda_1\sigtr\label{eq:vht-vhstar-m1}\\
    \geq &  -\epbell \cdot (H+1-h)  - F_h\^t(x)\label{eq:vht-vhstar-completion},
  \end{align}
  where the final inequality follows since $\epbell/2+ 2\lambda_1\sigtr \leq \epbell$ by the choice of $\sigtr$ in \cref{def:params}.  This verifies that \cref{eq:vh-nearly-optimistic} holds at step $h$, completing the proof of the lemma. 
\end{proof}
\begin{theorem}
  \label{thm:policy-learning}
  For any $\epfinal, \delta \in (0,1)$, the policy 
  $\hat \pi := \frac 1T \sum_{t=1}^T \hat \pi\^t$ produced by \cref{alg:psdp-ucb} has suboptimality bounded by $V_1^\st(x_1) - V_1^{\hat \pi}(x_1) \leq \epfinal$ after the algorithm has gathered $$ O\left(Hd^2 \cdot\left( \frac{H^4 B^3 dA^{1/2} \log^{1/2}(HABd/(\epfinal\delta))}{\epfinal} \right)^{12A + 4}\right)$$ samples and used $(HBdA\log(1/(\epfinal \delta)))^{O(A)}$ time. 
\end{theorem}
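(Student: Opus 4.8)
The plan is to run the argument sketched around \cref{eq:sum-th-fht}: use optimism (\cref{lem:approx-optimism}) to reduce the suboptimality of the averaged policy to the total exploration bonus collected by the policies $\hat\pi\^t$, bound that bonus in feature space via \cref{lem:sigmap-bound} and \cref{lem:quadratic-sim}, and then annihilate it with an elliptic-potential estimate once $T$ is as large as in \cref{def:params}. First, condition on the event $\ME$ of \cref{lem:phi-what-wt-diff} together with the high-probability event that every $\Sigma_h\^t$ concentrates around its population version (the concentration lemma underlying the choice $\lambda=\Theta(d\log(THn/\delta))$); a union bound makes both hold with probability at least $1-\delta$. Applying \cref{eq:vh-nearly-optimistic} at $h=1$ and using that $V_1\^t$ is the value function of $\hat\pi\^t$ for the rewards $r_g+F_g\^t$ (so that $V_1\^t(x_1)+F_1\^t(x_1)-V_1^{\hat\pi\^t}(x_1)=\E^{\hat\pi\^t}[\sum_{h=1}^H F_h\^t(x_h)\mid x_1]$) gives, for every initial state,
\begin{align}
V_1^\st(x_1)-V_1^{\hat\pi\^t}(x_1)\ \le\ \E^{\hat\pi\^t}\!\Big[\,\sum_{h=1}^H F_h\^t(x_h)\ \Big|\ x_1\Big]\ +\ \epbell H.\nonumber
\end{align}
Taking $x_1\sim d_1$, averaging over $t\in[T]$, and using $\epbell H=\epfinal/2$ together with $V_1^{\hat\pi}(x_1)=\frac1T\sum_t V_1^{\hat\pi\^t}(x_1)$ (since $\hat\pi$ simply runs a uniformly random $\hat\pi\^t$), it remains to show $\frac1T\sum_{t=1}^T\sum_{h=1}^H\E^{\hat\pi\^t}[F_h\^t(x_h)]\le\epfinal/2$.

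\textbf{Bounding the bonuses.} Combining \cref{lem:sigmap-bound} with the second inequality of \cref{lem:quadratic-sim} (applied with $\Phi=\cPhi_h(x)$) gives $|F_h\^t(x)|\le d\,\beta\,\lambda_1(C_{\ref{lem:sigmap-bound}}/\epapx)^{2A}\max_{a}(\phi_h(x,a)^\t\Sigma'\phi_h(x,a))^{1/2}$, where $(\Sigma',\Lambda')=\trunc{(\beta/\lambda_1)(\Sigma_h\^t)^{-1/2}}{\sigtr}$. Since $\Sigma'$ projects onto the eigendirections of $(\Sigma_h\^t)^{-1}$ with eigenvalue at least $(\sigtr\lambda_1/\beta)^2$, one has $\Sigma'\preceq(\beta/(\sigtr\lambda_1))^2(\Sigma_h\^t)^{-1}$, hence, using the values of $\beta,\epapx,\sigtr,\lambda_1$ in \cref{def:params},
\begin{align}
|F_h\^t(x)|\ \le\ P\cdot\max_{a\in\MA}\big(\phi_h(x,a)^\t(\Sigma_h\^t)^{-1}\phi_h(x,a)\big)^{1/2},\qquad P:=\big(C_1 H^4B^3 dA^{1/2}\log^{1/2}(HABd/(\epfinal\delta))/\epfinal\big)^{O(A)}\nonumber,
\end{align}
which is the formal version of \cref{lem:fht-sigma}. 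Thus the task reduces to bounding $\sum_{t=1}^T\E^{\hat\pi\^t}[\max_a\|\phi_h(x_h,a)\|_{(\Sigma_h\^t)^{-1}}]$ for each fixed $h$, and by Cauchy--Schwarz this follows from $\sum_{t=1}^T\E^{\hat\pi\^t}[\max_a\|\phi_h(x_h,a)\|_{(\Sigma_h\^t)^{-1}}^2]\le\tilde O(d)$; one then gets $\frac1T\sum_{t,h}\E^{\hat\pi\^t}[F_h\^t(x_h)]\le HP\sqrt{\tilde O(d)}/\sqrt T$.

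\textbf{The main obstacle: the elliptic-potential estimate.} Establishing $\sum_{t}\E^{\hat\pi\^t}[\max_a\|\phi_h(x_h,a)\|_{(\Sigma_h\^t)^{-1}}^2]\le\tilde O(d)$ is the crux, and is where the auxiliary policies $\tilde\pi_h\^t$ enter (I expect it to be packaged as \cref{lem:epl-gen}). The difficulty is a mismatch: $\Sigma_h\^t$ is the (averaged, rescaled) empirical covariance of step-$h$ features visited under $\hat\pi\^s\circ_h\tilde\pi\^s$ for $s<t$, while the expectation is over the step-$h$ state of the \emph{new} policy $\hat\pi\^t$. The resolution has three parts: (i) by concentration, $\Sigma_h\^t$ is, up to a constant and a $\tfrac{n}{t-1}$ rescaling, comparable to $\lambda I+\sum_{s<t}\E^{\hat\pi\^s\circ_h\tilde\pi\^s}[\phi_h(x_h,a_h)\phi_h(x_h,a_h)^\t]$, where $n=3T$ and $\lambda=\Theta(d\log(THn/\delta))$ are used, and since $\Sigma_h\^{t+1}$ also incorporates $s=t$, the features generated under $\hat\pi\^t\circ_h\tilde\pi\^t$ are ``added'' between rounds $t$ and $t+1$; (ii) the policy $\tilde\pi_h\^t(x)$, which plays $\argmax_a\langle w,\phi_h(x,a)\rangle$ for $w\sim\MN(0,\Sigma')$, explores precisely the under-explored directions $\MS_{\Sigma'}$, so by the same $\vep$-net-over-an-at-most-$A$-dimensional-subspace argument used in \cref{cor:optimal-perimeter} one gets $\E_{a\sim\tilde\pi_h\^t(x)}[\langle\theta,\phi_h(x,a)\rangle^2]\gtrsim\vep^{O(A)}\max_a\langle\theta,\phi_h(x,a)\rangle^2$ (up to an $O(\vep)$ additive error) for any $\theta\in\MS_{\Sigma'}$, so the part of $\hat\pi\^t$'s step-$h$ features in the under-explored subspace is dominated by what is added to the covariances; (iii) feeding this comparison into the standard potential inequality $\sum_t\Tr((\Sigma_h\^t)^{-1}(\Sigma_h\^{t+1}-\Sigma_h\^t))\le O(d\log(1+T/(\lambda d)))$, together with $\max_a\|\phi_h(x,a)\|^2_{(\Sigma_h\^t)^{-1}}\le 1/\lambda\le1$, yields the $\tilde O(d)$ bound.

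\textbf{Wrapping up.} Given the above, plugging the parameters of \cref{def:params} into $HP\sqrt{\tilde O(d)}/\sqrt T$ makes it $\le\epfinal/2$ for $C_{\ref{thm:policy-learning}}$ (and the various constants) chosen large enough, so the total suboptimality is at most $\epfinal$ on the probability-$(1-\delta)$ event. The sample count is $O(TH^2n)=O(T^2H^2)$, which equals the claimed $O(Hd^2(H^4B^3dA^{1/2}\log^{1/2}(HABd/(\epfinal\delta))/\epfinal)^{12A+4})$. The running time is dominated by the $TH$ inner iterations, each costing $\mathrm{poly}(d,H,n)$ arithmetic plus the cost of Monte-Carlo estimating the Gaussian expectations defining $F_g\^t$ to sufficient additive accuracy; since those expectations can be as small as $\vep^{O(A)}$, this requires $(HBdA\log(1/(\epfinal\delta)))^{O(A)}$ samples, and because any such estimate is a finite average of functions $\Ftp_h(x;\cdot,\cdot)$ it is still Bellman-linear, so \cref{lem:wht-exist-bound,lem:approx-optimism} go through unchanged with the estimation error absorbed into $\epbell$, giving total time $(HBdA\log(1/(\epfinal\delta)))^{O(A)}$.
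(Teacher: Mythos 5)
Your overall skeleton matches the paper's proof: optimism (\cref{lem:approx-optimism}) reduces the suboptimality to $\frac1T\sum_{t,h}\E^{\hat\pi\^t}[F_h\^t(x_h)]$, the bonuses are controlled via \cref{lem:sigmap-bound,lem:quadratic-sim}, and an elliptic-potential bound over the population covariances $\Gamma_h\^t$, combined with the concentration event $\Sigma_h\^t \succeq \frac13(\lambda I + n\Lambda_h\^t)$ and $n = 3T$, finishes the argument. The wrap-up, sample count, and the Monte-Carlo treatment of the Gaussian integrals in $F_g\^t$ are all handled essentially as in the paper.

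The divergence --- and the gap --- is at what you correctly identify as the crux. By passing from \cref{lem:sigmap-bound} to $\max_{a}(\phi_h(x,a)^\t\Sigma'\phi_h(x,a))^{1/2}$ you discard the structure that makes $\tilde\pi_h\^t$ work, and you then have to restore it with the asserted comparison $\E_{a\sim\tilde\pi_h\^t(x)}[\lng\theta,\phi_h(x,a)\rng^2]\gtrsim\vep^{O(A)}\max_a\lng\theta,\phi_h(x,a)\rng^2$ (up to additive $O(\vep)$). This inequality is not proved anywhere in the paper and is not an instance of \cref{cor:optimal-perimeter}; it is plausibly provable by a similar covering argument over the at-most-$A$-dimensional span (the action attaining the max need not have a large normal cone, so the additive slack is essential), but as written it is an unproven new lemma. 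Moreover, it would cost an additional multiplicative $\vep^{-\Theta(A)}$ in the final regret bound, which you do not track against the specific exponent $12A+4$ in the theorem statement; since $T$ is fixed in \cref{def:params}, this extra factor has to be shown to fit. The paper avoids the issue entirely: the upper bound in \cref{lem:quadratic-sim} is $\sqrt{d}\,\E_{w\sim\MN(0,\Sigma')}[\phi_w^\t\Sigma'\phi_w]^{1/2}$ with $\phi_w=\phi_h(x,\pi_{h,w}(x))$, and since \cref{line:define-pi-tilde} defines $\tilde\pi_h\^t$ to play exactly $\pi_{h,w}(x)$ for $w\sim\MN(0,\Sigma')$, this expectation \emph{is} the expectation under the action distribution used to build $\Sigma_h\^{t+1}$. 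One application of Jensen then yields $\E^{\hat\pi\^t\circ_h\tilde\pi\^t}[\phi_h(x_h,a_h)^\t(\Sigma_h')\^t\phi_h(x_h,a_h)]^{1/2}$, and \cref{lem:orig-truncated} plus \cref{lem:epl-gen} apply directly with no comparison lemma and no extra $\vep^{-O(A)}$ loss. You should either keep the bonus bound in this ``expectation over $w$ of the argmax feature'' form, or supply a proof of your comparison inequality and verify the resulting constants still fit within the stated $T$.
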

\begin{proof}
  Using the fact that $V_1^{\hat \pi\^t}(x_1) = \E^{\hat \pi\^t} \left[ \sum_{h=1}^H r_h(x_h, a_h) \right]$ as well as the definition of $V_h\^t$ in \cref{eq:define-qht,eq:define-vht}, we see that
  \begin{align}
V_1\^t(x_1) - V_1^{\hat \pi\^t}(x_1) =& \sum_{h=2}^H \E^{\hat \pi\^t} \left[F_h\^t( x_h)\right]\label{eq:qht-qhpit-full}.
  \end{align}
Set $\gamma = \sqrt{d} \beta \lambda_1 \cdot \left( \frac{C_{\ref{lem:sigmap-bound}}}{\epapx} \right)^{2A}$. For each $h \in [H], t \in [T]$, let us write $((\Sigma_h')\^t, (\Lambda_h')\^t) := \trunc{(\beta/\lambda_1) \cdot (\Sigma_h\^t)^{-1/2}}{\sigtr}$. Then under the event $\ME$ of \cref{lem:approx-optimism} (which satisfies $\Pr(\ME) \geq 1-\delta/2$),
  \begin{align}
     V_1^\st(x_1) - V_1^{\hat \pi\^t}(x_1)
    \leq &H \cdot \epbell +  V_1\^t(x_1) + F_1\^t( x_1) - V_1^{\hat \pi\^t}(x_1)\nonumber\\
    = &H \cdot \epbell + \sum_{h=1}^H \E^{\hat \pi\^t}\left[ F_h\^t( x_h) \right]\nonumber\\
    \leq &  H \cdot \epbell + \gamma \sum_{h=1}^H \E^{\hat \pi\^t} \left[ \E_{w \sim \MN(0, (\Sigma_h')\^t)} \left[ \max_{a \in \MA} \lng w, \phi_h(x_h, a) \rng \right] \right] \nonumber\\
    \leq & H \cdot \epbell + \gamma\sum_{h=1}^H \E^{\hat \pi\^t} \left[\sqrt{d} \cdot \E_{w \sim \MN(0, (\Sigma_h')\^t)} \left[ \phi_h(x_h, \pi_{h,w}(x_h))^\t (\Sigma_h')\^t \phi_h(x_h, \pi_{h,w}(x_h)) \right]^{1/2}\right]\nonumber\\
    \leq & H \cdot \epbell + \gamma \sqrt{d} \sum_{h=1}^H \E^{\hat \pi\^t \circ_h \tilde \pi\^t} \left[  \phi_h(x_h, a_h)^\t (\Sigma_h')\^t \phi_h(x_h, a_h) \right]^{1/2}\nonumber\\
    \leq & H \cdot \epbell + \gamma \sqrt{d} \cdot \frac{\beta}{\sigtr \lambda_1} \sum_{h=1}^H \E^{\hat \pi\^t \circ_h \tilde \pi\^t} \left[  \phi_h(x_h, a_h)^\t (\Sigma_h\^t)^{-1} \phi_h(x_h, a_h) \right]^{1/2}\label{eq:regret-bound-full},
  \end{align}
  where the first inequality uses \cref{lem:approx-optimism}, the second inequality uses \cref{lem:sigmap-bound} together with the definition of $\gamma$, the third inequality uses \cref{lem:quadratic-sim}, 
  the fourth inequality uses Jensen's inequality as well as the definition of $\tilde \pi_h\^t$ in \cref{line:define-pi-tilde} of \cref{alg:psdp-ucb}, and the final inequality uses the fact that $((\Sigma_h')\^t, (\Lambda_h')\^t) = \trunc{\frac{\beta}{\lambda_1} \cdot (\Sigma_h\^t)^{-1/2}}{\sigtr}$ together with \cref{lem:orig-truncated}, which implies that $((\Sigma_h')\^t)^2 = (\Sigma_h')\^t \preceq \frac{\beta}{\sigtr \lambda_1} \cdot (\Sigma_h\^t)^{-1/2}$. 

    For each $h \in [H], t \in [T]$, let us define
  \begin{align}
\Gamma_h\^t := \E^{\hat \pi\^t \circ_h \tilde \pi\^t}\left[ \phi_h(x_h, a_h) \phi_h(x_h, a_h)^\t \right]\nonumber.
  \end{align}

  Next, for each $t \in [T]$ and $h \in [H]$, we will now apply \cref{lem:psd-concentration} with $P$ given by the distribution of $\phi_h(x_h, a_h)$ for $(x_h, a_h) \sim \BP^{\unif(\{ \hat \pi^s \circ_h \tilde \pi\^s \}_{s=1}^{t-1}\})}$. Write 
  \begin{align}
\Lambda_h\^t = \E^{\unif(\{ \hat \pi\^s \circ_h \tilde \pi\^s\})_{s=1}^{t-1}}[ \phi_h(x_h, a_h) \phi_h(x_h, a_h)^\t]\nonumber,
  \end{align}
  so that for each $i \in [n]$, $\E[\phi_h(x_h\^{t,i,h}, a_h\^{t,i,h}) \phi_h(x_h\^{t,i,h}, a_h\^{t,i,h})^\t] = \Lambda_h\^t$. 
  Then \cref{lem:psd-concentration} together with the definition of $\Sigma_h\^t$ in \cref{line:define-sigmaht} of \cref{alg:psdp-ucb} gives that, since  $\lambda \geq C_{\ref{lem:psd-concentration}} d \log(2THn/\delta)$ (per \cref{def:params}), %
  \begin{align}
\Pr\left( \frac 13 (\lambda I + n\Lambda_h\^t) \preceq \Sigma_h\^t \preceq \frac 53 (\lambda I + n\Lambda_h\^t) \right) \geq 1-\delta/(2TH)\label{eq:ht-psd-conc-full}.
  \end{align}
  Let $\ME'$ denote the event that \cref{eq:ht-psd-conc-full} holds for all $h \in [H], t \in [T]$, so that $\ME'$ occurs with probability at least $1-\delta/2$. Since $n \geq 3T$, we have that $\frac{n}{3}\Lambda_h\^t \succeq \sum_{s=1}^t \Gamma_h\^s$, and therefore, under $\ME'$, we have $\tilde \Sigma_h\^t := \frac{\lambda}{3} I_d + \sum_{s=1}^t \Gamma_h\^s \preceq \Sigma_h\^t$. Next, we may compute, for each $h \in [H]$,
  \begin{align}
    \sum_{t=1}^T \E^{\hat \pi\^t \circ_h \tilde \pi\^t}[\phi_h(x_h, a_h)^\t (\Sigma_h\^t)^{-1} \phi_h(x_h, a_h)] =& \sum_{t=1}^T \lng \Gamma_h\^t, (\Sigma_h\^t)^{-1} \rng \leq  \sum_{t=1}^T \lng \Gamma_h\^t, (\tilde \Sigma_h\^t)^{-1} \rng \leq  2d \log(2T)\label{eq:apply-epl-full},
  \end{align}
  where the final inequality uses \cref{lem:epl-gen} together with the fact that $\lambda/3 \geq 1$. Combining \cref{eq:regret-bound-full} and \cref{eq:apply-epl-full}, and using Jensen's inequality, we obtain that under $\ME \cap \ME'$ (which occurs with probability at least $1-\delta$), for sufficiently large constants $C, C'$,
  \begin{align}
    \sum_{t=1}^T (V_1^\st(x_1) - V_1^{\hat \pi\^t}(x_1)) \leq & TH\epbell +  \frac{H \gamma \beta \sqrt{d}}{\sigtr\lambda_1} \cdot \sqrt{2Td \log(2T)} \nonumber\\
    \leq&  TH\epbell + \frac{4H d\beta^2 \lambda_1}{\epbell} \cdot \left( \frac{C_{\ref{lem:sigmap-bound}}}{\epapx} \right)^{2A} \cdot \sqrt{2Td\log(2T)}\nonumber\\
    \leq & TH\epbell + \frac{C H^6 \Bbnd^5 d^3 \iota}{\epbell} \cdot \left(\frac{C_{\ref{cor:optimal-perimeter}}}{\epapx} \right)^{4A}  \cdot \left( \frac{C_{\ref{lem:sigmap-bound}}}{\epapx} \right)^{2A} \cdot \sqrt{2Td\log(2T)}\nonumber\\
    \leq & TH\epbell + \frac{C H^6 \Bbnd^5 d^3 \iota}{\epbell} \cdot \left( \frac{C' H^3 B^3 d \sqrt{\iota}}{\epbell} \right)^{6A} \nonumber\\
    \leq & T\epfinal/2 + CH^3 B^2 d^2 \sqrt{\iota} \left( \frac{2C' H^4 B^3 d \sqrt{\iota}}{\epfinal} \right)^{6A+1}\label{eq:final-regret},
  \end{align}
  where the second inequality uses the definition of $\gamma, \sigtr$, and the remaining inequalities use the definitions of $\epapx, \beta, \lambda_1,\epfinal$ in \cref{def:params}. Next, we have the following claim:
  \begin{claim}
    \label{clm:t-lb}
The choice of $T,\iota$ (in \cref{def:params}) ensures that  $T \geq d \cdot \left( \frac{C_{\ref{thm:policy-learning}}^{1/2} H^4 \Bbnd^3 d\sqrt \iota}{\epfinal} \right)^{6A+2}$.
  \end{claim}
  \cref{clm:t-lb} is proved at the end of the section. 
By \cref{clm:t-lb}, as long as the constant $C_{\ref{thm:policy-learning}}$ is chosen sufficiently large, the expression in \cref{eq:final-regret} is bounded above by $T\epfinal$. 
Hence the policy $\hat \pi := \frac 1T \sum_{t=1}^T \hat \pi\^t$ satisfies $V_1^\st(x_1) - V_1^{\hat \pi}(x_1) \leq \epfinal$. 

Altogether, \cref{alg:psdp-ucb} collects $THn \leq O(HT^2) \leq O\left(Hd^2 \cdot\left( \frac{H^4 B^3 dA^{1/2} \log^{1/2}(HABd/(\epfinal\delta))}{\epfinal} \right)^{12A + 4}\right)$ trajectories.

Finally, we analyze the computation time of \cref{alg:psdp-ucb}. It is straightforward to see that each step of the algorithm can be implemented in $\poly(T)$ time, with the exception of the computation of $F_g\^t(x_g\^{t,i,h})$ in \cref{eq:rhat-rewards}, which requires integrating a nonlinear function over a Gaussian (see \cref{eq:define-fht-bonus}). To handle this difficulty, we discuss a slight modification of our algorithm and its analysis which proceeds by approximating this integral. To efficiently approximate the result of each such integration, we use $O (\log(T/\delta)/\epapx^2)$ draws from the appropriate Gaussian to approximate each integral to within $\epapx$. The error term of $\epapx$ that arises in this approximation leads to an additional error term of $\epapx$ in \cref{eq:qhat-qt-lemma} (\cref{lem:phi-what-wt-diff}), which in turn leads to an additional term of $-\epapx$ in \cref{eq:vht-vhstar-m1}. By an appropriate choice of the constants in \cref{def:params}, we can ensure that $\epapx + \epbell/2 + 2\lambda_1\sigtr \leq \epbell$, which ensures that \cref{eq:vht-vhstar-completion} still holds. The remaining details of the proof remain unchanged. %
\end{proof}

\begin{proof}[Proof of \cref{clm:t-lb}]
  Note that, since $n = 3T$ and by the definition of $T$ in \cref{def:params}, for  sufficiently large constants $C, C'$ (independent of $C_{\ref{thm:policy-learning}}$), we have $\iota \leq C \cdot (\log(H\Bbnd d/\delta) + \log(T)) \leq C' \cdot A \log \left( \frac{C_{\ref{thm:policy-learning}}H\Bbnd d A }{\epfinal \delta} \right)$. Then 
  \begin{align}
d \cdot \left( \frac{C_{\ref{thm:policy-learning}}^{1/2} H^4 \Bbnd^3 d\sqrt \iota}{\epfinal} \right)^{6A+2} \leq d \cdot \left( \frac{\sqrt{C'} \log^{1/2} (C_{\ref{thm:policy-learning}}) \cdot C_{\ref{thm:policy-learning}}^{1/2} H^4 \Bbnd^3 d A^{1/2} \log^{1/2} ( H\Bbnd d A/(\epfinal \delta))}{\epfinal} \right)^{6A+2}\nonumber.
  \end{align}
  As long as $C_{\ref{thm:policy-learning}}$ is chosen sufficiently large so that $\sqrt{C'} \log^{1/2}(C_{\ref{thm:policy-learning}}) C_{\ref{thm:policy-learning}}^{1/2} \leq C_{\ref{thm:policy-learning}}$, the right-hand side of the above expression is bounded above by $T$, as desired.
\end{proof}

\section{Useful lemmas}

\subsection{Concentration}
\begin{lemma}[Concentration for self-normalized process; e.g., Theorem D.3 of \cite{jin2020provably}]
  \label{lem:conc-sn-martingale}
  Fix $n \in \BN$ and let $\vep_1, \ldots, \vep_n$ be random variables which are adapted to a filtration $(\MF_i)_{0 \leq i \leq n}$. Suppose that for each $i \in [n]$, $\E[\vep_i | \MF_{i-1}] = 0$ and $\E[e^{\lambda \vep_i} | \MF_{i-1}] \leq e^{\lambda^2 \sigma^2/2}$. Suppose that $\phi_1, \ldots, \phi_n$ is a sequence which is predictable with respect to $(\MF_i)_{0 \leq i \leq n}$, i.e., $\phi_i$ is measurable with respect to $\MF_{i-1}$ for all $i \in [n]$. Suppose that $\Gamma_0 \in \BR^{d \times d}$ is positive definite, and let $\Gamma_i = \Gamma_0 + \sum_{j=1}^i \phi_j \phi_j^\t$. Then for any $\delta > 0$, with probability at least $1-\delta$,
    \begin{align}
\left\| \sum_{i=1}^n \phi_i \vep_i \right\|_{\Gamma_i^{-1}}^2 \le 2 \sigma^2 \log \left( \frac{\det(\Gamma_t)^{1/2} \det (\Gamma_0)^{-1/2} }{\delta} \right)\nonumber. 
  \end{align}
\end{lemma}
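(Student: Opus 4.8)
The plan is to prove this by the standard \emph{method of mixtures} (pseudo-maximization) argument, which is the route taken by the cited Theorem~D.3 of \cite{jin2020provably}. Write $S_i := \sum_{j=1}^i \phi_j \vep_j \in \BR^d$, so that $\Gamma_i = \Gamma_0 + \sum_{j=1}^i \phi_j \phi_j^\t$ and the quantity to be controlled is $\norm{S_n}_{\Gamma_n^{-1}}^2$ (the indices $i$ and $t$ in the displayed inequality should both read $n$). \textbf{Step 1 (a family of exponential supermartingales).} For each fixed $\lambda \in \BR^d$ I would set $M_i^\lambda := \exp\!\big( \lng \lambda, S_i \rng - \tfrac{\sigma^2}{2}\,\lambda^\t (\Gamma_i - \Gamma_0)\,\lambda \big)$, with $M_0^\lambda = 1$. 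Since each $\phi_i$ is $\MF_{i-1}$-measurable and $\E[e^{s\vep_i}\mid \MF_{i-1}] \le e^{s^2\sigma^2/2}$ for every scalar $s$, taking $s = \lng \lambda, \phi_i\rng$ gives $\E\!\big[\exp(\lng\lambda,\phi_i\rng \vep_i - \tfrac{\sigma^2}{2}\lambda^\t \phi_i\phi_i^\t\lambda)\mid \MF_{i-1}\big] \le 1$, so $(M_i^\lambda)_{i}$ is a nonnegative supermartingale adapted to $(\MF_i)_i$.

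\textbf{Step 2 (mix over $\lambda$ and integrate out).} Let $h$ be the density of $\MN(0, (\sigma^2\Gamma_0)^{-1})$ and define $M_i := \int_{\BR^d} M_i^\lambda\, h(\lambda)\, d\lambda$. By Tonelli and Step~1, $(M_i)_i$ is again a nonnegative supermartingale with $M_0 = 1$, hence $\E[M_n] \le 1$. Completing the square in $\lambda$ against the quadratic form $\tfrac{\sigma^2}{2}\lambda^\t \Gamma_i \lambda$ and evaluating the resulting Gaussian integral yields the closed form $M_i = \big(\det \Gamma_0 / \det \Gamma_i\big)^{1/2} \exp\!\big(\tfrac{1}{2\sigma^2}\norm{S_i}_{\Gamma_i^{-1}}^2\big)$. \textbf{Step 3 (Markov).} Applying Markov's inequality to $M_n \ge 0$ gives $\Pr[M_n \ge 1/\delta] \le \delta\,\E[M_n] \le \delta$; rewriting the complementary event $\{M_n < 1/\delta\}$ via the closed form of Step~2 is exactly the asserted bound $\norm{S_n}_{\Gamma_n^{-1}}^2 \le 2\sigma^2 \log\!\big(\det(\Gamma_n)^{1/2}\det(\Gamma_0)^{-1/2}/\delta\big)$, which therefore holds with probability at least $1-\delta$.

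The only step carrying any real content is the Gaussian integral in Step~2, and the one thing to be careful about there is bookkeeping of the normalizing constants $\det(\sigma^2\Gamma_0)^{1/2}$ and $\det(\sigma^2\Gamma_i)^{1/2}$, so that the powers of $\sigma$ cancel cleanly and the factor $2\sigma^2$ emerges with the correct constant in the exponent. I would also note that because $n$ here is a deterministic index (not a stopping time), a plain application of Markov's inequality to $M_n$ suffices and one does not need Ville's maximal inequality for nonnegative supermartingales — that refinement would only be needed for a version of the bound uniform over all $i \le n$, which is not required here. Finally, since the statement is a verbatim restatement of a known result, an equally acceptable route is to simply cite \cite[Theorem~D.3]{jin2020provably} (which in turn follows the standard method-of-mixtures argument) and omit the proof.
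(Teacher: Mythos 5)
Your argument is correct and is exactly the standard method-of-mixtures proof underlying the result the paper cites: the paper states this lemma without proof, deferring to Theorem D.3 of \cite{jin2020provably}, whose proof is the same supermartingale-plus-Gaussian-mixture computation you give (and you correctly note both the $i$/$t$/$n$ index typo in the display and that plain Markov suffices here since $n$ is deterministic). No gaps.
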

In the special case that the random variables $\vep_i$ are i.i.d., we obtain the following:
\begin{corollary}[Concentration for self-normalized process; i.i.d.~data]
  \label{lem:conc-sn}
  Fix $n \in \BN$, and let $\vep_1, \ldots, \vep_n$ be independent real-valued random variables, so that for each $i \in [n]$, $\E[\vep_i] = 0$ and $\E[e^{\lambda \vep_i}] \leq e^{\lambda^2 \sigma^2 / 2}$ for all $\lambda \in \BR$.  Let $\phi_1, \ldots, \phi_n \in \BR^d$ be given. Suppose that $\Gamma_0 \in \BR^{d \times d}$ is positive definite, and let $\Gamma_i = \Gamma_0 + \sum_{j=1}^i \phi_j \phi_j^\t$. Then for any $\delta > 0$, with probability at least $1-\delta$,
  \begin{align}
\left\| \sum_{i=1}^n \phi_i \vep_i \right\|_{\Gamma_i^{-1}}^2 \le 2 \sigma^2 \log \left( \frac{\det(\Gamma_t)^{1/2} \det (\Gamma_0)^{-1/2} }{\delta} \right)\nonumber. 
  \end{align}
\end{corollary}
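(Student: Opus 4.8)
The plan is to deduce this corollary directly from the martingale version, \cref{lem:conc-sn-martingale}, by exhibiting the i.i.d.~setting as a special case. First I would fix the filtration to be the natural one generated by the noise variables: set $\MF_0 := \{\emptyset, \Omega\}$ and $\MF_i := \sigma(\vep_1, \ldots, \vep_i)$ for $i \in [n]$. Then each $\vep_i$ is $\MF_i$-measurable by construction, so $(\vep_i)_{i=1}^n$ is adapted to $(\MF_i)_{0 \le i \le n}$. By independence of the $\vep_i$, conditioning on $\MF_{i-1} = \sigma(\vep_1, \ldots, \vep_{i-1})$ has no effect: $\E[\vep_i \mid \MF_{i-1}] = \E[\vep_i] = 0$ and $\E[e^{\lambda \vep_i} \mid \MF_{i-1}] = \E[e^{\lambda \vep_i}] \le e^{\lambda^2 \sigma^2/2}$ for all $\lambda$, which are exactly the hypotheses on the noise required by \cref{lem:conc-sn-martingale}.

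Next I would observe that the vectors $\phi_1, \ldots, \phi_n \in \BR^d$ are deterministic (they are ``given'' in the statement, not functions of the $\vep_i$), hence each $\phi_i$ is measurable with respect to the trivial $\sigma$-algebra $\MF_0 \subseteq \MF_{i-1}$; thus $(\phi_i)_{i=1}^n$ is predictable with respect to $(\MF_i)_{0 \le i \le n}$. Since $\Gamma_0$ is positive definite and $\Gamma_i = \Gamma_0 + \sum_{j=1}^i \phi_j \phi_j^\t$ matches the definition in \cref{lem:conc-sn-martingale}, all hypotheses of that lemma are verified, and applying it with the stated $\delta$ yields, with probability at least $1-\delta$, $\left\| \sum_{i=1}^n \phi_i \vep_i \right\|_{\Gamma_i^{-1}}^2 \le 2\sigma^2 \log\left( \det(\Gamma_t)^{1/2} \det(\Gamma_0)^{-1/2}/\delta \right)$, which is the claimed bound. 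There is essentially no obstacle here: the only point requiring a word of care is making explicit that the $\phi_i$ are non-random so that predictability is automatic, and that independence upgrades the unconditional MGF bound to the conditional one needed by the martingale lemma.
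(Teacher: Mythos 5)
Your proof is correct and matches the paper's (implicit) argument: the paper simply asserts the corollary as the special case of \cref{lem:conc-sn-martingale} obtained by taking the natural filtration, using independence to get the conditional mean-zero and MGF bounds, and using that the deterministic $\phi_i$ are trivially predictable. Your write-up just makes these routine verifications explicit.
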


\begin{lemma}[Lemma 39 of \cite{zanette2021cautiously}]
  \label{lem:psd-concentration}
  There is a constant $C_{\ref{lem:psd-concentration}} > 0$ so that the following holds. Suppose $P$ is a distribution supported on the unit Euclidean ball in $\BR^d$. Write $\Sigma = \E_{\phi \sim P}[\phi\phi^\t]$. Suppose $n \in \BN, \delta > 0, \lambda$ are given so that $\lambda \geq C_{\ref{lem:psd-concentration}} d \log(n/\delta)$. Suppose $\phi_1, \ldots, \phi_n \sim P$ are i.i.d. Then
  \begin{align}
\Pr \left( \frac 13 (\lambda I + n\Sigma ) \preceq \lambda I + \sum_{i=1}^n \phi_i\phi_i^\t \preceq \frac 53 (\lambda I + n\Sigma) \right) \geq 1-\delta\nonumber.
  \end{align}
\end{lemma}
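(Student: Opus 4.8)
The plan is to treat \cref{lem:psd-concentration} as a routine matrix-concentration fact: reduce the claimed two-sided PSD sandwich to a single min/max-eigenvalue deviation bound for a sum of i.i.d.\ bounded positive semidefinite matrices, and then invoke the matrix Chernoff inequality.

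Concretely, I would set $M := \lambda I_d + n\Sigma \succeq \lambda I_d$ (so $M$ is positive definite), write $\lambda I_d + \sum_{i=1}^n \phi_i\phi_i^\t = \sum_{i=1}^n\bigl(\tfrac{\lambda}{n} I_d + \phi_i\phi_i^\t\bigr)$, and conjugate by $M^{-1/2}$. Since conjugation by an invertible matrix preserves $\preceq$, the desired inequalities become \emph{equivalent} to
\[
\tfrac13 I_d \ \preceq\ \sum_{i=1}^n Z_i \ \preceq\ \tfrac53 I_d, \qquad\text{where}\qquad Z_i := M^{-1/2}\Bigl(\tfrac{\lambda}{n} I_d + \phi_i\phi_i^\t\Bigr) M^{-1/2}.
\]
The matrices $Z_i$ are i.i.d.\ and positive semidefinite, with $\E\bigl[\sum_i Z_i\bigr] = M^{-1/2}(\lambda I_d + n\Sigma)M^{-1/2} = I_d$, so the mean has all eigenvalues equal to $1$; and $\|Z_i\| \le \tfrac{\lambda}{n}\|M^{-1}\| + \phi_i^\t M^{-1}\phi_i \le \tfrac1n + \tfrac1\lambda$, using $M \succeq \lambda I_d$ and $\|\phi_i\|_2 \le 1$. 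I would first dispose of the degenerate regime $n \le \lambda/4$: there $\lambda I_d \preceq \lambda I_d + \sum_i\phi_i\phi_i^\t \preceq (\lambda + n)I_d$ and likewise for $M$, so both matrices lie within a factor $5/4$ of $\lambda I_d$ and the sandwich holds deterministically with room to spare. So assume $n > \lambda/4$, whence the per-term bound is $R := \tfrac1n + \tfrac1\lambda < \tfrac5\lambda$.

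Next, apply the matrix Chernoff bound for sums of independent positive semidefinite matrices (e.g.\ Tropp) to $\sum_i Z_i$, with deviation parameter $\vep = \tfrac13$, mean eigenvalues $\mu_{\min} = \mu_{\max} = 1$, and per-term norm bound $R$. This yields
\[
\Pr\!\Bigl(\lambda_{\min}\bigl(\textstyle\sum_i Z_i\bigr) \le \tfrac23 \ \text{ or }\ \lambda_{\max}\bigl(\textstyle\sum_i Z_i\bigr) \ge \tfrac43\Bigr) \ \le\ 2d\cdot e^{-c/R} \ \le\ 2d\cdot e^{-c\lambda/5}
\]
for an absolute constant $c > 0$ (evaluating $e^{\mp\vep}/(1\pm\vep)^{1\pm\vep}$ at $\vep = \tfrac13$ shows $c = \tfrac{1}{24}$ works; note $\vep = \tfrac13$ already gives a stronger conclusion than the $\tfrac13$--$\tfrac53$ we need). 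Finally I would check that $\lambda \ge C_{\ref{lem:psd-concentration}} d\log(n/\delta)$, for $C_{\ref{lem:psd-concentration}}$ a sufficiently large absolute constant, forces $2d\,e^{-c\lambda/5} \le \delta$: it suffices that $c\lambda/5 \ge \log(2d/\delta) = \log 2 + \log d + \log(1/\delta)$, and since $d\log(n/\delta)$ is simultaneously at least $\log(1/\delta)$, at least $d \ge \log(2d)$, and at least $1$, this holds once $C_{\ref{lem:psd-concentration}}$ is large enough.

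There is no genuine obstacle here — this is standard matrix concentration — but the steps needing care are (i) verifying the reduction is exact, i.e.\ that $\tfrac13 I_d \preceq \sum_i Z_i \preceq \tfrac53 I_d$ literally reproduces the stated inequality after undoing the conjugation by $M^{-1/2}$; and (ii) using the matrix Chernoff bound \emph{rather than a cruder device} (a union bound over coordinates, or a loose matrix inequality), since only the former gives a failure probability that decays like $e^{-\Omega(\lambda)}$ and hence is tamed by the stated threshold $\lambda \gtrsim d\log(n/\delta)$. An alternative route, giving the same conclusion under the same hypothesis on $\lambda$, is to take an $e^{O(d)}$-sized $\tfrac{1}{10}$-net of the unit sphere in the $M$-norm and apply scalar Bernstein to $\sum_i(w^\t\phi_i)^2$ at each net point $w$, using $(w^\t\phi_i)^2 \le \|w\|_2^2 \le 1/\lambda$ and $\E[\sum_i(w^\t\phi_i)^2] = n\,w^\t\Sigma w \le w^\t M w$; the $d\log n$ term in the net cardinality is precisely why the hypothesis is phrased with $\log(n/\delta)$.
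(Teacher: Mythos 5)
The paper does not prove \cref{lem:psd-concentration} at all: it is imported verbatim as Lemma 39 of \cite{zanette2021cautiously} and used as a black box. Your proposal therefore cannot match "the paper's proof," but as a self-contained derivation it is correct and standard. The reduction is exact (conjugation by $M^{-1/2}$ with $M = \lambda I + n\Sigma \succ 0$ preserves the Loewner order, $\E[\sum_i Z_i] = I_d$, and $\|Z_i\| \le \tfrac1n + \tfrac1\lambda$ follows from $M \succeq \lambda I_d$ and $\|\phi_i\|_2 \le 1$), the split into the deterministic regime $n \le \lambda/4$ versus the matrix-Chernoff regime is clean, and the numerical constants check out ($e^{-1/3}/(2/3)^{2/3} \approx 0.94$ and $e^{1/3}/(4/3)^{4/3} \approx 0.95$ are both at most $e^{-1/24}$, and $\vep = 1/3$ indeed gives the stronger $[\tfrac23,\tfrac43]$ sandwich). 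The one imprecision is in the last step: your claim that $d\log(n/\delta) \ge \max\{d, 1\}$ presumes $\log(n/\delta) \ge 1$, which fails when $n \le 2$ and $\delta$ is close to $1$; in that corner $\lambda$ can be forced arbitrarily small and the bound $2d e^{-c\lambda/5} \le \delta$ (indeed the lemma itself, e.g.\ $d=1$, $n=1$, $\lambda = 0.1$, $\phi = 1$ with probability $0.3$) breaks down. This is a degenerate regime irrelevant to the paper's application, where the lemma is invoked with $n = 3T$ large and confidence parameter $\delta/(2TH)$ small, so it warrants at most a sentence restricting to $\log(n/\delta) \ge 1$ (or $\delta \le 1/2$). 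Your net-plus-Bernstein alternative is also a valid route and correctly explains why the threshold is phrased with $\log(n/\delta)$ rather than $\log(1/\delta)$.
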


\subsection{Elliptical potential}
The below lemma generalizes the elliptical potential lemma.
\begin{lemma}
  \label{lem:epl-gen}
  Consider any sequence $\Gamma_1, \ldots, \Gamma_T \in \BR^{d \times d}$ of PSD matrices, and suppose that $\Tr \Gamma_t \leq 1$ for all $t \in [T]$. Define $\Sigma_t = \lambda I + \sum_{s=1}^t \Gamma_s$, for some $\lambda \geq 1$. Then
  \begin{align}
\sum_{t=1}^T \lng \Gamma_t, \Sigma_{t-1}^{-1} \rng \leq 2d\log(2T).\nonumber
  \end{align}
\end{lemma}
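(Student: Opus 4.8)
The statement is a matrix generalization of the classical elliptical potential lemma: instead of summing $\phi_t^\top \Sigma_{t-1}^{-1} \phi_t$ for rank-one updates $\Gamma_t = \phi_t \phi_t^\top$, we sum $\langle \Gamma_t, \Sigma_{t-1}^{-1}\rangle$ for general PSD $\Gamma_t$. The plan is to mimic the usual telescoping-plus-log-determinant argument. First I would use the concavity of $\log\det$ on the PSD cone: since $\Sigma_t = \Sigma_{t-1} + \Gamma_t$, we have
\begin{align}
\log\det \Sigma_t - \log\det \Sigma_{t-1} \geq \langle \Sigma_t^{-1}, \Gamma_t \rangle \nonumber,
\end{align}
by the first-order (gradient) inequality for the concave function $\log\det$ (its gradient at $\Sigma_t$ is $\Sigma_t^{-1}$). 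This is the ``wrong direction'' compared to what we want (we want $\Sigma_{t-1}^{-1}$, not $\Sigma_t^{-1}$), so the next step is to relate $\langle \Sigma_{t-1}^{-1}, \Gamma_t\rangle$ to $\langle \Sigma_t^{-1}, \Gamma_t\rangle$ up to a constant factor.

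For that, I would argue that $\Sigma_{t-1}^{-1} \preceq 2 \Sigma_t^{-1}$, or equivalently $\Sigma_t \preceq 2\Sigma_{t-1}$, i.e. $\Gamma_t \preceq \Sigma_{t-1}$. This holds because $\Gamma_t$ is PSD with $\Tr \Gamma_t \leq 1$, so $\Gamma_t \preceq (\Tr \Gamma_t) I \preceq I \preceq \lambda I \preceq \Sigma_{t-1}$, using $\lambda \geq 1$ and $\Sigma_{t-1} = \lambda I + \sum_{s<t}\Gamma_s \succeq \lambda I$. (Here I use that for a PSD matrix $M$, $M \preceq (\Tr M) I$, since the largest eigenvalue is at most the trace.) Consequently $\langle \Sigma_{t-1}^{-1}, \Gamma_t\rangle \leq 2\langle \Sigma_t^{-1},\Gamma_t\rangle \leq 2(\log\det\Sigma_t - \log\det\Sigma_{t-1})$. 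Summing over $t \in [T]$ telescopes to give $\sum_t \langle \Gamma_t, \Sigma_{t-1}^{-1}\rangle \leq 2(\log\det\Sigma_T - \log\det\Sigma_0)$ where $\Sigma_0 = \lambda I$.

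Finally I would bound $\log\det\Sigma_T - \log\det(\lambda I)$. Since $\Tr\Sigma_T = \lambda d + \sum_{t=1}^T \Tr\Gamma_t \leq \lambda d + T$, the AM–GM inequality on the eigenvalues gives $\det\Sigma_T \leq ((\lambda d + T)/d)^d = (\lambda + T/d)^d$, hence $\log\det\Sigma_T - \log\det(\lambda I) = \log(\det\Sigma_T / \lambda^d) \leq d\log(1 + T/(\lambda d)) \leq d\log(1+T) \leq d\log(2T)$. Putting the pieces together yields $\sum_{t=1}^T \langle \Gamma_t, \Sigma_{t-1}^{-1}\rangle \leq 2d\log(2T)$, as claimed. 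I do not anticipate a serious obstacle here; the only point requiring a little care is the gradient inequality for $\log\det$ and the sign/direction of the inequality, which is why the factor-of-2 step relating $\Sigma_{t-1}^{-1}$ to $\Sigma_t^{-1}$ is the key move. One alternative, if one prefers to avoid the $\log\det$ concavity inequality, is to instead use $\langle \Sigma_{t-1}^{-1}, \Gamma_t\rangle = \langle \Sigma_{t-1}^{-1}, \Sigma_t - \Sigma_{t-1}\rangle$ and the scalar inequality $1 - x^{-1} \leq \log x$; but the $\log\det$ route is cleanest.
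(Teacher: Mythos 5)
Your proof is correct and follows essentially the same telescoping log-determinant strategy as the paper; the only divergence is where the factor of $2$ enters. The paper bounds each term via the scalar inequality $x \leq 2\log(1+x)$ for $x \in [0,1]$ (valid since $\lng \Gamma_t, \Sigma_{t-1}^{-1}\rng \leq \Tr\Gamma_t \leq 1$) together with $\det(I+A) \geq 1 + \Tr(A)$, whereas you use the first-order concavity inequality for $\log\det$ at $\Sigma_t$ and then pass from $\Sigma_t^{-1}$ to $\Sigma_{t-1}^{-1}$ via $\Gamma_t \preceq (\Tr\Gamma_t) I \preceq \lambda I \preceq \Sigma_{t-1}$; both steps are valid and yield the identical final bound.
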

\begin{proof}
  Write $\Sigma_0 = \lambda I$. Since $\lambda \geq 1$, we have $\Sigma_{t-1}^{-1} \preceq I_d$ and therefore $\lng \Gamma_t, \Sigma_{t-1}^{-1} \rng \leq \Tr \Gamma_t \leq 1$ for all $t \in [T]$. Thus $\lng \Gamma_t, \Sigma_{t-1}^{-1} \rng \leq 2 \log (1 + \lng \Gamma_t, \Sigma_{t-1}^{-1} \rng)$. Moreover, for each $t \in [T]$, 
  \begin{align}
    \det(\Sigma_t) = & \det(\Sigma_{t-1} + \Gamma_t) \nonumber\\
    =& \det(\Sigma_{t-1}) \cdot \det\left(I + \Sigma_{t-1}^{-1/2} \Gamma_t \Sigma_{t-1}^{-1/2}\right)\nonumber\\
    \geq & \det(\Sigma_{t-1}) \cdot (1 + \Tr(\Sigma_{t-1}^{-1/2} \Gamma_t \Sigma_{t-1}^{-1/2})) = \det(\Sigma_{t-1}) \cdot (1 + \lng \Gamma_t, \Sigma_{t-1}^{-1}\rng)\nonumber,
  \end{align}
  where the inequality uses that for any real numbers $\sigma_1, \ldots, \sigma_d \geq 0$, $\prod_{i=1}^d (1+\sigma_i) \geq 1 + \sum_{i=1}^d \sigma_i$, which implies that $\det(I+A) \geq 1 + \Tr(A)$ for any PSD matrix $A$ (by taking $\sigma_1, \ldots, \sigma_d$ to be the eigenvalues of $A$). Telescoping the above display, we obtain that
 {\small \begin{align}
\frac 12 \sum_{t=1}^T \lng \Gamma_t, \Sigma_{t-1}^{-1} \rng \leq \sum_{t=1}^T \log(1 + \lng \Gamma_t, \Sigma_{t-1}^{-1} \rng) \leq \log \left( \frac{\det(\Sigma_T)}{\det(\lambda I)}\right) \leq \log \left( \frac{\left( \frac{1}{d} \Tr(\Sigma_T)\right)^d}{\det(\lambda I)} \right) \leq d \log \left( \frac{\lambda d + T}{\lambda d} \right)\leq d \log(2T)\nonumber.
  \end{align}}
\end{proof}

\subsection{Performance difference lemma}
\begin{lemma}[Performance difference lemma \cite{kakade2002approximately}]\label{lem:perf-diff}
For any MDP $M$, policies $\pi,\pi'\in\Pi$, it holds that
\[\E\sups{M,\pi}\left[\sum_{h=1}^H r_h(x_h,a_h)\right] - \E\sups{M,\pi'}\left[\sum_{h=1}^H r_h(x_h,a_h)\right] = \sum_{h=1}^H \E\sups{M,\pi'}\left[V\sups{M,\pi}_h(x_h) -  Q\sups{M,\pi}_h(x_h,a_h)\right].\]
\end{lemma}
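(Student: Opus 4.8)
The statement is the classical performance difference lemma, and the plan is to prove it by telescoping the value function of $\pi$ along trajectories drawn from $\pi'$. First I would record the elementary identity $\E\sups{M,\pi}\big[\sum_{h=1}^H r_h(x_h,a_h)\big] = \E_{x_1 \sim d_1\sups{M}}\big[V_1\sups{M,\pi}(x_1)\big]$ (and likewise with $\pi'$ in place of $\pi$), which is immediate from the definitions of the trajectory distribution and of $V_1\sups{M,\pi}$; this reduces the problem to re-expressing $\E_{x_1}[V_1\sups{M,\pi}(x_1)]$ as an expectation over $\pi'$-trajectories.

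Next, adopting the convention $V_{H+1}\sups{M,\pi} \equiv 0$, for any trajectory $(x_1,a_1,\dots,x_H,a_H)$ I would use the telescoping identity
\[
V_1\sups{M,\pi}(x_1) = \sum_{h=1}^H \Big( V_h\sups{M,\pi}(x_h) - V_{h+1}\sups{M,\pi}(x_{h+1}) \Big),
\]
take expectations over trajectories generated by $M$ under $\pi'$, and apply the tower rule. The key computation is that, under $\pi'$, conditioned on $(x_h,a_h)$ the next state satisfies $x_{h+1} \sim P_h\sups{M}(\cdot \mid x_h,a_h)$, independently of the earlier history; hence $\E\sups{M,\pi'}\big[V_{h+1}\sups{M,\pi}(x_{h+1}) \mid x_h,a_h\big] = \E_{x' \sim P_h\sups{M}(x_h,a_h)}[V_{h+1}\sups{M,\pi}(x')] = Q_h\sups{M,\pi}(x_h,a_h) - r_h(x_h,a_h)$, the last step being the Bellman equation defining $Q_h\sups{M,\pi}$.

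Substituting this back gives
\[
\E_{x_1 \sim d_1\sups{M}}\big[V_1\sups{M,\pi}(x_1)\big] = \E\sups{M,\pi'}\!\left[ \sum_{h=1}^H \Big( V_h\sups{M,\pi}(x_h) - Q_h\sups{M,\pi}(x_h,a_h) \Big) \right] + \E\sups{M,\pi'}\!\left[ \sum_{h=1}^H r_h(x_h,a_h) \right],
\]
and rearranging, using the identity from the first step to rewrite the left-hand side as $\E\sups{M,\pi}[\sum_h r_h(x_h,a_h)]$, yields exactly the claimed equation. I do not anticipate a real obstacle here: the argument is a standard telescope, and the only points requiring mild care are the use of the Markov property under $\pi'$ to collapse the conditional expectation of $V_{h+1}\sups{M,\pi}$ onto $(x_h,a_h)$, and the boundary convention $V_{H+1}\sups{M,\pi} = 0$.
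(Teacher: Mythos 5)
Your proof is correct: the telescoping of $V_h\sups{M,\pi}$ along $\pi'$-trajectories, combined with the Bellman equation $Q_h\sups{M,\pi}(x,a) = r_h(x,a) + \E_{x'\sim P_h\sups{M}(x,a)}[V_{h+1}\sups{M,\pi}(x')]$ and the tower rule, is the standard argument for this lemma. The paper itself gives no proof and simply cites \cite{kakade2002approximately}, so there is nothing to compare against; your write-up fills that gap correctly, with the right attention paid to the Markov property under $\pi'$ and the boundary convention $V_{H+1}\sups{M,\pi}\equiv 0$.
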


\subsection{Linear algebraic lemmas}
\begin{lemma}
  \label{lem:sym-commute}
Two symmetric matrices $\Sigma, \Gamma \in \BR^{d \times d}$ commute if and only if their eigenspaces coincide.
\end{lemma}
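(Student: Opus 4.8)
The plan is to read the statement in its intended (precise) form: $\Sigma$ and $\Gamma$ commute if and only if $\BR^d$ admits an orthonormal basis each of whose elements is simultaneously an eigenvector of $\Sigma$ and of $\Gamma$ --- equivalently, $\BR^d$ decomposes as an orthogonal direct sum of subspaces, each of which is contained in an eigenspace of $\Sigma$ and in an eigenspace of $\Gamma$. I would prove the two implications separately, the forward direction being the only one with any content.

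For the ``if'' direction I would argue as follows: if such a common orthonormal eigenbasis exists, collect its elements as the columns of an orthogonal matrix $U$, so that $\Sigma = U D_\Sigma U^\t$ and $\Gamma = U D_\Gamma U^\t$ for diagonal matrices $D_\Sigma, D_\Gamma$; then $\Sigma\Gamma = U D_\Sigma D_\Gamma U^\t = U D_\Gamma D_\Sigma U^\t = \Gamma\Sigma$ since diagonal matrices commute. For the ``only if'' direction, assume $\Sigma\Gamma = \Gamma\Sigma$. By the spectral theorem for the symmetric matrix $\Sigma$, write $\BR^d$ as the orthogonal direct sum of the eigenspaces $E_{\lambda_1}, \dots, E_{\lambda_m}$ of $\Sigma$ (distinct eigenvalues $\lambda_1,\dots,\lambda_m$). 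The key step is that each $E_{\lambda_j}$ is $\Gamma$-invariant: for $v \in E_{\lambda_j}$ we have $\Sigma(\Gamma v) = \Gamma(\Sigma v) = \lambda_j(\Gamma v)$, so $\Gamma v \in E_{\lambda_j}$. Hence $\Gamma$ restricts to a well-defined operator $\Gamma|_{E_{\lambda_j}}$ on $E_{\lambda_j}$, and this restriction is self-adjoint for the inherited inner product (as $\langle \Gamma v, w \rangle = \langle v, \Gamma w \rangle$ holds for all $v,w\in\BR^d$, in particular for $v,w \in E_{\lambda_j}$). Applying the spectral theorem to each $\Gamma|_{E_{\lambda_j}}$ produces an orthonormal basis of $E_{\lambda_j}$ consisting of $\Gamma$-eigenvectors, each of which, lying in $E_{\lambda_j}$, is also a $\Sigma$-eigenvector. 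Taking the union of these bases over $j$ and using that the $E_{\lambda_j}$ are mutually orthogonal and span $\BR^d$ yields the desired common orthonormal eigenbasis.

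The only mildly delicate point --- the ``main obstacle'', such as it is --- is checking that $\Gamma$ restricts to a self-adjoint operator on each eigenspace of $\Sigma$, since that is exactly what licenses the second invocation of the spectral theorem; the rest is bookkeeping about orthogonality across distinct $\Sigma$-eigenspaces. An alternative, more symmetric route would be to simultaneously refine the $\Sigma$-eigenspace decomposition along the eigenspaces of $\Gamma$, but the ``invariant subspace plus restriction'' argument above is the cleanest and is what I would write up.
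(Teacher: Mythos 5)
Your proof is correct; it is the standard simultaneous-diagonalization argument (each $\Sigma$-eigenspace is $\Gamma$-invariant, restrict $\Gamma$ and apply the spectral theorem again), and the paper itself states this lemma without proof, treating it as a standard linear-algebra fact. You were also right to first reinterpret the statement: read literally, ``their eigenspaces coincide'' is false for commuting pairs such as $\Sigma = I_d$ with a generic symmetric $\Gamma$, and the intended meaning is exactly the existence of a common orthonormal eigenbasis, which is what you prove.
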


\arxiv{
\section*{Acknowledgements}
We thank Dylan Foster and Sham Kakade for bringing this problem to our attention and for helpful discussions. 
}

\arxiv{\appendix}

\section{Proofs for \cref{sec:bellman-linear}}
\label{sec:bellman-linear-proofs}

\subsection{Functions which are not Bellman-linear}
\begin{proposition}
  \label{prop:lsvi-counterexample}
  There is an MDP $M = (H, \MX, \MA, (P_h)_h, (r_h)_h, d_1)$ with horizon $H=2$ satisfying linear Bellman completeness with respect to some feature mappings $\phi_h$ in dimension $d=1$, together with a vector $w^\st \in \BR^d$ so that the function
  \begin{align}
V_h(x) := \max_{a \in \MA} \min \left\{ \lng w^\st, \phi_h(x,a)\rng , H \right\}\nonumber
  \end{align}
  is not Bellman-linear at step $2$ (per \cref{def:bl}).
\end{proposition}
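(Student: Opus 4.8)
The plan is to exhibit an explicit $H=2$, $d=1$ MDP together with a scalar $w^\st$ for which the clipped value function $V_2$ has a \emph{nonlinear} Bellman backup, even though linear Bellman completeness holds. The mechanism: the clip $\min\{\cdot,H\}$ saturates at some step-$2$ states but not others, which is incompatible with the proportionality between step-$1$ and step-$2$ features that completeness forces along transitions. Concretely I would take $\MA = \{0,1\}$, a layered state space with one step-$1$ state $x_0$ and two step-$2$ states $y_1,y_2$, all rewards $\equiv 0$, features
\begin{align}
  \phi_1(x_0,0) = 1,\quad \phi_1(x_0,1) = 10,\quad \phi_2(y_1,0)=0,\quad \phi_2(y_1,1)=1,\quad \phi_2(y_2,0)=0,\quad \phi_2(y_2,1)=10,\nonumber
\end{align}
deterministic transitions $P_1(\cdot\mid x_0,0)=\delta_{y_1}$, $P_1(\cdot\mid x_0,1)=\delta_{y_2}$, and the convention $\phi_3 \equiv 0$ (so the step-$2$ completeness condition is trivial, with $\MT_2 \equiv 0$).

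First I would verify linear Bellman completeness at step $1$ (\cref{def:lbc}). Since action $0$ carries feature $0$ and action $1$ a nonnegative feature at both $y_1,y_2$, for every $\theta \in \BR$ one has $\max_{a'}\lng\phi_2(y_j,a'),\theta\rng = \theta\cdot\max_{a'}\phi_2(y_j,a')$ if $\theta \ge 0$ and $=0$ if $\theta<0$. As $\max_{a'}\phi_2(y_1,a') = 1 = \phi_1(x_0,0)$ and $\max_{a'}\phi_2(y_2,a') = 10 = \phi_1(x_0,1)$, the map $\MT_1\theta := \max\{\theta,0\}$ satisfies $\lng\phi_1(x_0,a),\MT_1\theta\rng = \E_{x'\sim P_1(x_0,a)}[\max_{a'}\lng\phi_2(x',a'),\theta\rng]$ for both actions $a$. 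Finally $\MB_1 = \MB_2 = [-\tfrac1{10},\tfrac1{10}]$ and $\MT_1$ maps this interval into $[0,\tfrac1{10}] \subseteq \MB_1$, while reward linearity is vacuous; hence the MDP is linear Bellman complete.

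Then I would take $w^\st = 1$, so that the function in the statement is $V_2(y) = \max_{a'}\min\{\phi_2(y,a'),2\}$, giving $V_2(y_1) = \max\{0,1\} = 1$ but $V_2(y_2) = \max\{0,2\} = 2$: the clip at $H=2$ activates at $y_2$ and not at $y_1$. Therefore the Bellman backup of $V_2$ at step $2$ is $\E_{x'\sim P_1(x_0,0)}[V_2(x')] = 1$ and $\E_{x'\sim P_1(x_0,1)}[V_2(x')] = 2$, and Bellman-linearity at step $2$ (\cref{def:bl}) would require $w_G \in \BR$ with $w_G\cdot 1 = 1$ and $w_G\cdot 10 = 2$ simultaneously, i.e.\ $w_G = 1$ and $w_G = \tfrac15$, a contradiction.

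There is no genuine technical obstacle: the verification is elementary arithmetic. The only point needing care is the parameter choice, namely reconciling (a) linear Bellman completeness for \emph{all} of $\MB_2$ — which pins $\phi_1(x_0,a)$ to $\max_{a'}\phi_2(\cdot,a')$ along each transition — with (b) placing $H$ strictly between the two pre-clip optimal feature values, so that the clip breaks exactly the proportionality imposed by (a). Once these are arranged as above, the proposition follows at once.
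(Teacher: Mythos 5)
Your proposal is correct and follows essentially the same route as the paper's proof: a $d=1$, $H=2$ MDP with one step-$1$ state whose two actions lead deterministically to two step-$2$ states, zero rewards, an explicit nonlinear $\MT_1$ certifying completeness, and $w^\st=1$ chosen so that clipping at $H$ destroys the proportionality between the step-$2$ values and the step-$1$ features. The only difference is cosmetic (your clip saturates at one successor state and not the other, whereas the paper's saturates at both), and your verification of completeness and of the contradiction is sound.
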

\begin{proof}
  We set $H = 2, d = 1$, $\MX = \{ \mfx_1, \mfx_{2,0}, \mfx_{2,1}\}$, and $\MA = \{0,1\}$, and define:
  \begin{align}
    & \phi_1(\mfx_1, 0) = 1, \phi_1(\mfx_1, 1) = 2,\nonumber\\
    & \phi_2(\mfx_{2,0}, 0) = H, \phi_2(\mfx_{2,0}, 1) = -H, \qquad \phi_2(\mfx_{2,1}, 0) = 2H, \phi_2(\mfx_{2,1}, 1) = -2H\nonumber.
  \end{align}
  The transitions are defined as follows: $(\mfx_1, 0)$ transitions to $\mfx_{2,0}$, and $(\mfx_1, 1)$ transition to $\mfx_{2,1}$. All rewards are 0. By defining $\MT_1 w := H \cdot |w|$, it is readily seen that $M$ is linear Bellman complete with respect to the above feature mappings.

  Let us choose $w^\st = 1$. Then Bellman-linearity of $V_2(x)$ would require that there is some $\bar w \in \BR$ so that
  \begin{align}
\phi_1(\mfx_1, 0) \cdot \bar w = H, \qquad \phi_1(\mfx_1, 1) \cdot \bar w = H\nonumber.
  \end{align}
  But since $\phi_1(\mfx_1,1) = 2$ and $\phi_1(\mfx_1, 0) = 1$, there is no such $\bar w$. 
\end{proof}
\cref{prop:lsvi-counterexample} provides an example for which a function $\hat Q_h\^t$ as defined in \cref{eq:lsvi-q}, with $B_h\^t \equiv 0$, is so that $\max_{a \in \MA} \hat Q_h\^t(\cdot, a)$ is not Bellman-linear. We have considered that case that $B_h\^t$ is 0 for simplicity. Note that in an instantiation of \texttt{LSVI-UCB}, $B_h\^t$ would not be identically 0, as it is defined by \cref{eq:quad-bonus}. Nevertheless, the example in \cref{prop:lsvi-counterexample} can be readily modified to account for nonzero quadratic bonus functions $B_h\^t$. In a similar manner, by rescaling the features we may modify the example to ensure that $\| \phi_h(x,a) \|_2 \leq 1$ for all $h,x,a$ (so as to satisfy \cref{asm:boundedness}). 

\begin{proposition}
  \label{prop:quadratic-ctex}
  There is an MDP $M = (H, \MX, \MA, (P_h)_h, (r_h)_h, d_1)$ with horizon $H=2$ satisfying linear Bellman completeness with respect to some feature mappings $\phi_h$ in dimension $d=1$, so that the function
  \begin{align}
F_h(x) := \max_{a \in \MA}  \| \phi_h(x,a) \|_2\nonumber
  \end{align}
  is not Bellman-linear at step 2.
\end{proposition}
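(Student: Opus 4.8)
The plan is to exhibit an explicit MDP, in the same spirit as the one in \cref{prop:lsvi-counterexample}, with $H = 2$, $d = 1$, four states and two actions, in which linear Bellman completeness holds but $F_2(x) = \max_{a \in \MA}\|\phi_2(x,a)\|_2$ fails to be Bellman-linear at step $2$. The guiding observation is that when $d = 1$, for any $w \in \BR$ one has $\max_{a'}\lng\phi_2(x',a'),w\rng = w\, M(x')$ when $w \geq 0$ and $w\, m(x')$ when $w < 0$, where $M(x') := \max_{a}\phi_2(x',a)$ and $m(x') := \min_a \phi_2(x',a)$. Thus linear Bellman completeness at step $1$ only constrains the two maps $x' \mapsto M(x')$ and $x' \mapsto m(x')$ to be Bellman-linear at step $2$; but $F_2(x') = \max_a|\phi_2(x',a)| = \max\{M(x'), -m(x')\}$ is a genuinely nonlinear (piecewise-linear, convex) function of the pair $(M(x'), m(x'))$, so a stochastic mixture of step-$2$ states can agree with another state-action pair in expected $M$ and expected $m$ while disagreeing in expected $F_2$. (Stochastic transitions are essential: with deterministic transitions the same rescaling that renders $M$ and $m$ Bellman-linear also renders $F_2$ Bellman-linear.)

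Concretely, I would take $\MX = \{\mfx_1, \mfx_{2,1}, \mfx_{2,2}, \mfx_{2,3}\}$, $\MA = \{0,1\}$, all rewards $0$, initial distribution the point mass at $\mfx_1$, step-$1$ features $\phi_1(\mfx_1,0) = \phi_1(\mfx_1,1) = 1$ and $\phi_1 \equiv 0$ on the other states, and step-$2$ features
\begin{align}
\phi_2(\mfx_{2,1},0) = 3,\ \ \phi_2(\mfx_{2,1},1) = 1,\quad \phi_2(\mfx_{2,2},0) = 1,\ \ \phi_2(\mfx_{2,2},1) = -3,\quad \phi_2(\mfx_{2,3},0) = 2,\ \ \phi_2(\mfx_{2,3},1) = -1,\nonumber
\end{align}
with $\phi_2 \equiv 0$ on $\mfx_1$. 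The transitions are $P_1(\cdot\mid \mfx_1, 0) = \tfrac12(\text{point mass at }\mfx_{2,1}) + \tfrac12(\text{point mass at }\mfx_{2,2})$, $P_1(\cdot \mid \mfx_1,1) = (\text{point mass at }\mfx_{2,3})$, and $P_2$ arbitrary (irrelevant since $H=2$). Rescaling all features by a small constant would additionally make the example satisfy \cref{asm:boundedness}, but this is not needed for the statement.

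The first main step is verifying linear Bellman completeness. Here $M(\mfx_{2,1}) = 3$, $M(\mfx_{2,2}) = 1$, $M(\mfx_{2,3}) = 2$ and $m(\mfx_{2,1}) = 1$, $m(\mfx_{2,2}) = -3$, $m(\mfx_{2,3}) = -1$, so $\E[M \mid \mfx_1, 0] = \E[M\mid \mfx_1,1] = 2$ and $\E[m\mid\mfx_1,0] = \E[m\mid\mfx_1,1] = -1$. Hence $\MT_1 w := \max\{2w, -w\}$ (that is, $2w$ for $w \geq 0$ and $-w$ for $w < 0$) satisfies $\lng \phi_1(x,a), \MT_1 w\rng = \E_{x'\sim P_1(x,a)}[\max_{a'}\lng\phi_2(x',a'),w\rng]$ for both step-$1$ state-action pairs, and $\MT_2 \equiv 0$ handles $h = H = 2$ vacuously. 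One then checks $\MT_1$ maps $\MB_2$ into $\MB_1$: since $\max_{x,a}|\phi_2(x,a)| = 3$ we have $\MB_2 = \{\theta : |\theta|\le 1/3\}$, so $|\MT_1 w| \le 2/3 < 1$, while $\MB_1 = \{\theta : |\theta|\le 1\}$; rewards are trivially linear. This establishes that $M$ is linear Bellman complete.

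Finally, I would show $F_2$ is not Bellman-linear at step $2$: $F_2(\mfx_{2,1}) = \max\{|3|,|1|\} = 3$, $F_2(\mfx_{2,2}) = \max\{|1|,|{-}3|\} = 3$, $F_2(\mfx_{2,3}) = \max\{|2|,|{-}1|\} = 2$, so $\E[F_2 \mid \mfx_1, 0] = 3$ while $\E[F_2\mid\mfx_1,1] = 2$. If $F_2$ were Bellman-linear at step $2$, there would be $w \in \BR$ with $\lng w, \phi_1(\mfx_1,0)\rng = w = 3$ and $\lng w, \phi_1(\mfx_1,1)\rng = w = 2$, a contradiction. The main obstacle is not the computations but getting the design right simultaneously: the step-$2$ features must be chosen so that the "$M$" branch and the "$-m$" branch of $F_2$ swap roles between the two components of the mixture $P_1(\cdot\mid\mfx_1,0)$ (so that $\E[F_2\mid\mfx_1,0]$ strictly exceeds $\max\{\E[M\mid\mfx_1,0],\, -\E[m\mid\mfx_1,0]\}$) while the single atom $\mfx_{2,3}$ stays on one branch, and the values of $M$ and $m$ must still line up across the two step-$1$ pairs so that $\MT_1$ is well-defined and lands in $\MB_1$; the choices above are engineered precisely for this.
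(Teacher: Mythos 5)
Your proposal is correct and takes essentially the same route as the paper's own proof: an explicit $H=2$, $d=1$, four-state, two-action MDP in which the two step-$1$ actions have identical features, a stochastic transition makes the expected values of $F_2(x')=\max_a|\phi_2(x',a)|$ differ across the two actions, and yet the Bellman backup of every linear function is linear (so linear Bellman completeness holds with an explicit $\MT_1$). The paper's instance uses step-$2$ features $\{1,-1\},\{2,0\},\{-2,0\}$ with $\MT_1 w=|w|$, while yours uses $\{3,1\},\{1,-3\},\{2,-1\}$ with $\MT_1 w=\max\{2w,-w\}$ — a cosmetic variant of the same construction, with all verifications carried out correctly.
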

\begin{proof}
  We set $H=2, d = 1, \MX = \{ \mfx_1, \mfx_{2,0}, \mfx_{2,1}, \mfx_{2,2} \}$, $\MA = \{0,1\}$, and define:
  \begin{align}
    & \phi_1(\mfx_1, 0) = 1, \phi_1(\mfx_1, 1) = 1, \nonumber\\
    & \phi_2(\mfx_{2,0}, 0) = 1, \phi_2(\mfx_{2,0}, 1) = -1, \quad \phi_2(\mfx_{2,1}, 0) = 2, \phi_2(\mfx_{2,1}, 1) = 0, \quad \phi_2(\mfx_{2,2}, 0) = -2, \phi_2(\mfx_{2,2}, 1) = 0\nonumber.
  \end{align}
  The transition are defined as follows: $(\mfx_1, 0)$ transitions to $\mfx_{2,0}$, and $(\mfx_1, 1)$ transitions to each of $\mfx_{2,1}, \mfx_{2,2}$ with probability $1/2$. All rewards are 0. By defining $\MT_1 w := |w|$, it is readily seen that $M$ is linear Bellman complete with respect to the above feature mappings.

  But Bellman-linearity of $F_2(x)$ would require that there is some $\bar w \in \BR$ so that
  \begin{align}
\phi_1(\mfx_1, 0) \cdot \bar w = 1, \qquad \phi_1(\mfx_1, 1) \cdot \bar w = 2\nonumber.
  \end{align}
  No such $\bar w$ exists since $\phi_1(\mfx_1, 0) = \phi_1(\mfx_1,1).$
\end{proof}
\cref{prop:quadratic-ctex} establishes that, for $\Bquad_h(x,a; \Sigma)$ defined as in \cref{eq:quad-bonus}, the mapping $\max_{a \in \MA} \Bquad_h(\cdot, a; I_d)$ is, in general, not Bellman-linear. 

\subsection{Perturbed linear policies}
We begin by defining a perturbed version of linear policies. 
\begin{definition}[Perturbed linear policies]
  \label{def:plinear}
  For $\sigma > 0$, $h \in [H]$ and $w \in \BR^d$, define $\pi_{h,w,\sigma} : \MX \ra \Delta(\MA)$ by
  \begin{align}
\pi_{h,w,\sigma}(x)(a) = \Pr_{\theta \sim \MN(w, \sigma^2 \cdot I_d)} \left( a \in \MA_{h,\theta}(x)\right)\nonumber.
  \end{align}
  In words, to draw an action $a \sim \pi_{h,w,\sigma}(x)$, we draw $\theta \sim \MN(w, \sigma^2 \cdot I_d)$ and then play $\argmax_{a' \in \MA} \lng \phi_h(x,a'), \theta \rng$. Given $\sigma > 0$, we denote the set of all $\pi_{h,w,\sigma'}$, where $w \in \BR^d, \sigma' > 0$ satisfy $\sigma' / \| w \|_2 \geq\sigma$, by $\Pilinp_h$, and $\Pilinp := \prod_{h=1}^H \Pilinp_h$. %
  As a matter of convention we write $\Pilinp[0]_h :=\Pilin_h$ and $\Pilinp[0] = \Pilin$. Moreover, we write $\Pilinpp_h := \bigcup_{\sigma \geq 0} \Pilinp_h$ and $\Pilinpp =\bigcup_{\sigma \geq 0} \Pilinp$. 
\end{definition}Note that, for any $c > 0$, $\pi_{h,cw, \sigma} = \pi_{h, w, \sigma/{c}}$. Moreover, note that tie-breaking is not an issue for perturbed linear policies, since the  measure of all $\theta$ so that $|\MA_{h,\theta}(x)| > 1$ is 0. 
The following lemma provides another interpretation of linear policies, as a limit of perturbed linear policies:
\begin{lemma}
  \label{lem:linpol-limit}
  For any $w \in \BR^d, h \in [H], x \in \MX$, we have
  \begin{align}
\pi_{h,w}(a|x) = \lim_{\sigma \ra 0^+} \pi_{h,w,\sigma}(a|x)= \lim_{\sigma \ra 0^+}\Pr_{\theta \sim \MN(w, \sigma^2 \cdot I_d)}\left( a \in \MA_{h,\theta}(x) \right)\label{eq:pihw-limit}. %
  \end{align}
\end{lemma}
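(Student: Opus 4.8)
\textbf{Proof plan for \cref{lem:linpol-limit}.} The plan is to unwind both sides of \cref{eq:pihw-limit} in terms of the spherical measure $\nu_d$ and then argue that the perturbed policy probabilities converge to the correct spherical measures. First I would fix $w \in \BR^d, h \in [H], x \in \MX$, and $a \in \MA$. If $a \notin \MA_{h,w}(x)$, then for $\theta$ sufficiently close to $w$ we still have $\lng \phi_h(x, a'), w \rng > \lng \phi_h(x,a), w \rng$ for the maximizing $a'$, so by continuity $a \notin \MA_{h,\theta}(x)$ for all $\theta$ in a neighborhood of $w$; hence both sides of \cref{eq:pihw-limit} are $0$ and there is nothing to prove. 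So the main case is $a \in \MA_{h,w}(x)$, where $\pi_{h,w}(a|x) = \nu_d(\MG_{h,w}(x,a))$ by \cref{eq:linpol-definition}.

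For this case, the key reduction is to change variables: writing $\theta \sim \MN(w, \sigma^2 I_d)$ as $\theta = w + \sigma z$ with $z \sim \MN(0, I_d)$, I would split $z$ into its radial part $\|z\|_2$ and angular part $z/\|z\|_2 \in S^{d-1}$, which are independent, with the angular part distributed as $\nu_d$. The point is that as $\sigma \to 0^+$, for a "typical" $z$ (one bounded away from $0$ and $\infty$), the vector $\theta = w + \sigma z$ has $\lng \phi_h(x,a'), \theta \rng = \lng \phi_h(x,a'), w \rng + \sigma \lng \phi_h(x,a'), z\rng$; for $a' \notin \MA_{h,w}(x)$ the first term already makes $\lng \phi_h(x,a'),\theta\rng$ strictly smaller than $\max_{a'' \in \MA_{h,w}(x)}\lng \phi_h(x,a''),\theta\rng$ once $\sigma$ is small, whereas among $a' \in \MA_{h,w}(x)$ the zeroth-order terms are tied and the comparison is governed entirely by the first-order terms $\sigma \lng \phi_h(x,a'), z\rng$, i.e. by which index maximizes $\lng \phi_h(x,a'), z\rng$ over $\MA_{h,w}(x)$. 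Thus for $\sigma$ small, $\{a \in \MA_{h,\theta}(x)\}$ coincides (up to a $\nu_d$-null set of directions, namely the boundary set $S^{d-1} \setminus \bigcup_{a'' \in \MA_{h,w}(x)} \MG_{h,w}(x,a'')$) with the event that $z/\|z\|_2 \in \MG_{h,w}(x,a)$.

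To make this rigorous I would argue as follows. Fix $\epsilon > 0$ and pick $R$ large so that $\Pr_{z \sim \MN(0,I_d)}(\|z\|_2 \leq R) \geq 1-\epsilon$; on the complementary event we lose at most $\epsilon$ in probability. Restricting to $\|z\|_2 \leq R$: let $\Delta := \min\{\lng \phi_h(x,a''), w\rng : a'' \in \MA_{h,w}(x)\} - \max\{\lng \phi_h(x,a'),w\rng : a' \notin \MA_{h,w}(x)\} > 0$ (interpreting the max as $-\infty$ if $\MA_{h,w}(x) = \MA$). Choose $\sigma_0$ small enough that $2\sigma_0 R \max_{a'}\|\phi_h(x,a')\|_2 < \Delta$; then for $\sigma < \sigma_0$ and $\|z\|_2 \leq R$, no action outside $\MA_{h,w}(x)$ can be a maximizer of $\theta \mapsto \lng\phi_h(x,a''),\theta\rng$, so $\MA_{h,\theta}(x) \subseteq \MA_{h,w}(x)$, and within $\MA_{h,w}(x)$ the maximizer is determined by $\argmax_{a'' \in \MA_{h,w}(x)}\lng\phi_h(x,a''), z\rng$. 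Hence for $\sigma < \sigma_0$,
\begin{align}
\left| \Pr_{\theta \sim \MN(w,\sigma^2 I_d)}(a \in \MA_{h,\theta}(x)) - \Pr_{z \sim \MN(0,I_d)}\left( a = \argmax_{a'' \in \MA_{h,w}(x)}\lng\phi_h(x,a''),z\rng\right)\right| \leq \epsilon. \nonumber
\end{align}
Finally, the latter probability equals $\nu_d\big(\{u \in S^{d-1} : \lng u, \phi_h(x,a)\rng > \lng u, \phi_h(x,a')\rng \ \forall a' \in \MA_{h,w}(x)\setminus\{a\}\}\big) = \nu_d(\MG_{h,w}(x,a))$, using independence of the radial and angular parts of $z$ and the fact that ties (where the $\argmax$ is non-unique) form a $\nu_d$-null set. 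Since $\epsilon$ was arbitrary, $\lim_{\sigma\to 0^+}\pi_{h,w,\sigma}(a|x) = \nu_d(\MG_{h,w}(x,a)) = \pi_{h,w}(a|x)$, as desired.

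\textbf{Main obstacle.} The routine parts are the continuity/perturbation estimates; the one genuinely delicate point is handling the tie-breaking carefully — i.e., verifying that the set of directions $u \in S^{d-1}$ for which $\argmax_{a'' \in \MA_{h,w}(x)}\lng u, \phi_h(x,a'')\rng$ is non-unique has $\nu_d$-measure zero (this is exactly the fact cited just before \cref{eq:linpol-definition}), and that it is legitimate to pass from "$a$ is the unique first-order maximizer" to "$a \in \MA_{h,\theta}(x)$" without the perturbed $\argmax$ itself having ties (which it a.s. does not, since $\{\theta : |\MA_{h,\theta}(x)| > 1\}$ is Lebesgue-null, as noted after \cref{def:plinear}). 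Keeping these null-set bookkeeping arguments aligned is where care is needed.
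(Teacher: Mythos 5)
Your proposal is correct and follows essentially the same route as the paper's proof: first dispose of $a \notin \MA_{h,w}(x)$ by a continuity/gap argument, then for $a \in \MA_{h,w}(x)$ reduce the event to the comparison among the tied actions only, where translation invariance (the $w$-components cancel) and scaling turn the Gaussian probability into $\nu_d(\MG_{h,w}(x,a))$. Your version merely makes the quantitative $\epsilon$--$R$--$\sigma_0$ bookkeeping explicit where the paper leaves it as "straightforward."
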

\begin{proof}
  It is straightforward to see that for any $a \not \in \MA_{h,w}(x)$, $\lim_{\sigma \ra 0^+} \Pr_{\theta \sim \MN(w, \sigma^2 I_d)}(a \in \MA_{h,\theta}(x)) = 0$. This verifies \cref{eq:pihw-limit} for $a \not \in \MA_{h,w}(x)$. Moreover, it also implies that for each $a \in \MA_{h,w}(x)$, we have
  \begin{align}
\lim_{\sigma \ra 0^+} \left| \Pr_{\theta \sim \MN(w, \sigma^2 I_d)}(a \in \MA_{h,\theta}(x)) -  \Pr_{\theta \sim \MN(w, \sigma^2 I_d)} \left(\lng \theta, \phi_h(x,a) \rng > \max_{a' \in \MA_{h,w}(x) \backslash \{ a \}} \lng \theta, \phi_h(x,a') \rng \right) \right| = 0 \nonumber.
  \end{align}
  Since $\lng w, \phi_h(x,a) \rng = \lng w, \phi_h(x,a') \rng$ for all $a,a' \in \MA_{h,w}(x)$, we have that
  \begin{align}
    & \Pr_{\theta \sim \MN(w, \sigma^2 I_d)} \left(\lng \theta, \phi_h(x,a) \rng > \max_{a' \in \MA_{h,w}(x) \backslash \{ a \}} \lng \theta, \phi_h(x,a') \rng \right)\nonumber\\
    =&  \Pr_{\theta \sim \MN(0, \sigma^2 I_d)} \left(\lng \theta, \phi_h(x,a) \rng > \max_{a' \in \MA_{h,w}(x) \backslash \{ a \}} \lng \theta, \phi_h(x,a') \rng \right)\nonumber\\
    =&  \Pr_{\theta \sim \nu_d} \left(\lng \theta, \phi_h(x,a) \rng > \max_{a' \in \MA_{h,w}(x) \backslash \{ a \}} \lng \theta, \phi_h(x,a') \rng \right) = \nu_d(\MG_{h,w}(x,a))\nonumber,
  \end{align}
  where the second equality follows by rescaling. Combining the two displays above gives the result. 
\end{proof}
For a randomized policy $\pi$ and $h \in [H]$, we use the shorthand
\begin{align}
\phi_h(x,\pi_h(x)) := \E_{a \sim \pi_h(x)}[\phi_h(x,a)]\nonumber.
\end{align}
 Then \cref{lem:linpol-limit} yields that, for any $h \in [H], w\in \BR^d, f : \MX \times \MA \ra \BR$, we have
\begin{align}
\phi_h(x, \pi_{h,w}(x)) = \lim_{\sigma \ra 0^+} \phi_h(x, \pi_{h,w,\sigma}(x)), \qquad f_h(x, \pi_{h,w}(x)) = \lim_{\sigma \ra 0^+} f(x, \pi_{h,w,\sigma}(x))\label{eq:feature-limit}.
\end{align}

\arxiv{\bibliographystyle{alpha}}
\arxiv{\bibliography{lbc}}

\end{document}